\newcommand{\eps}{\varepsilon}
\newcommand{\Ex}{\mathbb{E}}
\renewcommand{\E}{\mathbb{E}}
\newcommand{\Var}{\operatorname{Var}}
\newcommand{\pos}{{\sf{pos}}}
\renewcommand{\R}{\mathbb{R}}
\newcommand{\mc}[1]{\ensuremath{\mathcal{#1}}\xspace}
\newcommand{\mb}[1]{\ensuremath{\mathbf{#1}}\xspace}
\newcommand{\tn}[1]{\ensuremath{\textnormal{#1}}\xspace}
\newcommand{\ol}[1]{\ensuremath{\overline{#1}}\xspace}
\newcommand{\wh}[1]{\ensuremath{\widehat{#1}}\xspace}
\newcommand{\bX}{{\mb{X}}}
\newcommand{\bc}{{\mb{c}}}
\newcommand{\bZ}{{\mb{Z}}}
\newcommand{\bu}{{\mathbf{u}}}
\newcommand{\br}{{\mathbf{r}}}
\newtheorem{theorem}{Theorem}[section]
\newtheorem{lemma}[theorem]{Lemma}
\newtheorem{definition}[theorem]{Definition}
\newtheorem{remark}[theorem]{Remark}
\newtheorem{proposition}[theorem]{Proposition}
\newcommand\blfootnote[1]{%
  \begingroup
  \renewcommand\thefootnote{}\footnote{#1}%
  \addtocounter{footnote}{-1}%
  \endgroup
}
\title{PAC Learning Linear Thresholds \\ from Label Proportions}
\author{%
Anand Brahmbhatt* \\ Google Research India \\  {\tt anandpareshb@google.com}
\and
Rishi Saket* \\ Google Research India \\ {\tt rishisaket@google.com}
\and
Aravindan Raghuveer \\ Google Research India \\ {\tt araghuveer@google.com}
}
\begin{document}

\maketitle

\blfootnote{* -- equal contribution}

\begin{abstract}
   Learning from label proportions (LLP) is a generalization of supervised learning in which the training data is available as sets or \emph{bags} of feature-vectors (instances) along with the average instance-label of each bag. The goal is to train a good instance classifier. While most previous works on LLP have focused on training models on such training data, computational learnability of LLP was only recently explored by \cite{Saket21,Saket22} who showed worst case intractability of properly learning \emph{linear threshold functions} (LTFs) from label proportions. However, their work did not rule out efficient algorithms for this problem on natural distributions.
   
   In this work we show that it is indeed possible to efficiently learn LTFs using LTFs when given access to random bags of some label proportion in which feature-vectors are, conditioned on their labels, independently sampled from a Gaussian distribution $N(\bm{\mu}, \bm{\Sigma})$.  Our work shows that a certain matrix -- formed using covariances of the differences of feature-vectors sampled from the bags with and without replacement -- necessarily has its principal component, after a transformation, in the direction of the normal vector of the LTF. 
   Our algorithm estimates the means and covariance matrices using subgaussian concentration bounds which we show can be applied to efficiently sample bags for approximating the normal direction. Using this in conjunction with novel generalization error bounds in the bag setting, we show that a low error hypothesis LTF can be identified.
   For some special cases of the $N(\mb{0}, \mb{I})$ distribution we provide a simpler mean estimation based algorithm.
    We include an experimental evaluation of our learning algorithms along with a comparison with those of \cite{Saket21, Saket22} and random LTFs, demonstrating the effectiveness of our techniques.
\end{abstract}

\section{Introduction}
In \emph{learning from label proportions} (LLP), the training data is aggregated into sets or \emph{bags} of feature-vectors (instances). For each bag we are given its constituent feature-vectors along with only the sum or average of their labels
The goal is a to obtain a \emph{good} instance-level classifier -- one that minimizes the classification error on a test set of instances or bags.
In this work we study the LLP learnability over Gaussian distributions of linear threshold functions (LTFs), also called \textit{linear classifiers} or \textit{halfspaces}, given by $f(\bx) = \pos\left(\br^{\sf T}\bx + c\right)$ %
where $\pos(a) := 1$ if $a > 0$ and $0$ otherwise. 

The \emph{probably approximately correct} (PAC) model of \cite{Valiant} states that a  \emph{concept} class $\mc{C}$ of $\{0,1\}$-valued functions can be learnt by a \emph{hypothesis} class $\mc{H}$ if there is an algorithm to efficiently obtain,  using iid samples from a distribution on $(\bx, f(\bx))$,  a hypothesis $h \in \mc{H}$ of arbitrarily high accuracy on that distribution, for any unknown $f \in \mc{C}$. If $\mc{H} = \mc{C}$ we say that $\mc{C}$ is \emph{properly} learnable, for e.g. LTFs are known to be properly learnable using linear programming~(\cite{BEHW89}). This notion can be extended to the LLP setting -- which for brevity we call PAC-LLP -- as follows:  distribution $D$ is over bags and their label proportions $(B, \sigma(B, f))$ where $B = \{\bx_1, \dots, \bx_q\}$ is a bag of feature vectors and $\sigma(B, f) = \tn{Avg} \{f(\bx)\,\mid\,\bx\in B\}$. A bag $(B, \sigma(B, f))$ is said to be \emph{satisfied} by $h$ iff $\sigma(B, h) = \sigma(B, f)$, and the accuracy of $h$ is the fraction of bags satisfied  by it.

With the above notion of PAC-LLP, \cite{Saket21,Saket22} studied the learnability of LTFs and rather disturbingly showed that for any constant $\eps > 0$ it is NP-hard to PAC-LLP learn an LTF using an LTF which satisfies $(1/q + \eps)$-fraction of the bags when all bags are of size at most $q$. This is in contrast to the supervised learning (i.e, with unit-sized bags) in which an LTF can be efficiently learnt by an LTF using linear programming. Their work also gave a convex programming algorithms to find an LTFs satisfying $(2/5)$-fraction of bags of size $\leq 2$, $(1/12)$-fraction of bags of size $\leq 3$.
While these results show that PAC-LLP learning LTFs using LTFs is intractable on \emph{hard} bag distributions, they raise the question of whether the problem is tractable on natural distributions that may arise out of real world scenarios.

We answer the above question in the affirmative when the feature-vectors are distributed according to some (unknown) Gaussian distribution $\mc{D} = N(\bm{\mu}, \bm{\Sigma})$ in $d$-dimensions. Gaussian distributions are ubiquitous in machine learning and in many applications the input data distribution is modeled as multivariate Gaussians, and several previous works \cite{dasgupta1999learning, vempala2010learning} have studied learnability in Gaussian distributions.
An unkown target LTF is given by $f(\bx) := \pos\left(\br_*^{\sf T}\bx + c_*\right)$ where $\|\br_*\|_2 = 1$. Let $\mc{D}_a$ be the distribution of $\bx \leftarrow \mc{D}$ conditioned on $f(\bx) = a$, for $a \in \{0,1\}$.  Using this we formalize the notion of a distribution $\mc{O}$ on bags of size $q$ and average label $k/q$: a random bag $B$ sampled from $\mc{O}$ consists of $k$ iid samples from $\mc{D}_1$ and $(q-k)$ iid samples from $\mc{D}_0$. 
The case of  $k \in \{0, q\}$ is uninteresting as all instances in such bags are either labeled $0$ or $1$ and traditional PAC-learning for LTFs can be employed directly.
Unlike \cite{Saket21,Saket22} our objective is to directly maximize the instance-level level accuracy on $\mc{D}$. With this setup we informally describe our main result.

{\bf Our PAC-LLP LTF Learner} (Informal): Assuming mild conditions on $\bm{\Sigma}, \bm{\mu}$ and $c_*$, for any $q, k \in \mathbb{Z}^+$ s.t. $1\leq k \leq q-1$ and $\eps, \delta > 0$, there is an algorithm that samples at most $m$ bags from $\mc{O}$ and runs in time $O(t + m)$ and with probability $1-\delta$ produces an LTF $h$ s.t. \\
$\Pr_{\mc{D}}[f(\bx) \not= h(\bx)] \leq \eps$ if $k \neq q/2$, and \\
$\Pr_{\mc{D}}[f(\bx) \not= h(\bx)] \leq \eps$ or $\Pr[f(\bx) \not= (1 - h(\bx))] \leq \eps$ if $k = q/2$,\\
 where  $t, m$ are fixed polynomials in $d, q, (1/\eps), \log(1/\delta)$.
 We also obtain a more efficient algorithm when $k \neq q/2$, $\bm{\mu} = {\bf 0}$, $c_* = 0$ and $\bm{\Sigma} = {\bf I}$.
 The ambiguity in the case of $k = q/2$ is inherent since bags of label proportion $1/2$ consistent with an LTF $f(\bx)$ are also consistent with $(1 - f(\bx))$. 
 \begin{remark}[Mixtures of $(q,k)$] The training data could consist of bags of different sizes and label proportions, however typically the the maximum size of bags is bounded by (say) $Q$, and in a large enough sample we would have at least $(1/Q^2)$-fraction of bags of a particular size and label proportion and we can apply our \emph{PAC-LLP LTF Learner} above to that subsample. 
 \end{remark}
 
\subsection{Related Work}
The LLP problem is motivated by many real applications where labels are available not for each feature-vector but only as the average labels of bags of feature-vectors. This may occur because of privacy and legal~(\cite{R10,WIBB})) reasons, supervision cost~(\cite{CHR}) or lack of labeling instrumentation~(\cite{DNRS}). Previous works~(\cite{FK05,HIL13,MCO07,R10}) on LLP have applied techniques such as such as clustering, and linear classifiers and MCMC. Specifically for LLP, assuming class conditional independence of bags, \cite{QSCL09} gave an algorithm to learn an  exponential generative model, which was further generalized by \cite{PNCR14}. On the other hand, the work of \cite{YLKJC13} proposed a novel \emph{proportional} SVM based algorithms which optimized the SVM loss over instance-labels which were constrained by bag-level loss w.r.t the given label-proportions. Subsequently, approaches based on deep neural nets for large-scale and multi-class data~(\cite{KDFS15,DZCBV19,LWQTS19,NSJCRR22}), as well as bag pre-processing techniques~(\cite{SZ20,SRR}) have been developed. Recently, \cite{busafekete2023easy,chen2023learning} have proposed model training methods for either random or curated bags.

The LLP framework (as an analogue of PAC learning) was first formalized in the work of \cite{YCKJC14}. They bounded the generalization error of a trained classifier when taking the $($bag, label-proportion$)$-pairs as instances sampled iid from some distribution. Their loss function was different -- a weaker notion than the strict bag satisfaction predicate of \cite{Saket21,Saket22}. A single-bag variant -- \emph{class ratio estimation} -- of LLP was studied by \cite{FR20} in which learning LTFs has a simple algorithm (see Appendix \ref{sec:CR}). 
Nevertheless, the study of computational learning in the LLP framework has been fairly limited, apart from the works of \cite{Saket21,Saket22} whose results of learning LTFs in the LLP setting have been described earlier in this section.

In the fully supervised setting \cite{BEHW89} showed that LTFs can be learnt using LTFs via linear programming without any distributional assumptions.  Adversarial label noise makes the problem NP-hard to approximate beyond the trivial $(\nicefrac{1}{2})$-factor even using constant degree polynomial thresholds as hypothesis (\cite{FGKP09,GR06,BGS18}). However, under distributional assumptions a series of results (\cite{KKMS, KLS, ABL, Daniely15}) have given efficient algorithms to learn adversarially noisy LTFs. 

Next, Sec. \ref{sec:problem_defn} mathematically defines our problem statement. Sec. \ref{sec:results} states the main results of this paper. Sec. \ref{sec:overview} provides an overview of our techniques. Sec. \ref{sec:prelim} mentions some preliminary results which are used in our proofs. Sec. \ref{sec:meta_algo} defines and analyses a subroutine which we use in all our algorithms. Sec. \ref{sec:LTFno_offset} provides a complete proof for one of our main results. Sec \ref{sec:other_proofs} gives brief proof sketches of our other results. Sec. \ref{sec:experiments} mentions some experiments which support of our results.
 
\subsection{Problem Definition} \label{sec:problem_defn}
 
\begin{definition}[Bag Oracle]
Given distribution $\mc{D}$ over $\R^d$ and a target concept $f : \R \to \{0,1\}$, the bag oracle for size $q$ and label proportion $k/q$ ($1\leq k \leq q-1$), denoted by ${\sf Ex}(f, \mc{D}, q, k)$, generates a bag $\{\bx^{(i)}\}_{i=1}^q$ such that  $\bx^{(i)}$ is independently sampled from \tn{(i)} $\mc{D}_{f, 1}$ for  $i = \{1,\dots, k\}$, and 
\tn{(ii)} $\mc{D}_{f, 0}$ for  $i = \{k+1,\dots, q\}$,
where $\mc{D}_{f, a}$ is $\bx \leftarrow \mc{D}$ conditioned on $f(\bx) = a$, for $a \in \{0,1\}$. 
\end{definition}

\subsection{Our results} \label{sec:results}
We first state our result (proved in Appendix \ref{sec:proof_thm_main-1}) for the case of standard $d$-dimensional Gaussian distribution $N({\bf 0}, {\bf I})$, homogeneous target LTF and unbalanced bags.
\begin{theorem}\label{thm:main-1}
    For $q > 2$ and $k \in \{1,\dots, q-1\}$ s.t. $k \neq q/2$ and LTF $f(\bx) := \pos(\br_*^{\sf T}\bx)$, there is an algorithm that samples $m$  iid bags from ${\sf Ex}(f,N({\bf 0}, {\bf I}) , q, k)$ and runs in $O(m)$ time to produce a hypothesis $h(\bx) := \pos(\hat{\br}^{{\sf T}}\bx)$ s.t. w.p. at least $1 - \delta$ over the sampling, $\Pr_{\mc{D}}[f(\bx) \not= h(\bx)] \leq \eps$, for any $\eps, \delta > 0$, when $m \geq O\left((d/\eps^2)\log(d/\delta)\right)$. 
\end{theorem}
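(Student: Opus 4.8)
The plan is to reduce the learning task to estimating the \emph{direction} of $\br_*$, which for the standard Gaussian can be read off from a single first moment. First I would compute the conditional means. By rotational invariance of $N(\mathbf{0},\mathbf{I})$ we may take $\br_* = \mathbf{e}_1$, so that $\mc{D}_{f,1}$ is a standard Gaussian conditioned on $x_1 > 0$ and $\mc{D}_{f,0}$ one conditioned on $x_1 \le 0$. The coordinates orthogonal to $\br_*$ remain independent standard Gaussians of mean zero, while the first coordinate becomes a half-normal, giving $\E_{\mc{D}_{f,1}}[\bx] = \sqrt{2/\pi}\,\br_*$ and $\E_{\mc{D}_{f,0}}[\bx] = -\sqrt{2/\pi}\,\br_*$.

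Consequently, for a bag $\{\bx^{(i)}\}_{i=1}^q$ drawn from ${\sf Ex}(f, N(\mathbf{0},\mathbf{I}), q, k)$ the expected bag-average is $\mathbf{v} := \tfrac{1}{q}\,\E\!\big[\sum_i \bx^{(i)}\big] = \tfrac{(2k-q)}{q}\sqrt{2/\pi}\,\br_*$, a nonzero scalar multiple of $\br_*$ precisely because $k \ne q/2$ forces $2k-q \ne 0$ (indeed $|2k-q| \ge 1$). Since $k$ and $q$ are known, so is $\mathrm{sign}(2k-q)$. This suggests the algorithm: sample $m$ bags, form the empirical average $\hat{\mathbf{v}}$ of all $mq$ feature-vectors, set $\hat{\br} := \mathrm{sign}(2k-q)\,\hat{\mathbf{v}}/\|\hat{\mathbf{v}}\|_2$, and output $h(\bx) := \pos(\hat{\br}^{\sf T}\bx)$. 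It then remains to show that $\hat{\br}$ is angularly close to $\br_*$ with high probability, and that angular closeness yields small error.

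For the concentration step I would use that each $\bx^{(i)}$ is subgaussian (a truncated standard Gaussian), so the $mq$ summands are independent subgaussian vectors, each with mean lying along $\br_*$; hence the component of $\hat{\mathbf{v}}$ orthogonal to $\br_*$ has mean $\mathbf{0}$ and per-coordinate variance $1/(mq)$. A vector concentration inequality (which the subroutine of \cref{sec:meta_algo} supplies) gives $\|\hat{\mathbf{v}} - \mathbf{v}\|_2 \le \tau$ with probability $1-\delta$ once $m \ge O\big((d/(q\tau^2))\log(d/\delta)\big)$. Writing $\alpha := \|\mathbf{v}\|_2 = \tfrac{|2k-q|}{q}\sqrt{2/\pi}$, for $\tau \le \alpha/2$ the normalization yields an angle $\theta := \angle(\hat{\br}, \br_*)$ with $\sin\theta \le 2\tau/\alpha$. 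Choosing $\tau = \Theta(\eps\alpha)$ and substituting $\alpha^2 = \Theta((2k-q)^2/q^2)$ gives $m \ge O\big((dq/((2k-q)^2\eps^2))\log(d/\delta)\big)$, i.e. the claimed $m \ge O\big((d/\eps^2)\log(d/\delta)\big)$ for fixed $q,k$.

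Finally, for the generalization bound I would invoke the standard fact that for two homogeneous halfspaces with unit normals under $N(\mathbf{0},\mathbf{I})$ one has $\Pr_{\mc{D}}[\pos(\br_*^{\sf T}\bx) \ne \pos(\hat{\br}^{\sf T}\bx)] = \theta/\pi$; hence $\theta \le \pi\eps$ forces error $\le \eps$, completing the argument. The main obstacle I anticipate is the concentration-to-angle translation: because normalization is nonlinear, the deviation $\tau$ must be controlled relative to the signal strength $\alpha$, which shrinks like $|2k-q|/q$, so the bound degrades for nearly-balanced bags. Bounding the orthogonal deviation directly (rather than the full $\ell_2$ error) and tracking the $q,k$ dependence carefully is the delicate part, whereas the moment computation and the disagreement identity are routine.
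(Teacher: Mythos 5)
Your proposal is correct and follows essentially the same route as the paper's proof: both compute $\bm{\mu}_B = \left(\tfrac{2k}{q}-1\right)\sqrt{2/\pi}\,\br_*$, estimate it empirically, normalize with a sign correction determined by whether $k > q/2$, apply subgaussian concentration, and then convert angular closeness of $\hat{\br}$ to $\br_*$ into instance error. The only (immaterial) differences are that you average all $mq$ vectors rather than one uniformly chosen vector per bag as in the paper's ${\sf MeanCovs Estimator}$, and you invoke the exact $\theta/\pi$ disagreement identity for homogeneous halfspaces under $N(\mathbf{0},\mathbf{I})$ where the paper uses its more general Lemma \ref{lem:bounding_classification_error}.
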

The above algorithm is based on estimating the mean of the bag vectors, which unfortunately does not work when $k = q/2$ or for a general covariance matrix $\bm{\Sigma}$. We instead use a covariance estimation based approach -- albeit with a worse running time -- for our next result which is proved in Sec. \ref{sec:LTFno_offset}. $\lambda_{\min}$ and $\lambda_{\max}$ denote the minimum and maximum eigenvalues of the covariance matrix $\bm{\Sigma}$.
\begin{theorem}\label{thm:main-2}
    For $q > 2$, $k \in \{1,\dots, q-1\}$, $f(\bx) := \pos(\br_*^{\sf T}\bx)$, and positive definite $\bm{\Sigma}$ there is an algorithm that samples $m$  iid bags from ${\sf Ex}(f,N({\bf 0}, \bm{\Sigma}) , q, k)$ and runs in $\tn{poly}(m)$ time to produce a hypothesis $h(\bx) := \pos(\hat{\br}^{{\sf T}}\bx)$ s.t. w.p. at least $1 - \delta$ over the sampling
    \begin{itemize}[noitemsep,topsep=0pt,leftmargin=*]
        \item if $k \neq q/2$, $\Pr_{\mc{D}}[f(\bx) \not= h(\bx)] \leq \eps$, and
        \item if $k = q/2$, $\min\{\Pr_{\mc{D}}[f(\bx) \not= h(\bx)], \Pr_{\mc{D}}[f(\bx) \not= \tilde{h}(\bx)]\} \leq \eps$, where $\tilde{h}(\bx) := \pos(-\hat{\br}^{{\sf T}}\bx)$
    \end{itemize}
    for any $\eps, \delta > 0$, when $m \geq O((d/\eps^4)\log(d/\delta)(\lambda_{\tn{max}}/\lambda_{\tn{min}})^6q^8)$. 
\end{theorem}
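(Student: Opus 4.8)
The plan is to reduce the problem to recovering the direction $\br_*$ from a rank-one matrix built out of second-order bag statistics, and then to control the whole pipeline by concentration and matrix-perturbation bounds.

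First I would write down the two population matrices promised in the introduction. For a bag let $\bx^{(i)}$ denote its $i$-th feature-vector; for two distinct positions the labels of $(\bx^{(i)},\bx^{(j)})$ are distributed like a draw of two balls \emph{without replacement} from an urn with $k$ ones and $q-k$ zeros, whereas a pair taken from two independent bags has \emph{independent} labels, each equal to $1$ with probability $k/q$. Writing $\bm{\mu}_a,\bm{\Sigma}_a$ for the mean and covariance of $\mc{D}_a$ and using that two feature-vectors are conditionally independent given their labels, a direct computation of the two difference-covariances gives
\begin{align*}
M_{\mathrm{wor}} &= \tfrac{2k}{q}\bm{\Sigma}_1 + \tfrac{2(q-k)}{q}\bm{\Sigma}_0 + \tfrac{2k(q-k)}{q(q-1)}\,\bm{\Delta},\\
M_{\mathrm{wr}} &= \tfrac{2k}{q}\bm{\Sigma}_1 + \tfrac{2(q-k)}{q}\bm{\Sigma}_0 + \tfrac{2k(q-k)}{q^2}\,\bm{\Delta},
\end{align*}
where $\bm{\Delta}=(\bm{\mu}_1-\bm{\mu}_0)(\bm{\mu}_1-\bm{\mu}_0)^{\sf T}$. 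The $\bm{\Sigma}_a$ terms are identical, so $M := M_{\mathrm{wor}}-M_{\mathrm{wr}} = \tfrac{2k(q-k)}{q^2(q-1)}\,\bm{\Delta}$ is a positive rank-one matrix. The crucial structural fact, which I would verify by whitening $\bm{y}=\bm{\Sigma}^{-1/2}\bx\sim N(\mathbf 0,\mathbf I)$ and using that a standard Gaussian truncated to a halfspace through the origin has mean $\pm\sqrt{2/\pi}\,\bw$ along its normal $\bw=\bm{\Sigma}^{1/2}\br_*/\|\bm{\Sigma}^{1/2}\br_*\|$, is that $\bm{\mu}_1-\bm{\mu}_0 \propto \bm{\Sigma}\br_*$. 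Hence $M \propto (\bm{\Sigma}\br_*)(\bm{\Sigma}\br_*)^{\sf T}$, so the unique top eigenvector of $M$ is $\bu_* = \bm{\Sigma}\br_*/\|\bm{\Sigma}\br_*\|$, and it is independent of $k$; in particular it is well-defined and bounded away from the zero matrix even at $k=q/2$, which is exactly where the mean-based estimator of Theorem \ref{thm:main-1} degenerates.

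Given $\bu_*\propto\bm{\Sigma}\br_*$, applying $\bm{\Sigma}^{-1}$ recovers the normal since $\bm{\Sigma}^{-1}\bu_*\propto\br_*$. I would avoid needing $\bm{\Sigma}$ itself (which cannot be sampled directly, since the oracle returns only label-conditioned vectors) by exploiting that both truncations share the same covariance, $\bm{\Sigma}_1=\bm{\Sigma}_0=\bm{\Sigma}-\tfrac{2}{\pi}(\bm{\Sigma}\br_*)(\bm{\Sigma}\br_*)^{\sf T}/\|\bm{\Sigma}^{1/2}\br_*\|^2$; thus $\bm{\Sigma}$ and $\bm{\Sigma}_1$ differ only by a rank-one term along $\bu_*$, and by Sherman--Morrison $\bm{\Sigma}_1^{-1}\bu_*\parallel\bm{\Sigma}^{-1}\bu_*\propto\br_*$. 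Since $\tfrac{2k}{q}\bm{\Sigma}_1+\tfrac{2(q-k)}{q}\bm{\Sigma}_0=2\bm{\Sigma}_1=M_{\mathrm{wor}}-qM$ is a known linear combination of the estimable matrices, $\bm{\Sigma}_1$ is itself estimable, and I set $\hat\br \propto \hat{\bm{\Sigma}}_1^{-1}\hat\bu$. This determines $\br_*$ only up to sign because $M$ is even in $\br_*$; for $k\neq q/2$ I would resolve the sign using the bag mean $\tfrac1q\sum_i\E[\bx^{(i)}] = \tfrac{2k-q}{q}\bm{\mu}_1\propto(2k-q)\bm{\Sigma}\br_*$, which is nonzero and points consistently, whereas for $k=q/2$ the mean vanishes and the sign is genuinely unrecoverable, matching the $\min\{\cdot,\cdot\}$ conclusion with $\tilde h=\pos(-\hat\br^{\sf T}\bx)$.

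Finally I would instantiate the estimators empirically --- $M_{\mathrm{wor}}$ from within-bag pairs, $M_{\mathrm{wr}}$ from cross-bag pairs, and $\bm{\Sigma}_1$ from their combination --- and bound their operator-norm deviation from the population using the subgaussian concentration inequalities of Section \ref{sec:prelim}, since every summand is a bounded-norm product of subgaussian vectors. A Davis--Kahan argument then turns an $\eta$-accurate $\hat M$ into a top eigenvector $\hat\bu$ within angle $O(\eta/\gamma)$ of $\bu_*$, where $\gamma = \tfrac{2k(q-k)}{q^2(q-1)}\|\bm{\mu}_1-\bm{\mu}_0\|^2$ is the single nonzero eigenvalue of $M$; applying $\hat{\bm{\Sigma}}_1^{-1}$ and then the Gaussian two-halfspace disagreement identity $\Pr_{\mc D}[\pos(\br^{\sf T}\bx)\neq\pos(\br_*^{\sf T}\bx)] = \tfrac1\pi\,\angle(\bm{\Sigma}^{1/2}\br,\bm{\Sigma}^{1/2}\br_*)$ converts a small angular error into the target $\eps$ instance error. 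I expect the main obstacle to be propagating errors through this chain with the right polynomial dependence: the eigengap $\gamma$ is only $\Theta(\lambda_{\min}/q^2)$ in the worst case (attained near $k\in\{1,q-1\}$), and each subsequent step --- inverting $\hat{\bm{\Sigma}}_1$, re-whitening by $\bm{\Sigma}^{1/2}$ for the angle bound, and relating original-space eigenvector perturbations to whitened-space angles --- contributes factors of $\lambda_{\max}/\lambda_{\min}$. Tracking these carefully is what forces $\hat M$ and $\hat{\bm{\Sigma}}_1$ to be estimated to accuracy $\mathrm{poly}(\eps,\lambda_{\min}/\lambda_{\max},1/q)$, and hence yields the sample size $m \geq O\!\big((d/\eps^4)\log(d/\delta)(\lambda_{\max}/\lambda_{\min})^6 q^8\big)$ claimed in the statement.
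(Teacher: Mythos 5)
Your proposal is correct, and it reaches the theorem by a genuinely different route. The population identity you derive, $M_{\mathrm{wor}} - M_{\mathrm{wr}} = \tfrac{2k(q-k)}{q^2(q-1)}(\bm{\mu}_1-\bm{\mu}_0)(\bm{\mu}_1-\bm{\mu}_0)^{\sf T}$, is exactly the paper's Lemma \ref{lem:relation_sigma_b_sigma_d} (with $M_{\mathrm{wor}}=\bm{\Sigma}_D$ and $M_{\mathrm{wr}}=2\bm{\Sigma}_B$), but the two arguments exploit it differently. The paper never forms the difference: it maximizes the Rayleigh ratio $\rho(\br)=\br^{\sf T}\bm{\Sigma}_D\br/\br^{\sf T}\bm{\Sigma}_B\br$, solves this as the generalized eigenproblem $\bm{\Sigma}_B^{-1/2}{\sf PrincipalEigenVector}(\bm{\Sigma}_B^{-1/2}\bm{\Sigma}_D\bm{\Sigma}_B^{-1/2})$, and controls the empirical maximizer via a multiplicative perturbation bound on $\hat\rho$; since $\rho(\br_*)-\rho(\br)$ is only quadratic in the angular deviation, the covariances must be estimated to accuracy $O(\eps_2^2)$, which is the source of the $1/\eps^4$. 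You instead take the top eigenvector of the exactly rank-one difference via Davis--Kahan (a perturbation bound that is linear in the matrix error against the eigengap $\gamma=\Theta(\lambda_{\min}/q^2)$) and unwhiten with $\hat{\bm{\Sigma}}_1^{-1}$ via Sherman--Morrison. This is clean and, if the bookkeeping is done, should if anything improve the $\eps$-dependence relative to the stated bound (which it would still imply); the points you must check are that $\hat{\bm{\Sigma}}_1=(\hat M_{\mathrm{wor}}-qM)/2$ remains invertible (it does, since $\lambda_{\min}(\bm{\Sigma}_1)\geq(1-2/\pi)\lambda_{\min}$) and that the symmetry $\bm{\Sigma}_1=\bm{\Sigma}_0$ you rely on holds --- it does for centered Gaussians and homogeneous $f$, but it is the one ingredient that does not survive to the offset setting of Theorem \ref{thm:main-3}. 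Your sign disambiguation for $k\neq q/2$ via the bag mean $\bm{\mu}_B=\tfrac{2k-q}{q}\bm{\mu}_1\propto(2k-q)\bm{\Sigma}\br_*$ is also valid and lighter than the paper's, which instead compares the empirical bag-satisfaction errors of $h$ and $\tilde h$ on a fresh sample and converts back to instance error through the combinatorial generalization bound of Theorem \ref{thm:genbound_main}; the paper pays that extra machinery because it reuses the same selection step in the non-homogeneous case where the mean-based sign test degrades.
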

Our general result stated below (proved in Appendix \ref{sec:proof_of_them_main_3}), extends our algorithmic methods to the case of non-centered Gaussian space and non-homogeneous LTFs.
\begin{theorem}\label{thm:main-3}
For $q > 2$, $k \in \{1,\dots, q-1\}$, $f(\bx) := \pos(\br_*^{\sf T}\bx + c_*)$, and positive definite $\bm{\Sigma}$ there is an algorithm that samples $m$  iid bags from ${\sf Ex}(f,N(\bm{\mu}, \bm{\Sigma}) , q, k)$ and runs in $\tn{poly}(m)$ time to produce a hypothesis $h(\bx) := \pos(\hat{\br}^{{\sf T}}\bx + \hat{c})$ s.t. w.p. at least $1 - \delta$ over the sampling
    \begin{itemize}[noitemsep,topsep=0pt,leftmargin=*]
        \item if $k \neq q/2$, $\Pr_{\mc{D}}[f(\bx) \not= h(\bx)] \leq \eps$, and
        \item if $k = q/2$, $\min\left\{\Pr_{\mc{D}}[f(\bx) \not= h(\bx)], \Pr_{\mc{D}}[f(\bx) \not= \tilde{h}(\bx)]\right\} \leq \eps$, $\tilde{h}(\bx) := \pos(-\hat{\br}^{{\sf T}}\bx - \hat{c})$
    \end{itemize}
    for any $\eps, \delta > 0$, when $m \geq O\left((d/\eps^4)\frac{\ell^2}{(\Phi(\ell)(1 - \Phi(\ell)))^2}\log(d/\delta)\left(\frac{\lambda_{\tn{max}}}{\lambda_{\tn{min}}}\right)^4\left(\frac{\sqrt{\lambda_{\tn{max}}} + \|\bm{\mu}\|_2}{\sqrt{\lambda_{\tn{min}}}}\right)^4q^8\right)$ where 
    $\Phi(.)$ is the standard Gaussian cdf and $\ell = -\frac{c_* + \br_*^{\sf T}\bm{\mu}}{\|\bm{\Sigma}^{1/2}\br_*\|_2}$
\end{theorem}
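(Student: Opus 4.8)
The plan is to reduce the general case to the translation- and $\bm{\Sigma}$-free structure underlying Theorem~\ref{thm:main-2}, exploiting the fact that the only effect of truncating $N(\bm{\mu},\bm{\Sigma})$ to a halfspace is a rank-one perturbation along one fixed direction. Write $\sigma := \|\bm{\Sigma}^{1/2}\br_*\|_2$, $\bw := \bm{\Sigma}\br_*/\sigma$, and $\ell = -(c_*+\br_*^{\sf T}\bm{\mu})/\sigma$ as in the statement. Applying the whitening $\by=\bm{\Sigma}^{-1/2}(\bx-\bm{\mu})$ turns $\mc{D}_{f,1}$ (resp. $\mc{D}_{f,0}$) into a standard Gaussian truncated to $\bv^{\sf T}\by>\ell$ (resp. $\le\ell$) with $\bv=\bm{\Sigma}^{1/2}\br_*/\sigma$. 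Standard truncated-Gaussian moment formulas then give conditional means $\bm{\mu}_a=\bm{\mu}+\alpha_a\bw$ and conditional covariances $\bm{\Sigma}_a=\bm{\Sigma}-\beta_a\bw\bw^{\sf T}$, where $\alpha_1=\phi(\ell)/(1-\Phi(\ell))>0>\alpha_0=-\phi(\ell)/\Phi(\ell)$ and $\alpha_1-\alpha_0=\phi(\ell)/(\Phi(\ell)(1-\Phi(\ell)))>0$. The crucial structural point is that all label-dependent information lives in the single direction $\bw$.

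From the sampled bags I would form two matrices: $\bG$, the covariance of $\bx^{(i)}-\bx^{(j)}$ for a uniformly random pair of distinct positions within one bag, and $\bH$, the covariance of $\bx-\bx'$ for feature-vectors drawn from two independent bags. Both are translation-invariant, so $\bm{\mu}$ disappears. A direct computation via the finite-population covariance identity (two draws without replacement from a population of $k$ copies of $\bm{\mu}_1$ and $q-k$ copies of $\bm{\mu}_0$) yields
\[\bG-\bH \;=\; \frac{2k(q-k)(\alpha_1-\alpha_0)^2}{q^2(q-1)}\,\bw\bw^{\sf T},\]
a strictly positive rank-one matrix along $\bw$ for \emph{every} $1\le k\le q-1$ (in particular for $k=q/2$, which is exactly where the mean-based Theorem~\ref{thm:main-1} fails). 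Moreover $\bH=2(\bm{\Sigma}+c_1\bw\bw^{\sf T})\succ0$ for a scalar $c_1$, so by Sherman--Morrison $(\bH/2)^{-1}\bw\propto\bm{\Sigma}^{-1}\bw=\br_*/\sigma$. Hence the algorithm estimates $\bG,\bH$, extracts the top eigenvector $\widehat{\bw}$ of $\widehat{\bG}-\widehat{\bH}$, and outputs the normalized direction $\hat{\br}\propto(\widehat{\bH}/2)^{-1}\widehat{\bw}$; note that $\bm{\Sigma}$ is never estimated on its own, as inverting $\bH$ supplies the required transformation. The operator-norm estimation of $\bG,\bH$ and the eigenvector/inverse perturbation are precisely what the subroutine of Section~\ref{sec:meta_algo} delivers through subgaussian concentration, the relevant spectral gap being the single nonzero eigenvalue $\tfrac{2k(q-k)(\alpha_1-\alpha_0)^2}{q^2(q-1)}\|\bw\|_2^2$.

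Two issues remain that are absent from Theorem~\ref{thm:main-2}: the sign of $\widehat{\bw}$ and the offset $\hat{c}$. Flipping $(\br_*,c_*)\mapsto(-\br_*,-c_*)$ replaces $f$ by $1-f$, so the ambiguity is genuine; when $k\ne q/2$ it is resolved by observing that the correct orientation labels each bag with $\approx k$ positives while the flipped one gives $\approx q-k$, so I pick the orientation whose average predicted positive-count per bag is nearer $k$. When $k=q/2$ both are consistent, which is exactly the stated $h$ versus $\tilde{h}$ alternative. For the offset, having fixed the direction I reduce to a one-dimensional problem along $\br_*$: since each bag contains exactly a $k/q$ fraction of positives, the true threshold $-c_*$ is the $(1-k/q)$-quantile of the projected bag-marginal of $\br_*^{\sf T}\bx$, so $\ell$ (equivalently $c_*$) is pinned down by inverting the truncated-moment map on statistics we already estimate. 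Setting $\hat{c}$ from the recovered threshold gives $h(\bx)=\pos(\hat{\br}^{\sf T}\bx+\hat{c})$.

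Finally I would convert parameter accuracy into classification error: a direction error of angle $\vartheta$ together with an offset error $|\hat{c}-c_*|$ produces, via a band argument on $\br_*^{\sf T}\bx\sim N(\br_*^{\sf T}\bm{\mu},\sigma^2)$ and the bag-setting generalization bound, a disagreement $\Pr_{\mc{D}}[f\ne h]$ bounded by a term linear in $\vartheta$ plus a term of order $\tfrac{\phi(\ell)}{\sigma}|\hat{c}-c_*|$. The main obstacle --- and the source of essentially all the sample-complexity factors --- is propagating the operator-norm error of $\widehat{\bG}-\widehat{\bH}$ through (i) the \emph{small} spectral gap, which scales like $(\alpha_1-\alpha_0)^2\lambda_{\min}/q^3$ and forces the $q^8$, the $(\lambda_{\max}/\lambda_{\min})$ powers, and the $1/\eps^4$ dependence; (ii) the conditioning of $\bH^{-1}$ and the $(\sqrt{\lambda_{\max}}+\|\bm{\mu}\|_2)$ scale entering the subgaussian norms; and (iii) the offset-recovery map, whose sensitivity near a skewed threshold contributes the $\ell^2/(\Phi(\ell)(1-\Phi(\ell)))^2$ factor. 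Keeping all of these dependencies simultaneously under control with high-probability concentration at the stated rate is the bulk of the technical work.
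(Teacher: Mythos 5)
Your direction-recovery step is sound and is a legitimate variant of the paper's: the paper maximizes the generalized Rayleigh quotient $\br^{\sf T}\bm{\Sigma}_D\br/\br^{\sf T}\bm{\Sigma}_B\br$ via ${\sf PrincipalEigenVector}(\bm{\Sigma}_B^{-1/2}\bm{\Sigma}_D\bm{\Sigma}_B^{-1/2})$, whereas you take the top eigenvector of the rank-one difference $\bm{\Sigma}_D - 2\bm{\Sigma}_B$ (your $\bG-\bH$; this identity is exactly Lemma \ref{lem:relation_sigma_b_sigma_d}) and then apply $\bm{\Sigma}_B^{-1}$ via Sherman--Morrison, which indeed sends $\bw=\bm{\Sigma}\br_*/\sigma$ to a multiple of $\br_*$. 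Both routes recover $\pm\br_*$ in population and require comparable spectral-gap perturbation arguments, and your quantile-based offset recovery is essentially the per-bag $k$-th order statistic used in Algorithm \ref{algorithm:normal_estimation_with_offset}.

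The genuine gap is your sign-disambiguation rule for $k\ne q/2$: ``pick the orientation whose average predicted positive-count per bag is nearer $k$.'' Once you refit the offset by the $(1-k/q)$-quantile of the projected bag marginal \emph{for whichever direction you use}, both orientation candidates predict an average of exactly $k$ positives per bag, so this criterion carries no signal. Concretely, suppose $\hat{\br}\approx-\br_*$ and $k<q/2$; the quantile-fitted classifier labels positive a region $R=\{\br_*^{\sf T}\bx<u\}$ of bag-marginal mass $k/q$, which lies entirely inside $\{f=0\}$ and has $\mc{D}_0$-mass $k/(q-k)$; the predicted positive count in a random bag is then $\tn{Binomial}(q-k,\,k/(q-k))$ with mean exactly $k$ (the case $k>q/2$ is symmetric). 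This wrong-orientation classifier is close to neither $f$ nor $1-f$, yet your rule cannot reject it, so the step fails whenever the eigenvector's arbitrary sign comes out wrong. What does separate the two candidates is the \emph{distribution} of the count: the correct one equals $k$ in almost every bag, while the wrong one is a nondegenerate binomial that misses $k$ with constant probability. This is why the paper selects by ${\sf BagErr}_{\tn{sample}}$ and then needs Theorem \ref{thm:genbound_main} (resting on the two-binomials collision bound, Lemma \ref{lem:twobinomials}) to convert low bag-satisfaction error back into low instance error. You do invoke that generalization bound at the end, but your selection statistic must be the bag-satisfaction rate rather than the average count; with that substitution the rest of your outline goes through.
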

The value of $\hat{\br}$ output by our algorithms is a close estimate of $\br_*$ (or possibly $-\br_*$ in the case of balanced bags). Note that our algorithms do not require knowledge of $\bm{\mu}$ or $\bm{\Sigma}$, and
only the derived parameters in Thms. \ref{thm:main-2} and \ref{thm:main-3} are used for the sample complexity bounds.
They are based on the certain properties of the empirical mean-vectors and covariance matrices formed by sampling vectors or pairs of vectors from random bags of the bag oracle. An empirical mean based approach has been previously developed by \cite{QSCL09} in the LLP setting to estimate the parameters of an exponential generative model, when bag distributions satisfy the so-called class conditioned independence%
i.e., given its label, the feature-vector distribution is same for all the bags. 
These techniques were extended by \cite{PNCR14} to linear classifiers with loss functions satisfying certain smoothness conditions. While the bag oracle in our setup satisfies such conditioned independence, we aim to minimize the instance classification error on which the techniques of  \cite{QSCL09,PNCR14} are not applicable.\\
For the case when $q=1$ (ordinary classification), the sample complexity is $O(d/\eps \log(d/\delta))$ as one can solve a linear program to obtain an LTF and then use uniform convergence to bound the generalization error. The sample complexity expressions in Theorems \ref{thm:main-1}, \ref{thm:main-2} and \ref{thm:main-3} have the same dependence on $d$ and $\delta$. However, they have higher powers of $1/\eps$. They also include other parameters like the bag size ($q$), condition number of $\bm{\Sigma}$ ($\lambda_{\max}/\lambda_{\min}$) and the normalized distance of mean of the Gaussian to the LTF ($l$). The origins and significance of these discrepancies are discussed in Sec. \ref{sec:overview}.

\subsection{Our Techniques} \label{sec:overview}
{\bf Theorem \ref{thm:main-1}: Case $N({\bf 0}, {\bf I})$, $f(\bx) = \pos(\br_*^{\sf T}\bx)$, $k \neq q/2$.} 
Assume that $k > q/2$. A randomly sampled bag with label proportion $k/q$ has $k$ vectors iid sampled from the positive side of the separating hyperplane passing through origin, and $(q-k)$ iid sampled from its negative side. It is easy to see that the expected sum of the vectors vanishes in all directions orthogonal to the normal vector $\br_*$, and in the direction of $\br_*$ it has a constant magnitude. The case of $k < q/2$ is analogous with the direction of the expectation opposite to $\br_*$. Sampling a sufficient number of bags and a random vector from each of them, and taking their normalized expectation (negating if $k < q/2$) yields the a close estimate $\hat{\br}$ of $\br_*$, which in turn implies low classification error. The sample complexity is the same as that for mean estimation bag-vectors (see Section \ref{sec:meta_algo}) and thus the power of $1/\eps$ is $2$.

This simple approach however does not work when $r = q/2$, in which case the expectation vanishes completely, or for general Gaussian distributions which (even if centered) could be skewed in arbitrary directions. %
We present our variance based method to handle these cases.

{\bf Theorem \ref{thm:main-2}: Case $N(\mb{0}, \bm{\Sigma})$, $f(\bx) = \pos(\br_*^{\sf T}\bx)$.} 
To convey the main idea of our approach, consider two different ways of sampling two feature-vectors from a random bag of the oracle. The first way is to sample two feature-vectors $\bZ_1, \bZ_2$ independently and u.a.r from a random bag. In this case, the probability that they have different labels (given by $f$) is $2k(q-k)/q^2$. The second way is to sample a random \emph{pair} $\tilde{\bZ}_1, \tilde{\bZ}_2$ of feature-vectors i.e., without replacement. In this case, the probability of different labels is $2k(q-k)/(q(q-1))$ which is strictly greater than $2k(q-k)/q^2$. Since the labels are given by thresholding in the direction of $\br_*$, this suggests that the variance of $(\tilde{\bZ}_1 - \tilde{\bZ}_2)$ w.r.t. that of $(\bZ_1- \bZ_2)$ is maximized in the direction of $\br_*$. 
Indeed, let $\bm{\Sigma}_D := \tn{Var}[\tilde{\bZ}_1 - \tilde{\bZ}_2]$ be the \emph{pair} covariance matrix and let $\bm{\Sigma}_B := \tn{Var}\left[\bZ_1\right] = (1/2)\tn{Var}\left[\bZ_1- \bZ_2\right]$ be the \emph{bag} covariance matrix. Then we show that $\pm\br_* = \tn{argmax}_{\br} \rho(\br)$ where $\rho(\br) := \tfrac{\br^{\sf T}\bm{\Sigma}_D\br}{\br^{\sf T}\bm{\Sigma}_B\br}$.
A simple transformation gives us that 
\begin{equation}
\pm\br_* = \bm{\Sigma}_B^{-1/2}{\sf PrincipalEigenVector}(\bm{\Sigma}_B^{-1/2}\bm{\Sigma}_D\bm{\Sigma}_B^{-1/2}) \label{eq:introrstar}
\end{equation}
This suggests the following algorithm: sample enough bags to construct the corresponding empirical estimates $\hat{\bm{\Sigma}}_D$ and $\hat{\bm{\Sigma}}_B$ and then compute the empirical proxy of the RHS of \eqref{eq:introrstar}. We show that using close enough empirical estimates w.h.p the algorithm computes a vector $\hat{\br}$ s.t. one of $\pm\hat{\br}$ is close to $\br_*$, and via a geometric stability argument this implies that one of $\pm\hat{\br}$ yields an LTF that has small instance-level classification error. 

At this point, if $k = q/2$, there is no way to identify the correct solution from $\pm\hat{\br}$, since a balanced bag, if consistent with an LTF, is also consistent with its complement. On the other hand, if $k \neq q/2$ we can obtain the correct solution as follows. It is easy to show that since $f$ is homogeneous and the instance distribution is a centered Gaussian, the measure of $\{\bx \mid f(\bx) = a\}$ is $1/2$ for $a = \{0,1\}$. Thus, one of $h(\bx) := \pos(\hat{\br}^{\sf T}\bx)$, $\tilde{h}(\bx) = \pos(-\hat{\br}^{\sf T}\bx)$ will have a high bag satisfaction accuracy. Thus, a large enough sample of bags can be used to identify one of $h, \tilde{h}$ having a high bag satisfaction accuracy. Lastly, we use a novel generalization error bound (see below) to show that the identified LTF also has a high instance classification accuracy.

The sample complexity expression includes the $4^{\sf th}$ power of $1/\eps$ and the $6^{\sf th}$ power of the condition number of $\bm{\Sigma}$. This comes from the sample complexity to estimate $\bm{\Sigma}_D$ and $\bm{\Sigma}_{B}$ (see Section \ref{sec:meta_algo}), which need to be estimated up to an error of $O((\eps^2(\lambda_{\min}/\lambda_{\max})^3)/q^4)$ to ensure that the misclassification error between LTFs with normal vectors $\hat{\br}$ (or $-\hat{\br}$) and $\br_*$ is bounded by $\eps$. One power of $\lambda_{\min}/\lambda_{\max}$ comes from translating bounds from a normalized space to the original space (see Lemma \ref{lem:angle_bound_under_pd_transformation}). Another power comes from bounding the sample error from the geometric bound on $\hat{\br}$ and $\br_*$ (see Lemma \ref{lem:bounding_classification_error} followed by Chernoff). The remaining power of $\lambda_{\min}/\lambda_{\max}$ and the powers of $q$ are artifacts of the analysis. The higher power of $1/\eps$ is expected as the algorithm estimates second moments. Higher condition number and bag size makes it harder to estimate $\rho(\br)$ and find where it maximises. It also makes it harder for a geometrically close estimator to generalize. It is important to note that the sample complexity explicitly depends on the bag size $q$ (and not just on the label proportion $k/q$). This is because when two feature-vectors are sampled without replacement, the probability of of sampling a pair of differently labeled feature-vectors is $2(k/q)(1-k/q)/(1 - 1/q)$. Keeping $k/q$ the same, this probability decreases with increasing bag size which increases the sample complexity for larger bags.

{\bf Theorem \ref{thm:main-3}: Case $N(\bm{\mu}, \bm{\Sigma})$, $f(\bx) = \pos(\br_*^{\sf T}\bx + c_*)$.} We show that \eqref{eq:introrstar} also holds in this case, and therefore we use a similar approach of empirically estimating the pair and bag covariance matrices solving \eqref{eq:introrstar} works in principle. However, there are complications, in particular the presence of $\bm{\mu}$ and $c_*$ degrades the error bounds in the analysis, thus increasing the sample complexity of the algorithm. This is because the measures of $\{\bx \mid f(\bx) = a\}$ for $a = \{0,1\}$ could be highly skewed if $\|\mu\|_2$ and/or $|c_*|$ is large. Moreover, the spectral algorithm only gives a solution $\pm\hat{\br}$ for $\br_*$. An additional step is required to obtain an estimate of $c_*$. This we accomplish using the following procedure which, given a sample of $s$ bags and any $\br$ outputs a $\hat{c}$ which has the following property: if $s^* = \max_c \{\tn{no. of bags satisfied by }\pos(\br^{\sf T}\bx + c)\}$, then $\hat{c}$ will satisfy at least $s^* - 1$ bags. This is done by ordering the values $\br^{\sf T}\bx$ of the vectors $\bx$ within each bag in decreasing order, and then constructing set of the  $k$th values of each bag. Out of these $s$ values, the one which taken as $c$ in $\pos(\br^{\sf T}\bx + c)$ satisfies the most bags, is chosen to be $\hat{c}$.

The sample complexity expression for this case differs from that of Theorem \ref{thm:main-2} in two ways. First it includes the $4^{\sf th}$ power of $((\sqrt{\lambda_{\max}} + \|\bm{\mu}\|_2)/\sqrt{\lambda_{\min}})$. This comes from the the bound on sample error from geometric bound on $\hat{\br}$ and $\br_*$ (Lemma \ref{lem:bounding_classification_error}). Another change is the term $\ell^2/\Phi(\ell)(1 - \Phi(\ell))$ where the $\ell^2$ comes from the sample complexity of estimating $\bm{\Sigma}_B$ and $\bm{\Sigma}_D$ and $\Phi(\ell)(1 - \Phi(\ell))$ comes from bounding the sample error from geometric bound on $\hat{\br}$ and $\br_*$. The term $\ell$ tells us the perpendicular distance from the center of the Gaussian to the unknown LTF's hyperplane, normalized by the stretch induced by $\bm{\Sigma}$ in the direction of $\br_*$. This is required to estimate the density of the Gaussian distribution near the unknown LTF's hyperplane which directly affects the sample complexity – the less the density, the more the sample complexity. Thus it makes sense that the sample complexity increases with $|\ell|$.

{\bf Generalization Error Bounds.} We prove (Thm. \ref{thm:genbound_main}) bounds on the generalization of the error of a hypothesis LTF $h$ in satisfying sampled bags to its distributional instance-level error. Using this, we are able to distinguish (for $k \neq q/2$) between the two possible solutions our principal component algorithm yields -- the one which satisfies more of the  sampled bags has w.h.p. low instance-level error. For proving these bounds, the first step is to use a bag-level generalization error bound shown by \cite{Saket22} using the techniques of \cite{YCKJC14}. Next, we show that low distributional bag satisfaction error by $h$ implies low instance level error. This involves a fairly combinatorial analysis of two independent binomial random variables formed from the incorrectly classified labels within a random bag.  Essentially, unless $h$ closely aligns with $f$ at the instance level, with significant probability there will be an imbalance in these two random variables leading to $h$ not satisfying the bag.

{\bf Subgaussian concentration bounds.} The standard estimation bounds for Gaussians are not directly applicable in our case, since the random vector sampled from a random bag is biased according to its label given by $f$, and is therefore not a Gaussian vector. To obtain sample complexity bounds linear in $\log(1/\delta)$ we use subgaussian concentration bounds for mean and covariance estimation (\cite{Vershynin-book, Vershynin-cov}). For this, we show $O(\ell)$ bound on the expectation and subgaussian norm of the thresholded Gaussian given by $\{g \sim N(0,1)\,\mid\, g > \ell\}$ for some $\ell > 0$. The random vectors of interest to us are (in a transformed space) distributed as a combination of thresholded Gaussians in one of the coordinates, and $N(0,1)$ in the rest. We show that they satisfy the $O(\ell)$ bound on their subgaussian norm and admit the corresponding subgaussian Hoeffding (for empirical mean) and empirical covariance concentration bounds. Based on this, in Sec. \ref{sec:meta_algo} we abstract out the procedure used in our learning algorithms for obtaining the relevant mean and covariance estimates.

{\bf Experiments.}   We include in Sec. \ref{sec:experiments} an experimental evaluation of our learning algorithms along with a comparison of with those of \cite{Saket21, Saket22} and random LTFs, demonstrating the effectiveness of our techniques.

\section{Preliminaries} \label{sec:prelim}
 We begin with some useful linear algebraic notions.
 Let $\lambda_{\tn{max}}(\mb{A})$ and $\lambda_{\tn{min}}(\mb{A})$ denote the maximum and minimum eigenvalue of a real symmetric matrix $\mb{A}$. The \emph{operator norm} $\|\bA\|_2 := \max_{\|\bx\|_2 = 1}\|\bA\bx\|_2$ for such matrices is given by $\lambda_{\tn{max}}(\mb{A})$. 
 
 We shall restrict our attention to symmetric \emph{positive definite} (p.d.) matrices $\mb{A}$ which satisfy $\mb{x}^{\sf T}\mb{A}\bx > 0$ for all non-zero vectors $\bx$, implying that $\lambda_{\tn{min}}(\mb{A}) > 0$ and $\mb{A}^{-1}$ exists and is symmetric p.d. as well. Further, for such matrices $\mb{A}$, $\mb{A}^{1/2}$ is well defined to be the unique symmetric p.d. matrix $\mb{B}$ satisfying $\mb{B}\mb{B} = \mb{A}$. The eigenvalues of $\mb{A}^{1/2}$ are the square-roots of those of $\mb{A}$. We have the following lemma which is proved in Appendix \ref{appendix:angle_bound_under_pd_transformation}.
 \begin{lemma}
 \label{lem:angle_bound_under_pd_transformation}
 Let $\mb{A}$ and $\mb{B}$ be symmetric p.d. matrices such that $\|\mb{A} - \mb{B}\| \leq \eps_1\|\mb{A}\|_2$. Let $\br_1, \br_2 \in \R^d$ be two unit vectors such that $\|\br_1 - \br_2\|_2 \leq \eps_2$. Then, $\left\|\frac{\mb{A}\br_1}{\|\mb{A}\br_1\|_2} - \frac{\mb{B}\br_2}{\|\mb{B}\br_2\|_2}\right\|_2 \leq 4\frac{\lambda_{\tn{max}}(\bA)}{\lambda_{\tn{min}}(\bA)}(\eps_2 + \eps_1)$ when $\frac{\lambda_{\tn{max}}(\bm{A})}{\lambda_{\tn{min}}(\bm{A})}(\eps_2 + \eps_1) \leq \frac{1}{2}$.
 \end{lemma}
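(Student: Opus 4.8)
The plan is to reduce the claim to an elementary perturbation bound for normalized vectors. Write $\bu := \mb{A}\br_1$ and $\bv := \mb{B}\br_2$, so that the quantity to be bounded is exactly the distance $\left\|\frac{\bu}{\|\bu\|_2} - \frac{\bv}{\|\bv\|_2}\right\|_2$ between the two normalized images. The first step is to establish the general inequality
\[
\left\|\frac{\bu}{\|\bu\|_2} - \frac{\bv}{\|\bv\|_2}\right\|_2 \;\le\; \frac{2\,\|\bu - \bv\|_2}{\|\bu\|_2},
\]
valid for any nonzero $\bu,\bv$, and then to estimate the numerator from above and the denominator from below using the spectral data of $\mb{A}$.

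To prove the displayed inequality I would add and subtract $\bv/\|\bu\|_2$, writing $\frac{\bu}{\|\bu\|_2} - \frac{\bv}{\|\bv\|_2} = \frac{\bu - \bv}{\|\bu\|_2} + \bv\left(\frac{1}{\|\bu\|_2} - \frac{1}{\|\bv\|_2}\right)$, and then apply the triangle inequality together with the reverse triangle inequality $\left|\,\|\bv\|_2 - \|\bu\|_2\,\right| \le \|\bu - \bv\|_2$. The second term then contributes $\frac{|\,\|\bv\|_2-\|\bu\|_2\,|}{\|\bu\|_2}\le \frac{\|\bu-\bv\|_2}{\|\bu\|_2}$, matching the first, which is where the factor of $2$ arises. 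This is the only mildly delicate step, and it is the one whose constant must be tracked carefully.

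For the numerator I would split $\bu - \bv = \mb{A}(\br_1 - \br_2) + (\mb{A} - \mb{B})\br_2$ and bound each piece by its operator norm: since $\br_1,\br_2$ are unit vectors, $\|\mb{A}(\br_1 - \br_2)\|_2 \le \|\mb{A}\|_2\,\eps_2 = \lambda_{\tn{max}}(\mb{A})\,\eps_2$ and $\|(\mb{A} - \mb{B})\br_2\|_2 \le \|\mb{A} - \mb{B}\|_2 \le \eps_1\,\lambda_{\tn{max}}(\mb{A})$, so that $\|\bu - \bv\|_2 \le \lambda_{\tn{max}}(\mb{A})(\eps_1 + \eps_2)$. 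For the denominator, positive definiteness of $\mb{A}$ gives (via its spectral decomposition) $\|\bu\|_2 = \|\mb{A}\br_1\|_2 \ge \lambda_{\tn{min}}(\mb{A})\,\|\br_1\|_2 = \lambda_{\tn{min}}(\mb{A})$. Substituting both estimates into the perturbation inequality yields the bound $2\,\frac{\lambda_{\tn{max}}(\mb{A})}{\lambda_{\tn{min}}(\mb{A})}(\eps_1 + \eps_2)$, which is at most the claimed $4\,\frac{\lambda_{\tn{max}}(\mb{A})}{\lambda_{\tn{min}}(\mb{A})}(\eps_1 + \eps_2)$.

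I do not expect any genuine obstacle: the argument is a routine chain of triangle inequalities and eigenvalue bounds, and the only subtlety is getting the constant in the normalized-vector inequality right. Notably, the hypothesis $\frac{\lambda_{\tn{max}}(\mb{A})}{\lambda_{\tn{min}}(\mb{A})}(\eps_1 + \eps_2) \le \frac12$ is not needed for the estimate itself; it serves only to guarantee that the right-hand side stays below $2$, the maximal possible distance between two unit vectors, so that the conclusion is non-vacuous.
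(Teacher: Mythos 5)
Your proof is correct, and it differs from the paper's in one small but consequential choice of decomposition. The paper writes the difference of normalized vectors over the common denominator $\|\mb{A}\br_1\|_2\|\mb{B}\br_2\|_2$ and, after adding and subtracting $\|\mb{A}\br_1\|_2\,\mb{A}\br_1$ in the numerator, arrives at a bound of the form $2\|\mb{A}\br_1-\mb{B}\br_2\|_2/\|\mb{B}\br_2\|_2$; since only the spectrum of $\mb{A}$ is controlled, it must then lower-bound $\|\mb{B}\br_2\|_2 \geq \lambda_{\tn{min}}(\mb{A}) - \lambda_{\tn{max}}(\mb{A})(\eps_1+\eps_2)$ via the triangle inequality, and this is exactly where the hypothesis $\frac{\lambda_{\tn{max}}(\mb{A})}{\lambda_{\tn{min}}(\mb{A})}(\eps_1+\eps_2) \leq \frac{1}{2}$ is used to keep that denominator at least $\lambda_{\tn{min}}(\mb{A})/2$, producing the factor $4$. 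You instead pivot on $\bv/\|\bu\|_2$, which leaves $\|\bu\|_2 = \|\mb{A}\br_1\|_2 \geq \lambda_{\tn{min}}(\mb{A})$ alone in the denominator — a quantity you can bound directly without any smallness assumption. As a result you obtain the sharper constant $2$ in place of $4$ and, as you correctly observe, the stated hypothesis becomes superfluous for your argument (it is genuinely needed in the paper's version, not merely cosmetic). Your numerator estimate $\|\bu-\bv\|_2 \leq \lambda_{\tn{max}}(\mb{A})(\eps_1+\eps_2)$ matches the paper's exactly. The lemma as stated is of course still implied, so your proof is a valid (and mildly stronger) substitute.
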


\medskip

{\bf Bag Oracle and related statistics.} Let $\mc{O} := {\sf Ex}(f, \mc{D}, q, k)$ be any bag oracle with $k \in \{1,\dots, q-1\}$ for an LTF $f(\bx) := \br_*^{\sf T}\bx + c_*$ in $d$-dimensions, and let $\mc{M}$ be a collection of $m$ bags sampled iid from the oracle. Define for any hypothesis LTF $h$,
    \begin{align}
        &{\sf BagErr}_{\tn{oracle}}(h, f, \mc{D}, q, k)  := \Pr_{B \leftarrow \mc{O}}\left[\tn{Avg}\{h(\bx)\,\mid\, \bx \in B\} \neq k/q\right], \tn{ and, }  \\
        &{\sf BagErr}_{\tn{sample}}(h, \mc{M})  :=  \left|\{B \in \mc{M}\,\mid\, \tn{Avg}\{h(\bx)\,\mid\, \bx \in B\} \neq k/q\}\right|/m. 
    \end{align}
We define the following statistical quantities related to $\mc{O}$. Let $\bX$ be a random feature-vector sampled uniformly from a random bag sampled from $\mc{O}$. Let,
\begin{equation}
    \bm{\mu}_B := \E[\bX] \qquad \tn{ and, } \qquad \bm{\Sigma}_B := \E\left[\left(\bX - \bm{\mu}_B\right)\left(\bX - \bm{\mu}_B\right)^{\sf T}\right] = \tn{Var}[\bX]. \label{eq:muBsigmaB}
\end{equation}
Now, let $\bZ = \bX_1 - \bX_2$ where $(\bX_1, \bX_2)$ are a random pair of feature-vectors sampled (without replacement) from a random bag sampled from $\mc{O}$. Clearly $\E[\bZ] = \mb{0}$. Define
\begin{equation}
    \bm{\Sigma}_D := \E\left[\bZ\bZ^{\sf T}\right] = \tn{Var}[\bZ]. \label{eq:sigmaD}
\end{equation}

\medskip
{\bf Generalization and stability bounds.}
We prove in Appendix \ref{sec:proof_of_theorem_genboundmain} the following generalization bound from bag classification error to instance classification error.
\begin{theorem} \label{thm:genbound_main}
    For any $\eps < 1/4q$ if ${\sf BagErr}_{\tn{sample}}(h, \mc{M}) \leq \eps$
    then, \\ \tn{(i)} if $k \neq q/2$, $\Pr_{\mc{D}}[f(\bx) \not= h(\bx)] \leq 4\eps$,  and
    \\\tn{(ii)} if $k = q/2$, $\Pr_{\mc{D}}[f(\bx) \not= h(\bx)] \leq 4\eps$ or $\Pr[f(\bx) \not= (1 - h(\bx))] \leq 4\eps$,\\ 
w.p. $1 - \delta$, when $m \geq C_0d\left(\log q + \log(1/\delta)\right)/\eps^2$, for any $\delta > 0$ and absolute constant $C_0 > 0$.
\end{theorem}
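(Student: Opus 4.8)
The plan is to prove Theorem~\ref{thm:genbound_main} in two stages: first pass from the empirical (sample) bag error to the distributional (oracle) bag error by uniform convergence, and then pass from the distributional bag error to the instance error by a combinatorial analysis of the bag-satisfaction event. For the first stage I would invoke the bag-level generalization bound of \cite{Saket22} (obtained via the VC-based technique of \cite{YCKJC14}): the family of bag-satisfaction predicates indexed by hypothesis LTFs $h$ has complexity controlled by the VC dimension $O(d)$ of halfspaces together with a $\log q$ factor coming from the $q$ instances per bag. Hence for $m \geq C_0 d(\log q + \log(1/\delta))/\eps^2$, with probability $1-\delta$ we have $|{\sf BagErr}_{\tn{sample}}(h,\mc{M}) - {\sf BagErr}_{\tn{oracle}}(h,f,\mc{D},q,k)| \leq \eps$ \emph{simultaneously} for all LTFs $h$. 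Combined with the hypothesis ${\sf BagErr}_{\tn{sample}}(h,\mc{M}) \leq \eps$, this yields $\eta := {\sf BagErr}_{\tn{oracle}}(h,f,\mc{D},q,k) \leq 2\eps$, and the assumption $\eps < 1/4q$ gives the key slack $\eta < 1/2q$.

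Second, I would reduce bag satisfaction to a clean statement about two binomials. Write $p_1 := \Pr_{\bx \sim \mc{D}_1}[h(\bx)=0]$ and $p_0 := \Pr_{\bx \sim \mc{D}_0}[h(\bx)=1]$ for the two one-sided instance error rates, so that the instance error is $\Pr_{\mc{D}}[f(\bx)\neq h(\bx)] = w_1 p_1 + w_0 p_0 \leq p_1 + p_0$, where $w_a := \Pr_{\mc{D}}[f(\bx)=a]$. A random oracle bag has exactly $k$ instances drawn from $\mc{D}_1$ and $q-k$ from $\mc{D}_0$; letting $A$ be the number of the former that $h$ labels $0$ and $B$ the number of the latter that $h$ labels $1$, we have independent $A \sim \mathrm{Bin}(k,p_1)$ and $B \sim \mathrm{Bin}(q-k,p_0)$, and the number of instances $h$ labels $1$ is $(k-A)+B$. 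Thus the bag is satisfied iff $A=B$, and $\eta = \Pr[A \neq B]$. It remains to show that $\eta < 1/2q$ forces $p_1+p_0 \leq 2\eta$ (hence instance error $\leq 2\eta \leq 4\eps$), except that when $k=q/2$ the symmetric ``flipped'' solution may instead satisfy the bags, in which case the same argument applied to $1-h$ (equivalently to $k-A$ and $(q-k)-B$) gives $\Pr_{\mc{D}}[f \neq 1-h] \leq 4\eps$.

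The combinatorial core is where the real work lies. Two ingredients drive it. The first is a first-moment comparison: since $A-B$ vanishes whenever the bag is satisfied and $|A-B|\leq q$ always, $|\,\E A - \E B\,| = |\,k p_1 - (q-k)p_0\,| \leq q\,\Pr[A\neq B] = q\eta < 1/2$. The second is an anti-concentration (``imbalance'') estimate: if $p_1$ or $p_0$ is bounded away from both $0$ and $1$, then $A$ or $B$ is genuinely spread out and $\Pr[A=B]$ cannot be close to $1$. Concretely, in the regime $\Pr[A=0]\geq 1/2$ and $\Pr[B=0]\geq 1/2$ the disjoint events $\{A\geq 1, B=0\}$ and $\{A=0,B\geq 1\}$ already give $\eta \geq \tfrac12\big((1-(1-p_1)^k)+(1-(1-p_0)^{q-k})\big) \geq \tfrac12(p_1+p_0)$, which is exactly the desired bound. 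The task is therefore to show that any configuration \emph{outside} this ``both-small'' corner contradicts $\eta<1/2q$: an imbalance argument rules out $\Pr[A=0]<1/2\leq\Pr[B=0]$ (and its mirror) since these force $\eta > 1/4$, and the anti-concentration estimate pins each of $p_1,p_0$ to within $O(1/q)$ of $\{0,1\}$. Among the four resulting corners, the first-moment bound $|kp_1-(q-k)p_0|<1/2$ eliminates the two ``mixed'' corners and --- crucially using $k\neq q/2$, so that the integer gap $|2k-q|\geq 1$ cannot be cancelled --- also the ``all-ones'' corner $p_1,p_0\approx 1$; only when $k=q/2$ can the all-ones corner survive, which is precisely the complementary solution $1-h$. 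I expect this case analysis to be the main obstacle: getting a clean anti-concentration estimate that is simultaneously strong enough at large bag sizes (where interior binomial modes could conceivably balance) and valid at small $k$ (where $\mathrm{Bin}(k,p_1)$ is nearly a single Bernoulli), while threading the exact $1/2q$ threshold, requires carefully combining the first- and second-moment/mode bounds rather than any single inequality.
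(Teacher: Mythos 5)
Your two-stage architecture is the same as the paper's (Theorems \ref{thm:genbound} and \ref{thm:bagoracletoinstance}): uniform convergence via the VC-based bound of \cite{Saket22,YCKJC14} to get oracle bag error $\eta\leq 2\eps$, then a reduction of bag satisfaction to the event $A=B$ for independent $A\sim\tn{Binomial}(k,p_1)$, $B\sim\tn{Binomial}(q-k,p_0)$. Your handling of the combinatorial stage is organized differently from the paper's (a corner analysis driven by the first-moment comparison $|kp_1-(q-k)p_0|\leq q\eta<1/2$, versus the paper's trichotomy on the total instance error), and several of your ingredients are sound: the disjoint-event computation in the ``both-small'' corner correctly yields $\eta\geq\tfrac12(p_1+p_0)$, the imbalance argument for $\Pr[A=0]<1/2\leq\Pr[B=0]$ correctly forces $\eta>1/4$, and the use of the integrality of $|2k-q|$ to separate $k\neq q/2$ from $k=q/2$ is exactly the right mechanism.

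The genuine gap is the anti-concentration estimate that you yourself flag as ``the main obstacle'': the claim that $\eta<1/(2q)$ pins each of $p_1,p_0$ to within $O(1/q)$ of $\{0,1\}$ is asserted but never proved, and without it the corner analysis never gets off the ground (in the remaining case $\Pr[A=0],\Pr[B=0]<1/2$ you have no control over where $p_1,p_0$ sit). The paper closes exactly this hole with Lemma \ref{lem:twobinomials}, whose engine is the pointwise bound $\Pr[\tn{Binomial}(n,p)=r]\leq\max\{p,1-p\}$ for every $r$, proved from the binomial-coefficient inequality ${n\choose r}\leq{n\choose r+1}+{n\choose r-1}$; combined with $\Pr[A=B]=\sum_r\Pr[A=r]\Pr[B=r]\leq\max_r\Pr[A=r]$ this gives $\eta\geq 1-\max\{p_1,1-p_1\}=\min\{p_1,1-p_1\}$ (and likewise for $p_0$), which is precisely the pinning to within $1/(2q)$ of $\{0,1\}$ that your four-corner case analysis needs. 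So your route is completable --- and arguably gives a slightly cleaner constant than the paper's $1-\sqrt{p^*}$ form --- but as written the load-bearing lemma is missing rather than merely deferred; you should also note that the margins in the mixed corners are razor-thin (e.g.\ for $q-k=1$ the first-moment bound excludes the corner only by a strict inequality), so the constants $1/(2q)$ and $1/2$ must be threaded exactly as you suspected.
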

In some cases we directly obtain geometric bounds on the hypothesis classifier and the following lemma (proved in  Appendix \ref{appendix:bounding_classification_error_proof})  allows us to straightaway bound the classification error.
\begin{lemma}
\label{lem:bounding_classification_error}
Suppose $\|\br - \hat{\br}\|_2 \leq \eps_1$ for unit vectors $\br, \hat{\br}$. Then, $\Pr\left[\pos\left(\br^T\bX + c\right)\neq \pos\left(\hat{\br}^T\bX + c\right)\right] \leq \eps$ where $\eps = \eps_1(c_0\sqrt{\lambda_{\tn{max}}/\lambda_{\tn{min}}} + c_1\|\bm{\mu}\|_2/\sqrt{\lambda_{\tn{min}}})$ for some absolute constants $c_0, c_1 > 0$ and $\lambda_{\tn{max}}$ ,$\lambda_{\tn{min}}$ are the maximum and minimum eigenvalues of $\bm{\Sigma}$ respectively. 
\end{lemma}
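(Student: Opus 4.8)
The plan is to whiten the Gaussian and then split the disagreement region of the two halfspaces into a \emph{rotation} part and a \emph{translation} part, each of which I bound separately, using the fact that the common offset $c$ cancels out.

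First I would write $\bX = \bm{\mu} + \bm{\Sigma}^{1/2}\bG$ with $\bG \sim N(\mb{0},\mb{I})$, so that $\pos(\br^{\sf T}\bX + c) = \pos(\bu^{\sf T}\bG + b)$ and $\pos(\hat{\br}^{\sf T}\bX + c) = \pos(\hat{\bu}^{\sf T}\bG + \hat{b})$, where $\bu := \bm{\Sigma}^{1/2}\br$, $\hat{\bu} := \bm{\Sigma}^{1/2}\hat{\br}$, $b := \br^{\sf T}\bm{\mu} + c$ and $\hat{b} := \hat{\br}^{\sf T}\bm{\mu} + c$. The target probability is then the standard-Gaussian mass of the symmetric difference of $\{\bu^{\sf T}\bG + b > 0\}$ and $\{\hat{\bu}^{\sf T}\bG + \hat{b} > 0\}$. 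I record the elementary estimates $\|\bu - \hat{\bu}\|_2 \le \sqrt{\lambda_{\tn{max}}}\,\eps_1$, $\|\bu\|_2,\|\hat{\bu}\|_2 \in [\sqrt{\lambda_{\tn{min}}},\sqrt{\lambda_{\tn{max}}}]$, and crucially $|b - \hat{b}| = |(\br - \hat{\br})^{\sf T}\bm{\mu}| \le \eps_1\|\bm{\mu}\|_2$, where the offset $c$ has cancelled.

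Next I would introduce the intermediate halfspace $\{\hat{\bu}^{\sf T}\bG + b > 0\}$ (second normal, first offset) and bound the symmetric difference, via the triangle inequality, by the sum of two pieces. The \emph{translation} piece compares $\{\hat{\bu}^{\sf T}\bG + b > 0\}$ with $\{\hat{\bu}^{\sf T}\bG + \hat{b} > 0\}$: these share the normal $\hat{\bu}$, so their disagreement is exactly the event that $\hat{\bu}^{\sf T}\bG$ lies between $-b$ and $-\hat{b}$, an interval of length $|b - \hat{b}|$; since $\hat{\bu}^{\sf T}\bG \sim N(0,\|\hat{\bu}\|_2^2)$ has density at most $1/(\sqrt{2\pi}\|\hat{\bu}\|_2) \le 1/\sqrt{2\pi\lambda_{\tn{min}}}$, this piece is at most $\eps_1\|\bm{\mu}\|_2/\sqrt{2\pi\lambda_{\tn{min}}}$, giving the $c_1\|\bm{\mu}\|_2/\sqrt{\lambda_{\tn{min}}}$ term. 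The \emph{rotation} piece compares $\{\bu^{\sf T}\bG + b > 0\}$ with $\{\hat{\bu}^{\sf T}\bG + b > 0\}$: these share the offset $b$, so their disagreement is the double wedge between two hyperplanes whose normals subtend the angle $\theta := \angle(\bu,\hat{\bu})$. I bound this angle by noting that the component of $\hat{\bu}$ orthogonal to $\bu$ has norm at most $\|\hat{\bu} - \bu\|_2 \le \sqrt{\lambda_{\tn{max}}}\,\eps_1$, whence $\sin\theta \le \sqrt{\lambda_{\tn{max}}}\,\eps_1/\|\hat{\bu}\|_2 \le \eps_1\sqrt{\lambda_{\tn{max}}/\lambda_{\tn{min}}}$.

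The main obstacle is the rotation piece. I would like to invoke the bound that the standard-Gaussian mass of such a double wedge is at most $\theta/\pi$, which together with $\theta \le (\pi/2)\sin\theta$ yields the $c_0\sqrt{\lambda_{\tn{max}}/\lambda_{\tn{min}}}$ term. For offset $b = 0$ this is the classical identity $\Pr[\mathrm{sign}(z_1) \ne \mathrm{sign}(z_2)] = \theta/\pi$ for standard normals of correlation $\cos\theta$, but here $b \ne 0$ in general and $\|\bu\|_2 \ne \|\hat{\bu}\|_2$, so the apex of the wedge is displaced from the origin; the crux is to show that this displacement can only \emph{decrease} the mass, i.e. that the common-offset disagreement probability is maximized at offset $0$. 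I expect to establish this either by a direct two-dimensional integration in the plane spanned by $\bu,\hat{\bu}$ (differentiating the mass in the offset), or, should a clean monotonicity argument prove elusive, by the threshold-splitting bound $\Pr[\,|\bu^{\sf T}\bG + b| \le |(\bu - \hat{\bu})^{\sf T}\bG|\,] \le \Pr[\,|(\bu - \hat{\bu})^{\sf T}\bG| > a\,] + \Pr[\,|\bu^{\sf T}\bG + b| \le a\,]$, applying a Gaussian tail bound to the first term (using that $(\bu - \hat{\bu})^{\sf T}\bG$ is \emph{centered}) and the $c$-free density bound $1/\sqrt{2\pi\lambda_{\tn{min}}}$ to the second, then optimizing $a$; this recovers the same estimate up to a logarithmic factor absorbable into the constants. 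Summing the two pieces gives $\eps = \eps_1\!\left(c_0\sqrt{\lambda_{\tn{max}}/\lambda_{\tn{min}}} + c_1\|\bm{\mu}\|_2/\sqrt{\lambda_{\tn{min}}}\right)$, as claimed.
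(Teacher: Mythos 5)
Your proposal follows essentially the same route as the paper's own proof (Appendix \ref{appendix:bounding_classification_error_proof}): whiten by $\bm{\Sigma}^{1/2}$, insert an intermediate halfspace, bound the translation piece by the Gaussian density times an interval of length $O(\eps_1\|\bm{\mu}\|_2/\sqrt{\lambda_{\tn{min}}})$, and bound the rotation piece by the angle between the transformed normals via the $\theta/\pi$ double-wedge mass; the ``crux'' you flag is dispatched in the paper by exactly the one-sentence assertion you propose to prove. Two caveats on your handling of that crux: after rescaling the normals to unit length your rotation piece is \emph{not} a common-offset comparison (the offsets become $b/\|\bu\|_2$ and $b/\|\hat{\bu}\|_2$, which differ when $\|\bu\|_2 \neq \|\hat{\bu}\|_2$, and in that regime the disagreement probability need not be maximized at $b=0$), so you need one further translation-type correction --- an interval of length $|b|\,|1/\|\bu\|_2 - 1/\|\hat{\bu}\|_2|$ sitting at distance $\approx |b|/\|\bu\|_2$ from the origin, controlled by $\sup_x x\phi(x) = O(1)$ and again of order $\eps_1\sqrt{\lambda_{\tn{max}}/\lambda_{\tn{min}}}$; and the $\sqrt{\log}$ factor from your fallback threshold-splitting argument is \emph{not} absorbable into the absolute constants $c_0, c_1$ of the statement, so the monotonicity/direct-integration route is the one you must carry through.
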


\section{Bag distribution statistics estimation}\label{sec:meta_algo}
We provide the following estimator for $\bm{\mu}_B, \bm{\Sigma}_B$ and $\bm{\Sigma}_D$ defined in \eqref{eq:muBsigmaB} and \eqref{eq:sigmaD}. 
\begin{algorithm}
\caption{${\sf MeanCovs Estimator}$.}
\textbf{Input: } ${\sf Ex}(f, \mc{D} = N(\bm{\mu}, \bm{\Sigma}), q, k), m$, where $f = \pos\left(\br^{{\sf T}}\bX + c\right)$.\\
    1. Sample $m$ bags from ${\sf Ex}(f, \mc{D}, q, k)$. Let $\{B_i\}_{i=1}^{m}$ be the sampled bags.\\
    2. $V := \{\bx_i\,|\,\bx_i \tn{ u.a.r. } \leftarrow B_i, i\in \{1,\dots, m\}\}$. \\
    3. $\hat{\bm{\mu}}_B = \sum_{\bx \in V}\bx/m|$. \\
    4. $\hat{\bm{\Sigma}}_B = \Sigma_{\bx \in V}(\bx - \bm{\mu}_B)(\bx - \bm{\mu}_B)^{\sf T}/m$.\\
    5. Sample $m$ bags from ${\sf Ex}(f, \mc{D}, q, k)$. Let $\{\tilde{B}_i\}_{i=1}^{m}$ be the sampled bags. \\
    6. $\tilde{V} := \{\ol{\bx}_i = \bx_i - \tilde{\bx}_i\,|\,(\bx_i, \tilde{\bx}_i) \tn{ u.a.r. without replacement from }  \tilde{B}_i, i\in \{1,\dots, m\}\}$. \\
    7. $\hat{\bm{\Sigma}}_D = \Sigma_{\bz \in \tilde{V}}\bz\bz^{\sf T}/m$. \\
    8. \textbf{Return: } $\hat{\bm{\mu}}_B, \hat{\bm{\Sigma}}_B, \hat{\bm{\Sigma}}_D$.
\label{algorithm:meta_algo}
\end{algorithm}
We have the following lemma -- which follows from the subgaussian distribution based mean and covariance concentration bounds shown for thresholded Gaussians (see Appendix \ref{sec:subgaussian}) -- whose proof is given in Appendix \ref{sec:proofofmetaalgo}.
\begin{lemma}\label{lem:meta_algo_pac_bound}
If $m \geq O\left((d/\eps^2)\ell^2\log(d/\delta)\right)$ where $\ell$ is as given in Lemma \ref{lem:LTFnormalization} then Algorithm \ref{algorithm:meta_algo} returns $\hat{\bm{\mu}}_B, \hat{\bm{\Sigma}}_B, \hat{\bm{\Sigma}}_D$ such that
$\|\hat{\bm{\mu}}_B - \bm{\mu}_B\|_2 \leq \eps\sqrt{\lambda_{\tn{max}}}/2$,  $\|\hat{\bm{\Sigma}}_B - \bm{\Sigma}_B\|_2 \leq \eps\lambda_{\tn{max}}$, and  $\|\hat{\bm{\Sigma}}_D - \bm{\Sigma}_D\|_2 \leq \eps\lambda_{\tn{max}}$, 
w.p. at least $1 - \delta$, for any $\eps, \delta > 0$. Here $\lambda_{\tn{max}}$ is the maximum eigenvalue of $\bm{\Sigma}$.
\end{lemma}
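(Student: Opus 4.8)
The plan is to carry out the whole argument in the whitened coordinate system supplied by Lemma~\ref{lem:LTFnormalization}, where the transformation $\bY = \bm{\Sigma}^{-1/2}(\bX - \bm{\mu})$ turns the ambient Gaussian $\mc{D}$ into $N(\mb{0},\mb{I})$ and, after a rotation, turns the target LTF into the single-coordinate threshold $\{Y_1 > \ell\}$ with $\ell$ as in that lemma. In these coordinates a feature-vector drawn from a random bag of $\mc{O}$ is a mixture: with probability $k/q$ its first coordinate is the thresholded Gaussian $\{g \sim N(0,1)\mid g > \ell\}$ and with probability $(q-k)/q$ it is $\{g\mid g<\ell\}$, while the remaining $d-1$ coordinates are independent $N(0,1)$. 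The first step is therefore to record that each estimator in Algorithm~\ref{algorithm:meta_algo} is an empirical average of $m$ terms that are \emph{independent across the $m$ bags}: the set $V$ contributes one whitened vector $\bY_i$ per bag, and the set $\tilde V$ contributes one whitened difference $\bZ_{Y} = \bm{\Sigma}^{-1/2}(\bx_i - \tilde\bx_i)$ per bag. The without-replacement coupling within a bag only changes the single-bag law of $\bZ$, not the independence across bags, and since $\E[\bZ]=\mb{0}$ the matrix $\hat{\bm{\Sigma}}_D$ is unbiased for $\bm{\Sigma}_D$; likewise, centering at the true mean $\bm{\mu}_B$ as written in the algorithm makes $\hat{\bm{\Sigma}}_B$ unbiased for $\bm{\Sigma}_B$ (defined in \eqref{eq:muBsigmaB} and \eqref{eq:sigmaD}).

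Next I would invoke the subgaussian tail estimates of Appendix~\ref{sec:subgaussian}, where it is shown that the thresholded Gaussian has expectation and subgaussian norm bounded by $O(\ell)$. Consequently the whitened bag-vector $\bY$ and the whitened pair-difference $\bZ_Y$ are subgaussian random vectors with $\|\bY\|_{\psi_2},\,\|\bZ_Y\|_{\psi_2} = O(\ell)$. For the mean I would apply the subgaussian Hoeffding bound of \cite{Vershynin-book} coordinate-wise to the iid sequence $\{\bY_i\}$ followed by a union bound over the $d$ coordinates, obtaining $\|\hat{\bm{\mu}}_Y - \bm{\mu}_Y\|_2 \le O(\ell)\sqrt{d\log(d/\delta)/m}$ with probability $1-\delta$. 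For the two covariance matrices I would apply the subgaussian empirical-covariance concentration of \cite{Vershynin-cov} to the iid sequences $\{\bY_i\}$ and $\{\bZ_{Y,i}\}$, bounding $\|\hat C_Y - C_Y\|_2$ and $\|\hat C_{Z,Y} - C_{Z,Y}\|_2$, where $C_Y,\,C_{Z,Y}$ denote the whitened covariances and the relevant scale is again governed by $\ell$ through the subgaussian norm.

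Finally I would undo the whitening. Since $\hat{\bm{\mu}}_B - \bm{\mu}_B = \bm{\Sigma}^{1/2}(\hat{\bm{\mu}}_Y - \bm{\mu}_Y)$, $\hat{\bm{\Sigma}}_B - \bm{\Sigma}_B = \bm{\Sigma}^{1/2}(\hat C_Y - C_Y)\bm{\Sigma}^{1/2}$, and analogously $\hat{\bm{\Sigma}}_D - \bm{\Sigma}_D = \bm{\Sigma}^{1/2}(\hat C_{Z,Y}-C_{Z,Y})\bm{\Sigma}^{1/2}$, the operator-norm identity $\|\bm{\Sigma}^{1/2}\|_2 = \sqrt{\lambda_{\tn{max}}}$ transfers a normalized-space error $\eps$ into an original-space error at most $\eps\sqrt{\lambda_{\tn{max}}}$ for the mean and $\eps\lambda_{\tn{max}}$ for each covariance. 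Choosing the normalized-space targets as $\eps/2$ and $\eps$ respectively and reading off the sample sizes from the previous step yields $m \ge O((d/\eps^2)\ell^2\log(d/\delta))$ in each case; a union bound over the three estimates (rescaling $\delta$ by a constant factor) then gives the simultaneous guarantee claimed in the lemma.

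The main obstacle is the second step. The vectors fed to the estimators are \emph{not} Gaussian — they are biased by the unknown labelling $f$ — so the off-the-shelf Gaussian estimation bounds do not apply, and one must first certify that the thresholded mixtures, and their without-replacement differences, are genuinely subgaussian with norm controlled by $\ell$; supplying exactly this is the role of Appendix~\ref{sec:subgaussian}. The accompanying delicacy is the bookkeeping of the whitening transformation: one must track how the $\ell$-dependence of the subgaussian norm composes with the covariance scale in the whitened space and with the $\lambda_{\tn{max}}$ factors coming from $\bm{\Sigma}^{1/2}$, so that the covariance error lands precisely on $\eps\lambda_{\tn{max}}$ at the stated $\ell^2$ sample complexity rather than at a larger power of $\ell$.
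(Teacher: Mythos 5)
Your proposal is correct and follows essentially the same route as the paper's proof: normalize via Lemma \ref{lem:LTFnormalization}, certify $O(|\ell|)$ subgaussian norms for the thresholded coordinates (Appendix \ref{sec:subgaussian}), apply subgaussian Hoeffding for the mean and the covariance concentration of \cite{Vershynin-cov} for the two second-moment estimates, transfer back through $\|\bm{\Sigma}^{1/2}\|_2 = \sqrt{\lambda_{\tn{max}}}$, and take a union bound. The one detail you gloss over is that $\hat{\bm{\Sigma}}_B$ must in practice be centered at the empirical mean $\hat{\bm{\mu}}_B$ (the population mean is not available to the algorithm), which introduces an extra $\|\hat{\bm{\mu}}_B-\bm{\mu}_B\|_2^2$ correction term that the paper absorbs in Lemma \ref{lem:combinedconvergence}.
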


\section{Proof of Theorem \ref{thm:main-2}}\label{sec:LTFno_offset}
For the setting of Theorem \ref{thm:main-2}, we provide Algorithm \ref{algorithm:normal_estimation}. It uses as a subroutine a polynomial time procedure ${\sf PrincipalEigenVector}$ for the principal eigen-vector of a symmetric matrix, and first computes two LTFs given by a normal vector and its negation, returning the one that has lower error on a sampled collection of bags.

\begin{algorithm}
\caption{PAC Learner for no-offset LTFs  over $N(\mb{0}, \bm{\Sigma})$}
\textbf{Input: } $\mc{O} = {\sf Ex}(f, \mc{D} = N(\mb{0}, \bm{\Sigma}), q, k), m, s$, where $f(\bx) = \pos\left(\br_*^{\sf T}\bx\right)$, $\|\br_*\|_2 = 1$.\\
1. Compute $\hat{\bm{\Sigma}}_B, \hat{\bm{\Sigma}}_D$ using ${\sf MeanCovs Estimator}$ with $m$ samples.\\
2. $\ol{\br} = \hat{\bm{\Sigma}}_B^{-1/2} {\sf PrincipalEigenVector}(\hat{\bm{\Sigma}}_B^{-1/2}\hat{\bm{\Sigma}}_D\hat{\bm{\Sigma}}_B^{-1/2})$ if $\hat{\bm{\Sigma}}_B^{-1/2}$ exists, else {\sf exit}. \\
3. Let $\hat{\br} = \ol{\br}/\|\ol{\br}\|_2$. 
4. If $k = q/2$ {\bf return:}  $h = \pos\left(\hat{\br}^{\sf T}\bX\right)$, else
\begin{itemize}[noitemsep,topsep=0pt]
    \item[a.] Let $\tilde{h} = \pos\left(-\hat{\br}^{\sf T}\bX\right)$.
    \item[b.] Sample a collection $\mc{M}$ of $s$ bags from $\mc{O}$. 
    \item[c.] \textbf{Return } $h^* \in \{h, \tilde{h}\}$ which has lower ${\sf BagErr}_{\tn{sample}}(h^*, \mc{M})$.
\end{itemize}
\label{algorithm:normal_estimation}
\end{algorithm}
\noindent
\begin{lemma}\label{lem:main_no_offset}
For any $\eps, \delta \in (0,1)$, if $m \geq O((d/\eps^4)\log(d/\delta)(\lambda_{\tn{max}}/\lambda_{\tn{min}})^4q^4)$, then $\hat{\br}$ computed in Step 3 of Alg. \ref{algorithm:normal_estimation} satisfies
$\min\{\|\hat{\br} - \br_*\|_2, \|\hat{\br} + \br_*\|_2\} \leq \eps,$
w.p. $1 - \delta/2$.
\end{lemma}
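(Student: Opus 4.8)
The plan is to first establish the population identity \eqref{eq:introrstar} in a quantitative form with an explicit spectral gap, and then push the covariance-estimation error of Lemma \ref{lem:meta_algo_pac_bound} through the inverse square root, the eigenvector extraction, and the final back-transformation governed by Lemma \ref{lem:angle_bound_under_pd_transformation}. For the population structure, write $p = k/q$, let $\bm{\mu}_a, \bm{\Sigma}_a$ denote the mean and covariance of $\mc{D}$ conditioned on $f = a$, and set $\bm{\Delta} := (\bm{\mu}_1 - \bm{\mu}_0)(\bm{\mu}_1 - \bm{\mu}_0)^{\sf T}$. Conditioning a uniformly random bag-vector on its label (law of total variance for the mixture) gives $\bm{\Sigma}_B = p\bm{\Sigma}_1 + (1-p)\bm{\Sigma}_0 + p(1-p)\bm{\Delta}$, while conditioning a without-replacement pair on its correlated labels, after the elementary count of same- versus different-label pairs, gives $\bm{\Sigma}_D = 2\bm{\Sigma}_B + c\,\bm{\Delta}$ with $c = \tfrac{2k(q-k)}{q^2(q-1)} > 0$. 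Hence $M := \bm{\Sigma}_B^{-1/2}\bm{\Sigma}_D\bm{\Sigma}_B^{-1/2} = 2\mathbf{I} + c\,\bm{\Sigma}_B^{-1/2}\bm{\Delta}\bm{\Sigma}_B^{-1/2}$ is a fixed multiple of the identity plus a PSD rank-one term, so its principal eigenvector is $\mathbf{v} \propto \bm{\Sigma}_B^{-1/2}(\bm{\mu}_1 - \bm{\mu}_0)$ with a strictly positive gap $g = c\lambda$, where $\lambda$ is the lone nonzero eigenvalue of $\bm{\Sigma}_B^{-1/2}\bm{\Delta}\bm{\Sigma}_B^{-1/2}$. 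Specializing to $N(\mb{0}, \bm{\Sigma})$ with homogeneous $f$, a whitening computation gives $\bm{\mu}_1 - \bm{\mu}_0 \propto \bm{\Sigma}\br_*$ and $\bm{\Sigma}_1 = \bm{\Sigma}_0$ (the halfspaces are reflections through the origin), whence $\bm{\Sigma}_B\br_* \propto \bm{\Sigma}\br_*$; combining, $\bm{\Sigma}_B^{-1/2}\mathbf{v} \propto \bm{\Sigma}_B^{-1}\bm{\Sigma}\br_* \propto \br_*$, which is exactly \eqref{eq:introrstar}.

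Next I would quantify the gap and the conditioning of $\bm{\Sigma}_B$. Since $k(q-k) \geq q-1$ we have $c \geq 2/q^2$, and using $\|\bm{\Sigma}^{1/2}\br_*\|_2^2 \leq \lambda_{\max}$, $\|\bm{\Sigma}\br_*\|_2^2 \geq \lambda_{\min}^2$ together with two-sided $\lambda_{\min}, \lambda_{\max}$ bounds on $\bm{\Sigma}_B$ (the rank-one correction and the truncation factor $1 - 2/\pi$ keep $\bm{\Sigma}_B$ within constant factors of $\bm{\Sigma}$, so $\kappa(\bm{\Sigma}_B) = O(\lambda_{\max}/\lambda_{\min})$), one obtains a bound of the form $g \geq \Omega\!\big((1/q^2)(\lambda_{\min}/\lambda_{\max})^2\big)$. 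This explicit gap is what ultimately controls the dependence on $q$ and on the condition number.

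For the sampling part I invoke Lemma \ref{lem:meta_algo_pac_bound} with accuracy $\eps'$, yielding $\|\hat{\bm{\Sigma}}_B - \bm{\Sigma}_B\|_2, \|\hat{\bm{\Sigma}}_D - \bm{\Sigma}_D\|_2 \leq \eps'\lambda_{\max}$ with probability $1 - \delta/2$ for $m \geq O((d/\eps'^2)\log(d/\delta))$; for $\eps'$ small this also certifies $\hat{\bm{\Sigma}}_B \succ 0$, so the exit branch of Alg. \ref{algorithm:normal_estimation} is not taken. I then chain three perturbation estimates: (a) a matrix inverse-square-root bound converting $\|\hat{\bm{\Sigma}}_B - \bm{\Sigma}_B\|_2$ into a relative bound $\eps_1$ on $\|\hat{\bm{\Sigma}}_B^{-1/2} - \bm{\Sigma}_B^{-1/2}\|_2 / \|\bm{\Sigma}_B^{-1/2}\|_2$; (b) a triangle/product estimate combining (a) with $\|\hat{\bm{\Sigma}}_D - \bm{\Sigma}_D\|_2$ to bound $\|\hat M - M\|_2$; and (c) Davis--Kahan with gap $g$ giving $\min_{\pm}\|\hat{\mathbf{v}} \mp \mathbf{v}\|_2 \leq \eps_2 = O(\|\hat M - M\|_2 / g)$, which also resolves the sign ambiguity inherent in the eigenvector. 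Finally I apply Lemma \ref{lem:angle_bound_under_pd_transformation} with $\mathbf{A} = \bm{\Sigma}_B^{-1/2}$, $\mathbf{B} = \hat{\bm{\Sigma}}_B^{-1/2}$, $\br_1 = \mathbf{v}$, $\br_2 = \hat{\mathbf{v}}$ (both unit) to conclude $\min_{\pm}\|\hat{\br} \mp \br_*\|_2 \leq 4\sqrt{\kappa(\bm{\Sigma}_B)}\,(\eps_1 + \eps_2)$. Choosing $\eps'$ as a suitable polynomial in $\eps$, $1/q$, and $\lambda_{\min}/\lambda_{\max}$ forces this quantity below $\eps$, and substituting back into $m \geq O((d/\eps'^2)\log(d/\delta))$ yields the claimed sample complexity.

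The main obstacle is step (c) in combination with the gap bound: because $g$ scales like $1/q^2$ times a power of $\lambda_{\min}/\lambda_{\max}$, Davis--Kahan requires $\hat M$ to be estimated well inside this gap, and it is precisely this that propagates the factors $q^4$ and $(\lambda_{\max}/\lambda_{\min})^4$ into the bound. Carefully controlling the inverse-square-root perturbation in step (a) — whose worst-case operator-norm behaviour costs an extra factor — is what forces the covariance accuracy to be taken as $\eps' \sim \eps^2$, and hence accounts for the $1/\eps^4$ dependence rather than the naive $1/\eps^2$.
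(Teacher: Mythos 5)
Your population-level analysis is correct and is essentially a repackaging of the paper's: the identity $\bm{\Sigma}_D = 2\bm{\Sigma}_B + c(\bm{\mu}_1-\bm{\mu}_0)(\bm{\mu}_1-\bm{\mu}_0)^{\sf T}$ is exactly Lemma \ref{lem:relation_sigma_b_sigma_d}, and your observation that $\bm{\Sigma}_B^{-1/2}\bm{\Sigma}_D\bm{\Sigma}_B^{-1/2}$ is $2\mathbf{I}$ plus a rank-one PSD term aligned with $\bm{\Sigma}_B^{-1/2}(\bm{\mu}_1-\bm{\mu}_0)\propto\bm{\Sigma}_B^{-1/2}\bm{\Sigma}\br_*$ is an equivalent (and arguably cleaner) way of packaging Lemmas \ref{lemma:ratio_comp} and \ref{lemma:eigenvalue_computation_equivalency}. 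Where you genuinely diverge is the finite-sample step. The paper never perturbs $\bm{\Sigma}_B^{-1/2}$ and never invokes Davis--Kahan: it uses the variational characterization that the algorithm's output is the exact maximizer of the empirical ratio $\hat{\rho}(\br)=\br^{\sf T}\hat{\bm{\Sigma}}_D\br/\br^{\sf T}\hat{\bm{\Sigma}}_B\br$, proves a uniform multiplicative bound $|\hat{\rho}(\br)-\rho(\br)|\leq\theta\eps_1\rho(\br)$ over all $\br$ (Appendix \ref{appendix:rho_hat_bound}), and combines this with the separation $\rho(\br_*)-\rho(\br)\geq\kappa_3\eps_2^2/2$ for directions that are $\eps_2$-far from $\pm\br_*$ in the $\bm{\Sigma}^{1/2}$-metric. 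Your route instead chains an inverse-square-root perturbation bound, an operator-norm bound on $\hat{M}-M$, and Davis--Kahan; this is workable, but it requires an extra perturbation lemma for $P\mapsto P^{-1/2}$ that the paper's argument avoids entirely.

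The one substantive issue is your sample-complexity bookkeeping. Davis--Kahan is \emph{linear} in $\|\hat{M}-M\|/g$, and the inverse-square-root perturbation is also linear in $\|\hat{\bm{\Sigma}}_B-\bm{\Sigma}_B\|$ for small perturbations, so your chain would naturally require covariance accuracy $\eps'\sim\eps$ up to $q$ and condition-number factors, yielding $m\sim 1/\eps^2$ with a \emph{worse} power of $\lambda_{\tn{max}}/\lambda_{\tn{min}}$ --- not the $1/\eps^4$ and $(\lambda_{\tn{max}}/\lambda_{\tn{min}})^4$ combination in the statement. Your attribution of the requirement $\eps'\sim\eps^2$ to the inverse-square-root step is therefore not right. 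In the paper the $1/\eps^4$ arises for a different reason: the Rayleigh quotient is quadratically flat near its maximizer ($\rho(\br_*)-\rho(\br)\asymp\kappa_3\eps_2^2$), so certifying $\eps_2$-closeness from ratio values alone forces the covariances to be estimated to accuracy proportional to $\eps_2^2$. Since the polynomial your method actually produces is incomparable to the stated one in some regimes of $\eps$ versus $\lambda_{\tn{max}}/\lambda_{\tn{min}}$, you should either carry out your own bookkeeping honestly and state the bound you in fact obtain, or adopt the paper's variational argument to recover the stated exponents.
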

The above, whose proof is deferred to Sec. \ref{sec:proof_lem_main_no_offset}, is used in conjunction with the following lemma.
\begin{lemma}\label{lem:unique_soln_no_offset}
    Let $k \neq q/2$, $\eps, \delta \in (0,1)$ and suppose $\hat{\br}$ computed in Step 3 of Alg. \ref{algorithm:normal_estimation} satisfies
$\min\{\|\hat{\br} - \br_*\|_2, \|\hat{\br} + \br_*\|_2\} \leq \eps,$. Then, with $s \geq O\left(d(\log q + \log(1/\delta))/\eps^2\right)$, $h^*$  in Step. 3.c satisfies $\Pr_\mc{D}\left[h^*(\bx) \neq f(\bx)\right] \leq 16c_0q\eps\sqrt{\tfrac{\lambda_{\tn{max}}}{\lambda_{\tn{min}}}}$ w.p. $1 - \delta/2$, where constant $c_0 > 0$ is from Lem. \ref{lem:bounding_classification_error}.
\end{lemma}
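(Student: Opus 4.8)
The plan is to show that the hypothesis returned by the selection rule in Step 3.c is close to $f$, by combining the geometric error bound of Lemma \ref{lem:bounding_classification_error} with the generalization bound of Theorem \ref{thm:genbound_main}. By hypothesis one of $\pm\hat{\br}$, say $\br_g$, satisfies $\|\br_g - \br_*\|_2 \leq \eps$; write $h_g$ for the corresponding LTF among $\{h,\tilde h\}$ and $h_b$ for the other (almost surely the complement of $h_g$). First I would apply Lemma \ref{lem:bounding_classification_error} with $\bm{\mu} = \mathbf{0}$ and offset $0$, as befits the centered homogeneous setting of Theorem \ref{thm:main-2}; this gives at once $\Pr_{\mc{D}}[f(\bx)\neq h_g(\bx)] \leq c_0\eps\sqrt{\lambda_{\tn{max}}/\lambda_{\tn{min}}} =: \eps_0$, so the ``good'' candidate already has small instance error. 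What remains is to certify that the candidate actually selected in Step 3.c is comparably accurate.

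Next I would pass from the instance error of $h_g$ to its bag error. Since $\mc{D} = N(\mathbf{0},\bm{\Sigma})$ is centered and $f$ is homogeneous, the halfspace $\{\bx : f(\bx)=1\}$ has Gaussian measure exactly $1/2$, hence $\Pr[f=1]=\Pr[f=0]=1/2$ and the two class-conditional error rates $\Pr_{\mc{D}_1}[h_g=0]$ and $\Pr_{\mc{D}_0}[h_g=1]$ are each at most $2\eps_0$. A bag from $\mc{O}$ is satisfied by $h_g$ whenever $h_g$ agrees with $f$ on all $q$ instances, so a union bound over the $k$ positive and $q-k$ negative instances yields ${\sf BagErr}_{\tn{oracle}}(h_g,f,\mc{D},q,k) \leq 2q\eps_0$. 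Because $\hat{\br}$ (and therefore $h_g$) is fixed before the fresh collection $\mc{M}$ of $s$ bags is drawn in Step 3.b, a one-sided Hoeffding bound for this single hypothesis shows that, w.p. at least $1-\delta/4$, ${\sf BagErr}_{\tn{sample}}(h_g,\mc{M}) \leq 4q\eps_0 =: \eps'$ once $s = \Omega(\log(1/\delta)/(q\eps_0)^2)$.

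Then I would combine this with the selection rule and Theorem \ref{thm:genbound_main}. As Step 3.c returns $h^* = \tn{argmin}_{\{h,\tilde h\}}{\sf BagErr}_{\tn{sample}}(\cdot,\mc{M})$ and $h_g \in \{h,\tilde h\}$, we get ${\sf BagErr}_{\tn{sample}}(h^*,\mc{M}) \leq {\sf BagErr}_{\tn{sample}}(h_g,\mc{M}) \leq \eps'$. Invoking Theorem \ref{thm:genbound_main}(i) with threshold $\eps'$ — which is exactly where $k \neq q/2$ is needed, since part (i) gives an unambiguous conclusion whereas for $k=q/2$ the complement cannot be distinguished — low sample bag error of $h^*$ yields $\Pr_{\mc{D}}[f(\bx)\neq h^*(\bx)] \leq 4\eps' = 16c_0 q\eps\sqrt{\lambda_{\tn{max}}/\lambda_{\tn{min}}}$, w.p. at least $1-\delta/4$, provided $s \geq C_0 d(\log q + \log(1/\delta))/\eps'^2$. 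Since $q^2(\lambda_{\tn{max}}/\lambda_{\tn{min}}) \geq 1$, both this requirement and the Hoeffding requirement are dominated by the stated $s \geq O(d(\log q + \log(1/\delta))/\eps^2)$, and a union bound over the two failure events gives total failure probability $\delta/2$.

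The step I expect to be the main obstacle is conceptual rather than computational: justifying that minimizing the sample bag error selects the genuinely accurate LTF rather than its complement. This is precisely what $k\neq q/2$ buys — when $h_g$ labels a bag correctly it reads off $k$ positives, so its complement $h_b$ reads off $q-k \neq k$ positives and fails to satisfy the bag, forcing ${\sf BagErr}_{\tn{oracle}}(h_b)$ close to $1$ and creating a large gap from $h_g$; routing the low sample bag error of $h^*$ through Theorem \ref{thm:genbound_main}(i) captures this cleanly. The only bookkeeping nuisance is that Theorem \ref{thm:genbound_main} requires its threshold to be below $1/4q$; this holds throughout the regime where the claimed error bound is nontrivial (below $1$), and any residual boundary cases can be absorbed into the constants.
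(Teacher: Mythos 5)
Your proposal is correct and follows essentially the same route as the paper's own proof: bound the instance error of the good candidate via Lemma \ref{lem:bounding_classification_error}, use the centered/homogeneous structure to get balanced class-conditional errors and hence an oracle bag-error bound of $2q\eps_0$, concentrate via Chernoff to a sample bag-error bound, transfer that bound to $h^*$ through the argmin selection rule, and close with Theorem \ref{thm:genbound_main}(i). The only differences are cosmetic (your $\delta/4+\delta/4$ split versus the paper's $\delta/6+\delta/3$, and your added remarks on the $1/(4q)$ threshold and the role of $k\neq q/2$, which the paper leaves implicit).
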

With the above we complete the proof of Theorem \ref{thm:main-2} as follows.
\begin{proof}(of Theorem \ref{thm:main-2}) Let the parameters $\delta, \eps$ be as given in the statement of the theorem. 

For $k = q/2$, we use $O(\eps\sqrt{\lambda_{\tn{min}}/\lambda_{\tn{max}}})$ for the error bound in Lemma \ref{lem:main_no_offset} thereby taking $m = O((d/\eps^4)\log(d/\delta)(\lambda_{\tn{max}}/\lambda_{\tn{min}})^6q^4)$ in  Alg. \ref{algorithm:normal_estimation}, so that Lemma \ref{lem:main_no_offset} along with Lemma \ref{lem:bounding_classification_error} yields the desired misclassification error bound of $\eps$ for one of $h$, $\tilde{h}$. 

For $k \neq q/2$, we use $O(\eps\sqrt{\lambda_{\tn{min}}/\lambda_{\tn{max}}}/q)$ for the error bound in Lemma \ref{lem:main_no_offset}. In this case, taking $m = O((d/\eps^4)\log(d/\delta)(\lambda_{\tn{max}}/\lambda_{\tn{min}})^6q^8)$ in  Alg. \ref{algorithm:normal_estimation} we obtain the following bound: $\min\{\|\hat{\br} - \br_*\|_2, \|\hat{\br} + \br_*\|_2\} \leq \eps\sqrt{\lambda_{\tn{min}}/\lambda_{\tn{max}}}/(16c_0q)$ with probability $1 - \delta/2$. Using $s \geq O\left(d(\log q + \log(1/\delta))q^2\tfrac{\lambda_{\tn{max}}}{\eps^2\lambda_{\tn{min}}}\right)$, Lemma \ref{lem:unique_soln_no_offset} yields the desired misclassification error bound of $\eps$ on $h^*$ w.p. $1-\delta$.
\end{proof}

\begin{proof} (of Lemma \ref{lem:unique_soln_no_offset})
Applying  Lemma \ref{lem:bounding_classification_error} we obtain that at least one of $h$, $\tilde{h}$ has an instance misclassification error of at most $O(\eps\sqrt{\lambda_{\tn{max}}/\lambda_{\tn{min}}})$. WLOG assume that $h$ satisfies this error bound i.e., $\Pr_{\mc{D}}[f(\bx) \neq h(\bx)] \leq c_0\eps\sqrt{\lambda_{\tn{max}}/\lambda_{\tn{min}}} =: \eps'$. Since the separating hyperplane of the LTF $f$ passes through the origin, and $\mc{D} = N(\mb{0}, \bm{\Sigma})$ is centered, $\Pr_{\mc{D}}[f(\bx) = 1] = \Pr_{\mc{D}}[f(\bx) = 0] = 1/2$. Thus, 
$$\Pr_{\mc{D}}[h(x) \neq f(x)\,\mid\,f(\bx) = 1], \Pr_{\mc{D}}[[h(x) \neq f(x)\,\mid\,f(\bx) = 0] \leq 2\eps'.$$
Therefore, the probability that a random bag from the oracle contains a feature vector on which $f$ and $h$ disagree is at most $2q\eps'$. Applying the Chernoff bound (see Appendix \ref{sec:Chernoff}) we obtain that with probability at least $1-\delta/6$, 
${\sf BagErr}_{\tn{sample}}(h, \mc{M}) \leq 4q\eps'$. Therefore, in Step 3.c. $h^*$ satisfies ${\sf BagErr}_{\tn{sample}}(h^*, \mc{M}) \leq 4q\eps'$

On the other hand, applying Theorem \ref{thm:genbound_main}, except with probability $\delta/3$, $\Pr_{\mc{D}}[f(\bx) \neq h^*(\bx)] \leq 16q\eps' = 16c_0q\eps\sqrt{\lambda_{\tn{max}}/\lambda_{\tn{min}}}$. Therefore, except with probability $\delta/2$, the  bound in Lemma \ref{lem:unique_soln_no_offset} holds.
\end{proof}

\subsection{Proof of Lemma \ref{lem:main_no_offset}}\label{sec:proof_lem_main_no_offset}
We define and bound a few  useful quantities depending on $k, q, \lambda_{\tn{min}}$ and $\lambda_{\tn{max}}$ using $1 \leq k \leq q-1$.
\begin{definition}\label{def:kappatheta} Define, (i) $\kappa_1 := \left(\tfrac{2k}{q} - 1\right)^2\tfrac{2}{\pi}$ so that $0 \leq \kappa_1 \leq 2/\pi$, (ii) $\kappa_2 := \tfrac{1}{q-1}\tfrac{k}{q}\left(1 - \tfrac{k}{q}\right)\tfrac{16}{\pi}$ so that $\tfrac{16}{\pi q^2} \leq \kappa_2 \leq \tfrac{4}{\pi(q-1)}$, (iii) $\kappa_3 := \tfrac{\kappa_2}{1 - \kappa_1}$ so that $\tfrac{16}{\pi q^2} \leq \kappa_3 \leq \tfrac{4}{(\pi - 2)(q-1)}$, and (iv) $\theta := \tfrac{2\lambda_{\tn{max}}}{\lambda_{\tn{min}}}\left(\tfrac{1}{2 - \max(0, 2\kappa_1 - \kappa_2)} + \tfrac{1}{1 - \kappa_1}\right)$ so that $\tfrac{3\lambda_{\tn{max}}}{\lambda_{\tn{min}}} \leq \theta \leq \tfrac{3\lambda_{\tn{max}}}{(1 - 2/\pi)\lambda_{\tn{min}}}$.

\end{definition}
For the analysis we begin by showing in the following lemma that $\hat{\br}$ in the algorithms is indeed $\pm \br_*$ if the covariance estimates were the actual covariances.
\begin{lemma}\label{lemma:ratio_comp}
The ratio $\rho(\br) := \br^{\sf T}\bm{\Sigma}_D\br/\br^{\sf T}\bm{\Sigma}_B\br$ is maximized when $\br = \pm \br_*$. Moreover, 
\[
    \rho(\br) = 2 + \frac{\gamma(\br)^2\kappa_2}{1 - \gamma(\br)^2\kappa_1}\,\, \qquad \text{ where } \qquad  \gamma(\br) := \frac{\br^{\sf T}\bm{\Sigma}\br_*}{\sqrt{\br^{\sf T}\bm{\Sigma}\br}\sqrt{\br_*^{\sf T}\bm{\Sigma}\br_*}} \text{ and }
\]
\[
    \br^{\sf T}\bm{\Sigma}_B\br = \br^{\sf T}\bm{\Sigma}\br(1 - \gamma(\br)^2\kappa_1),\,\,\,\,\,\, \br^{\sf T}\bm{\Sigma}_D\br = \br^{\sf T}\bm{\Sigma}\br(2 - \gamma(\br)^2(2\kappa_1 - \kappa_2))
\]
\end{lemma}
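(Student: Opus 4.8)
The plan is to reduce every quantity to the single vector $\bm{\mu}_1 := \E_{\done}[\bX]$ and the matrix $\bm{\Sigma}$, exploiting the $\bx \mapsto -\bx$ symmetry of the centered Gaussian. First I observe that a uniformly random feature-vector $\bX$ from a random bag is a mixture: it is drawn from $\done$ with probability $k/q$ and from $\dzero$ with probability $(q-k)/q$. Because $f$ is homogeneous and $N(\mb{0},\bm{\Sigma})$ is symmetric about the origin, the map $\bx \mapsto -\bx$ exchanges $\done$ and $\dzero$ while fixing $\bx\bx^{\sf T}$; since moreover $\Pr_{\mc{D}}[f(\bX)=1]=\Pr_{\mc{D}}[f(\bX)=0]=1/2$, this yields the two facts that drive the whole computation: $\E_{\dzero}[\bX] = -\bm{\mu}_1$, and $\E_{\done}[\bX\bX^{\sf T}] = \E_{\dzero}[\bX\bX^{\sf T}] = \bm{\Sigma}$ (each conditional second moment equals the average of the two, which is the unconditional $\bm{\Sigma}$).

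Next I compute $\bm{\mu}_1$ explicitly. Writing $\bX = \bm{\Sigma}^{1/2}\bY$ with $\bY\sim N(\mb{0},\mb{I})$, the event $f(\bX)=1$ becomes $\bu^{\sf T}\bY>0$ for the unit vector $\bu := \bm{\Sigma}^{1/2}\br_*/\|\bm{\Sigma}^{1/2}\br_*\|_2$; by rotational symmetry of the standard Gaussian, $\E[\bY \mid \bu^{\sf T}\bY>0] = \sqrt{2/\pi}\,\bu$, so $\bm{\mu}_1 = \sqrt{2/\pi}\,\bm{\Sigma}\br_*/\|\bm{\Sigma}^{1/2}\br_*\|_2$ and hence $\bm{\mu}_1\bm{\mu}_1^{\sf T} = (2/\pi)\,\bm{\Sigma}\br_*\br_*^{\sf T}\bm{\Sigma}/(\br_*^{\sf T}\bm{\Sigma}\br_*)$. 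Taking the quadratic form against $\br$ and recognising $\gamma(\br)$ gives $\br^{\sf T}\bm{\mu}_1\bm{\mu}_1^{\sf T}\br = (2/\pi)\gamma(\br)^2(\br^{\sf T}\bm{\Sigma}\br)$. Since $\bm{\mu}_B = (k/q)\bm{\mu}_1 + ((q-k)/q)(-\bm{\mu}_1) = (2k/q-1)\bm{\mu}_1$ and $\bm{\Sigma}_B = \E[\bX\bX^{\sf T}] - \bm{\mu}_B\bm{\mu}_B^{\sf T} = \bm{\Sigma} - (2k/q-1)^2\bm{\mu}_1\bm{\mu}_1^{\sf T}$, the claimed identity $\br^{\sf T}\bm{\Sigma}_B\br = \br^{\sf T}\bm{\Sigma}\br(1-\gamma(\br)^2\kappa_1)$ follows directly from the definition of $\kappa_1$.

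For $\bm{\Sigma}_D$ I condition on the label pattern of the ordered pair $(\bX_1,\bX_2)$ drawn without replacement. There are four patterns, with probabilities $k(k-1)/(q(q-1))$, $(q-k)(q-k-1)/(q(q-1))$, and $k(q-k)/(q(q-1))$ for each of the two mixed orders. Conditioned on the labels the two vectors are independent, so for like-labelled pairs $\E[\bZ\bZ^{\sf T}] = 2\mb{C}$ with $\mb{C} := \Var_{\done}[\bX] = \bm{\Sigma}-\bm{\mu}_1\bm{\mu}_1^{\sf T}$, while for oppositely-labelled pairs $\E[\bZ\bZ^{\sf T}] = 2\mb{C} + 4\bm{\mu}_1\bm{\mu}_1^{\sf T}$ (the mean of $\bZ$ being $\pm 2\bm{\mu}_1$). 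The coefficient of $\mb{C}$ collapses to $2$ via the identity $k(k-1)+(q-k)(q-k-1)+2k(q-k)=q(q-1)$, leaving $\bm{\Sigma}_D = 2\bm{\Sigma} + (8k(q-k)/(q(q-1)) - 2)\bm{\mu}_1\bm{\mu}_1^{\sf T}$. Substituting the quadratic form of $\bm{\mu}_1\bm{\mu}_1^{\sf T}$ and checking the elementary polynomial identity $(2/\pi)(8k(q-k)/(q(q-1)) - 2) = \kappa_2 - 2\kappa_1$ yields the stated expression for $\br^{\sf T}\bm{\Sigma}_D\br$.

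Finally, dividing the two quadratic forms gives $\rho(\br) = (2 - \gamma(\br)^2(2\kappa_1-\kappa_2))/(1-\gamma(\br)^2\kappa_1) = 2 + \gamma(\br)^2\kappa_2/(1-\gamma(\br)^2\kappa_1)$, after writing the numerator as $2(1-\gamma(\br)^2\kappa_1)+\gamma(\br)^2\kappa_2$. Setting $t=\gamma(\br)^2\in[0,1]$ (Cauchy--Schwarz in the $\bm{\Sigma}$-inner product, with $|\gamma(\br)|=1$ exactly when $\br=\pm\br_*$), the denominator stays positive since $\kappa_1\le 2/\pi<1$, and $d/dt\,[\kappa_2 t/(1-\kappa_1 t)] = \kappa_2/(1-\kappa_1 t)^2 > 0$. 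Thus $\rho$ is strictly increasing in $\gamma(\br)^2$ and is maximised precisely when $\gamma(\br)^2=1$, i.e. at $\br=\pm\br_*$. I expect the bookkeeping in the $\bm{\Sigma}_D$ case analysis and the polynomial identity matching $\kappa_1,\kappa_2$ to be the only places demanding care; the conceptual crux is the symmetry observation $\E_{\done}[\bX\bX^{\sf T}]=\bm{\Sigma}$, which removes all conditional second-moment complications.
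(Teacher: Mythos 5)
Your proof is correct and follows essentially the same route as the paper's: both rest on decomposing the without-replacement pair by its label pattern (this is exactly the paper's Lemma \ref{lem:relation_sigma_b_sigma_d}) and on the half-normal moment $\sqrt{2/\pi}$, reducing $\bm{\Sigma}_B$ and $\bm{\Sigma}_D$ to rank-one perturbations of $\bm{\Sigma}$ along $\bm{\Sigma}\br_*$ so that $\rho$ becomes a monotone function of $\gamma(\br)^2$. The only cosmetic difference is that you work with explicit matrix identities in the original space, using the symmetry $\bX\mapsto-\bX$ to kill the conditional second-moment corrections, whereas the paper whitens by $\bm{\Sigma}^{1/2}$ and computes variances of one-dimensional projections along $\bu_*$ and its orthogonal complement; your bookkeeping (the collapse $k(k-1)+(q-k)(q-k-1)+2k(q-k)=q(q-1)$ and the identity $(2/\pi)\bigl(8k(q-k)/(q(q-1))-2\bigr)=\kappa_2-2\kappa_1$) checks out.
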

\begin{proof}
    Let $\bm{\Gamma} := \bm{\Sigma}^{1/2}$, then $\bX \sim N(\mb{0}, \bm{\Sigma}) \Leftrightarrow \bX = \bm{\Gamma}\bZ$ where $\bZ \sim N(\mb{0}, \mb{I})$. Further, $\pos\left(\br^{\sf T}\bX\right) = \pos\left(\bu^{\sf T}\bZ\right)$ where $\bu = \bm{\Gamma}\br/\|\bm{\Gamma}\br\|_2$. Using this, we can let $\bX_B = \bm{\Gamma}\bZ_B$ as a random feature-vector sampled uniformly from a random bag sampled from \mc{O}. Also, let $\bX_D = \bm{\Gamma}\bZ_D$ be the difference of two random feature vectors sampled uniformly without replacement from a random bag sampled from $\mc{O}$. Observe that the ratio $\rho(\br) = \Var[\br^{\sf T}\bX_D]/\Var[\br^{\sf T}\bX_B] = \Var[\bu^{\sf T}\bZ_D]/\Var[\bu^{\sf T}\bZ_B]$.
    
    Let $\bu_* := \bm{\Gamma}\br_*/\|\bm{\Gamma}\br_*\|_2$, and $g^* :=  \bu_*^{\sf T}\bZ$ which is $N(0, 1)$. 
For $a \in \{0,1\}$, let $\bZ_a$ be $\bZ$ conditioned on $\pos\left(\bu_*^{\sf T}\bZ\right) = a$. Let $g^*_a := \bu_*^{\sf T}\bZ_a$, $a \in \{0,1\}$, be the half normal distributions satisfying $\Ex[\left(g^{*}_a\right)^2] = 1$ and $\Ex[g_a^*] = (-1)^{1-a}\sqrt{2/\pi}$. 
With this setup, letting $g^*_B :=  \bu_*^{\sf T}\bZ_B$ and $g^*_D :=  \bu_*^{\sf T}\bZ_D$ we obtain (using Lemma \ref{lem:relation_sigma_b_sigma_d} in Appendix \ref{sec:relation_sigma_b_sigma_d})
\begin{align}
& \Var[g^*_B] = 1 - \kappa_1,\,\,\,\,\,\,\, \Var[g^*_D] = 2(1 - \kappa_1) + \kappa_2 \nonumber
\end{align}
Now let $\tilde{\bu}$ be a unit vector orthogonal to $\bu_*$. Let $\tilde{g} = \tilde{\bu}^{\sf T}\bZ$ be $N(0,1)$. Also, let $\tilde{g}_a = \tilde{\bu}^{\sf T}\bZ_a$ for $a \in \{0,1\}$. Since $\bZ_a$ are given by conditioning $\bZ$ only along $\bu_*$, $\tilde{g}_a \sim N(0,1)$ for $a \in \{0,1\}$.  In particular, the component along $\tilde{u}$ of $\bZ_B$  (call it $\tilde{g}_B$) is $N(0,1)$ and that of $\bZ_D$ (call it $\tilde{g}_D$) is  the difference of two iid $N(0,1)$ variables. Thus, $\Var[\tilde{g}_B] = 1$ and $\Var[\tilde{g}_D] = 2$. Moreover, due to orthogonality all these gaussian variables corresponding to $\tilde{\bu}$ are independent of those corresponding to $\bu_*$ defined earlier.
Now let $\bu = \alpha\bu_* + \beta\tilde{\bu}$, where $\beta = \sqrt{1-\alpha^2}$ be any unit vector. From the above we have,
\begin{align}
    \frac{\Var\left[\bu^{\sf T}\bZ_D\right]}{\Var\left[\bu^{\sf T}\bZ_B\right]} = \frac{\Var\left[\alpha g^*_D + \beta \tilde{g}_D\right]}{\Var\left[\alpha g^*_B + \beta \tilde{g}_B\right]} = \frac{\alpha^2 \Var\left[g^*_D\right] + \beta^2 \Var\left[\tilde{g}_D\right]}{\alpha^2 \Var\left[g^*_B\right] + \beta^2 \Var\left[\tilde{g}_B\right]} %
    &=\ \frac{2\alpha^2(1 - \kappa_1) + \alpha^2\kappa_2 + 2\beta^2}{\alpha^2(1 - \kappa_1) + \beta^2} \nonumber \\
    &=\ 2 + \frac{\alpha^2\kappa_2}{1 - \alpha^2\kappa_1} 
\end{align}
where the last equality uses $\beta = \sqrt{1-\alpha^2}$. Letting $\bu = \bm{\Gamma}\br/\|\bm{\Gamma}\br\|_2$ we obtain that $\alpha = \tfrac{\langle \bm{\Gamma}\br, \bm{\Gamma}\br_*\rangle}{\|\bm{\Gamma}\br\|_2\|\bm{\Gamma}\br_*\|_2} = \gamma(\br)$ completing the proof.
\end{proof}

\begin{lemma}\label{lemma:eigenvalue_computation_equivalency}
$
\underset{\|\br\|_2=1}{\tn{argmax}}\rho(\br) = \bm{\Sigma}_B^{-1/2}{\sf PrincipalEigenVector}(\bm{\Sigma}_B^{-1/2}\bm{\Sigma}_D\bm{\Sigma}_B^{-1/2})
$
\end{lemma}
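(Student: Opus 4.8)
The goal is to show that the unit vector maximizing the Rayleigh-type quotient $\rho(\br) = \br^{\sf T}\bm{\Sigma}_D\br/\br^{\sf T}\bm{\Sigma}_B\br$ is precisely $\bm{\Sigma}_B^{-1/2}$ applied to the principal eigenvector of the transformed matrix $\bm{\Sigma}_B^{-1/2}\bm{\Sigma}_D\bm{\Sigma}_B^{-1/2}$. Let me work out what this claim is really saying.

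The matrix $\bm{\Sigma}_B$ is a covariance matrix, hence symmetric positive definite, so $\bm{\Sigma}_B^{-1/2}$ exists and is symmetric p.d. by the preliminaries. The standard trick for a generalized Rayleigh quotient is a change of variables. Set $\bw = \bm{\Sigma}_B^{1/2}\br$, equivalently $\br = \bm{\Sigma}_B^{-1/2}\bw$. Then $\br^{\sf T}\bm{\Sigma}_B\br = \bw^{\sf T}\bm{\Sigma}_B^{-1/2}\bm{\Sigma}_B\bm{\Sigma}_B^{-1/2}\bw = \bw^{\sf T}\bw = \|\bw\|_2^2$, using that $\bm{\Sigma}_B^{-1/2}$ is symmetric and $\bm{\Sigma}_B^{-1/2}\bm{\Sigma}_B\bm{\Sigma}_B^{-1/2} = \mb{I}$. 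Similarly $\br^{\sf T}\bm{\Sigma}_D\br = \bw^{\sf T}\bm{\Sigma}_B^{-1/2}\bm{\Sigma}_D\bm{\Sigma}_B^{-1/2}\bw$. Writing $\bm{M} := \bm{\Sigma}_B^{-1/2}\bm{\Sigma}_D\bm{\Sigma}_B^{-1/2}$, the quotient becomes $\rho(\br) = \bw^{\sf T}\bm{M}\bw/\bw^{\sf T}\bw$, which is exactly the ordinary Rayleigh quotient of the symmetric matrix $\bm{M}$.

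Wait — let me make sure the transformation is a genuine bijection so that maximizing over $\br$ corresponds to maximizing over $\bw$. Since $\bm{\Sigma}_B^{1/2}$ is invertible, the map $\br \mapsto \bw = \bm{\Sigma}_B^{1/2}\br$ is a linear bijection of $\R^d$, and the quotient is scale-invariant, so maximizing $\rho$ over all nonzero $\br$ is equivalent to maximizing the Rayleigh quotient of $\bm{M}$ over all nonzero $\bw$. By the Courant–Fischer / Rayleigh-quotient characterization, the latter maximum is attained exactly at the principal eigenvector of the symmetric matrix $\bm{M}$, call it $\bw^* = {\sf PrincipalEigenVector}(\bm{M})$. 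Transforming back, the maximizer of $\rho$ is $\br^* = \bm{\Sigma}_B^{-1/2}\bw^* = \bm{\Sigma}_B^{-1/2}{\sf PrincipalEigenVector}(\bm{\Sigma}_B^{-1/2}\bm{\Sigma}_D\bm{\Sigma}_B^{-1/2})$, which is the claimed identity (up to the normalization to a unit vector and sign, which are immaterial to the $\tn{argmax}$ since $\rho$ is invariant under scaling and negation of $\br$).

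The plan, then, is: first record that $\bm{\Sigma}_B^{-1/2}$ exists and is symmetric p.d.; second, perform the substitution $\bw = \bm{\Sigma}_B^{1/2}\br$ and simplify both numerator and denominator to reduce $\rho$ to the standard Rayleigh quotient of $\bm{M}$; third, invoke the Rayleigh-quotient characterization of the top eigenvector to identify the maximizer of $\bw^{\sf T}\bm{M}\bw/\bw^{\sf T}\bw$; and finally, pull the maximizer back through $\bm{\Sigma}_B^{-1/2}$. There is no serious obstacle here — it is a clean change-of-variables argument. The only point requiring a little care is the bijectivity/scale-invariance observation that lets me transfer the $\tn{argmax}$ between the two parametrizations, and the acknowledgement that the answer is only determined up to sign and scaling (consistent with Lemma \ref{lemma:ratio_comp}, which already shows $\pm\br_*$ are both maximizers). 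I would keep the write-up to a few lines.
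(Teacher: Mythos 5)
Your proof is correct and follows essentially the same route as the paper: the paper defers this lemma to Theorem \ref{appendix:ratio_max_to_pca} in the appendix, which performs exactly your change of variables $\tilde{\br} = \bm{\Sigma}_B^{1/2}\br/\|\bm{\Sigma}_B^{1/2}\br\|_2$, uses scale-invariance of the generalized Rayleigh quotient, and identifies the maximizer of the reduced quadratic form with the principal eigenvector before pulling back through $\bm{\Sigma}_B^{-1/2}$. Your additional remarks on bijectivity, positive definiteness of $\bm{\Sigma}_B$, and the sign/normalization ambiguity are consistent with the paper's treatment.
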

\begin{proof}
This follows directly from its generalization in Appendix \ref{appendix:ratio_max_to_pca}.
\end{proof}
We now prove that the error in the estimate of $\hat{\br}$ given to us by the algorithm is bounded if the error in the covariance estimates are bounded. The sample complexity of computing these estimates gives the sample complexity of our algorithm.
\begin{theorem}
The unit vector $\hat{\br}$ computed in Step 3 of Algorithm \ref{algorithm:normal_estimation} satisfies $\min \{\|\hat{\br} - \br_*\|_2, \|\hat{\br} + \br_*\|_2\}  \leq \eps$ w.p. at least $1 - \delta$ when $m \geq O\left((d/\eps^4)\log(d/\delta)\left(\frac{\lambda_{\tn{max}}}{\lambda_{\tn{min}}}\right)^4q^4\right)$.
\end{theorem}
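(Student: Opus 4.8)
The plan is to read Step 2 of Algorithm \ref{algorithm:normal_estimation} as a \emph{perturbed} spectral computation and trace every error back to the covariance accuracy guaranteed by Lemma \ref{lem:meta_algo_pac_bound}. First I would fix the noiseless picture. Write $\mb{M} := \bm{\Sigma}_B^{-1/2}\bm{\Sigma}_D\bm{\Sigma}_B^{-1/2}$ and $\hat{\mb{M}} := \hat{\bm{\Sigma}}_B^{-1/2}\hat{\bm{\Sigma}}_D\hat{\bm{\Sigma}}_B^{-1/2}$. By Lemmas \ref{lemma:ratio_comp} and \ref{lemma:eigenvalue_computation_equivalency}, feeding the true covariances to the algorithm returns $\bm{\Sigma}_B^{-1/2}{\sf PrincipalEigenVector}(\mb{M})$, which normalizes to $\pm\br_*$. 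Moreover the spectrum of $\mb{M}$ is fully explicit from the computation in Lemma \ref{lemma:ratio_comp}: in the $\bm{\Gamma}=\bm{\Sigma}^{1/2}$ frame it has top eigenvalue $2+\kappa_3$ with eigenvector $\bu_* \propto \bm{\Sigma}_B^{1/2}\br_*$ and \emph{all} remaining eigenvalues equal to $2$, so its spectral gap is exactly $\kappa_3 \ge 16/(\pi q^2)$ by Definition \ref{def:kappatheta}. This gap, which shrinks only as $\Theta(1/q^2)$, is what drives the $q$-dependence, and $\|\mb{M}\|_2 = 2+\kappa_3 = O(1)$ will be used repeatedly.

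Second, I would quantify closeness of the empirical objects. Running ${\sf MeanCovs Estimator}$ with accuracy $\tau$ (fixed at the end), Lemma \ref{lem:meta_algo_pac_bound} gives $\|\hat{\bm{\Sigma}}_B - \bm{\Sigma}_B\|_2 \le \tau\lambda_{\tn{max}}$ and $\|\hat{\bm{\Sigma}}_D - \bm{\Sigma}_D\|_2 \le \tau\lambda_{\tn{max}}$ w.p. $1-\delta$ from $m = O((d/\tau^2)\log(d/\delta))$ bags (here $\ell=O(1)$, as $f$ is homogeneous and $\mc{D}$ centered). From Lemma \ref{lemma:ratio_comp} one reads off $(1-2/\pi)\lambda_{\tn{min}} \le \lambda_{\tn{min}}(\bm{\Sigma}_B)$ and $\lambda_{\tn{max}}(\bm{\Sigma}_B) \le \lambda_{\tn{max}}$, so $\bm{\Sigma}_B$ is as well conditioned as $\bm{\Sigma}$; writing $\chi := \lambda_{\tn{max}}/\lambda_{\tn{min}}$, a standard inverse-square-root perturbation bound then gives $\|\hat{\bm{\Sigma}}_B^{-1/2} - \bm{\Sigma}_B^{-1/2}\|_2 = O(\lambda_{\tn{min}}^{-3/2}\tau\lambda_{\tn{max}})$. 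The delicate part is bounding $\eta := \|\hat{\mb{M}} - \mb{M}\|_2$ \emph{without} losing condition-number factors: a naive term-by-term expansion of $\hat{\mb{M}}-\mb{M}$ gives only $O(\tau\chi^2)$. Instead I would measure the whitening error relatively, via $\|\bm{\Sigma}_B^{-1/2}(\hat{\bm{\Sigma}}_B-\bm{\Sigma}_B)\bm{\Sigma}_B^{-1/2}\|_2 = O(\tau\chi)$, and regroup using $\bm{\Sigma}_D\bm{\Sigma}_B^{-1/2} = \bm{\Sigma}_B^{1/2}\mb{M}$ together with $\|\mb{M}\|_2=O(1)$; this yields the sharper $\eta = O(\tau\chi)$, and this saved factor of $\chi$ is exactly what separates the claimed $\chi^4$ from a looser $\chi^6$.

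The heart of the argument, and the main obstacle, is the eigenvector perturbation. Rather than a tight Davis--Kahan bound I would use the elementary Rayleigh comparison: decompose $\hat{\bu} := {\sf PrincipalEigenVector}(\hat{\mb{M}}) = \cos\phi\,\bu_* + \sin\phi\,\bu_\perp$ with $\bu_\perp \perp \bu_*$, and play optimality of $\hat{\bu}$ for $\hat{\mb{M}}$ against optimality of $\bu_*$ for $\mb{M}$. Since every direction orthogonal to $\bu_*$ has $\mb{M}$-Rayleigh quotient at most $2 = (2+\kappa_3)-\kappa_3$, one gets $\kappa_3\sin^2\phi \le 2\eta$, hence $\min\{\|\hat{\bu}-\bu_*\|_2, \|\hat{\bu}+\bu_*\|_2\} = O(\sqrt{\eta/\kappa_3}) = O(q\sqrt{\eta})$. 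The square root here is precisely why the final sample complexity carries $1/\eps^4$ rather than $1/\eps^2$: it converts a target geometric accuracy $\eps$ into a required covariance accuracy $\tau\sim\eps^2$. I would then transport this angle bound back through the un-whitening map with Lemma \ref{lem:angle_bound_under_pd_transformation}, applied to $\mb{A}=\bm{\Sigma}_B^{-1/2}$, $\mb{B}=\hat{\bm{\Sigma}}_B^{-1/2}$ and the matched signs of $\bu_*,\hat{\bu}$; since the condition number of $\bm{\Sigma}_B^{-1/2}$ is $\sqrt{\lambda_{\tn{max}}(\bm{\Sigma}_B)/\lambda_{\tn{min}}(\bm{\Sigma}_B)} = O(\sqrt{\chi})$, this gives $\min\{\|\hat{\br}-\br_*\|_2, \|\hat{\br}+\br_*\|_2\} = O(\sqrt{\chi}\,(\eps_1 + q\sqrt{\eta}))$ with the relative inverse-square-root error $\eps_1 := \|\hat{\bm{\Sigma}}_B^{-1/2}-\bm{\Sigma}_B^{-1/2}\|_2/\|\bm{\Sigma}_B^{-1/2}\|_2 = O(\tau\chi)$ strictly lower order.

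Finally I would collect powers. Substituting $\eta = O(\tau\chi)$ leaves a geometric error of order $q\,\chi\,\sqrt{\tau}$; setting this to $\eps$ forces $\tau = \Theta(\eps^2/(q^2\chi^2))$, and inverting the $m = O((d/\tau^2)\log(d/\delta))$ requirement of Lemma \ref{lem:meta_algo_pac_bound} yields exactly $m \ge O((d/\eps^4)\log(d/\delta)\,\chi^4 q^4)$. Two bookkeeping checks remain: ensuring $\eta$ stays below half the gap $\kappa_3$ so the Rayleigh step is valid (this only costs constants in $m$), and verifying the side condition $\chi(\eps_1+\eps_2)\le 1/2$ of Lemma \ref{lem:angle_bound_under_pd_transformation}, both of which hold once $\eps$ is small. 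The sign ambiguity is intrinsic to the spectral method, which is why the conclusion is a minimum over $\pm\br_*$; for $k\neq q/2$ it is later resolved using ${\sf BagErr}_{\tn{sample}}$, but that is outside the present statement.
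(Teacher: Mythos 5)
Your proposal is correct and, at its core, runs the same argument as the paper's: both identify the whitened matrix $\bm{\Sigma}_B^{-1/2}\bm{\Sigma}_D\bm{\Sigma}_B^{-1/2}$ as $2\mb{I}$ plus a rank-one spike of height $\kappa_3=\Theta(1/q^2)$ along $\bm{\Sigma}_B^{1/2}\br_*$ (Lemmas \ref{lemma:ratio_comp} and \ref{lem:relation_sigma_b_sigma_d}), both convert a relative covariance error of order $\tau\chi$ (where $\chi:=\lambda_{\tn{max}}/\lambda_{\tn{min}}$) into an angle bound of order $\sqrt{\tau\chi/\kappa_3}$ via a Rayleigh-quotient comparison --- your inequality $\kappa_3\sin^2\phi\le 2\eta$ is exactly the paper's step ``$|\gamma(\br)|\le 1-\eps_2^2/2$ implies $\rho(\br_*)-\rho(\br)\ge\kappa_3\eps_2^2/2$'' combined with the bound of Appendix \ref{appendix:rho_hat_bound} --- and both un-whiten with Lemma \ref{lem:angle_bound_under_pd_transformation} at a cost of $O(\sqrt{\chi})$, forcing $\tau=\Theta(\eps^2/(q^2\chi^2))$ and hence $m=O((d/\eps^4)\chi^4q^4\log(d/\delta))$. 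The one genuine difference is where the perturbation is carried. The paper works with the scalar generalized Rayleigh quotient $\hat{\rho}(\br)=\br^{\sf T}\hat{\bm{\Sigma}}_D\br/\br^{\sf T}\hat{\bm{\Sigma}}_B\br$, proves the multiplicative bound $|\hat{\rho}(\br)-\rho(\br)|\le\theta\eps_1|\rho(\br)|$ with $\theta=O(\chi)$ directly from the quadratic forms, and invokes the exact identity of Lemma \ref{lemma:eigenvalue_computation_equivalency} to equate the maximizer of $\hat{\rho}$ with the algorithm's output; it therefore never needs to perturb $\hat{\bm{\Sigma}}_B^{-1/2}$ at all. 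You instead bound $\|\hat{\mb{M}}-\mb{M}\|_2$ in operator norm, which routes you through an inverse-square-root perturbation estimate and the non-symmetric factor $\bm{\Sigma}_B^{1/2}\hat{\bm{\Sigma}}_B^{-1/2}$; this is the one place your write-up leans on an asserted ``standard'' bound, and it is precisely the machinery the paper's scalar formulation sidesteps. Your observation that the \emph{relative} whitening error must be used to save a factor of $\chi$ is the same observation that makes the paper's $\theta$ equal to $O(\chi)$ rather than $O(\chi^2)$, and your diagnosis that the square root over the eigengap is what produces $1/\eps^4$ matches the paper's choice $\eps_1\propto\eps_2^2$.
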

\begin{proof}
By Lemma \ref{lem:meta_algo_pac_bound}, taking $m \geq O\left((d/\eps_1^2)\log(d/\delta)\right)$ ensures that $\|\mb{E}_B\|_2 \leq \eps_1\lambda_{\tn{max}}$ and $\|\mb{E}_D\|_2 \leq \eps_1\lambda_{\tn{max}}$ w.p. at least $1 - \delta$ where $\mb{E}_B = \hat{\bm{\Sigma}}_B - \bm{\Sigma}_B$ and $\mb{E}_D = \hat{\bm{\Sigma}}_D - \bm{\Sigma}_D$.
We start by defining $\hat{\rho(\br)} := \frac{\br^{\sf T}\hat{\bm{\Sigma}}_D\br}{\br^{\sf T}\hat{\bm{\Sigma}}_B\br}$ which is the equivalent of $\rho$ using the estimated matrices. Observe that it can be written as $\hat{\rho}(\br) = \frac{\br^{\sf T}\bm{\Sigma}_B\br + \br^{\sf T}\bm{E}_B\br}{\br^{\sf T}\bm{\Sigma}_D\br + \br^{\sf T}\bm{E}_D\br}$. Using these we can obtain the following bound on $\hat{\rho}$: for any $\br \in \R^d$, $|\hat{\rho}(\br) - \rho(\br)| \leq \theta \eps_1|\rho(\br)|$  w.p. at least $1 - \delta$ (*) as long as $\eps_1 \leq \frac{(1 - \kappa_1)}{2}\frac{\lambda_{\tn{min}}}{\lambda_{\tn{max}}}$, which we shall ensure (see Appendix \ref{appendix:rho_hat_bound}).

For convenience we denote the normalized projection of any vector $\br$ as $\tilde{\br} := \frac{\bm{\Sigma}^{1/2}\br}{\|\bm{\Sigma}^{1/2}\br\|_2}$. Now let $\tilde{\br} \in \R^d$ be a unit vector such that $\min\{\|\tilde{\br} - \tilde{\br}_*\|_2, \|\tilde{\br} + \tilde{\br}_*\|_2\} \geq \eps_2$. 
Hence, using the definitions from Lemma \ref{lemma:ratio_comp}, $|\gamma(\br)| \leq 1 - \eps_2^2/2$ while $\gamma(\br_*) = 1$ which implies $\rho(\br_*) - \rho(\br) \geq \kappa_3\eps_2^2/2$. 
Note that $\rho(\br) \leq \rho(\br_*) = 2 + \kappa_3$. Choosing $\eps_1 < \frac{\kappa_3}{4\theta(2 + \kappa_3)}\eps_2^2$, we obtain that $\rho(\br_*)(1 - \theta\eps_1) > \rho(\br)(1 + \theta\eps_1)$. Using this along with the bound (*) we obtain that w.p. at least $1 - \delta$, $\hat{\rho}(\br_*) > \hat{\rho}(\br)$ when $\eps_2 > 0$. Since our algorithm returns $\hat{\br}$ as the maximizer of $\hat{\rho}$, w.p. at least $1 - \delta$ we get $\min\{\|\tilde{\br} - \tilde{\br}_*\|_2, \|\tilde{\br} + \tilde{\br}_*\|_2\} \leq \eps_2$. Using Lemma \ref{lem:angle_bound_under_pd_transformation}, $\min \{\|\hat{\br} - \br_*\|_2, \|\hat{\br} + \br_*\|_2\} \leq 4\sqrt{\frac{\lambda_{\tn{max}}}{\lambda_{\tn{min}}}}\eps_2$. Substituting $\eps_2 = \frac{\eps}{4}\sqrt{\frac{\lambda_{\tn{min}}}{\lambda_{\tn{max}}}}$, $\|\br - \br_*\|_2 \leq \eps$ w.p. at least $1 - \delta$. The conditions on $\eps_1$ are satisfied by taking it to be $\leq O\left(\tfrac{\kappa_3 \eps^2\lambda_{\tn{min}}}{\theta(2+\kappa_3)\lambda_{\tn{max}}}\right)$, and thus we can take $m \geq O\left((d/\eps^4)\log(d/\delta)\left(\frac{\lambda_{\tn{max}}}{\lambda_{\tn{min}}}\right)^2\theta^2\left(\frac{2 + \kappa_3}{\kappa_3}\right)^2\right) = O\left((d/\eps^4)\log(d/\delta)\left(\frac{\lambda_{\tn{max}}}{\lambda_{\tn{min}}}\right)^4q^4\right)$, using Defn. \ref{def:kappatheta}. This completes the proof.
\end{proof}

\section{Proof Sketch for Theorem \ref{thm:main-1} and \ref{thm:main-3}} \label{sec:other_proofs}
{\bf Theorem \ref{thm:main-1}: Case $N({\bf 0}, {\bf I})$, $f(\bx) = \pos(\br_*^{\sf T}\bx)$, $k \neq q/2$.} We argue that a vector sampled uniformly at random from a bag is distributed as $\omega\bX_1 + (1-\omega)\bX_0$ where $\bX_a \sim N(\bm{0}, \bm{I})$ conditioned on $f(\bX_a) = a$ and $\omega$ is an independent $\{0, 1\}-$Bernoulli r.v. s.t. $p(\omega = 1) = k/q$. This along with the fact that uncorrelated Gaussians are independent, allows us to show that the expectation is $\bm{0}$ in any direction orthogonal to $\br_*$ and allows us to compute the expectation in the direction of $\br_*$. We then use Lemma \ref{lem:meta_algo_pac_bound} to get the sample complexity expression. The detailed proof is in Appendix \ref{sec:proof_thm_main-1}.\\
{\bf Theorem \ref{thm:main-3}: Case $N(\bm{\mu}, \bm{\Sigma})$, $f(\bx) = \pos(\br_*^{\sf T}\bx + c_*)$.} We start by generalizing the high probability geometric error bound in Lemma \ref{lem:main_no_offset} to this case (Lemma \ref{lem:main_offset} proven in Appendix \ref{sec:proof_lem_main_offset}). We redefine $\kappa_1, \kappa_2, \kappa_3 \text{ and } \theta$ so that they generalize to this case. The rest of the proof is similar to Section \ref{sec:proof_lem_main_no_offset}. This introduces an extra factor of $\ell^2$ to the sample complexity which comes from Lemma \ref{lem:meta_algo_pac_bound}. Next, assuming the geometric bound, we give a high probability bound on the generalization error similar to Lemma \ref{lem:unique_soln_no_offset} (Lemma \ref{lem:unique_soln_offset}). Using the geometric bound and Lemma \ref{lem:bounding_classification_error}, we bound the ${\sf BagErr}_{\sf sample}(h, \mc{M})$ where $h(\bx) = \pos(\hat{\br}^{\sf T}\bx + c_*)$ and $\mc{M}$ is a sample. Lemma \ref{lem:bounding_classification_error} introduces the term of $O((\sqrt{\lambda_{\max}} + \|\bm{\mu}\|_2)/\sqrt{\lambda_{\min}})$ instead of $O(\sqrt{\lambda_{\max}/\lambda_{\min}})$ as it did in the proof of Lemma \ref{lem:main_no_offset}. Bounding the sample error also introduces terms with $\Phi(\ell)$. Notice that $h^*(\bx) = \pos(\hat{\br}^{\sf T}\bx + \hat{c})$ will satisfy the same number of samples in $\mc{M}$ as $h$ by design of the algorithm to find $\hat{c}$. Thus, ${\sf BagErr}_{\sf sample}(h, \mc{M})$ has the same bound. We then use Theorem \ref{thm:genbound_main} to bound the generalization error of $h^*$. Lemma \ref{lem:main_offset} and Lemma \ref{lem:unique_soln_offset} together imply Theorem \ref{thm:main-3}.

\section{Experimental Results}\label{sec:experiments}

\emph{General Gaussian.} We empirically evaluate our algorithmic technique on centered and general Gaussian distributions for learning homogeneous LTFs.  For homogeneous LTFs the general case algorithm (Alg. \ref{algorithm:normal_estimation_with_offset} in Appendix \ref{sec:proof_of_them_main_3}) boils down to Alg. \ref{algorithm:normal_estimation} in Sec. \ref{sec:LTFno_offset}. 
The experimental LLP datasets are created using samples from both balanced as well as unbalanced bag oracles. In particular, for dimension $d \in \{10, 50\}$, and each pair $(q,k) \in \{(2,1), (3,1), (10, 5), (10, 8), (50, 25), (50, 35)\}$ and $m = $ we create 25 datasets as follows: for each dataset (i) sample a random unit vector $\br^*$ and let $f(\bx) := \pos\left(\br^{*{\sf T}}\bx\right)$, (ii) sample $\bm{\mu}$ and $\bm{\Sigma}$ randomly (see Appendix \ref{sec:more-experi} for details), (iii) sample $m  = 2000$ training bags from ${\sf Ex}(f, N(\bm{\mu}, \bm{\Sigma}), q, k)$, (iv) sample 1000 test instances $(\bx, f(\bx))$, $\bx \leftarrow  N(\bm{\mu}, \bm{\Sigma})$. We fix $\bm{\mu} = \mb{0}$ for the centered Gaussian case.

\newcommand{\errbar}[1]{{\tiny \ensuremath{\pm{#1}}}}

For comparison we include the {\sf random LTF} algorithm in which we sample 100 random LTFs and return the one that satisfies the most bags. 
In addition, we evaluate the Algorithm of \cite{Saket21} on $(q, k) = (2,1)$, and the Algorithm of \cite{Saket22} on $(q,k) = (3,1)$. 
We measure the accuracy of each method on the test set of each dataset.  The algorithms of \cite{Saket21,Saket22} are considerably slower and we use 200 training bags for them.
The results for centered Gaussian are in Table \ref{tab:mean_zero_gaussian_main} and for the general Gaussian are in Table \ref{tab:general_gaussian_main}.
We observe that our algorithms perform significantly better in terms of accuracy than the comparative methods in all the bag distribution settings. Further, our algorithms have significantly lower error bounds (see Appendix \ref{sec:more-experi}).

Notice in Tables \ref{tab:mean_zero_gaussian_main} and \ref{tab:general_gaussian_main} that the test accuracy for Algorithm \ref{algorithm:normal_estimation} decreases with an increase in $q$ and $d$. This is consistent with the sample complexity expressions in Thm. \ref{thm:main-2} and Thm. \ref{thm:main-3}. Also, notice that the test accuracy for Algorithm \ref{algorithm:normal_estimation} for general Gaussian (Table \ref{tab:general_gaussian_main}) is usually lesser than the same for centered Gaussian (Table \ref{tab:mean_zero_gaussian_main}). This supports the theoretical result that the sample complexity increases with the increase in $l$.

Appendix \ref{sec:more-experi} has additional details and further experiments for the $N(\bm{0}, \mb{I})$ with homogeneous LTFs,  $N(\bm{\mu}, \bm{\Sigma})$ with non-homogeneous LTFs as well as on noisy label distributions.

\begin{table}[!htb]
    \small
    \caption{Algorithm A2 vs. rand. LTF (R) vs SDP algorithms (S)}
    \vspace{3mm}
    \begin{minipage}{.5\linewidth}
        \begin{tabular*}{0.9\columnwidth}{r@{\extracolsep{\fill}}rrccr}
        \toprule
        $d$ & $q$ & $k$ & A2 & R & \textbf{S} \\
        10         & 2          & 1          & 98.12       & 78.26      & 88.40      \\
        10         & 3          & 1          & 98.27       & 77.16      & 67.31      \\
        10         & 10         & 5          & 97.9        & 78.66      & -          \\
        10         & 10         & 8          & 97.87       & 77.64      & -          \\
        10         & 50         & 25         & 97.87       & 76.67      & -          \\
        10         & 50         & 35         & 97.9        & 77.17      & -          \\
        50         & 2          & 1          & 95.64       & 61.25      & 57.83      \\
        50         & 3          & 1          & 95.21       & 61.15      & 58.69      \\
        50         & 10         & 5          & 95.59       & 55.06      & -          \\
        50         & 10         & 8          & 94.34       & 63.17      & -          \\
        50         & 50         & 25         & 95.16       & 55.76      & -          \\
        50         & 50         & 35         & 94.74       & 61.02      & -          \\
        \bottomrule
        \end{tabular*}
        \vspace{2mm}
        \subcaption{$N(\bm{0}, \bm{\Sigma})$ feature-vectors.\label{tab:mean_zero_gaussian_main}}
    \end{minipage}%
    \begin{minipage}{.5\linewidth}
        \begin{tabular*}{0.9\columnwidth}{r@{\extracolsep{\fill}}rrccr}
        \toprule
        $d$ & $q$ & $k$ & A2 & R & S\\
        10&2&1&98.18    &78.32   & 90.10  \\
        10&3&1&97.92    &75.14   & 70.80  \\
        10&10&5&97.86   &70.41 & - \\
        10&10&8&97.4    &69.86 & -\\
        10&50&25&97.57  &70.48 & -\\
        10&50&35&97.6   &62.86 & -\\
        50&2&1&94.99    &58.68   & 61.12 \\ 
        50&3&1&95.6     &59.8    & 62.39  \\
        50&10&5&95.27   &57.43   & -  \\
        50&10&8&94.44   &61.82   & - \\
        50&50&25&94.97  &53.98  & - \\
        50&50&35&94.33  &56.97  & - \\
        \bottomrule
        \end{tabular*}
        \vspace{2mm}
        \subcaption{$N(\bm{\mu}, \bm{\Sigma})$ feature-vectors.\label{tab:general_gaussian_main}}
    \end{minipage}
    
\end{table}

\section{Conclusion and Future work}\label{sec:conclusions}
Our work shows that LTFs can be efficiently properly learnt in the LLP setting from random bags with given label proportion whose feature-vectors are sampled independently from a Gaussian space, conditioned on their underlying labels. For the simple case of $N(\mb{0}, \mb{I})$  distribution and bags with unbalanced labels we provide a mean estimation based algorithm. For the general scenarios we develop a more sophisticated approach using the principal component of a matrix formed from certain covariance matrices. To resolve the ambiguity between the obtained solutions we employ novel generalization error bounds from  bag satisfaction to instance classification. We also show that subgaussian concentration bounds are applicable on the thresholded  Gaussians, yielding efficient sample complexity bounds. Our experimental results validate the performance guarantees of our algorithmic techniques.

In future work, classes of distributions other than Gaussian could be similarly investigated. Classifiers other than LTFs are also interesting to study in the LLP setting.
\bibliographystyle{plainnat}
\bibliography{references}

\begin{thebibliography}{35}
\providecommand{\natexlab}[1]{#1}
\providecommand{\url}[1]{\texttt{#1}}
\expandafter\ifx\csname urlstyle\endcsname\relax
  \providecommand{\doi}[1]{doi: #1}\else
  \providecommand{\doi}{doi: \begingroup \urlstyle{rm}\Url}\fi

\bibitem[Awasthi et~al.(2017)Awasthi, Balcan, and Long]{ABL}
P.~Awasthi, M.~F. Balcan, and P.~M. Long.
\newblock The power of localization for efficiently learning linear separators
  with noise.
\newblock \emph{J. {ACM}}, 63\penalty0 (6):\penalty0 50:1--50:27, 2017.

\bibitem[Bhattacharyya et~al.(2018)Bhattacharyya, Ghoshal, and Saket]{BGS18}
A.~Bhattacharyya, S.~Ghoshal, and R.~Saket.
\newblock Hardness of learning noisy halfspaces using polynomial thresholds.
\newblock In \emph{Proc. COLT}, volume~75, pages 876--917. {PMLR}, 2018.
\newblock URL \url{http://proceedings.mlr.press/v75/bhattacharyya18a.html}.

\bibitem[Blumer et~al.(1989)Blumer, Ehrenfeucht, Haussler, and Warmuth]{BEHW89}
A.~Blumer, A.~Ehrenfeucht, D.~Haussler, and M.~K. Warmuth.
\newblock Learnability and the vapnik-chervonenkis dimension.
\newblock \emph{J. {ACM}}, 36\penalty0 (4):\penalty0 929--965, 1989.

\bibitem[Busa-Fekete et~al.(2023)Busa-Fekete, Choi, Dick, Gentile, and
  medina]{busafekete2023easy}
Robert~Istvan Busa-Fekete, Heejin Choi, Travis Dick, Claudio Gentile, and
  Andres~Munoz medina.
\newblock Easy learning from label proportions.
\newblock \emph{arXiv}, 2023.
\newblock URL \url{https://arxiv.org/abs/2302.03115}.

\bibitem[Chen et~al.(2004)Chen, Huang, and Ramakrishnan]{CHR}
L.~Chen, Z.~Huang, and R.~Ramakrishnan.
\newblock Cost-based labeling of groups of mass spectra.
\newblock In \emph{Proc. ACM SIGMOD International Conference on Management of
  Data}, pages 167--178, 2004.

\bibitem[Chen et~al.(2023)Chen, Fu, Karbasi, and Mirrokni]{chen2023learning}
Lin Chen, Thomas Fu, Amin Karbasi, and Vahab Mirrokni.
\newblock Learning from aggregated data: Curated bags versus random bags.
\newblock \emph{arXiv}, 2023.
\newblock URL \url{https://arxiv.org/abs/2305.09557}.

\bibitem[Daniely(2015)]{Daniely15}
Amit Daniely.
\newblock A {PTAS} for agnostically learning halfspaces.
\newblock In \emph{Proceedings of The 28th Conference on Learning Theory,
  {COLT} 2015, Paris, France, July 3-6, 2015}, volume~40 of \emph{{JMLR}
  Workshop and Conference Proceedings}, pages 484--502, 2015.

\bibitem[Dasgupta(1999)]{dasgupta1999learning}
S.~Dasgupta.
\newblock Learning mixtures of {Gaussians}.
\newblock In \emph{{FOCS}}, pages 634--644, 1999.

\bibitem[de~Freitas and K{\"{u}}ck(2005)]{FK05}
N.~de~Freitas and H.~K{\"{u}}ck.
\newblock Learning about individuals from group statistics.
\newblock In \emph{Proc. {UAI}}, pages 332--339, 2005.

\bibitem[Dery et~al.(2017)Dery, Nachman, Rubbo, and Schwartzman]{DNRS}
L.~M. Dery, B.~Nachman, F.~Rubbo, and A.~Schwartzman.
\newblock Weakly supervised classification in high energy physics.
\newblock \emph{Journal of High Energy Physics}, 2017\penalty0 (5):\penalty0
  1--11, 2017.

\bibitem[Dulac{-}Arnold et~al.(2019)Dulac{-}Arnold, Zeghidour, Cuturi, Beyer,
  and Vert]{DZCBV19}
G.~Dulac{-}Arnold, N.~Zeghidour, M.~Cuturi, L.~Beyer, and J.~P. Vert.
\newblock Deep multi-class learning from label proportions.
\newblock \emph{CoRR}, abs/1905.12909, 2019.
\newblock URL \url{http://arxiv.org/abs/1905.12909}.

\bibitem[Feldman et~al.(2009)Feldman, Gopalan, Khot, and Ponnuswami]{FGKP09}
V.~Feldman, P.~Gopalan, S.~Khot, and A.~K. Ponnuswami.
\newblock On agnostic learning of parities, monomials, and halfspaces.
\newblock \emph{{SIAM} J. Comput.}, 39\penalty0 (2):\penalty0 606--645, 2009.

\bibitem[Fish and Reyzin(2020)]{FR20}
Benjamin Fish and Lev Reyzin.
\newblock On the complexity of learning a class ratio from unlabeled data.
\newblock \emph{J. Artif. Intell. Res.}, 69:\penalty0 1333--1349, 2020.

\bibitem[Guruswami and Raghavendra(2006)]{GR06}
V.~Guruswami and P.~Raghavendra.
\newblock Hardness of learning halfspaces with noise.
\newblock In \emph{Proc. FOCS}, pages 543--552, 2006.

\bibitem[Hern{\'{a}}ndez{-}Gonz{\'{a}}lez
  et~al.(2013)Hern{\'{a}}ndez{-}Gonz{\'{a}}lez, Inza, and Lozano]{HIL13}
J.~Hern{\'{a}}ndez{-}Gonz{\'{a}}lez, I.~Inza, and J.~A. Lozano.
\newblock Learning bayesian network classifiers from label proportions.
\newblock \emph{Pattern Recognit.}, 46\penalty0 (12):\penalty0 3425--3440,
  2013.

\bibitem[Kalai et~al.(2005)Kalai, Klivans, Mansour, and Servedio]{KKMS}
A.~T. Kalai, A.~R. Klivans, Y.~Mansour, and R.~A. Servedio.
\newblock Agnostically learning halfspaces.
\newblock In \emph{46th Annual {IEEE} Symposium on Foundations of Computer
  Science {(FOCS} 2005), 23-25 October 2005, Pittsburgh, PA, USA, Proceedings},
  pages 11--20. {IEEE} Computer Society, 2005.

\bibitem[Klivans et~al.(2009)Klivans, Long, and Servedio]{KLS}
A.~R. Klivans, P.~M. Long, and R.~A. Servedio.
\newblock Learning halfspaces with malicious noise.
\newblock \emph{J. Mach. Learn. Res.}, 10:\penalty0 2715--2740, 2009.

\bibitem[Kotzias et~al.(2015)Kotzias, Denil, de~Freitas, and Smyth]{KDFS15}
D.~Kotzias, M.~Denil, N.~de~Freitas, and P.~Smyth.
\newblock From group to individual labels using deep features.
\newblock In \emph{Proc. SIGKDD}, pages 597--606, 2015.

\bibitem[Liu et~al.(2019)Liu, Wang, Qi, Tian, and Shi]{LWQTS19}
J.~Liu, B.~Wang, Z.~Qi, Y.~Tian, and Y.~Shi.
\newblock Learning from label proportions with generative adversarial networks.
\newblock In \emph{Proc. {NeurIPS}}, pages 7167--7177, 2019.

\bibitem[Musicant et~al.(2007)Musicant, Christensen, and Olson]{MCO07}
D.~R. Musicant, J.~M. Christensen, and J.~F. Olson.
\newblock Supervised learning by training on aggregate outputs.
\newblock In \emph{Proc. {ICDM}}, pages 252--261. {IEEE} Computer Society,
  2007.

\bibitem[Nandy et~al.(2022)Nandy, Saket, Jain, Chauhan, Ravindran, and
  Raghuveer]{NSJCRR22}
J.~Nandy, R.~Saket, P.~Jain, J.~Chauhan, B.~Ravindran, and A.~Raghuveer.
\newblock Domain-agnostic contrastive representations for learning from label
  proportions.
\newblock In \emph{Proc. CIKM}, pages 1542--1551, 2022.

\bibitem[Patrini et~al.(2014)Patrini, Nock, Caetano, and Rivera]{PNCR14}
G.~Patrini, R.~Nock, T.~S. Caetano, and P.~Rivera.
\newblock (almost) no label no cry.
\newblock In \emph{Proc. Advances in Neural Information Processing Systems},
  pages 190--198, 2014.

\bibitem[Quadrianto et~al.(2009)Quadrianto, Smola, Caetano, and Le]{QSCL09}
N.~Quadrianto, A.~J. Smola, T.~S. Caetano, and Q.~V. Le.
\newblock Estimating labels from label proportions.
\newblock \emph{J. Mach. Learn. Res.}, 10:\penalty0 2349--2374, 2009.

\bibitem[Rueping(2010)]{R10}
S.~Rueping.
\newblock {SVM} classifier estimation from group probabilities.
\newblock In \emph{Proc. ICML}, pages 911--918, 2010.

\bibitem[Saket(2021)]{Saket21}
R.~Saket.
\newblock Learnability of linear thresholds from label proportions.
\newblock In \emph{Proc. NeurIPS}, 2021.
\newblock URL \url{https://openreview.net/forum?id=5BnaKeEwuYk}.

\bibitem[Saket(2022)]{Saket22}
R.~Saket.
\newblock Algorithms and hardness for learning linear thresholds from label
  proportions.
\newblock In \emph{Proc. NeurIPS}, 2022.
\newblock URL \url{https://openreview.net/forum?id=4LZo68TuF-4}.

\bibitem[Saket et~al.(2022)Saket, Raghuveer, and Ravindran]{SRR}
Rishi Saket, Aravindan Raghuveer, and Balaraman Ravindran.
\newblock On combining bags to better learn from label proportions.
\newblock In \emph{{AISTATS}}, volume 151 of \emph{Proceedings of Machine
  Learning Research}, pages 5913--5927. {PMLR}, 2022.
\newblock URL \url{https://proceedings.mlr.press/v151/saket22a.html}.

\bibitem[Scott and Zhang(2020)]{SZ20}
C.~Scott and J.~Zhang.
\newblock Learning from label proportions: {A} mutual contamination framework.
\newblock In \emph{Proc. NeurIPS}, 2020.

\bibitem[Valiant(1984)]{Valiant}
Leslie~G. Valiant.
\newblock A theory of the learnable.
\newblock \emph{Commun. {ACM}}, 27\penalty0 (11):\penalty0 1134--1142, 1984.

\bibitem[Vempala(2010)]{vempala2010learning}
S.~Vempala.
\newblock Learning convex concepts from gaussian distributions with {PCA}.
\newblock In \emph{{FOCS}}, pages 124--130, 2010.

\bibitem[Vershynin(2012)]{Vershynin-cov}
R.~Vershynin.
\newblock How close is the sample covariance matrix to the actual covariance
  matrix?
\newblock \emph{J. Theor. Probab.}, 25:\penalty0 655–686, 2012.

\bibitem[Vershynin(2018)]{Vershynin-book}
Roman Vershynin.
\newblock \emph{High-Dimensional Probability: An Introduction with Applications
  in Data Science}.
\newblock Cambridge Series in Statistical and Probabilistic Mathematics.
  Cambridge University Press, 2018.
\newblock \doi{10.1017/9781108231596}.

\bibitem[Wojtusiak et~al.(2011)Wojtusiak, Irvin, Birerdinc, and Baranova]{WIBB}
J.~Wojtusiak, K.~Irvin, A.~Birerdinc, and A.~V. Baranova.
\newblock Using published medical results and non-homogenous data in rule
  learning.
\newblock In \emph{Proc. International Conference on Machine Learning and
  Applications and Workshops}, volume~2, pages 84--89. IEEE, 2011.

\bibitem[Yu et~al.(2013)Yu, Liu, Kumar, Jebara, and Chang]{YLKJC13}
F.~X. Yu, D.~Liu, S.~Kumar, T.~Jebara, and S.~F. Chang.
\newblock $\propto${SVM} for learning with label proportions.
\newblock In \emph{Proc. ICML}, volume~28, pages 504--512, 2013.

\bibitem[Yu et~al.(2014)Yu, Choromanski, Kumar, Jebara, and Chang]{YCKJC14}
F.~X. Yu, K.~Choromanski, S.~Kumar, T.~Jebara, and S.~F. Chang.
\newblock On learning from label proportions.
\newblock \emph{CoRR}, abs/1402.5902, 2014.
\newblock URL \url{http://arxiv.org/abs/1402.5902}.

\end{thebibliography}

\newpage 
\begin{appendices}

\section{Proof of Theorem \ref{thm:main-1}}\label{sec:proof_thm_main-1}
For the setting of Theorem \ref{thm:main-1} we provide Algorithm \ref{algorithm:standard_normal_pac_learner}.

\begin{algorithm}
\caption{PAC Learner for homogenous LTFs from unbalanced bags over $N(\mb{0}, \mb{I})$}
\textbf{Input: } ${\sf Ex}(f, \mc{D}, q, k), m$, where $f(\bx) = \pos\left(\br_*^{\sf T}\bx\right)$, $\|\br_*\|_2 = 1$, $k \neq q/2$.\\
1. Compute $\hat{\bm{\mu}}_B$ using ${\sf MeanCovs Estimator}$ with $m$ samples.\\
2. If $k > q/2$ \textbf{Return: } $\hat{\bm{\mu}}_B/\|\hat{\bm{\mu}}_B\|_2$ else \textbf{Return: } $-\hat{\bm{\mu}}_B/\|\hat{\bm{\mu}}_B\|_2$
\label{algorithm:standard_normal_pac_learner}
\end{algorithm}
\noindent
Define $\bX_{a} \sim N({\bf 0}, {\bf I})$ conditioned on $f(\bX_a) = a$ for $a \in \{0,1\}$, and let $\bX_B$ denotes the random feature-vector u.a.r sampled from a randomly sampled bag.  By the definition of the bag oracle,
$\bX_B := \omega\bX_1 + (1-\omega)\bX_0$, where $\omega$ is an independent $\{0,1\}$-Bernoulli r.v. s.t. $p(\omega = 1) = k/q$. 
Let $g^* := \br_*^{\sf T}\bX_B$. $g^* \sim N(0, 1)$ and let $g^*_a = \br_*^{\sf T}\bX_a$, $a \in \{0,1\}$. Since $\br_*$ is a unit vector, $g^*_a$ is a half-gaussian and by direct integration we obtain $\E\left[g^*_a\right] = (-1)^{1-a}\sqrt{2/\pi}$  for $a \in \{0,1\}$. Thus,
\begin{equation}
\Ex[g^*] = \frac{k}{q}\Ex[g^*_{1}] + \left(1 - \frac{k}{q}\right)\Ex[g^*_{0}]
= \eta(q,k):= \left(\frac{2k}{q}-1\right)\sqrt{\frac{2}{\pi}}, 0 \leq \eta(q, k) \leq 1 \nonumber
\end{equation}
On the other hand, let $g^{\perp} = \br^{\sf T}\bX$ s.t. $\br^{\sf T}\br_* = 0$ and $\|\br\|_2 = 1$, and let  $g^{\perp}_a = \br^{\sf T}\bX_a$, $a \in \{0,1\}$. Since $\bX_a$ ($a\in \{0,1\})$ is given by conditioning a standard Gaussian vector only on the component in the direction of $\br_*$, its component along any direction orthogonal to $\br_*$ is a one-dimensional standard Gaussian. Therefore, $g^{\perp}_a$ are iid $N(0,1)$ ($a \in \{0,1\}$), and so is $g^{\perp}$, implying $\E\left[g^{\perp}\right] = 0$.
Thus, the value of $\br^{{\sf T}}\E\left[\bX_B\right]$ is (i) $\eta(q,k)$ if $\br = \br_*$, and (ii) $0$ if $\br \perp \br_*$. In other words, $\bm{\mu}_B = \E\left[\bX_B\right] = \eta(q,k)\br_*$, and $\|\bm{\mu}_B\|_2 = |\eta(q,k)|$. Hence, if $\eta(q, k) > 0$ then $\br_* = \bm{\mu}_B/\|\bm{\mu}_B\|_2$ else $\br_* = -\bm{\mu}_B/\|\bm{\mu}_B\|_2$

With $m = O\left(\eta(q, k)^2(d/\eps^2)\log(d/\delta)\right) = O\left((d/\eps^2)\log(d/\delta)\right)$, the following lemma along with Lemma \ref{lem:bounding_classification_error} completes the proof of Theorem \ref{thm:main-1}.
\begin{lemma}
Algorithm \ref{algorithm:standard_normal_pac_learner} returns a normal vector $\hat{\br}$ such that $\|\br_* - \hat{\br}\|_2 \leq \eps$ w.p. at least  $1 - \delta$ when $m \geq O\left((d/\eps^2)\log(d/\delta)\right)$ for $\eps, \delta > 0$.
\end{lemma}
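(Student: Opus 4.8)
The plan is to reduce the lemma to the additive mean-estimation guarantee of Lemma~\ref{lem:meta_algo_pac_bound} and then pay a normalization factor to pass from an additive error on $\hat{\bm{\mu}}_B$ to a directional error on $\hat{\br}$. The computation preceding the lemma already supplies the two facts I need about the population object: $\bm{\mu}_B = \eta(q,k)\br_*$ with $\eta(q,k) = (2k/q-1)\sqrt{2/\pi}$, so that $\|\bm{\mu}_B\|_2 = |\eta(q,k)|$, and $\mathrm{sign}(\eta(q,k))$ is positive exactly when $k>q/2$. Consequently $\mathrm{sign}(\eta(q,k))\,\bm{\mu}_B/\|\bm{\mu}_B\|_2 = \br_*$, which is precisely the sign convention applied in Step~2 of Algorithm~\ref{algorithm:standard_normal_pac_learner}. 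Since $k \neq q/2$ is fixed, $|\eta(q,k)|$ is a strictly positive constant, a point I will use at the very end.

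First I would invoke Lemma~\ref{lem:meta_algo_pac_bound} with a target accuracy $\eps_0$ to be fixed later. Here $\mc{D} = N(\mb{0},\mb{I})$ so $\lambda_{\tn{max}} = 1$, and the relevant subgaussian parameter of the thresholded coordinate is an absolute constant (a half-Gaussian has $O(1)$ subgaussian norm), so the $\ell^2$ factor in that lemma is $O(1)$ in this case. Thus for $m \geq O\!\left((d/\eps_0^2)\log(d/\delta)\right)$ the estimator returns $\hat{\bm{\mu}}_B$ with $\|\hat{\bm{\mu}}_B - \bm{\mu}_B\|_2 \leq \eps_0/2$ with probability at least $1-\delta$.

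Next I would convert this additive error into an error on the normalized vector by an elementary perturbation bound: for nonzero $\bv,\bw$ with $\|\bv-\bw\|_2 \leq \|\bw\|_2/2$ one has $\bigl\|\bv/\|\bv\|_2 - \bw/\|\bw\|_2\bigr\|_2 \leq 4\|\bv-\bw\|_2/\|\bw\|_2$. This follows by writing $\bv/\|\bv\|_2 - \bw/\|\bw\|_2 = (\bv-\bw)/\|\bv\|_2 + \bw\,(1/\|\bv\|_2 - 1/\|\bw\|_2)$, bounding the second term by $\bigl|\,\|\bw\|_2 - \|\bv\|_2\,\bigr|/\|\bv\|_2 \leq \|\bv-\bw\|_2/\|\bv\|_2$ via the reverse triangle inequality, and using $\|\bv\|_2 \geq \|\bw\|_2/2$. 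Applying this with $\bv = \hat{\bm{\mu}}_B$, $\bw = \bm{\mu}_B$, and noting that the common factor $\mathrm{sign}(\eta(q,k))$ cancels out of the difference $\hat{\br}-\br_*$, I obtain $\|\hat{\br} - \br_*\|_2 \leq 4\|\hat{\bm{\mu}}_B - \bm{\mu}_B\|_2/\|\bm{\mu}_B\|_2 \leq 2\eps_0/|\eta(q,k)|$.

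Finally I would set $\eps_0 = |\eta(q,k)|\,\eps/2$, which forces $\|\hat{\br}-\br_*\|_2 \leq \eps$; the side condition $\|\hat{\bm{\mu}}_B-\bm{\mu}_B\|_2 \leq \|\bm{\mu}_B\|_2/2$ needed to invoke the perturbation bound then reduces to $\eps \leq 2$ and so holds on the relevant range. Substituting back, the requirement becomes $m \geq O\!\left((d/\eps^2)\log(d/\delta)/\eta(q,k)^2\right)$, and since $|\eta(q,k)|$ is a fixed positive constant this is $O\!\left((d/\eps^2)\log(d/\delta)\right)$, as claimed. The main obstacle is exactly this normalization step: the $1/|\eta(q,k)|$ factor is genuine and reflects that a shorter population mean is intrinsically harder to resolve in direction, so the crux is to carry that factor through cleanly and confirm that the degenerate near-balanced regime ($k$ close to $q/2$, i.e. small $\eta(q,k)$) is what drives the hidden constant.
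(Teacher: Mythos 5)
Your proposal is correct and follows essentially the same route as the paper's proof: invoke Lemma~\ref{lem:meta_algo_pac_bound} to get an additive bound on $\|\hat{\bm{\mu}}_B - \bm{\mu}_B\|_2$ at scale $\eps\,|\eta(q,k)|$, then pass to the normalized vectors via a telescoping perturbation bound, using $\|\bm{\mu}_B\|_2 = |\eta(q,k)|$ and the cancellation of the sign factor. Your packaging of the normalization step as a standalone inequality, and your explicit tracking of the $1/\eta(q,k)^2$ factor in the sample complexity (which the paper writes somewhat loosely), are minor presentational differences, not a different argument.
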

\begin{proof}
First, we can assume $\eps \leq 2$ since the distance between two unit vectors is at most $2$ and therefore the lemma is trivially true for $\eps > 2$. By Lemma \ref{lem:meta_algo_pac_bound}, taking   $m \geq  = O\left((d/\eps^2)\log(d/\delta)\right) = O\left(\eta(q, k)^2(d/\eps^2)\log(d/\delta)\right)$ ensures $\|\hat{\bm{\mu}}_B - \bm{\mu}_B\|_2 \leq \eps|\eta(q,k)|/4 = \eps\|\bm{\mu}_B\|_2/4$ w.p. $1 - \delta$. Therefore, by triangle inequality, $|\|\hat{\bm{\mu}}_B\|_2 - \|\bm{\mu}_B\|_2| \leq \eps\|\bm{\mu}_B\|_2/4 \Rightarrow \|\hat{\bm{\mu}}_B\|_2/\|\bm{\mu}_B\|_2 \in [1 - \frac{\eps}{4}, 1 + \frac{\eps}{4}]$. Now, $\br_* = {\sf sign}(\eta(q, k))\bm{\mu}_B/\|\bm{\mu}_B\|_2$ and the algorithm  returns $\hat{\br} := {\sf sign}(\eta(q, k))\hat{\bm{\mu}}_B/\|\hat{\bm{\mu}}_B\|_2$.
\begin{align}
 \|\hat{\br} - \br_*\|_2 = \left\|\frac{\hat{\bm{\mu}}_B}{\|\hat{\bm{\mu}}_B\|_2} - \frac{\bm{\mu}_B}{\|\bm{\mu}_B\|_2}\right\|_2 & \leq \frac{\big\|\hat{\bm{\mu}}_B\|\bm{\mu}_B\|_2 - \bm{\mu}_B\|\bm{\mu}_B\|_2 + \bm{\mu}_B\|\bm{\mu}_B\|_2 - \bm{\mu}_B\|\hat{\bm{\mu}}_B\|_2\big\|_2}{\|\hat{\bm{\mu}}_B\|_2\|\bm{\mu}_B\|_2} \nonumber \\
& \leq \frac{\|\hat{\bm{\mu}}_B - \bm{\mu}_B\|_2\|\bm{\mu}_B\|_2 + \|\bm{\mu}_B\|_2\big|\|\bm{\mu}_B\|_2 - \|\hat{\bm{\mu}}_B\|_2\big|}{\|\hat{\bm{\mu}}_B\|_2\|\bm{\mu}_B\|_2} \nonumber \\
& \leq \frac{\frac{\eps}{2}\|\bm{\mu}_B\|_2}{\|\bm{\mu}_B\|_2 - \frac{\eps}{4}\|\bm{\mu}_B\|_2} \leq \frac{2\eps}{4 - \eps} \leq \eps\,\,\, \textrm{  for } \eps \leq 2. \nonumber
\end{align}
\end{proof}

\section{Useful Tools}
\subsection{Chernoff Bound} \label{sec:Chernoff}
We state the well known Chernoff Bound.
\begin{theorem}
Let $X_1,\dots, X_n$ be iid $\{0, 1\}$-valued random variables. Let $S$ be their sum and $\mu = \E[S]$. Then, for any $\delta > 0$,
$$\Pr\left[ S \geq (1 + \delta)\mu\right] \leq \tn{exp}(-\delta^2\mu/(2 + \delta)).$$
\end{theorem}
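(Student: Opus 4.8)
The plan is to use the standard exponential-moment (Chernoff) method: introduce a free parameter $t > 0$ and apply Markov's inequality to the nonnegative random variable $e^{tS}$. Since the event $\{S \geq (1+\delta)\mu\}$ coincides with $\{e^{tS} \geq e^{t(1+\delta)\mu}\}$, Markov's inequality gives
\[
\Pr\left[S \geq (1+\delta)\mu\right] \;\leq\; e^{-t(1+\delta)\mu}\,\E\left[e^{tS}\right],
\]
and $t$ will be optimized only at the very end.

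The next step is to control the moment generating function. By independence of the $X_i$ it factors as $\E[e^{tS}] = \prod_i \E[e^{tX_i}]$. For each $\{0,1\}$-valued variable, writing $p_i := \Pr[X_i = 1]$, one has $\E[e^{tX_i}] = 1 + p_i(e^t - 1)$, and the elementary bound $1 + x \leq e^x$ converts this into $\E[e^{tX_i}] \leq \exp\!\big(p_i(e^t-1)\big)$. Multiplying over $i$ and using $\mu = \sum_i p_i$ yields $\E[e^{tS}] \leq \exp\!\big(\mu(e^t-1)\big)$. Substituting back produces, for every $t > 0$,
\[
\Pr\left[S \geq (1+\delta)\mu\right] \;\leq\; \exp\!\Big(\mu\big(e^t - 1 - t(1+\delta)\big)\Big).
\]

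I would then optimize the exponent over $t$. Minimizing $e^t - 1 - t(1+\delta)$ gives the stationary point $t = \ln(1+\delta) > 0$, and plugging it in yields the classical sharp bound
\[
\Pr\left[S \geq (1+\delta)\mu\right] \;\leq\; \exp\!\Big(\mu\big(\delta - (1+\delta)\ln(1+\delta)\big)\Big) \;=\; \left(\frac{e^{\delta}}{(1+\delta)^{1+\delta}}\right)^{\mu}.
\]

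The final step — which I expect to be the only genuinely technical point — is to weaken this sharp estimate into the stated clean form. It suffices to establish the scalar inequality $(1+\delta)\ln(1+\delta) - \delta \geq \delta^2/(2+\delta)$ for all $\delta > 0$, since rearranging gives $\delta - (1+\delta)\ln(1+\delta) \leq -\delta^2/(2+\delta)$ and hence the claimed $\exp(-\delta^2\mu/(2+\delta))$. I would prove this by setting $g(\delta)$ equal to the difference of the two sides, noting $g(0) = 0$, and checking $g'(\delta) \geq 0$ for $\delta > 0$; this reduces to $\ln(1+\delta) \geq \delta(4+\delta)/(2+\delta)^2$, which in turn follows from a further sign analysis of the derivative (the auxiliary inequality $(2+\delta)^3 \geq 8(1+\delta)$). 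This is purely one-variable calculus, but it is where all the effort beyond the routine Chernoff machinery resides.
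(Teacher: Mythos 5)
Your proof is correct, and every step checks out: the MGF bound $\E[e^{tX_i}] = 1+p_i(e^t-1) \leq \exp(p_i(e^t-1))$, the optimal choice $t=\ln(1+\delta)$, and the scalar inequality $(1+\delta)\ln(1+\delta)-\delta \geq \delta^2/(2+\delta)$ (whose second-derivative test does reduce to $(2+\delta)^3 \geq 8(1+\delta)$, which holds since the difference is $\delta(4+6\delta+\delta^2) \geq 0$). There is nothing to compare against: the paper states this as the "well known Chernoff Bound" and gives no proof, so your argument — the standard exponential-moment derivation — simply supplies what the paper omits. As a minor remark, the final calculus step can be shortened by invoking $\ln(1+\delta) \geq \tfrac{2\delta}{2+\delta}$, which yields the scalar inequality in one line, but your route is equally valid.
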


\subsection{Relationship between $\Sigma_B$ and $\Sigma_D$}\label{sec:relation_sigma_b_sigma_d}
\begin{lemma}\label{lem:relation_sigma_b_sigma_d}
If $\bm{\Sigma}_B, \bm{\Sigma}_D$ are as defined in the preliminaries and $\bX_a := \bX | f(\bX) = a$, then
\[
\bm{\Sigma}_D = 2\bm{\Sigma}_B + \frac{2}{q-1}\left(\frac{k}{q}\right)\left(1 - \frac{k}{q}\right)(\Ex[\bX_1] - \Ex[\bX_0])(\Ex[\bX_1] - \Ex[\bX_0])^T
\]
\end{lemma}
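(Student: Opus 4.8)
The plan is to expand $\bm{\Sigma}_B$ and $\bm{\Sigma}_D$ in terms of the conditional means $\bm{\mu}_a := \Ex[\bX_a]$ and second moments $\Ex[\bX_a\bX_a^{\sf T}]$ for $a\in\{0,1\}$, and then match terms; write $p := k/q$. First I would record the single-vector statistics: a feature-vector drawn uniformly from a random bag sits on a label-$1$ coordinate with probability $p$ and on a label-$0$ coordinate with probability $1-p$, so $\bm{\mu}_B = p\,\bm{\mu}_1 + (1-p)\,\bm{\mu}_0$ and $\Ex[\bX\bX^{\sf T}] = p\,\Ex[\bX_1\bX_1^{\sf T}] + (1-p)\,\Ex[\bX_0\bX_0^{\sf T}]$, with $\bm{\Sigma}_B = \Ex[\bX\bX^{\sf T}] - \bm{\mu}_B\bm{\mu}_B^{\sf T}$.

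Writing the without-replacement pair as $\bY_1, \bY_2$ and $\bZ = \bY_1 - \bY_2$, each $\bY_i$ has the same marginal as $\bX$, so by exchangeability
\[
\bm{\Sigma}_D = \Ex[\bZ\bZ^{\sf T}] = 2\,\Ex[\bX\bX^{\sf T}] - 2\,\bm{C}, \qquad \bm{C} := \Ex[\bY_1\bY_2^{\sf T}],
\]
where $\bm{C}$ is symmetric again by exchangeability. Combining this with $\bm{\Sigma}_B = \Ex[\bX\bX^{\sf T}] - \bm{\mu}_B\bm{\mu}_B^{\sf T}$ collapses the claim to the single rank-one identity
\[
\bm{\mu}_B\bm{\mu}_B^{\sf T} - \bm{C} = \tfrac{1}{q-1}\,p(1-p)\,(\bm{\mu}_1 - \bm{\mu}_0)(\bm{\mu}_1 - \bm{\mu}_0)^{\sf T}.
\]

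The crux is computing $\bm{C}$. I would condition on the labels of the two distinct sampled coordinates, which follow the hypergeometric probabilities $\tfrac{k(k-1)}{q(q-1)}$ (both label $1$), $\tfrac{(q-k)(q-k-1)}{q(q-1)}$ (both label $0$), and $\tfrac{k(q-k)}{q(q-1)}$ for each of the two ordered mixed outcomes. Since feature-vectors within a bag are independent given their labels, each conditional expectation factorizes into a product of conditional means, giving
\[
\bm{C} = \tfrac{k(k-1)}{q(q-1)}\bm{\mu}_1\bm{\mu}_1^{\sf T} + \tfrac{(q-k)(q-k-1)}{q(q-1)}\bm{\mu}_0\bm{\mu}_0^{\sf T} + \tfrac{k(q-k)}{q(q-1)}\bigl(\bm{\mu}_1\bm{\mu}_0^{\sf T} + \bm{\mu}_0\bm{\mu}_1^{\sf T}\bigr).
\]
Expanding $\bm{\mu}_B\bm{\mu}_B^{\sf T}$ in the same basis $\{\bm{\mu}_1\bm{\mu}_1^{\sf T},\,\bm{\mu}_0\bm{\mu}_0^{\sf T},\,\bm{\mu}_1\bm{\mu}_0^{\sf T}+\bm{\mu}_0\bm{\mu}_1^{\sf T}\}$ and subtracting, each coefficient collapses to $\pm\tfrac{1}{q-1}p(1-p)$: the $\bm{\mu}_1\bm{\mu}_1^{\sf T}$ coefficient is $p^2 - \tfrac{k(k-1)}{q(q-1)} = \tfrac{k(q-k)}{q^2(q-1)} = \tfrac{1}{q-1}p(1-p)$, the $\bm{\mu}_0\bm{\mu}_0^{\sf T}$ coefficient is the same by the $k\leftrightarrow q-k$ symmetry, and the cross coefficient is its negative, so the three terms assemble into the stated rank-one matrix.

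The main obstacle is conceptual rather than computational: pinning down $\bm{C}$ correctly requires the without-replacement (hypergeometric) label probabilities together with the within-bag conditional independence of feature-vectors given labels. The whole content of the lemma is the gap between $p^2$ (sampling with replacement) and $\tfrac{k(k-1)}{q(q-1)}$ (without replacement), which is what produces the $\tfrac{1}{q-1}$ factor; this is precisely the excess variance in the $\br_*$-direction that the paper later exploits in \eqref{eq:introrstar}. Once $\bm{C}$ is in hand, the coefficient matching is routine algebra.
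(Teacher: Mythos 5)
Your proposal is correct and follows essentially the same route as the paper's proof: both condition on the labels of the sampled pair using the without-replacement (hypergeometric) probabilities and exploit the within-bag conditional independence given labels. The only difference is organizational — you cancel the diagonal second-moment terms up front via exchangeability and reduce everything to the rank-one identity $\bm{\mu}_B\bm{\mu}_B^{\sf T} - \Ex[\bY_1\bY_2^{\sf T}]$, whereas the paper carries the full expansion of $\Ex[\bX_D\bX_D^{\sf T}]$ through and simplifies at the end; your bookkeeping is slightly cleaner but the content is identical.
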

\begin{proof}
Let $\bX_B$ be a random-feature vector sampled uniformly from a random bag sampled from \mc{O}. Hence, with probability $k/q$ it is sampled from $\bX_1$ and with a probability of $1 - k/q$ it is sampled from $\bX_0$. Hence,
\[
\bm{\mu}_B = \Ex[\bX_B] = \frac{k}{q}\Ex[\bX_1] + \left(1 - \frac{k}{q}\right)\Ex[\bX_0]
\]
\[
\bm{\Sigma}_B = \Ex[\bX_B\bX_B^T] - \Ex[\bX_B]\Ex[\bX_B]^T \textrm{ where }\,\, \Ex[\bX\bX^T] = \frac{k}{q}\Ex[\bX_1\bX_1^T] + \left(1 - \frac{k}{q}\right)\Ex[\bX_0\bX_0^T]
\]
Let $\bX_D = \bX - \bX'$ where $(\bX, \bX')$ are a random pair of feature-vectors sampled (without replacement) from a random bag sampled from $\mc{O}$. Hence, with probability ${k \choose 2}/{q \choose 2}$ it is the difference of two vectors sampled independently from $\bX_1$, with probability ${q-k \choose 2}/{q \choose 2}$ it is the difference of two vectors sampled independently from $\bX_0$, with probability $k(q-k)/2{q \choose 2}$, it is the difference of one vector sampled from $X_1$ and another sampled independently from $X_0$ and with probability $k(q-k)/2{q \choose 2}$, it is the difference of one vector sampled from $X_0$ and another sampled independently from $X_1$. Then,
\begin{align}
\bm{\Sigma}_D = \Ex[\bX_D\bX_D^T] & = \frac{1}{{q \choose 2}}\bigg{[}{k \choose 2}\Ex[(\bX_1 - \bX_1')(\bX_1 - \bX_1')^T] + {q-k \choose 2}\Ex[(\bX_0 - \bX_0')(\bX_0 - \bX_0')^T] \nonumber \\
& + \frac{k(q-k)}{2}\Ex[(\bX_1 - \bX_0')(\bX_1 - \bX_0')^T] + \frac{k(q-k)}{2}\Ex[(\bX_0 - \bX_1')(\bX_0 - \bX_1')^T]\bigg{]} \nonumber    
\end{align}
Due to independence, we obtain for $a \in \{0, 1\}$
\begin{align}
& \Ex[(\bX_a - \bX_a')(\bX_a - \bX_a')^T] = 2\Ex[\bX_a\bX_a^T] - 2\Ex[\bX_a]\Ex[\bX_a]^T \nonumber \\
& \Ex[(\bX_a - \bX_{1-a}')(\bX_a - \bX_{1-a}')^T] = \Ex[\bX_1\bX_1^T] + \Ex[\bX_0\bX_0^T] - \Ex[\bX_1]\Ex[\bX_0]^T - \Ex[\bX_0]\Ex[\bX_1]^T \nonumber 
\end{align}
Hence,
\begin{align}
    \bm{\Sigma}_D & = \frac{1}{{q \choose 2}}\bigg{[}\left(2{k \choose 2} + k(q-k)\right)\Ex[\bX_1\bX_1^T] + \left(2{q-k \choose 2} + k(q-k)\right)\Ex[\bX_0\bX_0^T] \nonumber \\
    & - \left(k(k-1)\Ex[\bX_1]\Ex[\bX_1]^T + (q-k)(q-k-1)\Ex[\bX_0]\Ex[\bX_0]^T\right) \nonumber \\
    & + \left(k(q-k)\left(\Ex[\bX_1]\Ex[\bX_0]^T + \Ex[\bX_0]\Ex[\bX_1]^T\right)\right)\bigg{]} \nonumber
\end{align}
Simplifying $\bm{\Sigma}_D - 2\bm{\Sigma}_B$, we get
\begin{align}
    \bm{\Sigma}_D - 2\bm{\Sigma}_B & = \frac{2}{q-1}\left(\frac{k}{q}\right)\left(1 - \frac{k}{q}\right)\left[(\Ex[\bX_1] - \Ex[\bX_0])(\Ex[\bX_1] - \Ex[\bX_0])^T\right] \nonumber
\end{align}
\end{proof}

\subsection{Bound on $\gamma(\br)$ when $\br^T\Sigma_B\br_* = 0$}
\label{appendix:eigengap_helper}
\begin{lemma}
If $\br^T\bm{\Sigma}_B\br_* = 0$ then $\gamma(\br) \leq 1 - \left(\frac{\lambda_{\tn{min}}}{\lambda_{\tn{max}}}\right)^2\frac{1 - \max(0, \kappa_1(q, k, \ell))}{1 - \min(0, \kappa_1(q, k, \ell))}$. Further if $\ell = 0$, then $|\gamma(\br)| \leq 1 - \left(\frac{\lambda_{\tn{min}}}{\lambda_{\tn{max}}}\right)^2(1 - \kappa_1(q, k))$.
\end{lemma}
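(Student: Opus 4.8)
The plan is to make the hypothesis transparent by writing $\bm{\Sigma}_B$ explicitly as a rank-one modification of $\bm{\Sigma}$ in the direction $\bm{\Sigma}\br_*$, and then to read off that $\br^{\sf T}\bm{\Sigma}_B\br_*=0$ already forces $\br$ to be $\bm{\Sigma}$-orthogonal to $\br_*$, i.e. $\gamma(\br)=0$. First I would reuse the whitening set-up of Lemma~\ref{lemma:ratio_comp}: with $\bm{\Gamma}=\bm{\Sigma}^{1/2}$, $\bZ=\bm{\Gamma}^{-1}(\bX-\bm{\mu})\sim N(\mb{0},\mathbf{I})$ and $\bu_*=\bm{\Gamma}\br_*/\|\bm{\Gamma}\br_*\|_2$, the label depends on $\bZ$ only through $\bu_*^{\sf T}\bZ$ (thresholded at $\ell$), so conditioning on $f$ leaves every direction orthogonal to $\bu_*$ an untouched standard Gaussian that is uncorrelated with the $\bu_*$-component. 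A law-of-total-covariance computation for the bag vector $\bZ_B$ (exactly as in Lemma~\ref{lem:relation_sigma_b_sigma_d}) then gives $\Var[\bZ_B]=\mathbf{I}-\kappa_1\bu_*\bu_*^{\sf T}$ with $\kappa_1=1-\Var[\bu_*^{\sf T}\bZ_B]=\kappa_1(q,k,\ell)$, and transporting back through $\bm{\Gamma}$ yields
\[
\bm{\Sigma}_B=\bm{\Sigma}-\kappa_1\,\frac{\bm{\Sigma}\br_*\br_*^{\sf T}\bm{\Sigma}}{\br_*^{\sf T}\bm{\Sigma}\br_*}.
\]

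From this form $\bm{\Sigma}_B\br_*=(1-\kappa_1)\bm{\Sigma}\br_*$, so the hypothesis becomes $\br^{\sf T}\bm{\Sigma}_B\br_*=(1-\kappa_1)\,\br^{\sf T}\bm{\Sigma}\br_*=0$. Since $\bu_*^{\sf T}\bZ_B$ is a non-degenerate (mixture-of-truncated-Gaussian) variable for every $1\le k\le q-1$, we have $\Var[\bu_*^{\sf T}\bZ_B]>0$ and hence $\kappa_1<1$, so $1-\kappa_1\neq0$ and therefore $\br^{\sf T}\bm{\Sigma}\br_*=0$. By the definition of $\gamma$ in Lemma~\ref{lemma:ratio_comp} this is precisely $\gamma(\br)=0$ (a fortiori $|\gamma(\br)|=0$). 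It then suffices to check that the claimed right-hand sides are nonnegative, so that the equality $\gamma(\br)=0$ implies both inequalities: since $\max(0,\kappa_1)\ge0$ and $\min(0,\kappa_1)\le0<1$, the ratio $\tfrac{1-\max(0,\kappa_1)}{1-\min(0,\kappa_1)}$ lies in $[0,1]$, and $(\lambda_{\tn{min}}/\lambda_{\tn{max}})^2\le1$; in the $\ell=0$ case $0\le\kappa_1\le2/\pi$ (Definition~\ref{def:kappatheta}) gives $(\lambda_{\tn{min}}/\lambda_{\tn{max}})^2(1-\kappa_1)\le1$ as well. Thus each bound lies in $[0,1]$ and dominates $\gamma(\br)=0$.

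The main obstacle is the first step in the offset regime: confirming that the bag covariance remains a genuine rank-one perturbation of $\bm{\Sigma}$ when the conditional laws $\bX\mid f=a$ are one-sided truncations at threshold $\ell\neq0$. The delicate points are that both the between-class mean gap and the per-class variances act only along $\bu_*$, so no second independent direction is introduced, and that $\kappa_1(q,k,\ell)$ may now be negative — which is exactly why the stated bound splits into $\max(0,\kappa_1)$ and $\min(0,\kappa_1)$ — so one must verify $\kappa_1<1$ uniformly to license dividing by $1-\kappa_1$. I would remark that the conclusion obtained here, $\gamma(\br)=0$, is in fact stronger than the stated estimate; the looser closed form is presumably the shape in which the bound is consumed by the subsequent eigengap/perturbation analysis of $\bm{\Sigma}_B^{-1/2}\bm{\Sigma}_D\bm{\Sigma}_B^{-1/2}$.
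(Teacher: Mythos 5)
Your proof is correct, and it establishes something strictly stronger than the stated bound by a genuinely different route from the paper. The paper's own argument never identifies the exact structure of $\bm{\Sigma}_B$: it only uses the two-sided spectral bounds $\lambda_{\tn{min}}(\bm{\Sigma}_B) \geq (1-\max(0,\kappa_1))\lambda_{\tn{min}}$ and $\lambda_{\tn{max}}(\bm{\Sigma}_B) \leq (1-\min(0,\kappa_1))\lambda_{\tn{max}}$, and then chains Cauchy--Schwarz/operator-norm estimates on $\bm{\Sigma}_B^{1/2}\bm{\Sigma}^{-1/2}$ and its inverse to convert the hypothesis $\br^{\sf T}\bm{\Sigma}_B\br_*=0$ (orthogonality of $\bm{\Sigma}_B^{1/2}\bm{\Sigma}^{-1/2}\tilde{\br}$ and $\bm{\Sigma}_B^{1/2}\bm{\Sigma}^{-1/2}\tilde{\br}_*$) into a lower bound on $\|\tilde{\br}-\tilde{\br}_*\|_2$; each operator-norm step loses a factor of $\sqrt{\lambda_{\tn{max}}/\lambda_{\tn{min}}}$, which is exactly where the $(\lambda_{\tn{min}}/\lambda_{\tn{max}})^2$ in the statement comes from. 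You instead observe that $\bm{\Sigma}_B = \bm{\Sigma} - \kappa_1\,\bm{\Sigma}\br_*\br_*^{\sf T}\bm{\Sigma}/(\br_*^{\sf T}\bm{\Sigma}\br_*)$ exactly --- which the whitening computation does give, and which also follows by polarization from the paper's own identity $\br^{\sf T}\bm{\Sigma}_B\br = \br^{\sf T}\bm{\Sigma}\br(1-\gamma(\br)^2\kappa_1)$ in Lemma \ref{lemma:ratio_comp_offset}, so this step is safe even in the offset regime --- whence $\bm{\Sigma}_B\br_* = (1-\kappa_1)\bm{\Sigma}\br_*$. Since $\kappa_1 = 1-\Var[\bu_*^{\sf T}\bZ_B] < 1$ (a mixture of truncated Gaussians has positive variance), the hypothesis collapses to $\br^{\sf T}\bm{\Sigma}\br_* = 0$, i.e.\ $\gamma(\br)=0$, and your verification that the stated right-hand sides are nonnegative (using $\kappa_1<1$ in general and $0\leq\kappa_1\leq 2/\pi$ when $\ell=0$) correctly turns the exact conclusion into the stated inequalities. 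What each approach buys: yours is sharper and shorter, and exposes the condition-number factor in the statement as bookkeeping loss rather than genuine geometry; the paper's purely spectral argument is the one that would degrade gracefully if the hypothesis were only approximate (e.g.\ $|\br^{\sf T}\bm{\Sigma}_B\br_*|$ small, or $\bm{\Sigma}_B$ replaced by an empirical estimate), which is presumably the form needed downstream in an eigengap/perturbation argument.
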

\begin{proof}
We begin by observing the for any $\br \in \R^d, \|\bm{\Sigma}^{1/2}\br\|_2 = 1$, $\Var[\br^T\bZ_B] = 1 - \kappa_1(q, k, \ell)$. Hence, $\lambda_{\tn{min}}(\bm{\Sigma}_B) \geq (1 - \max(0, \kappa_1(q, k, \ell)))\lambda_{\tn{min}}$ and $\lambda_{\tn{max}}(\bm{\Sigma}_B) \leq (1 - \min(0, \kappa_1(q, k, \ell)))\lambda_{\tn{max}}$. Thus, by Cautchy-Schwartz inequality,
\begin{align}
    \|\bm{\Sigma}_B^{1/2}\bm{\Sigma}^{-1/2}\|_2 \leq \sqrt{1 - \min(0, \kappa_1(q, k, \ell))} \sqrt{\frac{\lambda_{\tn{max}}}{\lambda_{\tn{min}}}} \nonumber \\
    \|\bm{\Sigma}^{1/2}\bm{\Sigma}_B^{-1/2}\|_2 \leq \frac{1}{\sqrt{1 - \max(0, \kappa_1(q, k, \ell))}} \sqrt{\frac{\lambda_{\tn{max}}}{\lambda_{\tn{min}}}} \nonumber
\end{align}
Define $\tilde{\br} = \frac{\bm{\Sigma}^{1/2}\br}{\|\bm{\Sigma}^{1/2}\br\|_2}$ for any $\br \in \R^d$. Observe that $\gamma(\br) = \tilde{\br}^T\tilde{\br}_*$. Now since $\br^T\bm{\Sigma}_B\br_* = 0$, by substitution $(\bm{\Sigma}_B^{1/2}\bm{\Sigma}^{-1/2}\tilde{\br})^T(\bm{\Sigma}_B^{1/2}\bm{\Sigma}^{-1/2}\tilde{\br}_*) = 0$. Thus, using Cautchy-Schwartz again
\begin{align}
   \|\bm{\Sigma}_B^{1/2}\bm{\Sigma}^{-1/2}\tilde{\br} - \bm{\Sigma}_B^{1/2}\bm{\Sigma}^{-1/2}\tilde{\br}_*\|_2 & = \sqrt{\|\bm{\Sigma}_B^{1/2}\bm{\Sigma}^{-1/2}\tilde{\br}\|_2^2 + \|\bm{\Sigma}_B^{1/2}\bm{\Sigma}^{-1/2}\tilde{\br}_*\|_2^2}\nonumber \\
   & \geq \frac{\sqrt{\|\tilde{\br}\|_2^2 + \|\tilde{\br}_*\|_2^2}}{\|\bm{\Sigma}^{1/2}\bm{\Sigma}_B^{-1/2}\|_2} \geq \sqrt{2(1 - \max(0, \kappa_1(q, k, \ell)))}\sqrt{\frac{\lambda_{\tn{min}}}{\lambda_{\tn{max}}}} \nonumber
\end{align}
Again using Cautchy-Schwartz, 
\begin{align}
    \|\tilde{\br} - \tilde{\br}_*\|_2 & \geq \frac{\|\bm{\Sigma}_B^{1/2}\bm{\Sigma}^{-1/2}\tilde{\br} - \bm{\Sigma}_B^{1/2}\bm{\Sigma}^{-1/2}\tilde{\br}_*\|_2}{
    \|\bm{\Sigma}_B^{1/2}\bm{\Sigma}^{-1/2}\|_2} \nonumber \\
    & \geq \frac{\lambda_{\tn{min}}}{\lambda_{\tn{max}}}\sqrt{\frac{2(1 - \max(0, \kappa_1(q, k, \ell)))}{1 - \min(0, \kappa_1(q, k, \ell))}} \nonumber
\end{align}
Thus,
\begin{align}
    \gamma(\br) = \tilde{\br}^T\tilde{\br}_* \leq 1 - \left(\frac{\lambda_{\tn{min}}}{\lambda_{\tn{max}}}\right)^2\frac{1 - \max(0, \kappa_1(q, k, \ell))}{1 - \min(0, \kappa_1(q, k, \ell))} \nonumber
\end{align}
If $\ell = 0$, then $\kappa_1(q, k, 0) = \kappa_1(q, k) \geq 0$. Hence,
\begin{align}
    \gamma(\br) \leq 1 - \left(\frac{\lambda_{\tn{min}}}{\lambda_{\tn{max}}}\right)^2(1 - \kappa_1(q, k)) \nonumber
\end{align}
\end{proof}

\subsection{Bounding error in $\hat{\rho}$}\label{appendix:rho_hat_bound}
\begin{lemma}
If $\|\bm{E}_B\|_2 \leq \eps\lambda_{\tn{max}}$ and $\|\bm{E}_D\|_2 \leq \eps\lambda_{\tn{max}}$ where $\bm{E}_B = \hat{\bm{\Sigma}}_B - \hat{\bm{\Sigma}}_B$ and $\bm{E}_D = \hat{\bm{\Sigma}}_D - \hat{\bm{\Sigma}}_D$ then,
\[
|\hat(\rho(\br)) - \rho(\br)| \leq |\rho(\br)|\theta\eps
\]
for $\theta$ as defined in Defn. \ref{def:kappatheta} or Defn. \ref{def:kappatheta_offset}.
\end{lemma}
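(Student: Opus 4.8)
The plan is to reduce the statement to a one-dimensional ratio-perturbation estimate and then match the two resulting terms against the two summands in the definition of $\theta$. Since both $\rho$ and $\hat\rho$ are invariant under rescaling of $\br$, I may assume $\|\br\|_2 = 1$. Write the four scalars $a := \br^{\sf T}\bm{\Sigma}_D\br$, $b := \br^{\sf T}\bm{\Sigma}_B\br$, $e_D := \br^{\sf T}\bm{E}_D\br$ and $e_B := \br^{\sf T}\bm{E}_B\br$, where $\bm{E}_B = \hat{\bm{\Sigma}}_B - \bm{\Sigma}_B$ and $\bm{E}_D = \hat{\bm{\Sigma}}_D - \bm{\Sigma}_D$ (correcting the evident typo in the statement), so that $\rho(\br) = a/b$ and $\hat\rho(\br) = (a+e_D)/(b+e_B)$. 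A direct subtraction gives the exact identity
\[
\hat\rho(\br) - \rho(\br) = \frac{e_D\,b - a\,e_B}{b\,(b+e_B)} = \rho(\br)\left(\frac{b\,e_D}{a\,(b+e_B)} - \frac{e_B}{b+e_B}\right),
\]
so it suffices to bound the bracketed quantity by $\theta\eps$ in absolute value.

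Next I would estimate each ingredient. The hypotheses give $|e_B| \le \|\bm{E}_B\|_2 \le \eps\lambda_{\tn{max}}$ and likewise $|e_D| \le \eps\lambda_{\tn{max}}$ for unit $\br$. For the denominators I invoke Lemma \ref{lemma:ratio_comp}, which writes $b = \br^{\sf T}\bm{\Sigma}\br\,(1 - \gamma(\br)^2\kappa_1)$ and $a = \br^{\sf T}\bm{\Sigma}\br\,(2 - \gamma(\br)^2(2\kappa_1-\kappa_2))$. Since $\gamma(\br)^2 \in [0,1]$, $\kappa_1,\kappa_2 \ge 0$ and $\br^{\sf T}\bm{\Sigma}\br \ge \lambda_{\tn{min}}$, minimizing each factor over $\gamma^2$ yields the positive lower bounds $b \ge \lambda_{\tn{min}}(1-\kappa_1)$ and $a \ge \lambda_{\tn{min}}(2 - \max(0, 2\kappa_1-\kappa_2))$, where positivity of both factors follows from Definition \ref{def:kappatheta} (indeed $\kappa_1 \le 2/\pi$). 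The only delicate point, and the main obstacle, is keeping $b+e_B$ bounded away from zero so the denominator of $\hat\rho$ does not collapse: this is exactly where the implicit smallness condition on $\eps$ enters. Under $\eps\lambda_{\tn{max}} \le \tfrac12\lambda_{\tn{min}}(1-\kappa_1)$, i.e. $\eps \le \tfrac{1-\kappa_1}{2}\,\tfrac{\lambda_{\tn{min}}}{\lambda_{\tn{max}}}$ (the restriction asserted in Section \ref{sec:proof_lem_main_no_offset}), we get $|e_B| \le b/2$ and hence $b+e_B \ge b/2$.

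With $b+e_B \ge b/2$ in hand, plugging the numerator bounds into the two summands of the bracket gives
\[
\frac{b\,|e_D|}{a\,(b+e_B)} \le \frac{2\eps\lambda_{\tn{max}}}{a} \le \frac{2\lambda_{\tn{max}}}{\lambda_{\tn{min}}}\cdot\frac{\eps}{2 - \max(0, 2\kappa_1-\kappa_2)}, \qquad \frac{|e_B|}{b+e_B} \le \frac{2\eps\lambda_{\tn{max}}}{b} \le \frac{2\lambda_{\tn{max}}}{\lambda_{\tn{min}}}\cdot\frac{\eps}{1-\kappa_1}.
\]
Summing these (after applying the triangle inequality to the bracket) reproduces precisely the two terms in $\theta = \tfrac{2\lambda_{\tn{max}}}{\lambda_{\tn{min}}}\bigl(\tfrac{1}{2-\max(0,2\kappa_1-\kappa_2)} + \tfrac{1}{1-\kappa_1}\bigr)$, so the bracket is at most $\theta\eps$ and therefore $|\hat\rho(\br) - \rho(\br)| \le |\rho(\br)|\,\theta\eps$, as claimed. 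In summary, the algebra is routine; the substantive content is the positivity control on the perturbed denominator, which both forces the quantitative condition on $\eps$ and is the source of the condition-number factor $\lambda_{\tn{max}}/\lambda_{\tn{min}}$ appearing in $\theta$.
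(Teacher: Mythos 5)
Your proof is correct and follows essentially the same route as the paper's: identical scalar quantities, the same lower bounds on $\br^{\sf T}\bm{\Sigma}_D\br$ and $\br^{\sf T}\bm{\Sigma}_B\br$ from Lemma \ref{lemma:ratio_comp}, the same smallness condition on $\eps$ to keep the perturbed denominator bounded below, and the same two summands recombining into $\theta$ (the paper just organizes the algebra as a bound on the ratio $\hat{\rho}/\rho = (1+e_D/a)/(1+e_B/b)$ rather than via your exact difference identity). The only omission is the offset case of Defn. \ref{def:kappatheta_offset}, where $\kappa_1$ may be negative and the lower bound on $b$ must use $1-\max(0,\kappa_1)$; the argument is otherwise unchanged.
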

\begin{proof}
Let $a(\br) := \br^{\sf T}\bm{\Sigma}_D\br, b(\br) := \br^{\sf T}\bm{\Sigma}_B\br, e(\br) := \br^{\sf T}\bm{E}_D\br, f(\br) := \br^{\sf T}\bm{E}_B\br$. Using the bounds on the spectral norms of $\bm{E}_B$ and $\bm{E}_D$, we get that $|e(\br)| \leq \eps\lambda_{\tn{max}}$ and $|f(\br)| \leq \eps\lambda_{\tn{max}}$. Also, using variances in Lemma \ref{lemma:ratio_comp_offset}, $a(\br) \geq \lambda_{\tn{min}}(2 - \max(0, 2\kappa_1 - \kappa_2)) \geq 0$ and $b(\br) \geq \lambda_{\tn{min}}(1 - \max(0, \kappa_1)) \geq 0$. Hence, we can conclude that 
\[
\left|\frac{e(\br)}{a(\br)}\right| \leq \eps\frac{\lambda_{\tn{max}}}{\lambda_{\tn{min}}}\left(\frac{1}{2 - \max(0, 2\kappa_1 - \kappa_2)}\right) \text{ and } \left|\frac{f(\br)}{b(\br)}\right| \leq \eps\frac{\lambda_{\tn{max}}}{\lambda_{\tn{min}}}\left(\frac{1}{1 - \max(0, \kappa_1)}\right)
\]
Observe that $\hat{\rho}(\br)/\rho(\br) = \frac{1 + e(\br)/a(\br)}{1 + b(\br)/f(\br)}$. Hence, 
\[
\frac{1-\eps\frac{\lambda_{\tn{max}}}{\lambda_{\tn{min}}}\left(\frac{1}{2 - \max(0, 2\kappa_1 - \kappa_2)}\right)}{1+\eps\frac{\lambda_{\tn{max}}}{\lambda_{\tn{min}}}\left(\frac{1}{1 - \max(0, \kappa_1)}\right)} \leq \frac{\hat{\rho}(\br)}{\rho(\br)} \leq \frac{1+\eps\frac{\lambda_{\tn{max}}}{\lambda_{\tn{min}}}\left(\frac{1}{2 - \max(0, 2\kappa_1 - \kappa_2)}\right)}{1-\eps\frac{\lambda_{\tn{max}}}{\lambda_{\tn{min}}}\left(\frac{1}{1 - \max(0, \kappa_1)}\right)}
\]
Now, whenever $\eps \leq \frac{1 - \max(0, \kappa_1)}{2}\frac{\lambda_{\tn{min}}}{\lambda_{\tn{max}}}$, $1-\eps\frac{\lambda_{\tn{max}}}{\lambda_{\tn{min}}}\left(\frac{1}{1 - \max(0, \kappa_1)}\right) \geq 1/2$ and,\\ $1+\eps\frac{\lambda_{\tn{max}}}{\lambda_{\tn{min}}}\left(\frac{1}{1 - \max(0, \kappa_1)}\right) \leq 3/2$. Thus, we obtain that 
\[
\left|\frac{\hat{\rho}(\br) - \rho(\br)}{\rho(\br)}\right| \leq \theta\eps
\]
where $\theta$ is as defined in Defn. \ref{def:kappatheta_offset}. If we substitute $\ell = 0$, we get that $\kappa_1 \geq 0$ and we get $\theta$ as defined in Defn. \ref{def:kappatheta}.
\end{proof}

\subsection{Ratio maximisation as a PCA problem }
\begin{theorem}
\label{appendix:ratio_max_to_pca}
If $\bA$ and $\bB$ are positive definite matrices, then for all $\br \in \R^d$
\begin{itemize}
    \item [1.] $\frac{\br^{\sf T}\bA\br}{\br^{\sf T}\bB\br} = \tilde{\br}^{\sf T}\bB^{-1/2}\bA\bB^{-1/2}\tilde{\br}$, $\tilde{\br} = \frac{\bB^{1/2}\br}{\|\bB^{1/2}\br\|_2}$
    \item[2.] $\underset{\|\br\|_2=1}{\operatorname{argmax}}\, \frac{\br^{\sf T}\bA\br}{\br^{\sf T}\bB\br} = \frac{\bB^{-1/2}\tilde{\br}^*}{\|\bB^{-1/2}\tilde{\br}^*\|_2}$ where $\tilde{\br}^* = \underset{\|\tilde{\br}\|_2=1}{\operatorname{argmax}}\,\tilde{\br}^{\sf T}\bB^{-1/2}\bA\bB^{-1/2}\tilde{\br}$
\end{itemize}
\end{theorem}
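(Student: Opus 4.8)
The plan is to reduce the generalized Rayleigh quotient $\br^{\sf T}\bA\br/\br^{\sf T}\bB\br$ to an ordinary quadratic form through the change of variables $\tilde{\br} = \bB^{1/2}\br/\|\bB^{1/2}\br\|_2$, and then invoke the standard variational characterization (Rayleigh--Ritz) that a symmetric matrix's quadratic form is maximized over the unit sphere at its principal eigenvector. Throughout I rely on the facts recorded in Section~\ref{sec:prelim}: since $\bB$ is symmetric positive definite, $\bB^{1/2}$ and $\bB^{-1/2}$ exist, are symmetric, and satisfy $\bB^{1/2}\bB^{1/2}=\bB$ and $\bB^{-1/2}\bB^{1/2}=\mb{I}$.

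For part~1 I would simply substitute. Setting $s := \|\bB^{1/2}\br\|_2$, symmetry of $\bB^{1/2}$ gives $s^2 = \br^{\sf T}\bB^{1/2}\bB^{1/2}\br = \br^{\sf T}\bB\br$. Then, using symmetry of $\bB^{1/2}$ once more and $\bB^{-1/2}\bB^{1/2}=\mb{I}$,
\[
\tilde{\br}^{\sf T}\bB^{-1/2}\bA\bB^{-1/2}\tilde{\br} = \frac{1}{s^2}\br^{\sf T}\bB^{1/2}\bB^{-1/2}\bA\bB^{-1/2}\bB^{1/2}\br = \frac{\br^{\sf T}\bA\br}{\br^{\sf T}\bB\br},
\]
which is exactly the asserted identity. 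This step is purely mechanical.

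For part~2 I would proceed in three moves. First, observe that the ratio $\br^{\sf T}\bA\br/\br^{\sf T}\bB\br$ is scale-invariant, hence depends only on the direction of $\br$, so maximizing over unit vectors is equivalent to maximizing over all nonzero $\br$. Second, establish that $\br \mapsto \tilde{\br}$ is a bijection of the unit sphere onto itself, with inverse $\tilde{\br} \mapsto \bB^{-1/2}\tilde{\br}/\|\bB^{-1/2}\tilde{\br}\|_2$: applying $\bB^{1/2}$ to this expression yields $\tilde{\br}/\|\bB^{-1/2}\tilde{\br}\|_2$, and renormalizing recovers the unit vector $\tilde{\br}$, confirming the two maps are mutually inverse. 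By part~1 this bijection carries the generalized Rayleigh quotient to the ordinary quadratic form $\tilde{\br}^{\sf T}\mb{M}\tilde{\br}$ with $\mb{M} := \bB^{-1/2}\bA\bB^{-1/2}$, so the two maximization problems share the same optimal value and their maximizers correspond under the bijection. Third, since $\mb{M}$ is symmetric (indeed positive definite, being congruent to $\bA$), Rayleigh--Ritz gives that $\tilde{\br}^{\sf T}\mb{M}\tilde{\br}$ is maximized over the unit sphere at the principal eigenvector $\tilde{\br}^*$ of $\mb{M}$; pulling $\tilde{\br}^*$ back through the inverse map yields $\operatorname{argmax}_{\|\br\|_2=1}\br^{\sf T}\bA\br/\br^{\sf T}\bB\br = \bB^{-1/2}\tilde{\br}^*/\|\bB^{-1/2}\tilde{\br}^*\|_2$.

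There is no real obstacle here, as the computation is elementary. The one point deserving care is the bookkeeping in part~2: verifying that the correspondence $\br \leftrightarrow \tilde{\br}$ exactly matches the maximizers of the two objectives, and, should the top eigenvalue of $\mb{M}$ have multiplicity greater than one, reading the $\operatorname{argmax}$ as a set of maximizing directions so that the stated identity holds at the level of directions rather than of a single distinguished vector.
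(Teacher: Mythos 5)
Your proposal is correct and follows essentially the same route as the paper's proof: the mechanical substitution for part~1, and for part~2 the change of variables $\tilde{\br} = \bB^{1/2}\br/\|\bB^{1/2}\br\|_2$ reducing the generalized Rayleigh quotient to an ordinary quadratic form maximized at the principal eigenvector. Your phrasing of the correspondence as an explicit bijection of the unit sphere (and the remark about eigenvalue multiplicity) is a slightly more careful packaging of the paper's chain of argmax-up-to-scaling equivalences, but the underlying argument is identical.
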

\begin{proof}
The first statement comes from the substitution. Now, $\frac{\br^{\sf T}\bA\br}{\br^{\sf T}\bB\br}$ is homogeneoous in $\br$. Let $a \simeq b \Rightarrow a = kb, k \in \R$
\begin{eqnarray}
& \underset{\|\br\|_2=1}{\operatorname{argmax}}\, \frac{\br^{\sf T}\bA\br}{\br^{\sf T}\bB\br} & \simeq \underset{\br}{\operatorname{argmax}}\, \frac{\br^{\sf T}\bA\br}{\br^{\sf T}\bB\br} \nonumber \\
& & \simeq \underset{\bB^{-1/2}\tilde{\br}}{\operatorname{argmax}}\, \frac{\tilde{\br}^{\sf T}\bB^{-1/2}\bA\bB^{-1/2}\tilde{\br}}{\|\tilde{\br}\|_2^2} \nonumber \\
& & \simeq \bB^{-1/2}\underset{\tilde{\br}}{\operatorname{argmax}}\, \frac{\tilde{\br}^{\sf T}\bB^{-1/2}\bA\bB^{-1/2}\tilde{\br}}{\|\tilde{\br}\|_2^2} \nonumber \\
& & \simeq \bB^{-1/2}\underset{\|\tilde{\br}\|_2=1}{\operatorname{argmax}}\, \tilde{\br}^{\sf T}\bB^{-1/2}\bA\bB^{-1/2}\tilde{\br} \nonumber
\end{eqnarray}
Hence, $\underset{\|\br\|_2=1}{\operatorname{argmax}}\, \frac{\br^{\sf T}\bA\br}{\br^{\sf T}\bB\br} =  \frac{\bB^{-1/2}\tilde{\br}^*}{\|\bB^{-1/2}\tilde{\br}^*\|_2}$ where $\tilde{\br}^* = \underset{\|\tilde{\br}\|_2=1}{\operatorname{argmax}}\,\tilde{\br}^{\sf T}\bB^{-1/2}\bA\bB^{-1/2}\tilde{\br}$
\end{proof}

\subsection{Proof of Lemma \ref{lem:angle_bound_under_pd_transformation}}\label{appendix:angle_bound_under_pd_transformation}
\begin{proof}
Let $\br_1' = \mb{A}\br_1$ and $\br_2' =\mb{B}\br_2$. As $\br_1$ and $\br_2$ are unit vectors, $\frac{1}{\|\mb{A}^{-1}\|_2} \leq \|\br_1'\|_2 \leq \|\mb{A}\|_2$.
\begin{eqnarray}
& \|\br_1' - \br_2'\|_2 & = \|\mb{A}\br_1 - \mb{A}\br_2 + \mb{A}\br_2 - \mb{B}\br_2\|_2\nonumber \\
& & \leq \|\mb{A}\|_2\|\br_1 - \br_2\|_2 + \|\mb{A} - \mb{B}\|_2\|\br_2\|_2\nonumber \\
& & \leq \|\mb{A}\|_2(\eps_2 + \eps_1)\nonumber
\end{eqnarray}
\begin{align}
& \left\|\frac{\br_1'}{\|\br_1'\|_2} - \frac{\br_2'}{\|\br_2'\|_2}\right\|_2 = \frac{\|\|\br_2'\|_2\br_1' - \|\br_1'\|_2\br_2'\|_2}{\|\br_1'\|_2\|\br_2'\|_2} = \frac{\|\|\br_2'\|_2\br_1' - \|\br_1'\|_2\br_1' + \|\br_1'\|_2\br_1' - \|\br_1'\|_2\br_2'\|_2}{\|\br_1'\|_2\|\br_2'\|_2}\nonumber \\
& \leq \frac{|\|\br_2'\|_2 - \|\br_1'\|_2|
\|\br_1'\|_2 + \|\br_1'\|_2\|\br_1' - \br_2'\|_2}{\|\br_1'\|_2\|\br_2'\|_2} \leq \frac{2(\|\mb{A}\|_2(\eps_2 + \eps_1))}{\frac{1}{\|\mb{A}^{-1}\|_2} - \|\mb{A}\|_2(\eps_2 + \eps_1)} \nonumber \\
& \leq \frac{2\frac{\lambda_{\tn{max}}\mb(A)}{\lambda_{\tn{min}}\mb(A)}(\eps_2 + \eps_1))}{1 - \frac{\lambda_{\tn{max}}\mb(A)}{\lambda_{\tn{min}}\mb(A)}(\eps_2 + \eps_1)} \leq 4\frac{\lambda_{\tn{max}}\mb(A)}{\lambda_{\tn{min}}\mb(A)}(\eps_2 + \eps_1) \textrm{ if } \frac{\lambda_{\tn{max}}\mb(A)}{\lambda_{\tn{min}}\mb(A)}(\eps_2 + \eps_1) \leq \frac{1}{2} \nonumber
\end{align}
\end{proof}

\section{Proof of Theorem \ref{thm:main-3}}\label{sec:proof_of_them_main_3}
For the setting of Theorem \ref{thm:main-3}, we provide Algorithm \ref{algorithm:normal_estimation_with_offset}. It uses as a subroutine a polynomial time procedure ${\sf PrincipalEigenVector}$ for the principal eigen-vector of a symmetric matrix.\\
For notation, let $\bm{\Gamma} := \bm{\Sigma}^{1/2}$, then by Lemma \ref{lem:LTFnormalization}, we can write our linear threshold function $\pos(\br_*^{\sf T}\bX + c_*) = \pos(\bu_*^{\sf T}\bZ - \ell)$ where $\bZ \sim N(\bm{0}, \bm{I})$ where $\ell := -\frac{c_* + \br_*^{\sf T}\bm{\mu}}{\|\bm{\Gamma}\br_*\|_2}$, $\bu_* := \bm{\Gamma}\br_*/\|\bm{\Gamma}\br_*\|_2$. For any $\br \in \R^d$, define $\bu := \bm{\Gamma}\br/\|\bm{\Gamma}\br\|_2$. Let $\phi$ and $\Phi$ be the standard gaussian pdf and cdf respectively. 

\begin{algorithm}
\caption{PAC Learner for LTFs in over $N(\bm{\mu}, \bm{\Sigma})$}
\textbf{Input: } $\mc{O} = {\sf Ex}(f, \mc{D} = N(\bm{\mu}, \bm{\Sigma}), q, k), m, s$, where $f(\bx) = \pos\left(\br_*^{\sf T}\bx + c_*\right)$, $\|\br_*\|_2 = 1$.\\
1. Compute $\hat{\bm{\Sigma}}_B, \hat{\bm{\Sigma}}_D$ using ${\sf MeanCovs Estimator}$ with $m$ samples.\\
2. $\ol{\br} = \hat{\bm{\Sigma}}_B^{-1/2} {\sf PrincipalEigenVector}(\hat{\bm{\Sigma}}_B^{-1/2}\hat{\bm{\Sigma}}_D\hat{\bm{\Sigma}}_B^{-1/2})$ if $\hat{\bm{\Sigma}}_B^{-1/2}$ exists, else {\sf exit}. \\
3. Let $\hat{\br} = \ol{\br}/\|\ol{\br}\|_2$. \\
4. If $k = q/2$
\begin{itemize}[noitemsep,topsep=0pt]
    \item[a.] Sample a collection $\mc{M}$ of $s$ bags from $\mc{O}$.
    \item[b.] For each bag $B_j$ in $\mc{M}$
    \begin{itemize}[noitemsep,topsep=0pt]
        \item [i.] Project each vector $\bx_j^{(i)}$ in $B_j$ on $\hat{\br}$. \item [ii.] Order these projections in a descending order $\hat{\br}^{\sf T}\bx_j^{(1)} > ... > \hat{\br}^{\sf T}\bx_j^{(q)}$.
        \item [ii.] Define $h_j = \pos(\hat{\br}^{\sf T}\bx + \hat{\br}^{\sf T}\bx_j^{(k)})$
    \end{itemize}
    \item[c.] \textbf{Return } $h^* \in \{h_1, ..., h_s\}$ which has lower ${\sf BagErr}_{\tn{sample}}(h^*, \mc{M})$.
\end{itemize}
5. else
\begin{itemize}[noitemsep,topsep=0pt]
    \item[a.] Sample a collection $\mc{M}$ of $s$ bags from $\mc{O}$.
    \item[b.] For each bag $B_j$ in $\mc{M}$
    \begin{itemize}[noitemsep,topsep=0pt]
        \item [i.] Project each vector $\bx_j^{(i)}$ in $B_j$ on $\hat{\br}$. \item [ii.] Order these projections in a descending order $\hat{\br}^{\sf T}\bx_j^{(1)} > ... > \hat{\br}^{\sf T}\bx_j^{(q)}$.
        \item [ii.] Define $h
        _j= \pos(\hat{\br}^{\sf T}\bx + \hat{\br}^{\sf T}\bx_j^{(k)})$
    \end{itemize}
    \item[c.] For each bag $B_j$ in $\mc{M}$
    \begin{itemize}[noitemsep,topsep=0pt]
        \item [i.] Project each vector $\bx_j^{(i)}$ in $B_j$ on $-\hat{\br}$. 
        \item [ii.] Order these projections in a descending order $-\hat{\br}^{\sf T}\bx_j^{(1)} > ... > -\hat{\br}^{\sf T}\bx_j^{(q)}$.
        \item [ii.] Define $\tilde{h}_j= \pos(-\hat{\br}^{\sf T}\bx -\hat{\br}^{\sf T}\bx_j^{(k)})$
    \end{itemize}
    \item[d.] \textbf{Return } $h^* \in \{h_1, ..., h_s, \tilde{h}_1, ..., \tilde{h}_s\}$ which has lower ${\sf BagErr}_{\tn{sample}}(h^*, \mc{M})$.
\end{itemize}
\label{algorithm:normal_estimation_with_offset}
\end{algorithm}
\noindent
The following geometric bound is obtained by the algorithm.
\begin{lemma}\label{lem:main_offset}
For any $\eps, \delta \in (0,1)$, if $m \geq O\left((d/\eps^4)\ell^2\log(d/\delta)\left(\frac{\lambda_{\tn{max}}}{\lambda_{\tn{min}}}\right)^4q^4\right)$, then $\hat{\br}$ computed in Step 3 of Alg. \ref{algorithm:normal_estimation_with_offset} satisfies
$\min\{\|\hat{\br} - \br_*\|_2, \|\hat{\br} + \br_*\|_2\} \leq \eps,$
w.p. $1 - \delta/2$.
\end{lemma}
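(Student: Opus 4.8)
The plan is to mirror the structure of the no-offset argument in Section~\ref{sec:proof_lem_main_no_offset}, working throughout in the whitened space. Writing $\bm{\Gamma} := \bm{\Sigma}^{1/2}$ and $\bX = \bm{\Gamma}\bZ$ with $\bZ \sim N(\bm{0},\bm{I})$, the target becomes $\pos(\bu_*^{\sf T}\bZ - \ell)$ with $\bu_* = \bm{\Gamma}\br_*/\|\bm{\Gamma}\br_*\|_2$ and $\ell$ as defined above, so that the only effect of the offset and the mean is to replace the symmetric half-normal conditioning along $\bu_*$ by a one-sided truncation of a standard normal at $\ell$. First I would establish the offset analogue of Lemma~\ref{lemma:ratio_comp} (namely Lemma~\ref{lemma:ratio_comp_offset}): decomposing any unit $\bu = \alpha\bu_* + \beta\tilde{\bu}$ with $\tilde{\bu}\perp\bu_*$, the components orthogonal to $\bu_*$ are unaffected by conditioning and so contribute exactly $\Var[\tilde{g}_B]=1$ and $\Var[\tilde{g}_D]=2$ as before, while along $\bu_*$ the law of total variance gives $\Var[g^*_B] = 1-\kappa_1$ and $\Var[g^*_D] = 2(1-\kappa_1)+\kappa_2$ for suitably redefined $\kappa_1,\kappa_2$ depending on $(q,k,\ell)$. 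Because the cross terms vanish by independence, the ratio collapses to $\rho(\br) = 2 + \gamma(\br)^2\kappa_2/(1 - \gamma(\br)^2\kappa_1)$ with $\alpha = \gamma(\br)$, exactly as in the homogeneous case.

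The moments of the truncated normal are explicit: conditioned on $g>\ell$ the mean is $\phi(\ell)/(1-\Phi(\ell))$ and conditioned on $g\le\ell$ it is $-\phi(\ell)/\Phi(\ell)$, so the between-class mean gap along $\bu_*$ equals $\phi(\ell)/(\Phi(\ell)(1-\Phi(\ell)))$. Feeding this into Lemma~\ref{lem:relation_sigma_b_sigma_d}, whose rank-one correction $\bm{\Sigma}_D - 2\bm{\Sigma}_B$ points along $\bu_*$, yields $\kappa_2 = \tfrac{2}{q-1}\tfrac{k}{q}(1-\tfrac{k}{q})\,\phi(\ell)^2/(\Phi(\ell)(1-\Phi(\ell)))^2 > 0$; combined with $\kappa_1 < 1$ this makes $\rho$ strictly increasing in $\gamma(\br)^2\in[0,1]$, hence maximized precisely at $\br = \pm\br_*$, and Theorem~\ref{appendix:ratio_max_to_pca} identifies this maximizer with the transformed principal eigenvector computed in Step~2. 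The remainder is the perturbation argument transferred verbatim from Section~\ref{sec:proof_lem_main_no_offset}: use Lemma~\ref{lem:meta_algo_pac_bound} to get $\|\bm{E}_B\|_2, \|\bm{E}_D\|_2 \le \eps_1\lambda_{\tn{max}}$ with $m \ge O((d/\eps_1^2)\ell^2\log(d/\delta))$ (this is where the extra $\ell^2$ factor enters), bound $|\hat{\rho}(\br)-\rho(\br)| \le \theta\eps_1|\rho(\br)|$ via Appendix~\ref{appendix:rho_hat_bound} with $\theta$ from Definition~\ref{def:kappatheta_offset}, use the gap $\rho(\br_*)-\rho(\br) \ge \kappa_3\eps_2^2/2$ (which follows from $|\gamma(\br)|\le 1-\eps_2^2/2$ whenever $\tilde{\br}$ is $\eps_2$-far from $\pm\tilde{\br}_*$) to force $\hat{\rho}(\br_*) > \hat{\rho}(\br)$ once $\eps_1 < \kappa_3\eps_2^2/(4\theta(2+\kappa_3))$, and finally convert the whitened-space closeness to the original space through Lemma~\ref{lem:angle_bound_under_pd_transformation}, setting $\eps_2 = (\eps/4)\sqrt{\lambda_{\tn{min}}/\lambda_{\tn{max}}}$. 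Substituting the resulting $\eps_1$ and the bounds on $\kappa_3,\theta$ gives the claimed $m = O((d/\eps^4)\ell^2\log(d/\delta)(\lambda_{\tn{max}}/\lambda_{\tn{min}})^4 q^4)$.

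I expect the main obstacle to be the analysis of the truncated-normal moments and the attendant sign conditions, rather than the perturbation bookkeeping (which is mechanical). Unlike the symmetric $\ell=0$ case, the one-sided truncation produces asymmetric within-class variances and, crucially, a $\kappa_1$ that need not be nonnegative; the proof must therefore carry the $\max(0,\cdot)$ and $\min(0,\cdot)$ guards seen in Appendix~\ref{appendix:eigengap_helper} and Definition~\ref{def:kappatheta_offset}, and verify that $1-\gamma(\br)^2\kappa_1$ stays bounded away from zero so that $\rho$ and $\hat{\rho}$ remain well behaved. The delicate point is ensuring $\kappa_2 > 0$ uniformly and that the eigengap $\kappa_3 = \kappa_2/(1-\kappa_1)$ does not degrade faster than the stated $q$ and $\ell$ dependence, since it is exactly this gap that controls how accurately $\bm{\Sigma}_B,\bm{\Sigma}_D$ must be estimated and hence the final sample complexity.
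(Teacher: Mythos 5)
Your proposal follows the paper's own proof essentially step for step: the same whitening to reduce the offset to a one-sided truncation at $\ell$, the same decomposition $\bu = \alpha\bu_* + \beta\tilde{\bu}$ yielding $\rho(\br) = 2 + \gamma(\br)^2\kappa_2/(1-\gamma(\br)^2\kappa_1)$ with the identical truncated-normal value of $\kappa_2$, the same PCA identification via Theorem~\ref{appendix:ratio_max_to_pca}, and the same perturbation bookkeeping (Lemma~\ref{lem:meta_algo_pac_bound} with the $\ell^2$ factor, the $\theta\eps_1$ relative error on $\hat{\rho}$, the $\kappa_3\eps_2^2/2$ gap, and Lemma~\ref{lem:angle_bound_under_pd_transformation}). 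The subtleties you flag — the possibly negative $\kappa_1$, the $\max(0,\cdot)$ guards, and the $1/\kappa_3 \leq q^2(1+\ell^2/4)/\ell^2$ control of the eigengap — are exactly the ones the paper handles in Definition~\ref{def:kappatheta_offset} and Appendix~\ref{appendix:rho_hat_bound}, so the argument is sound and matches the paper's route.
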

The above lemma, whose proof is deferred to Sec. \ref{sec:proof_lem_main_offset}, is used in conjunction with the following lemma.
\begin{lemma}\label{lem:unique_soln_offset}
    Let $\eps, \delta \in (0,1)$ and suppose that $\hat{\br}$ computed in Step 3 of Alg. \ref{algorithm:normal_estimation_with_offset} satisfies if $k \neq q/2$
$\min\{\|\hat{\br} - \br_*\|_2, \|\hat{\br} + \br_*\|_2\} \leq \eps,$. Then, with $s \geq O\left(d(\log q + \log(1/\delta))/\eps^2\right)$, $h^*$ computed in Step 4.c or Step 5.d satisfies $$\Pr_\mc{D}\left[h^*(\bx) \neq f(\bx)\right] \leq \frac{8q\eps}{\Phi(\ell)(1 - \Phi(\ell))}\left(c_0\sqrt{\tfrac{\lambda_{\tn{max}}}{\lambda_{\tn{min}}}} + c_1\tfrac{\|\bm{\mu}\|_2}{\sqrt{\lambda_{\tn{min}}}}\right)$$ and if $k = q/2$, $$\min(\Pr_\mc{D}\left[h^*(\bx) \neq f(\bx)\right], \Pr_\mc{D}\left[(1-h^*(\bx)) \neq f(\bx)\right]) \leq \frac{8q\eps}{\Phi(\ell)(1 - \Phi(\ell))}\left(c_0\sqrt{\tfrac{\lambda_{\tn{max}}}{\lambda_{\tn{min}}}} + c_1\tfrac{\|\bm{\mu}\|_2}{\sqrt{\lambda_{\tn{min}}}}\right)$$ w.p. $1 - \delta/2$, where $c_0, c_1$ are the constants from Lemma \ref{lem:bounding_classification_error}.
\end{lemma}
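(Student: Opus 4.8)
The plan is to reduce the claim to the bag-to-instance generalization bound of Theorem \ref{thm:genbound_main}: I will exhibit, among the candidate hypotheses enumerated by Algorithm \ref{algorithm:normal_estimation_with_offset}, one whose empirical bag error ${\sf BagErr}_{\tn{sample}}(\cdot,\mc{M})$ is provably small, and then use the fact that $h^*$ is selected to \emph{minimize} this empirical bag error to transfer the smallness to the returned hypothesis. Throughout write $\eps' := \eps\left(c_0\sqrt{\lambda_{\tn{max}}/\lambda_{\tn{min}}} + c_1\|\bm{\mu}\|_2/\sqrt{\lambda_{\tn{min}}}\right)$, and (for $k\neq q/2$) let $\hat{\br}' \in \{\hat{\br}, -\hat{\br}\}$ be the sign-corrected estimate, i.e.\ the one with $\|\hat{\br}' - \br_*\|_2 \le \eps$.

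First I would pass from the geometric bound to an oracle bag error. Applying Lemma \ref{lem:bounding_classification_error} with the common offset $c_*$ to the unit vectors $\br_*$ and $\hat{\br}'$ shows that the (unconstructible) LTF $h := \pos(\hat{\br}'^{\sf T}\bx + c_*)$ has instance error $\Pr_{\mc{D}}[h(\bx) \neq f(\bx)] \le \eps'$. Using the normalization $f(\bx) = \pos(\bu_*^{\sf T}\bZ - \ell)$ of Section \ref{sec:proof_of_them_main_3}, I have $\Pr_{\mc{D}}[f=1] = 1 - \Phi(\ell)$ and $\Pr_{\mc{D}}[f=0] = \Phi(\ell)$, so the class-conditional errors are at most $\eps'/(1-\Phi(\ell))$ and $\eps'/\Phi(\ell)$. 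A random oracle bag has $k$ instances from $\mc{D}_1$ and $q-k$ from $\mc{D}_0$, and union-bounding the event that $h$ disagrees with $f$ on some instance of the bag yields
\[
{\sf BagErr}_{\tn{oracle}}(h, f, \mc{D}, q, k) \;\le\; \frac{k\,\eps'}{1 - \Phi(\ell)} + \frac{(q-k)\,\eps'}{\Phi(\ell)} \;\le\; \frac{q\,\eps'}{\Phi(\ell)(1 - \Phi(\ell))}.
\]
A Chernoff bound over the $s$ bags of $\mc{M}$ then gives ${\sf BagErr}_{\tn{sample}}(h, \mc{M}) \le 2q\eps'/(\Phi(\ell)(1-\Phi(\ell)))$ with probability at least $1 - \delta/4$, once $s$ is as in the hypothesis.

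Next I would account for the unknown offset. Because the algorithm builds, for direction $\hat{\br}'$, one candidate per bag by thresholding at its $k$-th largest projection, the offset-selection procedure returns a $\hat c$ satisfying at least $s^* - 1$ bags of $\mc{M}$, where $s^*$ is the maximum number satisfiable over all real offsets for that direction. Since $c_*$ is itself a feasible offset, $s^* \ge (1 - {\sf BagErr}_{\tn{sample}}(h, \mc{M}))\,s$, so the candidate $\pos(\hat{\br}'^{\sf T}\bx + \hat c)$ has empirical bag error at most $2q\eps'/(\Phi(\ell)(1 - \Phi(\ell))) + 1/s$. This candidate is one of those over which $h^*$ is minimized (Step 5.d, which ranges over both $\pm\hat{\br}$, when $k \neq q/2$; Step 4.c when $k = q/2$), so ${\sf BagErr}_{\tn{sample}}(h^*, \mc{M})$ is bounded by the same quantity.

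Finally I would invoke Theorem \ref{thm:genbound_main} on $h^*$. Taking $s \ge O\left(d(\log q + \log(1/\delta))/\eps^2\right)$ large enough that the empirical bag error lies below $1/(4q)$ and the additive $1/s$ is absorbed, part (i) gives for $k \neq q/2$ that $\Pr_{\mc{D}}[h^*(\bx)\neq f(\bx)] \le 4\,{\sf BagErr}_{\tn{sample}}(h^*, \mc{M}) \le 8q\eps'/(\Phi(\ell)(1-\Phi(\ell)))$, which is exactly the claimed bound after substituting $\eps'$, and a union bound keeps the failure probability below $\delta/2$. For $k = q/2$, if $\hat{\br}$ is close to $-\br_*$ I would instead use that $1 - f = \pos(-\br_*^{\sf T}\bx - c_*)$ carries the same label proportion $1/2$ and is approximated by a $\hat{\br}$-direction candidate, so part (ii) yields the stated minimum over $h^*$ and $1 - h^*$. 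The main obstacle is the offset step: one must verify the $s^* - 1$ guarantee of the $k$-th-projection construction and confirm that the correct-sign, near-$c_*$ candidate truly appears among the enumerated hypotheses, so the min-selection cannot be misled; here the fact that Theorem \ref{thm:genbound_main} is uniform over the LTF class is exactly what prevents a spuriously low empirical bag error from concealing a large instance error.
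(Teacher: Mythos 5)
Your proposal is correct and follows essentially the same route as the paper's proof: apply Lemma \ref{lem:bounding_classification_error} to the sign-corrected direction with the true offset $c_*$, convert to class-conditional errors via $\Phi(\ell)$ and $1-\Phi(\ell)$, union-bound over the bag and apply Chernoff to control ${\sf BagErr}_{\tn{sample}}$, argue that the $k$-th-projection enumeration contains a candidate at least as good (up to one bag), and finish with Theorem \ref{thm:genbound_main} on the minimizer $h^*$. Your explicit tracking of the additive $1/s$ slack from the $s^*-1$ guarantee is in fact slightly more careful than the paper's treatment of that step.
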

With the above we complete the proof of Theorem \ref{thm:main-3} as follows.
\begin{proof}(of Theorem \ref{thm:main-3}) Let the parameters $\delta, \eps$ be as given in the statement of the theorem. We use $O\left(\eps\frac{(\Phi(\ell)(1 - \Phi(\ell)))\sqrt{\lambda_{\tn{min}}}}{q(\sqrt{\lambda_{\tn{max}}} + \|\bm{\mu}\|_2)}\right)$ for the error bound in Lemma \ref{lem:main_offset}. We now take $m$ to be of $m = O\left((d/\eps^4)\frac{\ell^2}{(\Phi(\ell)(1 - \Phi(\ell)))^2}\log(d/\delta)\left(\frac{\lambda_{\tn{max}}}{\lambda_{\tn{min}}}\right)^4\left(\frac{\sqrt{\lambda_{\tn{max}}} + \|\bm{\mu}\|_2}{\sqrt{\lambda_{\tn{min}}}}\right)^4q^8\right)$ in  Alg. \ref{algorithm:normal_estimation_with_offset} we obtain the following bound: $\min\{\|\hat{\br} - \br_*\|_2, \|\hat{\br} + \br_*\|_2\} \leq \eps\frac{(\Phi(\ell)(1 - \Phi(\ell)))\sqrt{\lambda_{\tn{min}}}}{q(c_0\sqrt{\lambda_{\tn{max}}} + c_1\|\bm{\mu}\|_2)}$ with probability $1 - \delta/2$. Using $s \geq O\left(d(\log(q) + \log(1/\delta))\frac{q^2(\sqrt{\lambda_{\tn{max}}} + \|\bm{\mu}\|_2)^2}{\eps^2(\Phi(\ell)(1 - \Phi(\ell))^2\lambda_{\tn{min}}}\right)$, Lemma \ref{lem:unique_soln_offset} yields the desired misclassification error bound of $\eps$ on $h^*$ w.p. $1-\delta$.
\end{proof}

\begin{proof} (of Lemma \ref{lem:unique_soln_offset})
 Define $h(\bx) := \pos(\hat{\br}^{\sf T}\bX + c_*)$ and $\tilde{h}(\bx) := \pos(-\hat{\br}^{\sf T}\bX + c_*)$. Applying  Lemma \ref{lem:bounding_classification_error}, we obtain that at least one of $h$, $\tilde{h}$ has an instance misclassification error of at most $O(\eps(\sqrt{\lambda_{\tn{max}}/\lambda_{\tn{min}}} + \|\bm{\mu}\|_2/\lambda_{\tn{min}}))$. WLOG assume that $h$ satisfies this error bound i.e., $\Pr_{\mc{D}}[f(\bx) \neq h(\bx)] \leq \eps(c_0\sqrt{\lambda_{\tn{max}}/\lambda_{\tn{min}}} + c_1\|\bm{\mu}\|_2/\sqrt{\lambda_{\tn{min}}}) =: \eps'$. Note that, $\Pr_{\mc{D}}[f(\bx) = 1] = \Phi(\ell), \Pr_{\mc{D}}[f(\bx) = 0] = 1 - \Phi(\ell)$. Thus, 
$$\Pr_{\mc{D}}[h(x) \neq f(x)\,\mid\,f(\bx) = 1] \leq \eps'/\Phi(\ell), \qquad \Pr_{\mc{D}}[[h(x) \neq f(x)\,\mid\,f(\bx) = 0] \leq \eps'/(1 - \Phi(\ell)).$$
Therefore, taking a union bound we get that the probability that a random bag from the oracle contains a feature vector on which $f$ and $h$ disagree is at most $q\eps'/\Phi(\ell)(1 - \Phi(\ell))$. Applying Chernoff bound (see Appendix \ref{sec:Chernoff}) we obtain that with probability at least $1-\delta/6$, 
${\sf BagErr}_{\tn{sample}}(h, \mc{M}) \leq 2q\eps'/\Phi(\ell)(1 - \Phi(\ell))$. Therefore, $h$ satisfies ${\sf BagErr}_{\tn{sample}}(h^*, \mc{M}) \leq 2q\eps'/\Phi(\ell)(1 - \Phi(\ell))$. Hence, there exists at least one $h_j(\bx) = \pos(\hat{\br}^{\sf T}\bx + \hat{\br}^{\sf T}\bx_j^{(k)})$ will satisfy ${\sf BagErr}_{\tn{sample}}(h_j, \mc{M}) \leq 2q\eps'/\Phi(\ell)(1 - \Phi(\ell))$. Since, $h^*(\bx)$ has the minimum sample bag error among all $h_j(\bx)$, ${\sf BagErr}_{\tn{sample}}(h^*, \mc{M}) \leq 2q\eps'/\Phi(\ell)(1 - \Phi(\ell))$.

On the other hand, applying Theorem \ref{thm:genbound_main}, except with probability $\delta/3$, $\Pr_{\mc{D}}[f(\bx) \neq h^*(\bx)] \leq 8q\eps'/\Phi(\ell)(1 - \Phi(\ell)) = \frac{8q\eps}{\Phi(\ell)(1 - \Phi(\ell))}(c_0\sqrt{\lambda_{\tn{max}}/\lambda_{\tn{min}}} + c_1\|\bm{\mu}\|_2/\sqrt{\lambda_{\tn{min}}})$ if $k \neq q/2$ and $\min(\Pr_{\mc{D}}[f(\bx) \neq h^*(\bx)], \Pr_{\mc{D}}[f(\bx) \neq (1-h^*(\bx))]) \leq 8q\eps'/\Phi(\ell)(1 - \Phi(\ell)) = \frac{8q\eps}{\Phi(\ell)(1 - \Phi(\ell))}(c_0\sqrt{\lambda_{\tn{max}}/\lambda_{\tn{min}}} + c_1\|\bm{\mu}\|_2/\sqrt{\lambda_{\tn{min}}})$ if $k = q/2$. Therefore, except with probability $\delta/2$, the  bound in Lemma \ref{lem:unique_soln_offset} holds.
\end{proof}

\subsection{Proof of Lemma \ref{lem:main_offset}}\label{sec:proof_lem_main_offset}
 We use these to generalize a few quantities we had defined earlier. We obtain their bounds using \ref{prop:vershynin-probbds},
\begin{definition}\label{def:kappatheta_offset} Define: 
\begin{align}
    & \kappa_1 := \left(\frac{\phi(\ell)\left(\frac{k}{q} - (1 - \Phi(\ell))\right)}{\Phi(\ell)(1 - \Phi(\ell))}\right)^2 - \frac{\ell\phi(\ell)\left(\frac{k}{q} - (1 - \Phi(\ell))\right)}{\Phi(\ell)(1 - \Phi(\ell))}, \qquad \kappa_1 \geq -\ell^2/4  \nonumber \\
    & \kappa_2 := \frac{2}{q-1}\frac{k}{q}\left(1 - \frac{k}{q}\right)\left(\frac{\phi(\ell)}{\Phi(\ell)(1 - \Phi(\ell))}\right)^2, \qquad \kappa_2 \geq 2\ell^2/q^2 \qquad \text{ when } \ell > 1\nonumber \\
    & \kappa_3 := \frac{\kappa_2}{(1 - \kappa_1)(1 - \max(0, \kappa_1))}, \qquad \kappa_3 \geq \frac{2\ell^2}{q^2(1 + \ell^2/4)} \qquad \text{ when } \ell > 1\nonumber \\
    & \theta := \frac{2\lambda_{\tn{max}}}{\lambda_{\tn{min}}}\left(\frac{1}{2 - \max(0, 2\kappa_1 - \kappa_2)} + \frac{1}{1 - \max(0, \kappa_1)}\right), \quad \frac{3\lambda_{\tn{max}}}{\lambda_{\tn{min}}} \leq \theta \leq \frac{3\lambda_{\tn{max}}}{(1 - K_2)\lambda_{\tn{min}}} \nonumber
\end{align}
Where $K_i$'s are some finite functions of $K$.
\end{definition}
Similar to Lemma \ref{lemma:ratio_comp}, we again show in the following lemma that $\hat{\br}$ in the algorithms is indeed $\pm \br_*$ if the covariance estimates were the actual covariances.
\begin{lemma}\label{lemma:ratio_comp_offset}
The ratio $\rho(\br) := \br^{\sf T}\bm{\Sigma}_D\br/\br^{\sf T}\bm{\Sigma}_B\br$ is maximized when $\br = \pm \br_*$. Moreover, 
\[
    \rho(\br) = 2 + \frac{\gamma(\br)^2\kappa_2}{1 - \gamma(\br)^2\kappa_1}\,\, \text{ where } \gamma(\br) := \frac{\br^{\sf T}\bm{\Sigma}\br_*}{\sqrt{\br^{\sf T}\bm{\Sigma}\br}\sqrt{\br_*^{\sf T}\bm{\Sigma}\br_*}} \text{ and }
\]
\[
    \br^{\sf T}\bm{\Sigma}_B\br = \br^{\sf T}\bm{\Sigma}\br(1 - \gamma(\br)^2\kappa_1),\,\,\,\,\, \br^{\sf T}\bm{\Sigma}_D\br = \br^{\sf T}\bm{\Sigma}\br(2 - \gamma(\br)^2(2\kappa_1 - \kappa_2))
\]
\end{lemma}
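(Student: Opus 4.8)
The plan is to follow the template of Lemma~\ref{lemma:ratio_comp}, replacing the half-normal moments used there by the moments of one-sided \emph{truncated} Gaussians at the threshold $\ell$. First I would pass to the whitened space: writing $\bX = \bm{\Gamma}\bZ + \bm{\mu}$ with $\bZ \sim N(\bm{0}, \bm{I})$ and $\bu := \bm{\Gamma}\br/\|\bm{\Gamma}\br\|_2$, the event $\pos(\br^{\sf T}\bX + c_*)=a$ becomes $\pos(\bu_*^{\sf T}\bZ - \ell) = a$ by Lemma~\ref{lem:LTFnormalization}, and since the additive shift $\bm{\mu}$ cancels in all centered second moments, $\rho(\br) = \Var[\bu^{\sf T}\bZ_D]/\Var[\bu^{\sf T}\bZ_B]$ exactly as in the homogeneous case, where $\bZ_a$ is $\bZ$ conditioned on $\pos(\bu_*^{\sf T}\bZ - \ell) = a$.

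Next I would compute the one-dimensional moments along $\bu_*$. Let $g^*_1$ and $g^*_0$ denote a standard Gaussian conditioned on lying above and at-or-below $\ell$ respectively. Using the inverse Mills ratio one has $\E[g^*_1] = \phi(\ell)/(1 - \Phi(\ell))$, $\E[g^*_0] = -\phi(\ell)/\Phi(\ell)$, and the truncated second moments $\E[(g^*_1)^2] = 1 + \ell\phi(\ell)/(1-\Phi(\ell))$, $\E[(g^*_0)^2] = 1 - \ell\phi(\ell)/\Phi(\ell)$. Mixing with the bag weights $k/q$ and $1 - k/q$ and simplifying $\tfrac{k/q}{1-\Phi(\ell)} - \tfrac{1-k/q}{\Phi(\ell)} = \tfrac{k/q - (1-\Phi(\ell))}{\Phi(\ell)(1-\Phi(\ell))}$, the bag-direction variance collapses to $\Var[g^*_B] = 1 - \kappa_1$ with $\kappa_1$ precisely as in Definition~\ref{def:kappatheta_offset}. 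For the pair direction I would invoke Lemma~\ref{lem:relation_sigma_b_sigma_d}, which gives $\bm{\Sigma}_D = 2\bm{\Sigma}_B + \tfrac{2}{q-1}\tfrac{k}{q}(1-\tfrac{k}{q})(\E[\bX_1]-\E[\bX_0])(\E[\bX_1]-\E[\bX_0])^{\sf T}$; projecting onto $\bu_*$ and using $\E[g^*_1] - \E[g^*_0] = \phi(\ell)/(\Phi(\ell)(1-\Phi(\ell)))$ yields $\Var[g^*_D] = 2(1 - \kappa_1) + \kappa_2$ with $\kappa_2$ as defined. Along any $\tilde{\bu} \perp \bu_*$, conditioning on the $\bu_*$-coordinate leaves that component a standard Gaussian independent of the $\bu_*$-component, so $\Var[\tilde{g}_B] = 1$ and $\Var[\tilde{g}_D] = 2$, identically to the no-offset argument.

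With these in hand, writing the unit vector $\bu = \alpha\bu_* + \beta\tilde{\bu}$ with $\beta = \sqrt{1-\alpha^2}$ and exploiting the independence across the two directions, the same short computation as in Lemma~\ref{lemma:ratio_comp} gives
\[
\rho(\br) = \frac{\alpha^2\big(2(1-\kappa_1)+\kappa_2\big) + 2\beta^2}{\alpha^2(1-\kappa_1) + \beta^2} = 2 + \frac{\alpha^2\kappa_2}{1 - \alpha^2\kappa_1},
\]
and the identification $\alpha = \langle \bm{\Gamma}\br, \bm{\Gamma}\br_*\rangle/(\|\bm{\Gamma}\br\|_2\|\bm{\Gamma}\br_*\|_2) = \gamma(\br)$ delivers the stated formulas for $\rho$, and likewise for $\br^{\sf T}\bm{\Sigma}_B\br = \br^{\sf T}\bm{\Sigma}\br(1-\gamma(\br)^2\kappa_1)$ and $\br^{\sf T}\bm{\Sigma}_D\br$. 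For the maximization claim I would differentiate $g(x) = x\kappa_2/(1 - x\kappa_1)$ in $x = \gamma(\br)^2 \in [0,1]$; the derivative collapses to $g'(x) = \kappa_2/(1-x\kappa_1)^2$, which is strictly positive because $\kappa_2 > 0$ (as $1 \le k \le q-1$) and $1 - x\kappa_1 > 0$. Hence $\rho$ is strictly increasing in $\gamma(\br)^2$ and maximized exactly at $\gamma(\br)^2 = 1$, i.e. $\br = \pm\br_*$.

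The subtle point, and the main place this argument departs from Lemma~\ref{lemma:ratio_comp}, is that in the homogeneous case $\kappa_1 \ge 0$ makes monotonicity transparent, whereas here $\kappa_1$ can be negative (Definition~\ref{def:kappatheta_offset} only guarantees $\kappa_1 \ge -\ell^2/4$). The monotonicity nonetheless survives because of the clean cancellation $g'(x) = \kappa_2/(1-x\kappa_1)^2$ together with the positivity of $1 - x\kappa_1$ for all $x \in [0,1]$, which I would justify from the positive-definiteness of $\bm{\Sigma}_B$ (it equals the normalized variance $\br^{\sf T}\bm{\Sigma}_B\br/\br^{\sf T}\bm{\Sigma}\br > 0$) rather than from any sign of $\kappa_1$. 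The only genuinely new calculation relative to the homogeneous lemma is the truncated-Gaussian moment bookkeeping that matches $\Var[g^*_B]$ and $\Var[g^*_D]$ to the redefined $\kappa_1,\kappa_2$; I expect this to be the most error-prone step.
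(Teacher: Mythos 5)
Your proposal is correct and follows essentially the same route as the paper's proof: whitening via Lemma \ref{lem:LTFnormalization}, computing the truncated-Gaussian moments along $\bu_*$, invoking Lemma \ref{lem:relation_sigma_b_sigma_d} for $\Var[g^*_D]$, and decomposing $\bu = \alpha\bu_* + \beta\tilde{\bu}$ to get $\rho = 2 + \alpha^2\kappa_2/(1-\alpha^2\kappa_1)$. Your explicit monotonicity argument via $g'(x) = \kappa_2/(1-x\kappa_1)^2 > 0$ with positivity of $1-x\kappa_1$ coming from $\bm{\Sigma}_B \succ 0$ is a welcome extra detail that the paper leaves implicit for the case $\kappa_1 < 0$, but it does not change the approach.
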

\begin{proof}
    Using the transformations to $\bZ$, we can let $\bX_B = \bm{\Gamma}\bZ_B$ as a random feature-vector sampled uniformly from a random bag sampled from \mc{O}. Also, let $\bX_D = \bm{\Gamma}\bZ_D$ be the difference of two random feature vectors sampled uniformly without replacement from a random bag sampled from $\mc{O}$. Observe that the ratio $\rho(\br) = \Var[\br^{\sf T}\bX_D]/\Var[\br^{\sf T}\bX_B] = \Var[\bu^{\sf T}\bZ_D]/\Var[\bu^{\sf T}\bZ_B]$.
    
    Let $g^* :=  \bu_*^{\sf T}\bZ$ which is $N(0, 1)$. 
For $a \in \{0,1\}$, let $\bZ_a$ be $\bZ$ conditioned on $\pos\left(\bu_*^{\sf T}\bZ - \ell\right) = a$. Let $g^*_a := \bu_*^{\sf T}\bZ_a$, $a \in \{0,1\}$. $g^*_0$ lower-tailed one-sided truncated normal distributions truncated at $\ell$ and $g^*_1$ upper-tailed one-sided truncated normal distributions truncated at $\ell$. Hence, $\Ex[g^*_1] = \phi(\ell)/(1 - \Phi(\ell))$, $\Ex[g^*_0] = -\phi(\ell)/\Phi(\ell)$, $\Ex[g^{*2}_1] = 1 + \ell\phi(\ell)/(1 - \Phi(\ell))$, $\Ex[g^{*2}_0] = 1 - \ell\phi(\ell)/\Phi(\ell)$. 
With this setup, letting $g^*_B :=  \bu_*^{\sf T}\bZ_B$ and $g^*_D :=  \bu_*^{\sf T}\bZ_D$ we obtain (using Lemma \ref{lem:relation_sigma_b_sigma_d} in Appendix \ref{sec:relation_sigma_b_sigma_d})
\begin{align}
& \Var[g^*_B] = 1 - \kappa_1,\,\,\,\,\,\,\, \Var[g^*_D] = 2(1 - \kappa_1) + \kappa_2 \nonumber
\end{align}
Now let $\tilde{\bu}$ be a unit vector orthogonal to $\bu_*$. Let $\tilde{g} = \tilde{\bu}^{\sf T}\bZ$ be $N(0,1)$. Also, let $\tilde{g}_a = \tilde{\bu}^{\sf T}\bZ_a$ for $a \in \{0,1\}$. Since $\bZ_a$ are given by conditioning $\bZ$ only along $\bu_*$, $\tilde{g}_a \sim N(0,1)$ for $a \in \{0,1\}$.  In particular, the component along $\tilde{u}$ of $\bZ_B$  (call it $\tilde{g}_B$) is $N(0,1)$ and that of $\bZ_D$ (call it $\tilde{g}_D$) is  the difference of two iid $N(0,1)$ variables. Thus, $\Var[\tilde{g}_B] = 1$ and $\Var[\tilde{g}_D] = 2$. Moreover, due to orthogonality all these gaussian variables corresponding to $\tilde{\bu}$ are independent of those corresponding to $\bu_*$ defined earlier. 

Now let $\bu = \alpha\bu_* + \beta\tilde{\bu}$, where $\beta = \sqrt{1-\alpha^2}$ be any unit vector. From the above we have,
\begin{align}
    \frac{\Var\left[\bu^{\sf T}\bZ_D\right]}{\Var\left[\bu^{\sf T}\bZ_B\right]}\ =\ \frac{\Var\left[\alpha g^*_D + \beta \tilde{g}_D\right]}{\Var\left[\alpha g^*_B + \beta \tilde{g}_B\right]}\ &=\ \frac{\alpha^2 \Var\left[g^*_D\right] + \beta^2 \Var\left[\tilde{g}_D\right]}{\alpha^2 \Var\left[g^*_B\right] + \beta^2 \Var\left[\tilde{g}_B\right]} \nonumber \\
    &=\ \frac{2\alpha^2(1 - \kappa_1) + \alpha^2\kappa_2 + 2\beta^2}{\alpha^2(1 - \kappa_1) + \beta^2} \nonumber \\
    &=\ 2 + \frac{\alpha^2\kappa_2}{1 - \alpha^2\kappa_1} 
\end{align}
where the last equality uses $\beta = \sqrt{1-\alpha^2}$. Letting $\bu = \bm{\Gamma}\br/\|\bm{\Gamma}\br\|_2$ we obtain that $\alpha = \tfrac{\langle \bm{\Gamma}\br, \bm{\Gamma}\br_*\rangle}{\|\bm{\Gamma}\br\|_2\|\bm{\Gamma}\br_*\|_2} = \gamma(\br)$ completing the proof.
\end{proof}

Lemma \ref{lemma:eigenvalue_computation_equivalency} shows that ratio maximization can be treated as an eigenvalue decomposition problem of the matrix $\bm{\Sigma}_B^{-1/2}\bm{\Sigma}_D\bm{\Sigma}_B^{-1/2}$. We now prove that the error in the estimate of $\hat{\br}$ given to us by the algorithm is bounded if the error in the covariance estimates are bounded. The sample complexity of computing these estimates gives the sample complexity of our algorithm.
\begin{theorem}
The unit vector $\hat{\br}$ computed in Step 3 of Algorithm \ref{algorithm:normal_estimation_with_offset} satisfies $\min \{\|\hat{\br} - \br_*\|_2, \|\hat{\br} + \br_*\|_2\}  \leq \eps$ w.p. at least $1 - \delta$ when $m \geq O\left((d/\eps^4)\ell^2\log(d/\delta)\left(\frac{\lambda_{\tn{max}}}{\lambda_{\tn{min}}}\right)^4q^4\right)$.
\end{theorem}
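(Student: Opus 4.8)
The plan is to mirror the argument for the homogeneous centered case (the theorem proved in Section~\ref{sec:proof_lem_main_no_offset}), replacing the quantities $\kappa_1,\kappa_2,\kappa_3,\theta$ and Lemma~\ref{lemma:ratio_comp} by their offset analogues in Definition~\ref{def:kappatheta_offset} and Lemma~\ref{lemma:ratio_comp_offset}. First I would invoke Lemma~\ref{lem:meta_algo_pac_bound} to guarantee that, with $m \geq O((d/\eps_1^2)\ell^2\log(d/\delta))$ samples, the empirical estimates satisfy $\|\mb{E}_B\|_2 \leq \eps_1\lambda_{\tn{max}}$ and $\|\mb{E}_D\|_2 \leq \eps_1\lambda_{\tn{max}}$ with probability $1-\delta$, where $\mb{E}_B=\hat{\bm{\Sigma}}_B-\bm{\Sigma}_B$ and $\mb{E}_D=\hat{\bm{\Sigma}}_D-\bm{\Sigma}_D$; this is exactly where the extra $\ell^2$ factor enters, since the relevant thresholded Gaussians have subgaussian norm $O(\ell)$.

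Next I would control the perturbed ratio $\hat{\rho}(\br) := \br^{\sf T}\hat{\bm{\Sigma}}_D\br/\br^{\sf T}\hat{\bm{\Sigma}}_B\br$. By Appendix~\ref{appendix:rho_hat_bound}, whenever $\eps_1 \leq \tfrac{1-\max(0,\kappa_1)}{2}\tfrac{\lambda_{\tn{min}}}{\lambda_{\tn{max}}}$ one has $|\hat{\rho}(\br)-\rho(\br)| \leq \theta\eps_1|\rho(\br)|$ for every $\br$. Lemma~\ref{lemma:ratio_comp_offset} supplies $\rho(\br)=2+\tfrac{\gamma(\br)^2\kappa_2}{1-\gamma(\br)^2\kappa_1}$, maximized at $\br=\pm\br_*$; writing $\tilde{\br}:=\bm{\Sigma}^{1/2}\br/\|\bm{\Sigma}^{1/2}\br\|_2$ so that $\gamma(\br)=\langle\tilde{\br},\tilde{\br}_*\rangle$, any unit vector with $\min\{\|\tilde{\br}-\tilde{\br}_*\|_2,\|\tilde{\br}+\tilde{\br}_*\|_2\}\geq\eps_2$ obeys $|\gamma(\br)|\leq 1-\eps_2^2/2$, which by the closed form yields the eigengap $\rho(\br_*)-\rho(\br)\geq\kappa_3\eps_2^2/2$. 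Choosing $\eps_1 < \tfrac{\kappa_3}{4\theta(2+\kappa_3)}\eps_2^2$ makes $\rho(\br_*)(1-\theta\eps_1)>\rho(\br)(1+\theta\eps_1)$, so by the multiplicative bound $\hat{\rho}(\br_*)>\hat{\rho}(\br)$ for all such $\br$. Since Step~2 of Algorithm~\ref{algorithm:normal_estimation_with_offset} returns the maximizer of $\hat{\rho}$ (equivalently the principal eigenvector in whitened coordinates, by Lemma~\ref{lemma:eigenvalue_computation_equivalency}), the returned $\hat{\br}$ must satisfy $\min\{\|\tilde{\br}-\tilde{\br}_*\|_2,\|\tilde{\br}+\tilde{\br}_*\|_2\}\leq\eps_2$, and Lemma~\ref{lem:angle_bound_under_pd_transformation} transports this to $\min\{\|\hat{\br}-\br_*\|_2,\|\hat{\br}+\br_*\|_2\}\leq 4\sqrt{\lambda_{\tn{max}}/\lambda_{\tn{min}}}\,\eps_2$. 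Setting $\eps_2=\tfrac{\eps}{4}\sqrt{\lambda_{\tn{min}}/\lambda_{\tn{max}}}$ gives the claimed $\eps$ bound, and back-substituting $\eps_1=O\!\big(\tfrac{\kappa_3\eps^2\lambda_{\tn{min}}}{\theta(2+\kappa_3)\lambda_{\tn{max}}}\big)$ produces $m\geq O\!\big((d/\eps^4)\ell^2\log(d/\delta)(\lambda_{\tn{max}}/\lambda_{\tn{min}})^2\theta^2((2+\kappa_3)/\kappa_3)^2\big)$.

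The final step is to simplify this to the stated bound using Definition~\ref{def:kappatheta_offset}, and this is where I expect the main obstacle. Unlike the centered case, here $\kappa_1$ can be negative (indeed $\kappa_1\geq-\ell^2/4$), so I must verify that $\rho$ is still monotone in $\gamma^2$ — it is, since $\tfrac{d}{d(\gamma^2)}\rho=\kappa_2/(1-\gamma^2\kappa_1)^2>0$ as $\kappa_2>0$, and $1-\gamma^2\kappa_1>0$ over the relevant range — and that the eigengap constant $\kappa_3$, now defined with the $\max(0,\kappa_1)$ correction, still obeys $(2+\kappa_3)/\kappa_3=O(q^2)$. The delicate point is that the offset bounds give $\kappa_3=\Omega(\ell^2/(q^2(1+\ell^2/4)))=\Omega(1/q^2)$ for $\ell>1$, so that $((2+\kappa_3)/\kappa_3)^2=O(q^4)$ uniformly in $\ell$ while $\theta=O(\lambda_{\tn{max}}/\lambda_{\tn{min}})$; consequently the entire $\ell$-dependence is confined to the single $\ell^2$ factor from the estimation step and does not inflate the powers of $q$ or the condition number. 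Combining these yields $m\geq O((d/\eps^4)\ell^2\log(d/\delta)(\lambda_{\tn{max}}/\lambda_{\tn{min}})^4q^4)$, as required. I would also check that the side condition $\eps_1\leq\tfrac{1-\max(0,\kappa_1)}{2}\tfrac{\lambda_{\tn{min}}}{\lambda_{\tn{max}}}$ needed for the $\hat{\rho}$-perturbation bound is implied by the chosen $\eps_1$.
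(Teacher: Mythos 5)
Your proposal follows essentially the same route as the paper's proof of Lemma \ref{lem:main_offset}: covariance estimation via Lemma \ref{lem:meta_algo_pac_bound} (contributing the $\ell^2$ factor), the multiplicative perturbation bound on $\hat{\rho}$ from Appendix \ref{appendix:rho_hat_bound}, the eigengap from Lemma \ref{lemma:ratio_comp_offset}, transport back via Lemma \ref{lem:angle_bound_under_pd_transformation}, and the same simplification using $1/\kappa_3 \leq q^2(1+\ell^2/4)/\ell^2$ for $\ell > 1$. The only differences are immaterial constant factors (e.g., bounding $\rho(\br_*)$ by $2+\kappa_3$ rather than $2+\kappa_3(1-\max(0,\kappa_1))$), and your explicit monotonicity check for possibly negative $\kappa_1$ is a point the paper leaves implicit.
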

\begin{proof}
By Lemma \ref{lem:meta_algo_pac_bound}, taking $m \geq O\left((d/\eps_1^2)\ell^2\log(d/\delta)\right)$ ensures that $\|\mb{E}_B\|_2 \leq \eps_1\lambda_{\tn{max}}$ and $\|\mb{E}_D\|_2 \leq \eps_1\lambda_{\tn{max}}$ w.p. at least $1 - \delta$ where $\mb{E}_B = \hat{\bm{\Sigma}}_B - \bm{\Sigma}_B$ and $\mb{E}_D = \hat{\bm{\Sigma}}_D - \bm{\Sigma}_D$.
We start by defining $\hat{\rho(\br)} := \frac{\br^{\sf T}\hat{\bm{\Sigma}}_D\br}{\br^{\sf T}\hat{\bm{\Sigma}}_B\br}$ which is the equivalent of $\rho$ using the estimated matrices. Observe that it can be written as $\hat{\rho}(\br) = \frac{\br^{\sf T}\bm{\Sigma}_B\br + \br^{\sf T}\bm{E}_B\br}{\br^{\sf T}\bm{\Sigma}_D\br + \br^{\sf T}\bm{E}_D\br}$. Using these we can obtain the following bound on $\hat{\rho}$: for any $\br \in \R^d$, $|\hat{\rho}(\br) - \rho(\br)| \leq \theta \eps_1|\rho(\br)|$  w.p. at least $1 - \delta$ (*) as long as $\eps_1 \leq \frac{(1 - \max(0, \kappa_1))}{2}\frac{\lambda_{\tn{min}}}{\lambda_{\tn{max}}}$, which we shall ensure (see Appendix \ref{appendix:rho_hat_bound}). 

For convenience we denote the normalized projection of any vector $\br$ as $\tilde{\br} := \frac{\bm{\Sigma}^{1/2}\br}{\|\bm{\Sigma}^{1/2}\br\|_2}$. Now let $\tilde{\br} \in \R^d$ be a unit vector such that $\min\{\|\tilde{\br} - \tilde{\br}_*\|_2, \|\tilde{\br} + \tilde{\br}_*\|_2\} \geq \eps_2$. 
Hence, using the definitions from Lemma \ref{lemma:ratio_comp_offset}, $|\gamma(\br)| \leq 1 - \eps_2^2/2$ while $\gamma(\br_*) = 1$ which implies $\rho(\br_*) - \rho(\br) \geq \kappa_3\eps_2^2/2$. 
Note that $\rho(\br) \leq \rho(\br_*) = 2 + \kappa_3(1 - \max(0, \kappa_1))$. Choosing $\eps_1 < \frac{\kappa_3}{2\theta(2 + \kappa_3(1 - \max(0, \kappa_1))}\eps_2^2$, we obtain that $\rho(\br_*)(1 - \theta\eps_1) > \rho(\br)(1 + \theta\eps_1)$. Using this along with the bound (*) we obtain that w.p. at least $1 - \delta$, $\hat{\rho}(\br_*) > \hat{\rho}(\br)$ when $\eps_2 > 0$. Since our algorithm returns $\hat{\br}$ as the maximizer of $\hat{\rho}$, w.p. at least $1 - \delta$ we get $\min\{\|\tilde{\br} - \tilde{\br}_*\|_2, \|\tilde{\br} + \tilde{\br}_*\|_2\} \leq \eps_2$. Using Lemma \ref{lem:angle_bound_under_pd_transformation}, $\min \{\|\hat{\br} - \br_*\|_2, \|\hat{\br} + \br_*\|_2\} \leq 4\sqrt{\frac{\lambda_{\tn{max}}}{\lambda_{\tn{min}}}}\eps_2$. Substituting $\eps_2 = \frac{\eps}{4}\sqrt{\frac{\lambda_{\tn{min}}}{\lambda_{\tn{max}}}}$, $\|\br - \br_*\|_2 \leq \eps$ w.p. at least $1 - \delta$. The conditions on $\eps_1$ are satisfied by taking it to be $\leq O\left(\tfrac{\kappa_3 \eps^2\lambda_{\tn{min}}}{\theta(2+\kappa_3(1 - \max(0, \kappa_1))\lambda_{\tn{max}}}\right)$, and thus we can take $m \geq O\left((d/\eps^4)\ell^2\log(d/\delta)\left(\frac{\lambda_{\tn{max}}}{\lambda_{\tn{min}}}\right)^2\theta^2\left(\frac{2 + \kappa_3(1 - \max(0, \kappa_1)}{\kappa_3}\right)^2\right) = O\left((d/\eps^4)\ell^2\log(d/\delta)\left(\frac{\lambda_{\tn{max}}}{\lambda_{\tn{min}}}\right)^4q^4\right)$ since $1/\kappa_3 \leq q^2(1+\ell^2/4)/\ell^2$ whenever $\ell > 1$(Defn. \ref{def:kappatheta_offset}). This completes the proof.
\end{proof}

\section{Generalization error Bounds} \label{sec:generalization_bounds}
We show that if a hypothesis LTF $h$ satisfies close to $1$ fraction of sufficient number of bags sampled from a bag oracle then with high probability  $h$ is a good approximator for the target LTF $f$ (or its complement in the case of balanced bags). The first step is to prove a generalization bound from the sample bag-level accuracy to the oracle bag-level accuracy.
\begin{theorem} \label{thm:genbound}
    Let $\mc{O} := {\sf Ex}(f, \mc{D}, q, k)$ be any bag oracle for an LTF $f$ in $d$-dimensions, and let $\mc{M}$ be a collection of $m$ bags sampled iid from the oracle. 
    Then, there is an absolute constant $C_0 \leq 1000$ s.t. w.p. at least $1 - \delta$,
    \begin{equation}
        {\sf BagErr}_{\tn{oracle}}(h, f, \mc{D}, q, k) \leq {\sf BagErr}_{\tn{sample}}(h, \mc{M})  + \eps
    \end{equation}
    when $m \geq C_0d\left(\log q + \log(1/\delta)\right)/\eps^2$, for any $\delta, \eps > 0$.
\end{theorem}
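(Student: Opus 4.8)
The plan is to prove Theorem~\ref{thm:genbound} as a one-sided uniform convergence statement over the class of bag-satisfaction losses induced by LTF hypotheses. For a hypothesis LTF $h$ I would define the bag loss $\ell_h(B) := \mathbbm{1}\!\left[\tn{Avg}\{h(\bx)\,\mid\,\bx \in B\} \neq k/q\right]$, so that ${\sf BagErr}_{\tn{oracle}}(h, f, \mc{D}, q, k) = \E_{B \leftarrow \mc{O}}[\ell_h(B)]$ and ${\sf BagErr}_{\tn{sample}}(h, \mc{M})$ is the empirical average of $\ell_h$ over the $m$ iid bags of $\mc{M}$. Writing $\mc{L} := \{\ell_h \,\mid\, h \in \mc{H}\}$, where $\mc{H}$ is the class of LTFs on $\R^d$, the theorem is exactly the agnostic generalization bound $\sup_{h}\big({\sf BagErr}_{\tn{oracle}}(h) - {\sf BagErr}_{\tn{sample}}(h, \mc{M})\big) \leq \eps$, which I would obtain from the standard VC/symmetrization machinery once the combinatorial complexity of $\mc{L}$ is controlled.

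The key step is bounding the growth function of $\mc{L}$. I would observe that $\ell_h(B)$ depends on $B = \{\bx_1,\dots,\bx_q\}$ only through the $q$ label bits $h(\bx_1),\dots,h(\bx_q)$, since it is the fixed symmetric predicate ``the number of ones is $\neq k$''. Hence, for any fixed collection of $n$ bags, the loss pattern $(\ell_h(B))$ over those bags is determined by the labeling that $h$ induces on the pooled set of $nq$ feature-vectors, so the number of distinct loss patterns is at most the number of distinct LTF labelings of $nq$ points, i.e. $\Pi_{\mc{L}}(n) \leq \Pi_{\mc{H}}(nq)$. Since LTFs on $\R^d$ have VC dimension at most $d+1$, Sauer--Shelah gives $\Pi_{\mc{H}}(nq) \leq (enq/(d+1))^{d+1}$, and solving $2^n \leq (enq/(d+1))^{d+1}$ shows that $\tn{VCdim}(\mc{L}) = O(d\log(dq))$.

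With this complexity bound in hand I would apply the textbook agnostic uniform-convergence theorem for $\{0,1\}$-valued classes: with probability $\geq 1-\delta$ the empirical and true risks agree to within $\eps$ uniformly once $m \geq C_0(\tn{VCdim}(\mc{L}) + \log(1/\delta))/\eps^2$ for an absolute $C_0$. Substituting $\tn{VCdim}(\mc{L}) = O(d\log q)$ yields $m \geq C_0 d(\log q + \log(1/\delta))/\eps^2$, matching the statement (only the one-sided direction is needed, which can only help the constant), and the explicit $C_0 \leq 1000$ would be read off by tracking constants in the symmetrization and Hoeffding/union-bound steps.

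The main obstacle I anticipate lives entirely in the complexity analysis rather than the concentration step, and is twofold. First, one must justify cleanly that the bag-loss growth function collapses to the point-labeling growth function; this is immediate here because every bag applies the \emph{same} $h$ to its $q$ points, but it must be phrased carefully to avoid an illusory $q$-fold blow-up in the VC dimension. Second, extracting the advertised clean form $d(\log q + \log(1/\delta))$ requires resolving the implicit dependence on $m$ coming from the $\log(mq)$ inside the growth function; I would handle this either via the self-bounding inequality $m \gtrsim d\log(mq)/\eps^2$, whose solution absorbs the $\log m$ into lower-order $\log d$ and $\log(1/\eps)$ terms, or, to avoid a spurious $\log(1/\eps)$ factor altogether, by invoking the optimal agnostic VC rate $m = O((v+\log(1/\delta))/\eps^2)$ with $v = \tn{VCdim}(\mc{L})$.
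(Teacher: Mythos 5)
Your proof is correct, but it takes a genuinely different route from the paper's. The paper's proof of Theorem~\ref{thm:genbound} reduces to the framework of \cite{YCKJC14} (as instantiated in Appendix~M of \cite{Saket22}): it writes the satisfaction event $\{\sum_{\bx\in B} h(\bx) = k\}$ as the difference of the two monotone threshold events $\gamma(B,h,k-1)$ and $\gamma(B,h,k)$, notes that each threshold indicator is a $1$-Lipschitz function (w.r.t.\ the $\infty$-norm) of the label vector $(h(\bx))_{\bx\in B}$, invokes the Lipschitz-composition generalization bound of \cite{YCKJC14} with the VC dimension $d+1$ of LTFs, and sums two $\eps/2$ bounds. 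You instead treat the $\{0,1\}$-valued bag loss $\ell_h$ directly and observe that its growth function on $n$ bags collapses to the growth function of LTFs on the $nq$ pooled points, so Sauer--Shelah gives $\Pi_{\mc{L}}(n)\le (enq/(d+1))^{d+1}$ and hence $\tn{VCdim}(\mc{L}) = O(d\log q)$, after which a single application of standard one-sided uniform convergence finishes the job. Your route is more self-contained and needs no decomposition into threshold events --- it works verbatim for any fixed $\{0,1\}$-valued predicate of the label vector, not just threshold-type ones --- whereas the paper buys brevity by black-boxing the Rademacher-complexity machinery of \cite{YCKJC14}. You are also right that the one real wrinkle in your route is the $\log(mq)$ inside the growth function: either the self-bounding resolution (at the cost of a lower-order $\log(1/\eps)$ term) or the chaining-based optimal VC rate $m = O((v+\log(1/\delta))/\eps^2)$ is needed to land exactly on $m \ge C_0 d(\log q + \log(1/\delta))/\eps^2$; the paper elides this point by citing \cite{YCKJC14} for the final rate.
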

\begin{proof}
    The proof follows from the arguments similar to the ones used in Appendix M of \cite{Saket22} to prove bag satisfaction generalization bounds. Consider a bag loss function of the form $\ell(\phi^{\zeta}(h, B), (\phi^{\zeta}(f, B))$, where $\phi^{\zeta}(g, B) := \zeta(\by_{g, B})$ where $\by_{g, B}$ is the vector of $(g(\bx))_{\bx \in B}$, for $\zeta : \{0,1\}^q \to \R$. The result of \cite{YCKJC14} showed generalization error bounds when (i) $\zeta$ is $1$-Lipschitz w.r.t. to $\infty$-norm, and (ii) $\ell$ is $1$-Lipschitz in the first coordinate.
    We can thus apply their results using the bound of $(d+1)$ on the VC-dimension of LTFs in $d$-dimensions to show that the above bound on $m$ holds (with $C_0/8$ instead of $C_0$) for the generalization of the following bag error:
    $$\left| \gamma(B, f, t) - \gamma(B, h, t) \right|,$$
    where
    \begin{equation}
    \gamma(B, g, t) := \begin{cases}
                        0 & \tn{ if } \sum_{\bx \in B} g(\bx) \leq t \\
                        1 & \tn{ otherwise.}
                        \end{cases}
    \end{equation}
    for $t \in \{0, \dots, q-1\}$.
    We can bound our bag satisfaction error by the sum of the generalization errors of $\left| \gamma(B, f, k) - \gamma(B, h, k) \right|$, and  $\left| \gamma(B, f, k-1) - \gamma(B, h, k-1) \right|$, which can each be bounded by $\eps/2$ thus completing the proof.
\end{proof}
Next we show that if the oracle-level bag accuracy of $h$ is high then this translates to $h$ is being a low error instance-level approximator for the target LTF $f$ (or its complement in the case of balanced bags). With the setup as used in the previous theorem, let us define define the regions $S_a := \{\bx \tn{ s.t } f(\bx) = a\}$ for $a \in \{0,1\}$, and $S_{ab} := \{\bx \tn{ s.t } f(\bx) = a, h(\bx)= b\}$. Let $\mu$ be the measure induced by $\mc{D}$, $\mu_a$ and $\mu_{ab}$ be the respectively conditional measures induced on $S_a$ and $S_{ab}$.
The oracle $\mc{O}$ for a random bag $B$, samples $k$ points iid from $(S_1, \mu_1)$ and $(q-k)$ points from $(S_0, \mu_0)$.

we have the following theorem.
\begin{theorem}\label{thm:bagoracletoinstance}
Suppose $k \in \{1,\dots, q\}$, %
and 
$0 < {\sf BagAcc}_{\tn{oracle}}(h, f, \mc{D}, q, k) \leq \eps' < 1/(4q)$, then
\begin{itemize}[noitemsep,topsep=0pt]
    \item [\tn{(i)}] $\Pr_{\mc{D}}[f(\bx) \not= h(\bx)] \leq \eps$ if $k \neq q/2$, and
    \item[\tn{(ii)}] $\Pr_{\mc{D}}[f(\bx) \not= h(\bx)] \leq \eps$ or $\Pr[f(\bx) \not= (1 - h(\bx))] \leq \eps$, if $k = q/2$,
\end{itemize}
 where $\eps = 4\eps'$. 
\end{theorem}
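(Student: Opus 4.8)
The plan is to reduce the bag-satisfaction event to an equality of two \emph{independent} binomials and then exploit that two independent integer random variables rarely coincide unless each is almost deterministic. Write $p_1 := \Pr_{\mu_1}[h(\bx)=0]$ for the rate at which $h$ mislabels true positives and $p_0 := \Pr_{\mu_0}[h(\bx)=1]$ for the rate at which it mislabels true negatives. A random bag from $\mc{O}$ contains $k$ independent draws from $(S_1,\mu_1)$ and $q-k$ independent draws from $(S_0,\mu_0)$; let $N_1$ be the number of the $k$ positives that $h$ labels $0$ and $N_0$ the number of the $q-k$ negatives that $h$ labels $1$. Then $N_1 \sim \mathrm{Bin}(k,p_1)$ and $N_0 \sim \mathrm{Bin}(q-k,p_0)$ are independent, the number of $1$'s assigned by $h$ in the bag is $k-N_1+N_0$, and the bag is satisfied exactly when this equals $k$, i.e. when $N_1=N_0$. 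Hence ${\sf BagErr}_{\tn{oracle}}(h,f,\mc{D},q,k)=\Pr[N_1\neq N_0]\le \eps'$, whereas the quantity to bound is $\Pr_{\mc{D}}[f\neq h]=w_1p_1+w_0p_0$ with $w_a:=\mu(S_a)$ and $w_0+w_1=1$. It therefore suffices to force $p_1,p_0$ small (or, in the balanced case, both close to $1$).

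First I would show both binomials are extremely concentrated. The collision identity $\Pr[N_1=N_0]=\sum_j \Pr[N_1=j]\Pr[N_0=j]\le \max_j\Pr[N_0=j]$, and symmetrically $\le \max_j\Pr[N_1=j]$, combined with $\Pr[N_1=N_0]\ge 1-\eps'$, shows each of $N_1,N_0$ places mass $\ge 1-\eps'>3/4$ on a single value (using $\eps'<1/(4q)\le 1/8$). The key combinatorial fact is that an \emph{interior} atom of a binomial is never this heavy: for $1\le j\le k-1$ a standard Stirling bound gives $\max_p\binom{k}{j}p^j(1-p)^{k-j}\le\sqrt{k/(2\pi j(k-j))}\le 1/\sqrt{\pi}<3/4$. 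Thus $N_1$'s heavy atom sits at $0$ or $k$, and $N_0$'s at $0$ or $q-k$. Since $\Pr[N_1=0]=(1-p_1)^k\le 1-p_1$, a heavy atom at $0$ forces $p_1\le\eps'$; since $\Pr[N_1=k]=p_1^k\le p_1$, a heavy atom at $k$ forces $1-p_1\le\eps'$; and analogously for $N_0$ and $p_0$.

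Finally I would pin down which endpoints occur, using $\Pr[N_1=N_0]\ge 1-\eps'$ once more. Let $v_1,v_0$ denote the heavy values. If $v_1\neq v_0$, the collision bound gives $\Pr[N_1=N_0]\le \Pr[N_1=v_0]+\Pr[N_0\neq v_0]\le 2\eps'<1-\eps'$, a contradiction; hence $v_1=v_0$. Enumerating $v_1\in\{0,k\}$ and $v_0\in\{0,q-k\}$ under $1\le k\le q-1$ (so $k,q-k\ge 1$), the only consistent configurations are: (i) both heavy values equal $0$, giving $p_1,p_0\le\eps'$ and $\Pr_{\mc{D}}[f\neq h]=w_1p_1+w_0p_0\le\eps'\le 4\eps'=\eps$; and (ii) both equal to the common value $k=q-k$, which requires $k=q/2$ and gives $1-p_1,1-p_0\le\eps'$, so $\Pr_{\mc{D}}[f\neq(1-h)]=w_1(1-p_1)+w_0(1-p_0)\le\eps$. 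Thus for $k\neq q/2$ only (i) survives, proving part (i), and for $k=q/2$ either may occur, proving part (ii). The main obstacle is the concentration step of the second paragraph: turning ``two independent binomials are almost surely equal'' into ``each is almost deterministic at an endpoint'' rests on the interior-atom bound, and one must check that the endpoint case analysis is exhaustive and that the crossed configurations are genuinely excluded precisely by $1\le k\le q-1$.
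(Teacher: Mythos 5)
Your proof is correct, and it in fact yields the sharper conclusion $\Pr_{\mc{D}}[f\neq h]\le\eps'$ rather than $4\eps'$. You share with the paper the central reduction --- a bag is satisfied iff the independent counts $N_1\sim\mathrm{Bin}(k,p_1)$ and $N_0\sim\mathrm{Bin}(q-k,p_0)$ of misclassified positives and negatives coincide --- but the two arguments diverge from there. The paper's Lemma \ref{lem:twobinomials} bounds \emph{every} binomial atom by $\max\{p,1-p\}$ (via $\binom{n}{r}\le\binom{n}{r+1}+\binom{n}{r-1}$) and applies Cauchy--Schwarz to get $\Pr[N_1=N_0]\le\sqrt{p^*}$; it then runs a trichotomy on $\Pr_{\mc{D}}[f\neq h]$ with sub-cases on $\mu_1(S_{10})$ and $\mu_0(S_{01})$, and the factor $4$ enters through steps like $1-\sqrt{1-\eps}\ge\eps/2$. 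You instead bound only the \emph{interior} atoms, by an absolute constant below $3/4$, which immediately localizes each heavy atom at an endpoint and pins each of $p_1,p_0$ to within $\eps'$ of $\{0,1\}$; the matching step $\Pr[N_1=N_0]\le\Pr[N_1=v_0]+\Pr[N_0\ne v_0]\le 2\eps'$ then excludes mismatched endpoints, and the surviving configurations are exactly $(0,0)$ in general and additionally $(k,q-k)$ when $k=q/2$. Your route buys a cleaner case structure and a better constant; what it needs that the paper's does not is the interior-mode estimate, which does hold (Robbins' form of Stirling gives $\binom{k}{j}(j/k)^j(1-j/k)^{k-j}\le\sqrt{k/(2\pi j(k-j))}\le 1/\sqrt{\pi}$ for $1\le j\le k-1$, and the extremal interior atom is in fact $1/2$ at $k=2$, $j=1$). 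One small point: as printed the theorem allows $k=q$, in which case $N_0\equiv 0$ and $p_0$ is entirely unconstrained by the bag error, so the claim degenerates there; your restriction to $1\le k\le q-1$ is the correct reading and is also what the paper's own proof implicitly assumes.
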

Before we prove the above them, we need the following lemma bounding the probability that two independent binomial random variables take the same value. Let $\tn{Binomial}(n, p)$ be the sum of $n$ iid $\{0,1\}$ random variables each with expectation $p$. 
\begin{lemma}\label{lem:twobinomials}
Let $u, v \in \{1, \dots, q\}$, and $X_1 \sim \tn{Binomial}(u, p_1)$ and $X_2 \sim \tn{Binomial}(v, p_2)$ be independent binomial distributions for some probabilities $p_1$ and $p_2$. Let $p^* = \min\{\max\{p_1, (1-p_1)\}, \max\{p_2, (1-p_2)\} \}$, then, $\Pr\left[X_1 \neq X_2\right] \geq 1-\sqrt{p^*}$.
\end{lemma}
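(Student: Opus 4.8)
The plan is to bound the collision probability $\Pr[X_1 = X_2]$ and show it is at most $p^*$, which already implies the claim since $p^* \le \sqrt{p^*}$. Writing the collision probability over the common support,
\[
\Pr[X_1 = X_2] = \sum_{j} \Pr[X_1 = j]\,\Pr[X_2 = j] \le \Big(\max_j \Pr[X_2 = j]\Big)\sum_j \Pr[X_1 = j] = \max_j \Pr[X_2 = j],
\]
and symmetrically $\Pr[X_1 = X_2] \le \max_j \Pr[X_1 = j]$. Hence it suffices to control the largest point mass (the mode probability) of each binomial and then take the smaller of the two bounds.

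The crux is therefore the following one-variable fact: for $Y_n \sim \tn{Binomial}(n,p)$ with $n \ge 1$, the mode probability $M_n := \max_j \Pr[Y_n = j]$ satisfies $M_n \le \max\{p, 1-p\}$. I would prove this by a coupling/monotonicity argument in $n$. Realizing $\tn{Binomial}(n+1,p)$ as $Y_n + B$ with $B \sim \tn{Bernoulli}(p)$ independent of $Y_n$ gives, for every $j$,
\[
\Pr[Y_{n+1} = j] = (1-p)\,\Pr[Y_n = j] + p\,\Pr[Y_n = j-1],
\]
a convex combination of two point masses of $Y_n$, each at most $M_n$; taking the maximum over $j$ yields $M_{n+1} \le M_n$. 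Since the base case $n=1$ gives $M_1 = \max\{p,1-p\}$, we conclude $M_n \le \max\{p,1-p\}$ for all $n \ge 1$.

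Combining the two steps, $\Pr[X_1 = X_2] \le \min\{\max\{p_1, 1-p_1\},\, \max\{p_2, 1-p_2\}\} = p^*$, so $\Pr[X_1 \ne X_2] \ge 1 - p^* \ge 1 - \sqrt{p^*}$, using $p^* \in [0,1]$.

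The main (and essentially only) obstacle is the mode bound $M_n \le \max\{p,1-p\}$; everything else is bookkeeping. The monotonicity argument above is the cleanest route I see, and it in fact produces the stronger bound $1 - p^*$. If one prefers to land exactly at the stated $1-\sqrt{p^*}$, the same mode bound feeds a Cauchy--Schwarz estimate $\Pr[X_1 = X_2] \le \sqrt{\sum_j \Pr[X_1=j]^2}\,\sqrt{\sum_j \Pr[X_2=j]^2}$ together with $\sum_j \Pr[X_i = j]^2 \le M_{n_i} \le \max\{p_i,1-p_i\}$ and the fact that one of the two factors is at most $1$; this is presumably the slack that the square root in the statement accommodates.
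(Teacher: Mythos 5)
Your proof is correct, and it actually establishes a strictly stronger conclusion than the lemma claims. The key intermediate fact is the same in both arguments — the mode probability of $\tn{Binomial}(n,p)$ is at most $\max\{p,1-p\}$ — but you reach it differently: the paper derives it from the combinatorial inequality $\binom{n}{r} \leq \binom{n}{r+1} + \binom{n}{r-1}$ combined with $\nu(p,n,r-1)+\nu(p,n,r)+\nu(p,n,r+1)\leq 1$, whereas you use the convolution recursion $\Pr[Y_{n+1}=j] = (1-p)\Pr[Y_n=j] + p\Pr[Y_n=j-1]$ to show the mode probability is non-increasing in $n$, then read off the base case $n=1$. Your induction is cleaner and avoids the edge-case bookkeeping for $r\in\{0,n\}$. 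The second divergence is in how the mode bound is fed into the collision probability: the paper uses Cauchy--Schwarz, $\sum_r \nu(p_1,u,r)\nu(p_2,v,r) \leq \bigl(\sum_r \nu(p_1,u,r)^2\bigr)^{1/2}\bigl(\sum_r \nu(p_2,v,r)^2\bigr)^{1/2}$, which costs a square root and yields $\Pr[X_1=X_2]\leq\sqrt{p^*}$; you use the $\ell_1$--$\ell_\infty$ bound $\sum_j \Pr[X_1=j]\Pr[X_2=j] \leq \max_j\Pr[X_2=j]$ (and its symmetric counterpart), which gives $\Pr[X_1=X_2]\leq p^*$ directly. Since $p^*\leq\sqrt{p^*}$ on $[0,1]$, your bound $1-p^*$ implies the stated $1-\sqrt{p^*}$, so the square root in the lemma is indeed just slack from the paper's choice of inequality, as you surmise. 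No gaps.
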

\begin{proof}
    We first begin by bounding a binomial coefficient by the sum of its adjacent binomial coefficients. Let $n \geq 1$ and $n >  r > 0$. We begin with the standard identity and proceed further:
    \begin{eqnarray}
        {n \choose r} & = & {n-1 \choose r} + {n-1 \choose r-1} \nonumber \\
        & \leq & \frac{n}{r+1}{n-1 \choose r} + \frac{n}{n-r + 1}{n-1 \choose r-1} \\
        & = & {n \choose r+1} + {n \choose r-1} \label{eq:binomialcoeffbd}
    \end{eqnarray}
    Moreover, it is trivially true that ${n \choose 0} \leq {n \choose 1}$ and ${n \choose n} \leq {n \choose n-1}$, therefore \eqref{eq:binomialcoeffbd} holds even for $r \in \{0,n\}$ whenever the binomial coefficients exist.
    Now, for some probability $p$ define $\nu(p,n,r) := {n \choose r}p^r(1-p)^{1-r}$ which is pdf at $r$ of $\tn{Binomial}(n, p)$.
    The above implies the following:
    \begin{equation}
        \nu(p,n,r) \leq p'\left(\nu(p,n,r-1) + \nu(p,n,r+1)\right)
    \end{equation}
    where $p' = \max\{p/(1-p), (1-p)/p\}$. Using the fact that $\nu(p,n,r-1) + \nu(p,n,r)  + \nu(p,n,r+1) \leq 1$, we obtain that
    \begin{eqnarray}
        & & (1/p') \nu(p,n,r) +  \nu(p,n,r) \leq 1 \nonumber \\
        &\Rightarrow& \nu(p,n,r) \leq (p'/1+p') = \max\{p, 1-p\}
    \end{eqnarray}

    The above allows us to complete the proof of the lemma as follows. 
    \begin{eqnarray}
    \displaystyle & & \Pr\left[X_1 = X_2\right] \nonumber \\
        & = & \sum_{r=0}^{\min\{u,v\}} \nu(p_1, u, r)\nu(p_2, v, r) \nonumber \\
        & \leq & \left(\sum_{r=0}^{u}\nu(p_1, u, r)^2\right)^{\frac{1}{2}}\left(\sum_{r=0}^{v}\nu(p_2, v, r)^2\right)^{\frac{1}{2}} \nonumber \\
        &\leq & \left(\max_r\{\nu(p_1, u, r)\}\sum_{r=0}^{u}\nu(p_1, u, r)\right)^{\frac{1}{2}} \nonumber \\
        & & \cdot\left(\max_r\{\nu(p_2, v, r)\}\sum_{r=0}^{u}\nu(p_2, v, r)\right)^{\frac{1}{2}} \nonumber \\
        & \leq & (\max\{p_1, 1-p_1\})^{\frac{1}{2}}(\max\{p_2, 1-p_2\})^{\frac{1}{2}} \nonumber \\
        & \leq & \sqrt{p^*},
    \end{eqnarray}
    where we use Cauchy-Schwarz for the first inequality.
\end{proof}
\begin{proof} (of Theorem \ref{thm:bagoracletoinstance})
 We now consider three cases, for each one we shall prove points (i) and (ii) of the theorem.

{\it Case $\Pr_{\mc{D}}[f(\bx) \not= h(\bx)] \geq 1 - \eps$}. This condition means that $\mu(S_{10}) + \mu(S_{01}) \geq 1 - \eps$, which implies that at least one of $\mu_1(S_{10}), \mu_0(S_{01})$ is  $\geq 1 - \eps$. Assume that  $\mu_0(S_{01}) \geq 1- \eps$ (the other case is analogous). 

Consider unbalanced bags i.e., $k \neq q/2$. Now in case that $\mu_1(S_{10}) \geq 1 -\eps$, all the points sampled from $S_1$ are sampled from $S_{10}$ w.p. $(1 - k\eps)$ and those sampled from $S_0$ are all sampled from $S_{01}$ w.p. $(1 - (q-k)\eps)$. Therefore, with probability at least $(1 - q\eps)$ all the points are sampled from $S_{10}\cup S_{01}$. Since $k \neq q/2$ this implies that $h$ does not satisfy the random bags with probability at least $(1 - q\eps) > \eps'$. %
If $\mu_1(S_{10}) \leq \eps$, we can show with similar arguments that with probability $\geq (1 - q\eps) > \eps'$ no points are sampled from $S_{10}$ and $q-k$ points are sampled from $S_{01}$ which means (since $k \geq 1$) that $h$ does not satisfy the bag.
Finally, let $\mu_1(S_{10}) \in (\eps, 1 -\eps)$. In this case, the number of points sampled from $S_{10}$ is distributed as $\tn{Binomial}(k,\mu_1(S_{10}))$ and those sampled from $S_{01}$ is independently distributed as $\tn{Binomial}(k,\mu_0(S_{01}))$. If these two numbers are different then $h$ does not satisfy the bag. We can apply Lemma \ref{lem:twobinomials} and using the bounds on  $\mu_1(S_{10})$, we get that $p^* \leq 1 - \eps$. Therefore, the probability of $h$ not satisfying a randomly sampled bag is at least
\begin{equation}
   1 - \sqrt{1 - \eps} \geq \frac{(1 - (1 - \eps))}{1 + \sqrt{1 - \eps}} \geq \eps/2 > \eps'. \label{eqn:4epseps'}
\end{equation}
For balanced bags, the case $\Pr_{\mc{D}}[f(\bx) \not= h(\bx)] \geq 1 - \eps$ implies that $\Pr[f(\bx) \not= (1 - h(\bx))] \leq \eps$, so the condition (ii) of the theorem holds.

{\it Case $\eps < \Pr_{\mc{D}}[f(\bx) \not= h(\bx)] < 1 - \eps$}. This case violates the conditions (i) and (ii).
First observe that, $\mu_1(S_{10})$ and $\mu_0(S_{01})$ both cannot be $\geq (1 - \eps)$ or $\leq \eps$, otherwise $\mu(S_{10}) + \mu(S_{01})$ is either $\geq 1 - \eps$ or $\leq \eps$ violating the assumption of this case. For the subcase that  $\mu_1(S_{10}) \leq \eps$ and $\mu_0(S_{01}) \geq (1 - \eps)$, we can show using arguments similar to the previous case that w.p. $\geq (1 - q\eps)$ no points are sampled from $S_{10}$ and $q-k$ points are sampled from $S_{01}$, and thus, $h$ will not satisfy a random bag with probability at least $(1 - q\eps) > \eps'$. The subcase when $\mu_1(S_{01}) \leq \eps$ and $\mu_0(S_{10}) \geq (1 - \eps)$ is analogous.
Finally, we have the subcase that $\mu_1(S_{10})$ or $\mu_0(S_{01})$ both lie in the range $(\eps, 1 -\eps)$. Now, the number of points sampled from $S_{10}$ is distributed as $\tn{Binomial}(k,\mu_1(S_{10}))$ and those sampled from $S_{01}$ is independently distributed as $\tn{Binomial}(k,\mu_0(S_{01}))$. If these two numbers are different then $h$ does not satisfy the bag. We can apply Lemma \ref{lem:twobinomials} and using the bounds on one of $\mu_1(S_{10})$ or $\mu_0(S_{01})$, we get that $p^* \leq 1 - \eps$. Therefore, the probability of $h$ not satisfying a randomly sampled bag is at least $1 - \sqrt{1 - \eps} \geq \eps/2 > \eps'$, using \eqref{eqn:4epseps'}.

\end{proof}

\subsection{Proof of Theorem \ref{thm:genbound_main}} \label{sec:proof_of_theorem_genboundmain}
The proof follows directly from Theorems \ref{thm:genbound} and \ref{thm:bagoracletoinstance}.

\subsection{Proof of Lemma \ref{lem:bounding_classification_error}}
\label{appendix:bounding_classification_error_proof}
\begin{proof}
We have,
\begin{align}
\Pr[\pos(\br^{\sf T}\bX + c) \not= \pos(\hat{\br}^{\sf T}\bX + c)] &=
\Pr[\pos(\br^{\sf T}\tilde{\bX} + \|\bm{\Gamma}\br\|_2\zeta) \not= \pos(\hat{\br}^{\sf T}\tilde{\bX} + \|\bm{\Gamma}\hat{\br}\|_2\hat{\zeta})] \nonumber \\
&= \Pr[\pos(\mb{a}^{\sf T}\bZ + \zeta) \not= \pos(\hat{\mb{a}}^{\sf T}\bZ + \hat{\zeta})] \label{eq:bounding_classification_error_1}
\end{align}
where $\bm{\Gamma} = \bm{\Sigma}^{1/2}$, $\mb{a} = \bm{\Gamma}\br/\|\bm{\Gamma}\br\|_2$ and $\hat{\mb{a}} = \bm{\Gamma}\hat{\br}/\|\bm{\Gamma}\hat{\br}\|_2$ and $\bZ = \bm{\Gamma}^{-1}\tilde{\bX} \sim N(\mb{0}, \mb{I})$ and $\tilde{\bX} + \bm{\mu} = \bX \sim N(\bm{\mu}, \bm{\Sigma})$,  $\zeta = (c + \br^{\sf T}\bm{\mu})/\|\bm{\Gamma}\br\|_2$ and  $\hat{\zeta} = (c + \hat{\br}^{\sf T}\bm{\mu})/\|\bm{\Gamma}\hat{\br}\|_2$. By Lemma \ref{lem:angle_bound_under_pd_transformation}, $\|\mb{a} - \hat{\mb{a}}\|_2 \leq 4\sqrt{\frac{\lambda_{\tn{max}}}{\lambda_{\tn{min}}}}\eps$.

Now, the RHS of \eqref{eq:bounding_classification_error_1} can be bounded as,
\begin{equation}
    \Pr[\pos(\mb{a}^{\sf T}\bZ + \zeta) \not= \pos(\mb{a}^{\sf T}\bZ + \hat{\zeta})] + \Pr[\pos(\mb{a}^{\sf T}\bZ + \hat{\zeta}) \not= \pos(\hat{\mb{a}}^{\sf T}\bZ + \hat{\zeta})]
\end{equation}
Now, $g := \mb{a}^{\sf T}\bZ \sim N(0, 1)$. Thus, the first term in the above is bounded by the probability that $g$ lies in a range of length $|\zeta - \hat{\zeta}| = \|\bm{\mu}\|_2\|\br - \hat{\br}\|_2/\|\bm{\Gamma}\br\|_2 \leq \eps\|\bm{\mu}\|_2/\sqrt{\lambda_{\tn{min}}}$. This probability is at most $\eps\|\bm{\mu}\|_2/\sqrt{2\pi\lambda_{\tn{min}}}$.

For the second term, that is at most the probability that $\pos(\mb{a}^{\sf T}\bZ) \neq \pos(\hat{\mb{a}}^{\sf T}\bZ)$. Now $\|\hat{\mb{a}} - \mb{a}\|_2 \leq 4\sqrt{\frac{\lambda_{\tn{max}}}{\lambda_{\tn{min}}}}\eps \Rightarrow \angle \hat{\mb{a}}, \mb{a} \leq \pi4\sqrt{\frac{\lambda_{\tn{max}}}{\lambda_{\tn{min}}}}\eps \Rightarrow \Pr[\pos(\mb{a}^{\sf T}\bZ) \neq \pos(\hat{\mb{a}}^{\sf T}\bZ)] \leq 2\sqrt{\frac{\lambda_{\tn{max}}}{\lambda_{\tn{min}}}}\eps$. Hence, $$\Pr[\pos(\br^{\sf T}\bX + c) \not= \pos(\hat{\br}^{\sf T}\bX + c)] \leq \eps\left(2\sqrt{\frac{\lambda_{\tn{max}}}{\lambda_{\tn{min}}}} + \frac{\|\bm{\mu}\|_2}{\sqrt{2\pi\lambda_{\tn{min}}}}\right)$$
\end{proof}

\section{Subgaussian concentration with thresholded Gaussian random variables} \label{sec:subgaussian}
Let $\Phi(.)$ be the standard Gaussian cdf i.e., $\Phi(t) := \Pr_{X \sim N(0,1)} \left[ X \geq t\right]$. We also define $\ol{\Phi}(t) := \Pr\left[X > t\right] = 1 -\Phi(t)$.
We begin by defining the subgaussian norm of a random variable.
\begin{definition}
    The subgaussian norm of a random variable $X$ denoted by $\|X\|_{\psi_2}$ and is defined as: $\|X\|_{\psi_2} := \tn{inf}\{t > 0: \E\left[\tn{exp}\left(X^2/t^2\right)\right] \leq 2\}$. Further, there is an absolute constant $K_0$ such that $\|X\|_{\psi_2} \leq K_0K$ if $X$ satisfies,
    \begin{equation}
        \Pr\left[\left|X\right| \geq t \right] \leq 2\tn{exp}\left(-t^2/K^2\right), \qquad \tn{ for all } t\geq 0.
    \end{equation}
\end{definition}
Let $X\sim N(0,1)$. It is easy to see that $\E\left[\tn{exp}(X^2/2^2)\right] = (1/\sqrt{2\pi})\int_{-\infty}^{\infty}\tn{exp}(-x^2/4)dx = \sqrt{2}(1/\sqrt{2\pi})\int_{-\infty}^{\infty}\tn{exp}(-z^2/2)dz = \sqrt{2}$. Thus, $X \sim N(0,1)$ is subgaussian with subgaussian norm $\leq 2$. 
In the analysis of this section, we shall use the following proposition from \cite{Vershynin-book}.
\begin{proposition}[Prop. 2.1.2 of \cite{Vershynin-book}]\label{prop:vershynin-probbds}
    Let $X \sim N(0,1)$. Then, for any $t > 0$,
    $$\left(\frac{1}{t} - \frac{1}{t^3}\right)\frac{1}{\sqrt{2\pi}}\tn{exp}\left(-t^2/2\right) \leq \ol{\Phi}(t) \leq \frac{1}{t\sqrt{2\pi}}\tn{exp}\left(-t^2/2\right).$$
    In particular for $t \geq 1$,
    $$\ol{\Phi}(t) \leq \frac{1}{\sqrt{2\pi}}\tn{exp}\left(-t^2/2\right).$$
\end{proposition}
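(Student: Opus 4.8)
The plan is to prove both inequalities directly from the integral representation $\ol{\Phi}(t) = \frac{1}{\sqrt{2\pi}}\int_t^\infty \tn{exp}(-x^2/2)\,dx$ by exhibiting convenient antiderivatives, rather than invoking any probabilistic machinery. In each case the strategy is the same: find a function whose derivative straddles the integrand $\tn{exp}(-x^2/2)$ on one side, then apply the fundamental theorem of calculus on $[t,\infty)$.

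For the upper bound, I would use $h(x) := -\tfrac{1}{x}\tn{exp}(-x^2/2)$. A direct differentiation gives $h'(x) = \left(1 + \tfrac{1}{x^2}\right)\tn{exp}(-x^2/2) \geq \tn{exp}(-x^2/2)$ for every $x > 0$. Since $h(x) \to 0$ as $x \to \infty$, integrating the inequality from $t$ to $\infty$ yields $\int_t^\infty \tn{exp}(-x^2/2)\,dx \leq h(\infty) - h(t) = \tfrac{1}{t}\tn{exp}(-t^2/2)$, and dividing by $\sqrt{2\pi}$ gives exactly the claimed upper bound.

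For the lower bound I would instead use $g(x) := \left(\tfrac{1}{x^3} - \tfrac{1}{x}\right)\tn{exp}(-x^2/2)$, for which the analogous computation gives $g'(x) = \left(1 - \tfrac{3}{x^4}\right)\tn{exp}(-x^2/2) \leq \tn{exp}(-x^2/2)$. Integrating from $t$ to $\infty$ and using $g(\infty) = 0$ gives $\int_t^\infty \tn{exp}(-x^2/2)\,dx \geq -g(t) = \left(\tfrac{1}{t} - \tfrac{1}{t^3}\right)\tn{exp}(-t^2/2)$, which after dividing by $\sqrt{2\pi}$ establishes the lower bound. The ``in particular'' clause is then immediate: for $t \geq 1$ we have $1/t \leq 1$, so the upper bound $\tfrac{1}{t\sqrt{2\pi}}\tn{exp}(-t^2/2)$ is itself at most $\tfrac{1}{\sqrt{2\pi}}\tn{exp}(-t^2/2)$.

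The two derivative evaluations are entirely routine, so I expect no real obstacle. The only genuinely creative step is guessing the right antiderivative $g$ for the lower bound: the particular combination $\tfrac{1}{x^3} - \tfrac{1}{x}$ is engineered precisely so that the leftover term in $g'$ is $-\tfrac{3}{x^4}\tn{exp}(-x^2/2)$, which has the sign needed to make $g' \leq \tn{exp}(-x^2/2)$. Once that combination is in hand, the whole argument collapses to the fundamental theorem of calculus together with the vanishing of the antiderivatives at infinity.
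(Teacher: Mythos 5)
Your proposal is correct: both derivative computations check out ($h'(x) = (1+\tfrac{1}{x^2})e^{-x^2/2}$ and $g'(x) = (1-\tfrac{3}{x^4})e^{-x^2/2}$), both antiderivatives vanish at infinity, and integrating the pointwise inequalities over $[t,\infty)$ gives exactly the two stated bounds; the ``in particular'' clause follows since $1/t \le 1$ for $t \ge 1$. Note that the paper supplies no proof of its own here --- it simply cites Proposition 2.1.2 of Vershynin's book --- and your antiderivative argument is the standard derivation of that cited result (equivalent to the usual ``multiply the integrand by $x/t \ge 1$'' trick for the upper bound and repeated integration by parts for the lower bound), so there is nothing to reconcile against the paper's text.
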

Using the above, and by symmetry it is easy to see that for $t \geq 1$, $\Pr\left[|X| > t\right] \leq \left(\sqrt{2/\pi}\right)\tn{exp}\left(-t^2/2\right) \leq 2\cdot\tn{exp}\left(-t^2/2\right)$. On the other hand, $2\cdot\tn{exp}\left(-t^2/2\right) \geq 1$ for $0 \leq t < 1$. Thus,
\begin{equation}
    \Pr\left[|X| > t\right] \leq 2\cdot\tn{exp}\left(-t^2/2\right), \quad \forall\ t \geq 0. \label{eq:n01subgauss}
\end{equation}

Consider the normal distribution conditioned on a threshold defined by letting $\mc{D}_\ell$ be the distribution of $\{ X \sim N(0,1)\,\mid\, X > \ell\}$. We shall show that $\tilde{X}\sim \mc{D}_\ell$ is a subgaussian random variable. Let us handle the (relatively) easy case of $\ell \leq 0$ first.
\begin{lemma}\label{lem:ell_leq0}
    Let $\tilde{X}\sim \mc{D}_\ell$ for some $\ell \leq 0$. Then, $\Pr\left[|\tilde{X}| > t\right] \leq 2\cdot\tn{exp}\left(-t^2/2\right)$, for any $t > 0$.
\end{lemma}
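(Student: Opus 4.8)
The plan is to write the density of $\tilde{X}$ explicitly and dominate its tail by that of an unconditioned standard Gaussian, paying only a constant factor for the conditioning. First I would observe that $\tilde{X} \sim \mc{D}_\ell$ has density $\phi(x)/c$ supported on $(\ell, \infty)$, where $\phi$ is the standard normal density and $c := \Pr_{X \sim N(0,1)}[X > \ell] = \ol{\Phi}(\ell)$ is the normalizing constant. The key elementary fact is that since $\ell \leq 0$, we have $\{X > 0\} \subseteq \{X > \ell\}$ and hence $c \geq \Pr[X > 0] = 1/2$. This lower bound on $c$ is what makes the conditioning cheap.

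Next I would bound the tail by viewing it as a conditional probability. Writing $X \sim N(0,1)$, we have $\Pr[|\tilde{X}| > t] = \Pr[\,|X| > t \mid X > \ell\,]$, and since $\{|X| > t, X > \ell\} \subseteq \{|X| > t\}$,
\[
\Pr[|\tilde{X}| > t] \;=\; \frac{\Pr[|X| > t,\, X > \ell]}{c} \;\leq\; \frac{\Pr[|X| > t]}{c} \;\leq\; 2\,\Pr[|X| > t],
\]
using $c \geq 1/2$. Thus it suffices to control $\Pr[|X| > t] = 2\ol{\Phi}(t)$ for a genuine standard Gaussian.

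I would then split on the magnitude of $t$. For $0 < t \leq 1$ the claimed bound is trivial, since $2\exp(-t^2/2) \geq 2\exp(-1/2) > 1 \geq \Pr[|\tilde{X}| > t]$, so nothing needs to be proved. For $t > 1$, I would invoke Proposition \ref{prop:vershynin-probbds}, which gives $\ol{\Phi}(t) \leq \tfrac{1}{\sqrt{2\pi}}\exp(-t^2/2)$, so that $\Pr[|X| > t] = 2\ol{\Phi}(t) \leq \tfrac{2}{\sqrt{2\pi}}\exp(-t^2/2)$ (equivalently one may cite \eqref{eq:n01subgauss}, but the sharper constant from the proposition is what I need). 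Combining with the displayed inequality yields
\[
\Pr[|\tilde{X}| > t] \;\leq\; 2\,\Pr[|X| > t] \;\leq\; \frac{4}{\sqrt{2\pi}}\exp\!\left(-t^2/2\right) \;<\; 2\exp\!\left(-t^2/2\right),
\]
since $4/\sqrt{2\pi} \approx 1.60 < 2$, which is exactly the desired bound on the range $t > 1$.

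The only real subtlety, and the step I would flag as the place to be careful, is the bookkeeping of constants: the conditioning costs a factor of $2$ (from $c \geq 1/2$) and the two-sided tail costs another factor of $2$ (from $\Pr[|X|>t] = 2\ol{\Phi}(t)$), which naively threatens a factor of $4$. What saves the target constant of $2$ is that the Gaussian tail estimate of Proposition \ref{prop:vershynin-probbds} already carries the favorable prefactor $1/\sqrt{2\pi}$, so that $4/\sqrt{2\pi} < 2$; and for the small-$t$ regime where the tail estimate is not available, the bound $2\exp(-t^2/2) \geq 1$ makes the statement vacuous. No heavier machinery (and no appeal to the subgaussian norm definition) is needed for this lemma.
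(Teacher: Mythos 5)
Your proof is correct, but it follows a genuinely different route from the paper's. The paper conditions on the sign of $\tilde{X}$: on the event $\{\tilde{X}\ge 0\}$ the conditional law is exactly the half-normal $|X|$ (using $\ell\le 0$), and on $\{\tilde{X}<0\}$ it is a normal truncated to $(\ell,0)$, whose tail is compared directly to $\Pr[|Z|>t]$; each conditional piece separately satisfies $2\exp(-t^2/2)$, so the mixture does too, and only the crude bound \eqref{eq:n01subgauss} is ever needed. You instead pay the conditioning cost up front via $\ol{\Phi}(\ell)\ge 1/2$ for $\ell\le 0$, reducing everything to the unconditioned two-sided Gaussian tail, and then recover the target constant $2$ by splitting on $t$: triviality for $t\le 1$, and the sharper prefactor $1/\sqrt{2\pi}$ from Proposition \ref{prop:vershynin-probbds} for $t>1$, so that $4/\sqrt{2\pi}<2$. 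Your argument is shorter and arguably cleaner, but it leans on the sharp constant in the Gaussian tail estimate and on the lower bound $\ol{\Phi}(\ell)\ge 1/2$, which is exactly what fails for $\ell>0$ (hence the separate, more involved Lemma \ref{lem:ell_geq0} with a worse constant $K_1$); the paper's sign-splitting has the modest advantage of never needing to track prefactors, since a convex combination of distributions each obeying the tail bound obeys it as well. Both proofs are complete and correct for the stated range $\ell\le 0$.
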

\begin{proof}
    Let $E$ be the event that $\tilde{X} \geq 0$. Conditioned on $E$, $\tilde{X}$ is distributed as $|X|$ where $X\sim N(0,1)$ and \eqref{eq:n01subgauss} implies that
    $$\Pr\left[|\tilde{X}| > t\,\mid\, E\right] \leq 2\cdot\tn{exp}\left(-t^2/2\right), \quad \forall\ t >0.$$
    On the other hand, conditioned on $\ol{E}$, $\tilde{X}$ is sampled as $-|Z|$ where $\{Z \sim N(0,1)\,\mid\, |Z| \leq -\ell = |\ell|\}$. Thus,
    \begin{align}
        \Pr\left[|\tilde{X}| > t \,\mid\, \ol{E}\right] = \Pr_{Z\sim N(0,1)}\left[|Z| > t\,\mid\, |Z| \leq |\ell| \right]. \label{eq:XZtrunc}
    \end{align}
    Now, if $t > |\ell|$ then $\Pr\left[|\tilde{X}| > t \,\mid\, \ol{E}\right] = 0$, otherwise if $t \leq \ell$ the LHS of \eqref{eq:XZtrunc} is,
    \begin{align}
        \Pr_{Z\sim N(0,1)}\left[|Z| > t\,\mid\, |Z| \leq |\ell| \right] = 1 - \Pr\left[|Z| \leq t\,\mid\, |Z| \leq |\ell| \right] &= 1 - \frac{\Pr\left[|Z| \leq t\right]}{\Pr\left[|Z| \leq \ell\right]} \nonumber \\
        &\leq 1 - \Pr\left[|Z| \leq t\right] = \Pr\left[|Z| > t\right] \nonumber
    \end{align}
    which is bounded by $2\cdot\tn{exp}\left(-t^2/2\right)$
    and combining the probability bounds conditioned on $E$ and $\ol{E}$ we complete the proof.
\end{proof}
The case of $\ell > 0$ is proved below.
\begin{lemma}\label{lem:ell_geq0}
Let $\tilde{X}\sim \mc{D}_\ell$ for some $\ell > 0$. Then, $\Pr\left[|\tilde{X}| > t\right] \leq 2\cdot\tn{exp}\left(-t^2/K_1^2\right)$, for any $t > 0$, where $K_1 =  \max\{\sqrt{20}, |\ell|\sqrt{10}\}$.
\end{lemma}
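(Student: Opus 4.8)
The plan is to exploit the fact that conditioning a standard Gaussian on $X > \ell$ with $\ell > 0$ forces $\tilde{X} > \ell > 0$ almost surely, so that $|\tilde{X}| = \tilde{X}$ and it suffices to control the one-sided tail $\Pr[\tilde{X} > t]$ for every $t > 0$. I would split the analysis according to whether $t \le \ell$ or $t > \ell$. In the first range the bound is essentially free, and in the second it reduces to a single monotonicity estimate for the Gaussian tail.

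For $t \le \ell$ the tail $\Pr[\tilde{X} > t]$ equals $1$, since $\tilde{X} > \ell \ge t$ almost surely, so all I must verify is $2\exp(-t^2/K_1^2) \ge 1$, i.e. $t^2 \le K_1^2 \ln 2$. This is immediate from $K_1^2 \ge 10\ell^2 \ge 10 t^2$ together with $10\ln 2 > 1$, so no tail estimate is needed here.

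For $t > \ell$ I would write $\Pr[\tilde{X} > t] = \ol{\Phi}(t)/\ol{\Phi}(\ell)$. The key step is the monotonicity claim that $g(t) := \ol{\Phi}(t)\,e^{t^2/2}$ is non-increasing on $(0,\infty)$: differentiating gives $g'(t) = e^{t^2/2}\bigl(t\,\ol{\Phi}(t) - \phi(t)\bigr) \le 0$, where $\phi(t) = \tfrac{1}{\sqrt{2\pi}}e^{-t^2/2}$, and the inequality $t\,\ol{\Phi}(t) \le \phi(t)$ is exactly the Mills-ratio upper bound of Proposition \ref{prop:vershynin-probbds}. Hence $g(t) \le g(\ell)$ for $t > \ell$, which rearranges to $\ol{\Phi}(t)/\ol{\Phi}(\ell) \le e^{(\ell^2 - t^2)/2}$. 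It then remains to put this into the target subgaussian form by writing $e^{(\ell^2 - t^2)/2} = e^{h(t)}\,e^{-t^2/K_1^2}$ with $h(t) := \tfrac{\ell^2}{2} - \tfrac{t^2}{2} + \tfrac{t^2}{K_1^2}$, and checking $h(t) \le \ln 2$ for all $t > \ell$. Since $K_1^2 \ge 20 > 2$, the coefficient $\tfrac{1}{K_1^2} - \tfrac12$ is negative, so $h$ is decreasing on $(0,\infty)$ and $\sup_{t>\ell} h(t) = h(\ell) = \ell^2/K_1^2 \le 1/10 < \ln 2$, using $K_1^2 \ge 10\ell^2$. This yields $\Pr[\tilde{X} > t] \le 2\,e^{-t^2/K_1^2}$, finishing the second case and hence the lemma.

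The main obstacle is precisely the factor $e^{\ell^2/2}$ arising from the lower-tail denominator $\ol{\Phi}(\ell)$: the clean ratio bound $e^{(\ell^2 - t^2)/2}$ is \emph{not} of subgaussian shape on its own, and the two ingredients of $K_1 = \max\{\sqrt{20}, |\ell|\sqrt{10}\}$ play complementary roles in absorbing it. The factor $|\ell|\sqrt{10}$ keeps $\ell^2/K_1^2 \le 1/10$, which both controls the exponent $h(\ell)$ at the threshold and validates the trivial $t \le \ell$ case; the floor $\sqrt{20}$ guarantees $K_1^2 > 2$ for small $\ell$, which is exactly what makes $h$ decreasing even when $10\ell^2$ is tiny. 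I would keep a little numerical slack ($1/10$ versus $\ln 2 \approx 0.693$) so the argument is robust and the constants plainly suffice.
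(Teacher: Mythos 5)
Your proof is correct, and it takes a genuinely different route from the paper's. The paper splits on the size of the threshold: for $\ell \leq 2$ it lower-bounds $\ol{\Phi}(\ell)$ by the numerical constant $1/50$ so that the conditional density is dominated by $50 f_{N(0,1)}$, and for $\ell > 2$ it uses the lower Mills-ratio estimate to get $\ol{\Phi}(\ell) \geq \tn{exp}(-2\ell^2)$; in each case it obtains a bound of the form $C\,\tn{exp}(-t^2/2)$ and then converts it to the target shape by comparing exponents on the ranges $t \geq 3$ (resp.\ $t^2 \geq 5\ell^2$) and using triviality of the bound elsewhere. You instead split on $t$ versus $\ell$, observe that the $t \leq \ell$ regime is trivial, and for $t > \ell$ work directly with the exact conditional tail $\ol{\Phi}(t)/\ol{\Phi}(\ell)$, controlling it via the monotonicity of $\ol{\Phi}(t)e^{t^2/2}$ — which needs only the \emph{upper} Mills-ratio bound from Proposition \ref{prop:vershynin-probbds}. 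This yields the clean ratio estimate $e^{(\ell^2 - t^2)/2}$ uniformly in $\ell$, with no case analysis on $\ell$ and no ad hoc numerical lower bounds on $\ol{\Phi}(\ell)$; the residual factor $e^{\ell^2/K_1^2} \leq e^{1/10} < 2$ is absorbed with substantial slack, so your argument would in fact support somewhat sharper constants than the stated $K_1$. The paper's approach is more pedestrian but self-contained in its reliance on both sides of Proposition \ref{prop:vershynin-probbds}; yours is shorter, tighter, and more transparent about where each piece of $K_1 = \max\{\sqrt{20}, |\ell|\sqrt{10}\}$ is actually used.
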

\begin{proof}
    Let us first explicitly define the pdf of $\tilde{X}$ as:
    \begin{equation}
        f_{\mc{D}_\ell}(x) = \begin{cases}
                            0 & \tn{ if } x \leq \ell \\
                            f_{N(0,1)}(x)/\ol{\Phi}(\ell) & \tn{ otherwise, }
                             \end{cases} \label{eqn:D_ell_pdf}
    \end{equation} 
    where $f_{N(0,1)}$ is the pdf of $N(0,1)$.
    We prove this in two cases.
    
    \medskip
    \noindent
    \emph{Case $\ell \leq 2$.} In this case, one can use the lower bound from Prop. \ref{prop:vershynin-probbds} to show that $\ol{\Phi}(\ell) \geq \ol{\Phi}(2) > 1/50$ by explicit calculation. Thus,  $f_{\mc{D}_\ell}(x) \leq 50f_{N(0,1)}(x)$ for $x > 0$. We can now obtain the desired bound as follows. Letting $X \sim N(0,1)$, for any $t > 0$,
    \begin{eqnarray}
    \Pr\left[\left|\tilde{X}\right| > t \right] & = & \Pr\left[\tilde{X} > t \right]  \nonumber \\
    & \leq & 50\Pr\left[X > t \right] \leq  50\Pr\left[|X| > t \right] \leq 100\tn{exp}\left(-t^2/2\right)
    \end{eqnarray}
    using \eqref{eq:n01subgauss}. Now, it is easy to check that for $t \geq 3$,
    \begin{align}
           & \frac{\tn{exp}\left(-\frac{t^2}{20}\right)}{\tn{exp}\left(-\frac{t^2}{2}\right)} \geq \tn{exp}\left(9 \cdot \left(\frac{1}{2} - \frac{1}{20}\right)\right) \geq \tn{exp}(4) > 50 \nonumber \\
           \Rightarrow\ & 2\cdot\tn{exp}\left(-\frac{t^2}{20}\right) > 100\cdot \tn{exp}\left(-\frac{t^2}{2}\right)
    \end{align}
    On the other hand, for $0 \leq t < 3$, $2\tn{exp}\left(-t^2/20\right) > 1$. Thus,
    $$\Pr\left[\left|\tilde{X}\right| > t \right] \leq 2\cdot \tn{exp}\left(-t^2/20\right)$$
    
    \medskip
    \noindent
    \emph{Case $\ell > 2$.} In this case using the easily verifiable facts that hold for $\ell > 2$:
    \begin{itemize}
        \item $(1/\ell - 1/\ell^3) > 1/(2\ell)$  and
        \item $\tn{exp}\left(-3\ell^2/2\right) \leq \frac{1}{2\ell\sqrt{2\pi}}$,
    \end{itemize}
    Prop. \ref{prop:vershynin-probbds} yields
    \begin{equation}
    \ol{\Phi}(\ell) \geq \left(\frac{1}{\ell} - \frac{1}{\ell^3}\right)\frac{1}{\sqrt{2\pi}}\tn{exp}\left(-\ell^2/2\right)\geq \frac{1}{2\ell\sqrt{2\pi}}\tn{exp}\left(-\ell^2/2\right) \geq \tn{exp}\left(-2\ell^2\right). \label{eqn:ellgreaterthan2}
    \end{equation}
    Observe that $\Pr\left[\left|\tilde{X}\right| > t \right] = 
        \Pr_{Z\sim N(0,1)}\left[ Z > t\,\mid\, Z \geq \ell\right]$. If $t < \ell$, then this probability vanishes. Otherwise $t \geq \ell > 2$, and from Prop. \ref{prop:vershynin-probbds}, $\Pr[Z > t] \leq \tn{exp}(-t^2/2)$ and therefore $\Pr\left[\left|\tilde{X}\right| > t \right]$ can be bounded by
        \begin{equation}
        \leq \frac{\Pr[Z > t]}{\Pr[Z \geq \ell]} \leq 2\ol{\Phi}(\ell)^{-1}\tn{exp}\left(-t^2/2\right) \leq 2\tn{exp}\left(-t^2/2 + 2\ell^2\right)
    \end{equation}
    using \eqref{eqn:ellgreaterthan2}. 
    Now, if $t^2 = 5\ell^2 + \kappa$ for some $\kappa \geq 0$, then using $\ell > 2$ we have 
    \begin{equation}
        -\frac{t^2}{2} + 2\ell^2 = -\frac{\ell^2 + \kappa}{2} \leq -1 - \frac{\kappa}{5\ell^2} = -\frac{5\ell^2 + \kappa}{5\ell^2} \leq -\frac{t^2}{5\ell^2} \leq -\frac{t^2}{10\ell^2}. 
    \end{equation}
    Therefore, $2\tn{exp}\left(-t^2/2 + 2\ell^2\right) \leq 2\tn{exp}\left(-t^2/(10\ell^2)\right)$ for $t^2 \geq 5\ell^2$.
    On the other hand, 
    $$2\tn{exp}\left(-t^2/(10\ell^2)\right) > 2e^{-1/2} > 1, \qquad \tn{ when } t^2 < 5\ell^2.$$
    Thus, in this case the following holds for all $t > 0$:    \begin{equation}
        \Pr\left[\left|\tilde{X}\right| > t \right] \leq 2\tn{exp}\left(-t^2/(10\ell^2)\right)
    \end{equation}
    completing the proof.
\end{proof}

The above results also apply to "complements" of the thresholded Gaussians. In particular, let $\ol{\mc{D}}_\ell$ be the distribution of $\{X \sim N(0,1)\,\mid\, X \leq \ell \Leftrightarrow -X \geq -\ell\}$ which is equivalently $\{-X \sim N(0,1)\,\mid\, X \geq -\ell\}$ to which the above analysis can be directly be applied. This yields, that if $\tilde{X}\sim \ol{\mc{D}}_\ell$ then for any $t > 0$,
\begin{align}
   & \Pr\left[|\tilde{X}| > t\right] \leq 2\cdot\tn{exp}\left(-t^2/2\right), && \tn{ if } \ell \geq 0, \label{eq:complethresh_1}\\
   & \Pr\left[|\tilde{X}| > t\right] \leq 2\cdot\tn{exp}\left(-t^2/K_1^2\right), && \tn{ if } \ell < 0, \label{eq:complethresh_2}
\end{align}
where $K_1 = \max\{\sqrt{20}, |\ell|\sqrt{10}\}$.

\subsection{Bag mean and convariance estimation error bounds}
In this section we shall be concerned with random variables $\tilde{X}$ which are sampled from $\mc{D}_\ell$ with probability $p$ and from $\ol{\mc{D}}_\ell$ with probability $(1-p)$. Let us denote this distribution by $\hat{\mc{D}}(p,\ell)$. Using Lemmas \ref{lem:ell_leq0}, \ref{lem:ell_geq0} and \eqref{eq:complethresh_1}, \eqref{eq:complethresh_2}, we obtain the following lemma.
\begin{lemma}\label{lem:asym_subg_norm}
    Let $\tilde{X}\sim \hat{\mc{D}}(p,\ell)$ for some $p \in (0,1)$. Then for any $ t > 0$,
    \begin{align}
   & \Pr\left[|\tilde{X}| > t\right] \leq 2\cdot\tn{exp}\left(-t^2/K_1^2\right)
\end{align}
where $K_1 = \max\{\sqrt{20}, |\ell|\sqrt{10}\}$. In particular, $\|\tilde{X}\|_{\psi_2} = O(|\ell|)$.
\end{lemma}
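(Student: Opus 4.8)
The plan is to express the tail probability of the mixture $\hat{\mc{D}}(p,\ell)$ as a convex combination of the tail probabilities of its two components and then control each component separately using the results already proved in this section. Concretely, since $\tilde{X}$ is drawn from $\mc{D}_\ell$ with probability $p$ and from $\ol{\mc{D}}_\ell$ with probability $(1-p)$, for every $t>0$ we have
\[
\Pr\left[|\tilde{X}| > t\right] = p\,\Pr_{X \sim \mc{D}_\ell}\left[|X| > t\right] + (1-p)\,\Pr_{X \sim \ol{\mc{D}}_\ell}\left[|X| > t\right],
\]
so it suffices to bound each of the two conditional tail probabilities by $2\,\tn{exp}(-t^2/K_1^2)$ and then recombine using $p + (1-p) = 1$.

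First I would dispatch the two components by cases on the sign of $\ell$. When $\ell \leq 0$, Lemma \ref{lem:ell_leq0} bounds the $\mc{D}_\ell$ tail by $2\,\tn{exp}(-t^2/2)$, while \eqref{eq:complethresh_2} (and \eqref{eq:complethresh_1} at $\ell=0$) bounds the $\ol{\mc{D}}_\ell$ tail by $2\,\tn{exp}(-t^2/K_1^2)$. When $\ell > 0$, Lemma \ref{lem:ell_geq0} bounds the $\mc{D}_\ell$ tail by $2\,\tn{exp}(-t^2/K_1^2)$, while \eqref{eq:complethresh_1} bounds the $\ol{\mc{D}}_\ell$ tail by $2\,\tn{exp}(-t^2/2)$. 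Thus in each case one component is already bounded by the target expression $2\,\tn{exp}(-t^2/K_1^2)$ and the other by the sharper Gaussian-type bound $2\,\tn{exp}(-t^2/2)$.

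The one thing that must be verified is that the two different exponents are compatible, i.e.\ that $2\,\tn{exp}(-t^2/2) \leq 2\,\tn{exp}(-t^2/K_1^2)$. This is immediate because $K_1 = \max\{\sqrt{20},\,|\ell|\sqrt{10}\} \geq \sqrt{20} > \sqrt{2}$, so $1/K_1^2 < 1/2$ and hence $\tn{exp}(-t^2/2) \leq \tn{exp}(-t^2/K_1^2)$ for all $t$. Replacing both component bounds by $2\,\tn{exp}(-t^2/K_1^2)$ in the convex combination then gives the claimed tail bound for $\tilde{X}$.

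Finally, the subgaussian-norm statement follows from the definition of $\|\cdot\|_{\psi_2}$ recalled at the start of this section: since $\tilde{X}$ satisfies $\Pr[|\tilde{X}| \geq t] \leq 2\,\tn{exp}(-t^2/K_1^2)$ for all $t \geq 0$, there is an absolute constant $K_0$ with $\|\tilde{X}\|_{\psi_2} \leq K_0 K_1 = O(K_1) = O(|\ell|)$, the last estimate using $K_1 = \max\{\sqrt{20},\,|\ell|\sqrt{10}\}$. The argument is essentially bookkeeping over the existing lemmas and there is no genuinely hard step; the point I would be most careful about is matching each mixture component to the lemma whose hypothesis on the sign of $\ell$ it satisfies, and confirming that the weaker exponent $1/2$ is dominated by $1/K_1^2$ so that a single constant $K_1$ governs the whole mixture.
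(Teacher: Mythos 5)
Your proof is correct and is essentially the paper's own argument: the paper derives Lemma \ref{lem:asym_subg_norm} directly from Lemmas \ref{lem:ell_leq0}, \ref{lem:ell_geq0} and \eqref{eq:complethresh_1}, \eqref{eq:complethresh_2}, exactly as you do, and your write-up just makes explicit the convex-combination step and the domination $\tn{exp}(-t^2/2) \leq \tn{exp}(-t^2/K_1^2)$ that the paper leaves implicit.
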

We however, shall also require similar bounds for the mean-zero version of such distributions. To begin with we state an easy lemma bounding the mean of $\tilde{X}$.
\begin{lemma}
    Let $\tilde{X}\sim \hat{\mc{D}}(p,\ell)$ for some $p \in (0,1)$. Then, $\left|\E\left[\tilde{X}\right]\right| \leq \gamma_\ell = \max\{\gamma_0, 2\ell\}$, where $\gamma_0 > 0$ is some constant.
\end{lemma}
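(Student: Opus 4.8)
The plan is to reduce the mixture mean to the two one-sided truncated-normal means and then control the resulting inverse Mills ratios using the Gaussian tail bounds already available in Prop.~\ref{prop:vershynin-probbds}.

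First I would write $\E[\tilde X]$ explicitly. By definition of $\hat{\mc D}(p,\ell)$, the variable $\tilde X$ is drawn from $\mc D_\ell$ with probability $p$ and from $\ol{\mc D}_\ell$ with probability $1-p$, so using the truncated-normal means $\phi(\ell)/\ol\Phi(\ell)$ and $-\phi(\ell)/\Phi(\ell)$ already computed in Lemma~\ref{lemma:ratio_comp_offset},
\[
\E[\tilde X] \;=\; p\,\frac{\phi(\ell)}{\ol\Phi(\ell)} \;-\; (1-p)\,\frac{\phi(\ell)}{\Phi(\ell)}.
\]
Since $p\in(0,1)$ and the two terms have opposite signs (the first nonnegative, the second nonpositive), $\E[\tilde X]$ lies strictly between $-\phi(\ell)/\Phi(\ell)$ and $\phi(\ell)/\ol\Phi(\ell)$. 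Hence $\bigl|\E[\tilde X]\bigr| \le \max\{\phi(\ell)/\ol\Phi(\ell),\ \phi(\ell)/\Phi(\ell)\}$, and it remains only to bound each of these inverse Mills ratios.

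Next I would bound $h(t) := \phi(t)/\ol\Phi(t)$, observing by symmetry ($\phi$ even and $\Phi(\ell)=\ol\Phi(-\ell)$) that $\phi(\ell)/\Phi(\ell)=h(-\ell)$, so the target quantity is $\max\{h(\ell),h(-\ell)\}$. For $t\ge \sqrt2$ the lower bound of Prop.~\ref{prop:vershynin-probbds}, namely $\ol\Phi(t)\ge (1/t-1/t^3)\phi(t)$, gives $h(t)\le t^3/(t^2-1)=t/(1-1/t^2)\le 2t$. For $t<\sqrt2$ we have $\ol\Phi(t)\ge\ol\Phi(\sqrt2)>0$ a positive constant (indeed $\ol\Phi(t)\ge1/2$ for $t\le0$) while $\phi(t)\le 1/\sqrt{2\pi}$, so $h(t)\le\gamma_0$ for an absolute constant $\gamma_0$ (e.g.\ $\gamma_0=h(\sqrt2)$, using that $h$ is increasing). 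Thus $h(t)\le\max\{\gamma_0,2t\}$ for every real $t$, and since at least one of $\ell,-\ell$ is nonnegative, combining the two regimes yields $\max\{h(\ell),h(-\ell)\}\le\max\{\gamma_0,2|\ell|\}$, which is the claimed bound $\gamma_\ell$.

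The argument is essentially routine; the only genuine content is the Mills-ratio estimate $h(t)\le 2t$ for large $t$, which falls out directly from the tail bounds in Prop.~\ref{prop:vershynin-probbds}. The one point to watch is the sign/symmetry bookkeeping between $\ell$ and $-\ell$: the bound is most naturally phrased in terms of $|\ell|$ (matching the $|\ell|$ that appears in the subgaussian-norm Lemma~\ref{lem:asym_subg_norm}), and it coincides with the stated $2\ell$ precisely when $\ell\ge0$.
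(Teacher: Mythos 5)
Your proof is correct and follows essentially the same route as the paper's: bound the mixture mean by the larger of the two one-sided truncated-normal means and control the resulting inverse Mills ratio $\phi(t)/\ol{\Phi}(t)$ via the lower bound on $\ol{\Phi}$ from Prop.~\ref{prop:vershynin-probbds}, splitting into a bounded-$|\ell|$ regime (constant $\gamma_0$) and a large-$|\ell|$ regime (bound $2|\ell|$). Your write-up is somewhat more uniform than the paper's case analysis, and your remark that the bound should really read $2|\ell|$ (as used elsewhere, e.g.\ in Lemma~\ref{lem:asym_subg_norm}) is a fair observation about the statement rather than a gap in the argument.
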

\begin{proof}
    Let us consider the case of $\ell > 0$ (and $\ell \leq 0$ follows analogously). When $\ell < 2$, it is easy to see that desired expectation is $O(1)$. Further, the expectation over $\ol{\mc{D}}_\ell$ is $O(\ell)$ for any $\ell > 0$, since it is a convex combination of the expectation of a half gaussian which has $O(1)$ expectation, and a gaussian truncated from below at $0$ and above at $\ell$, which has $O(\ell)$ expectation. To complete the argument we need to bound the expectation over $\mc{D}_\ell$. Using \eqref{eqn:D_ell_pdf} and $(1/\sqrt{2\pi})\int_{\ell}^{\infty} x\,\tn{exp}(-x^2/2) \,dx = (1/\sqrt{2\pi})\tn{exp}(-\ell^2/2)$, we obtain $\E_{X \sim \mc{D}_\ell}\left[X\right] = (1/\sqrt{2\pi})\tn{exp}(-\ell^2/2)\ol{\Phi}(\ell)^{-1}$ and the lower bound from Prop. \ref{prop:vershynin-probbds} along with $\ell \geq 2$ yields an upper bound of $2\ell$ on the expectation.
\end{proof}
With the setup as in Lemma \ref{lem:asym_subg_norm} define $\wh{X} := \tilde{X} - \E\left[\tilde{X}\right]$. Clearly, $\E\left[\wh{X}\right] = 0$. Further, 
$$\left|\wh{X}\right| > t \Rightarrow \left|\tilde{X}\right| + \left|\E\left[\tilde{X}\right]\right| > t \Rightarrow \left|\tilde{X}\right| > t - \gamma_\ell.$$
Therefore,
\begin{equation}
    \Pr\left[|\wh{X}| > t\right] \leq \Pr\left[|\tilde{X}| > t - \gamma_\ell\right] \leq 2\cdot\tn{exp}\left(-(t - \gamma_\ell)^2/K_1^2\right)
\end{equation}
Let $K_2 = \max\{2K_1, \sqrt{2}\gamma_\ell\}$. Now,
\begin{eqnarray}
    t \geq 2\gamma_\ell \Rightarrow |t| \leq 2|t - \gamma_\ell| & \Rightarrow & \frac{t^2}{4K_1^2} \leq \frac{(t - \gamma_\ell)^2}{K_1^2}  \nonumber \\
     & \Rightarrow & 2\cdot\tn{exp}\left(-(t - \gamma_\ell)^2/K_1^2\right) \leq 2\cdot\tn{exp}\left(-t^2/K_2^2\right)
\end{eqnarray}
On the other hand, when $0 \leq t < 2\gamma_\ell$, $t^2/K_2^2 \geq 1/2$, and thus
$$ 2\cdot\tn{exp}\left(-t^2/K_2^2\right) \geq 2e^{-1/2} > 1.$$
Thus, 
\begin{equation}
    \Pr\left[|\wh{X}| > t\right] \leq 2\cdot\tn{exp}\left(-t^2/K_2^2\right)
\end{equation}
for all $t > 0$, where $K_2 = O(|\ell|)$.

\subsubsection{Concentration of mean estimate using Hoeffding's Bound} \label{sec:Xdefnsubg}
Let us first state the Hoeffding's concentration bound for subgaussian random variables.
\begin{theorem}[Theorem 2.6.2 of \cite{Vershynin-book}]\label{thm:subgaussianhoeffding}
    Let $X_1, \dots, X_N$ be independent, mean-zero, sub-gaussian random variables. Then, for every $\eps \geq 0$, 
    \begin{equation}
        \Pr\left[\left|\frac{\sum_{i=1}^N X_i}{N} \right| \geq \eps\right] \leq 2\cdot\tn{exp}\left(\frac{-c\eps^2N^2}{\sum_{i=1}^N\|X_i\|_{\psi_2}^2}\right),
    \end{equation}
    where $c > 0$ is some absolute constant.
\end{theorem}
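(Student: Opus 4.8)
The plan is to invoke the classical Cramér–Chernoff exponential moment method. The one substantive ingredient is a single-variable moment generating function (MGF) bound: if $X$ is mean-zero and subgaussian with $\|X\|_{\psi_2} = K$, then there is an absolute constant $C_1 > 0$ with $\E\left[\tn{exp}(\lambda X)\right] \leq \tn{exp}(C_1 \lambda^2 K^2)$ for every $\lambda \in \R$. I would establish this first, since the rest of the argument is routine once it is available.

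To obtain the MGF bound I would pass through tail and moment estimates. The defining property $\E[\tn{exp}(X^2/K^2)] \leq 2$ gives, via Markov, the subgaussian tail $\Pr\left[|X| \geq t\right] \leq 2\,\tn{exp}(-t^2/K^2)$. Integrating $\E[|X|^p] = \int_0^\infty p\,t^{p-1}\Pr[|X| > t]\,dt$ against this tail yields $\E[|X|^p] \leq (CK)^p\, p^{p/2}$ for all $p \geq 1$ and some absolute $C$. Expanding $\E[\tn{exp}(\lambda X)] = \sum_{p \geq 0} \lambda^p \E[X^p]/p!$, using $\E[X] = 0$ to discard the $p=1$ term and bounding the remaining moments by the estimate just derived, the series is dominated by a convergent power series in $\lambda^2 K^2$ summing to $\tn{exp}(C_1 \lambda^2 K^2)$. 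This step, which is exactly the standard equivalence among the subgaussian characterizations, is the only place requiring genuine care.

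Given the single-variable bound, independence makes the MGF of the sum factorize: writing $S := \sum_{i=1}^N X_i$ and $\sigma^2 := \sum_{i=1}^N \|X_i\|_{\psi_2}^2$, we get $\E[\tn{exp}(\lambda S)] = \prod_{i=1}^N \E[\tn{exp}(\lambda X_i)] \leq \tn{exp}(C_1 \lambda^2 \sigma^2)$. Applying Markov to the nonnegative variable $\tn{exp}(\lambda S)$ with threshold $t = N\eps$ gives $\Pr[S \geq N\eps] \leq \tn{exp}(-\lambda N\eps + C_1 \lambda^2 \sigma^2)$ for every $\lambda > 0$. Optimizing the exponent at $\lambda = N\eps/(2C_1\sigma^2)$ produces $\Pr[S \geq N\eps] \leq \tn{exp}(-c\,\eps^2 N^2/\sigma^2)$ with $c = 1/(4C_1)$.

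Finally I would symmetrize: applying the identical bound to $-S = \sum_{i=1}^N (-X_i)$, which are again mean-zero subgaussian with the same $\psi_2$ norms, gives $\Pr[-S \geq N\eps] \leq \tn{exp}(-c\,\eps^2 N^2/\sigma^2)$. A union bound over the two one-sided events yields $\Pr[|S/N| \geq \eps] = \Pr[|S| \geq N\eps] \leq 2\,\tn{exp}(-c\,\eps^2 N^2/\sigma^2)$, which is precisely the claimed inequality. The main obstacle is the MGF estimate of the second paragraph; the Chernoff optimization and the symmetrization-by-union-bound are entirely standard.
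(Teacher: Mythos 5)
Your proof is correct and is essentially the standard Cram\'er--Chernoff argument by which this result is proved in the cited reference; the paper itself imports Theorem 2.6.2 from \cite{Vershynin-book} without proof, so there is no in-paper argument to diverge from. The only step needing care is the one you flag: the MGF bound $\E[\tn{exp}(\lambda X)] \leq \tn{exp}(C_1\lambda^2 K^2)$ must hold for \emph{all} $\lambda$, and the Taylor-series route alone is delicate for $|\lambda| \gtrsim 1/K$ (the usual fix is to treat that regime separately via $\lambda x \leq \lambda^2K^2/2 + x^2/(2K^2)$); this is the standard equivalence of subgaussian characterizations, exactly as you say. The factorization over independent variables, the optimization at $\lambda = N\eps/(2C_1\sigma^2)$, and the two-sided union bound are all correct.
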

For the rest of this section we shall fix $\ell \in \R$ and $p \in (0,1)$. 
Consider vector valued random variable $\bX = (X^{(1)}, \dots, X^{(d)})$ with independent coordinates where 
\begin{itemize}
    \item $X^{(1)} = \tilde{X} - \E[\tilde{X}]$ where $\tilde{X} \sim \hat{\mc{D}}(p, \ell)$. From the previous subsection, we have $\|X^{(1)}\|_{\psi_2} = O(|\ell|)$.
    \item For $i = 2, \dots, d$, $X^{(i)} \sim N(0,1)$ and therefore $\|X^{(i)}\|_{\psi_2} = O(1)$.
\end{itemize}
Using the above bounds on the subgaussian norms, and applying Theorem \ref{thm:subgaussianhoeffding} to bound the error in each coordinate by $\eps/\sqrt{d}$ and taking a union bound we obtain the following lemma.
\begin{lemma}\label{lem:vec_expec_conc}
    Let $\bX_1, \dots, \bX_N$ be $N$ iid samples of $\bX$. Then for every $\eps \geq 0$,
    \begin{equation}
         \Pr\left[\left\|\frac{\sum_{i=1}^N \bX_i}{N} \right\|_2 \geq \eps\right] \leq 2\cdot\tn{exp}\left(\frac{-c_0\eps^2N}{d\ell^2}\right) + 2(d-1)\cdot \tn{exp}\left(-c_0\eps^2N/d\right)
    \end{equation}
    for some absolute constant $c_0 > 0$. In particular, if $N > O\left((d/\eps^2)\ell^2\log(d/\delta)\right)$, 
    $$\Pr\left[\left\|\frac{\sum_{i=1}^N \bX_i}{N} \right\|_2 \geq \eps\right] \leq \delta,$$
    for any $\delta > 0.$
\end{lemma}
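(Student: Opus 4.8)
The plan is to reduce this vector concentration to $d$ independent scalar concentrations via the subgaussian Hoeffding bound (Theorem~\ref{thm:subgaussianhoeffding}), and then recombine them with a union bound after splitting the target $\ell_2$-error equally across the coordinates.

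First I would allocate each coordinate a budget of $\eps/\sqrt{d}$. If the $j$-th coordinate of the empirical mean $\tfrac{1}{N}\sum_{i=1}^N\bX_i$ has magnitude at most $\eps/\sqrt{d}$ for every $j$, then the whole vector has $\ell_2$-norm at most $\sqrt{d\cdot(\eps/\sqrt{d})^2}=\eps$. Taking the contrapositive, the event $\{\,\|\tfrac{1}{N}\sum_i\bX_i\|_2\ge\eps\,\}$ is contained in the union over $j\in\{1,\dots,d\}$ of the events that the $j$-th coordinate average exceeds $\eps/\sqrt{d}$ in absolute value, so it suffices to bound each scalar event and sum.

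For each fixed coordinate $j$ the scalars $X_1^{(j)},\dots,X_N^{(j)}$ are independent (the $\bX_i$ are iid), mean-zero (by the centering $X^{(1)}=\tilde{X}-\E[\tilde{X}]$ for $j=1$, and because $N(0,1)$ is centered for $j\ge 2$), and subgaussian, so Theorem~\ref{thm:subgaussianhoeffding} applies with $\sum_{i=1}^N\|X_i^{(j)}\|_{\psi_2}^2=N\,\|X^{(j)}\|_{\psi_2}^2$. Plugging in the already-established norm bounds $\|X^{(1)}\|_{\psi_2}=O(|\ell|)$ and $\|X^{(j)}\|_{\psi_2}=O(1)$ for $j\ge 2$, and using error level $\eps/\sqrt{d}$, gives
\[
\Pr\!\left[\left|\tfrac{1}{N}\textstyle\sum_i X_i^{(1)}\right|\ge\tfrac{\eps}{\sqrt{d}}\right]\le 2\exp\!\left(\tfrac{-c_0\eps^2 N}{d\ell^2}\right),\qquad
\Pr\!\left[\left|\tfrac{1}{N}\textstyle\sum_i X_i^{(j)}\right|\ge\tfrac{\eps}{\sqrt{d}}\right]\le 2\exp\!\left(\tfrac{-c_0\eps^2 N}{d}\right)\ \ (j\ge 2)
\]
for a single absolute constant $c_0>0$. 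Summing the one first-coordinate bound and the $d-1$ remaining bounds yields exactly the claimed inequality.

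Finally, for the ``in particular'' clause I would choose $N$ large enough to force each exponential below $\delta/2$: the first term requires $N=\Omega\!\left((d\ell^2/\eps^2)\log(1/\delta)\right)$ and the $(d-1)$-fold term requires $N=\Omega\!\left((d/\eps^2)\log(d/\delta)\right)$, both subsumed by $N=O\!\left((d/\eps^2)\ell^2\log(d/\delta)\right)$. The only bookkeeping subtlety is that the subgaussian norm of $X^{(1)}$ is really $O(\max\{1,|\ell|\})$ (since $K_1\ge\sqrt{20}$), so $\ell^2$ should be read as $\max\{1,\ell^2\}$; this only strengthens the constant and is harmless. I do not expect a genuine obstacle here: both the centering bound and the $O(|\ell|)$ subgaussian-norm estimate are already in hand from the preceding subsection, so the argument is a clean coordinatewise application of Hoeffding followed by a union bound, with the only care needed being to fold all constants into a common $c_0$.
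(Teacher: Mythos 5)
Your proposal is correct and follows exactly the route the paper takes: the paper's proof is precisely the one-line argument of applying Theorem \ref{thm:subgaussianhoeffding} coordinatewise with error budget $\eps/\sqrt{d}$, using the subgaussian norm bounds $\|X^{(1)}\|_{\psi_2}=O(|\ell|)$ and $\|X^{(j)}\|_{\psi_2}=O(1)$ for $j\ge 2$, and then taking a union bound over the $d$ coordinates. Your remark that $\ell^2$ should really be read as $\max\{1,\ell^2\}$ is a fair and harmless clarification that the paper leaves implicit.
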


\subsubsection{Concentration of covariance estimate}
Consider the vector random variable $\bX$ defined in the previous subsection. It is mean-zero and so by Defn. 3.4.1 and Lemma 3.4.2 of \cite{Vershynin-book},
\begin{equation}
    \tn{sup}_{\bx \in S^{d-1}} \|\langle \bx, \bX\rangle\|_{\psi_2} = O(\ell), \label{eq:vecsubgaussian}
\end{equation}
using the bounds on the subgaussian norms of the coordinates of $\bX$ given in the previous subsection. Using this we can directly apply Proposition 2.1 of \cite{Vershynin-cov} to obtain the following lemma.
\begin{lemma}\label{lem:vec_cov_conc}
    Let $\bX_1, \dots, \bX_N$ be $N$ iid samples of $\bX$, then if $N > O\left((d/\eps^2)\ell^2\log(1/\delta)\right)$,
    \begin{equation}
        \Pr\left[\left\|\frac{\sum_{i=1}^N \bX_i\bX_i^{{\sf T}}}{N} - \E\left[\bX\bX^{\sf T}\right] \right\|_2 \geq \eps\right] \leq \delta, \label{eq:vec_cov_conc}
    \end{equation}
    for any $\eps, \delta > 0$.
\end{lemma}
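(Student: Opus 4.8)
The plan is to recognize that $\bX$ is a mean-zero subgaussian random vector and then invoke an off-the-shelf covariance concentration bound for such vectors, namely Proposition 2.1 of \cite{Vershynin-cov}. Almost all of the genuinely distribution-specific work — controlling the one-dimensional marginals of $\bX$ — has already been carried out in \eqref{eq:vecsubgaussian}, so the proof reduces to checking the hypotheses of the cited proposition and then solving the resulting tail bound for $N$.

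First I would confirm the two structural hypotheses. The vector $\bX$ is mean-zero: its first coordinate is $X^{(1)} = \tilde X - \E[\tilde X]$, which is centered by construction, and the remaining coordinates $X^{(i)} \sim N(0,1)$ ($i \ge 2$) are centered as well, so $\E[\bX] = \mb{0}$. Its coordinates are independent by definition and each is subgaussian, with $\|X^{(1)}\|_{\psi_2} = O(|\ell|)$ and $\|X^{(i)}\|_{\psi_2} = O(1)$. The single quantity that the covariance bound actually consumes is the uniform subgaussian norm of the marginals $\langle \bx, \bX\rangle$ over the sphere, and \eqref{eq:vecsubgaussian} already supplies $\sup_{\bx \in S^{d-1}}\|\langle \bx, \bX\rangle\|_{\psi_2} = O(\ell) =: K$. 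With these two facts, $\bX$ meets precisely the subgaussian hypothesis required by Proposition 2.1 of \cite{Vershynin-cov}.

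Next I would instantiate the proposition: for $N$ iid copies of a mean-zero vector whose marginal subgaussian norm is at most $K$, it yields, with probability at least $1-\delta$,
\begin{equation*}
\left\|\frac{1}{N}\sum_{i=1}^N \bX_i\bX_i^{\sf T} - \E\left[\bX\bX^{\sf T}\right]\right\|_2 \;\le\; C\,K^2\left(\sqrt{\frac{d + \log(1/\delta)}{N}} + \frac{d + \log(1/\delta)}{N}\right)
\end{equation*}
for an absolute constant $C$. In the regime $N \gtrsim d + \log(1/\delta)$ the square-root term dominates, so I would force the right-hand side to be at most $\eps$, substitute $K = O(\ell)$, and solve for $N$ to recover the stated sample-complexity threshold. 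This mirrors exactly how the mean estimate Lemma \ref{lem:vec_expec_conc} was obtained; the covariance case differs only in that no coordinatewise union bound is needed, since the operator-norm deviation is controlled directly by the proposition.

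The one place demanding care — and what I would treat as the main obstacle — is matching the precise form of the cited proposition to the clean statement of the lemma. Concretely one must (i) verify how the subgaussian norm enters the deviation, since it is the factor $K^2$ here that governs the $\ell$-dependence propagated into the bound on $N$, and (ii) translate the proposition's tail, of the form $2\exp(-c\,u^2)$, into the $\log(1/\delta)$ dependence so that the $\sqrt{(d+\log(1/\delta))/N}$ rate is exactly what emerges. Once \eqref{eq:vecsubgaussian} is granted, everything else is routine bookkeeping.
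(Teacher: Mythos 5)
Your proposal matches the paper's own argument: the paper likewise observes that $\bX$ is mean-zero with independent subgaussian coordinates, invokes Defn.~3.4.1 and Lemma~3.4.2 of \cite{Vershynin-book} to get $\sup_{\bx \in S^{d-1}}\|\langle\bx,\bX\rangle\|_{\psi_2} = O(\ell)$ as in \eqref{eq:vecsubgaussian}, and then applies Proposition~2.1 of \cite{Vershynin-cov} directly to conclude. Your additional bookkeeping (writing out the deviation bound and solving for $N$) only makes explicit what the paper leaves implicit.
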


\subsubsection{Mean and covariance estimate bounds for non-centered vector r.v.s}\label{sec:bagmeancovest}
{\bf Distribution $\mc{D}_{\tn{asymvec}}(p, \ell)$.} We revisit the definition of $\bX$ in Sec. \ref{sec:Xdefnsubg} and instead define distribution $\mc{D}_{\tn{asymvec}}(p, \ell)$ over $\bZ = (Z^{(1)}, \dots, Z^{(d)})$ with independent coordinates by taking  $Z^{(1)} = \tilde{X}$ where $\tilde{X} \sim \hat{\mc{D}}(p, \ell)$ and for  $i = 2, \dots, d$, $Z^{(i)} \sim N(0,1)$. 

Clearly, $\bX$ = $\bZ - \E[\bZ]$. For convenience, we shall use the following notation:
\begin{align}
    &\bm{\mu}_Z := \E[\bZ] &\qquad &\tn{ and, } \qquad \hat{\bm{\mu}}_Z := \frac{\sum_{i=1}^N\bZ_i}{N}, \label{eq:simulmus}\\
    &\bm{\Sigma}_Z := \E[(\bZ - \bm{\mu}_Z)(\bZ - \bm{\mu}_Z)^{\sf T}] &\qquad &\tn{ and, } \qquad \hat{\bm{\Sigma}}_Z := \frac{\sum_{i=1}^N(\bZ_i - \hat{\bm{\mu}}_Z)(\bZ_i - \hat{\bm{\mu}}_Z)^{\sf T}}{N}, \label{eq:simulsigmas}
\end{align}
where $\bZ_i$ is an iid sample of $\bZ$ and $\bX_i = \bZ_i - \bm{\mu}_Z$, for $i = 1,\dots, N$. We have the following lemma.
\begin{lemma}\label{lem:combinedconvergence}
For any $\eps, \delta \in (0,1)$, if $N > O\left((d/\eps^2)\ell^2\log(d/\delta)\right)$, then w.p. $1- \delta$ the following hold simultaneously,
\begin{align}
    & \|\hat{\bm{\mu}}_Z - \bm{\mu}_Z\|_2 \leq \eps/2, \label{eq:simul1} \\
    & \|\hat{\bm{\Sigma}}_Z - \bm{\Sigma}_Z\|_2 \leq \eps \label{eq:simul2}
\end{align}
\end{lemma}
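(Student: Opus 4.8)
The plan is to derive both bounds from the two concentration results already established — Lemma \ref{lem:vec_expec_conc} for the empirical mean and Lemma \ref{lem:vec_cov_conc} for the empirical second moment — and then combine them by a union bound. The only genuine work beyond invoking these lemmas is to reconcile the estimator $\hat{\bm{\Sigma}}_Z$, which centers the samples at the \emph{empirical} mean $\hat{\bm{\mu}}_Z$, with the sum $\tfrac{1}{N}\sum_i \bX_i\bX_i^{\sf T}$ appearing in Lemma \ref{lem:vec_cov_conc}, whose summands are centered at the \emph{true} mean $\bm{\mu}_Z$ (recall $\bX_i = \bZ_i - \bm{\mu}_Z$).

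First I would apply Lemma \ref{lem:vec_expec_conc} with target error $\eps/2$ and confidence $\delta/2$. Since $\hat{\bm{\mu}}_Z - \bm{\mu}_Z = \tfrac{1}{N}\sum_{i=1}^N \bX_i$ by \eqref{eq:simulmus}, this immediately yields \eqref{eq:simul1}, namely $\|\hat{\bm{\mu}}_Z - \bm{\mu}_Z\|_2 \leq \eps/2$ with probability at least $1 - \delta/2$, as soon as $N > O\!\left((d/\eps^2)\ell^2\log(d/\delta)\right)$ (the $\log(d/\delta)$ already absorbs the constant loss from halving the error and the confidence).

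Next I would record the standard centering identity. Adding and subtracting $\bm{\mu}_Z$ inside each outer product and using $\tfrac{1}{N}\sum_i(\bZ_i - \bm{\mu}_Z) = \hat{\bm{\mu}}_Z - \bm{\mu}_Z$ gives
\begin{equation}
\hat{\bm{\Sigma}}_Z = \frac{1}{N}\sum_{i=1}^N \bX_i\bX_i^{\sf T} - (\hat{\bm{\mu}}_Z - \bm{\mu}_Z)(\hat{\bm{\mu}}_Z - \bm{\mu}_Z)^{\sf T}. \nonumber
\end{equation}
Since $\E[\bX\bX^{\sf T}] = \bm{\Sigma}_Z$ and $\|\bv\bv^{\sf T}\|_2 = \|\bv\|_2^2$, the triangle inequality yields $\|\hat{\bm{\Sigma}}_Z - \bm{\Sigma}_Z\|_2 \leq \big\|\tfrac{1}{N}\sum_i \bX_i\bX_i^{\sf T} - \bm{\Sigma}_Z\big\|_2 + \|\hat{\bm{\mu}}_Z - \bm{\mu}_Z\|_2^2$. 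I would then bound the first term by $\eps/2$ using Lemma \ref{lem:vec_cov_conc} (applied with error $\eps/2$ and confidence $\delta/2$, which needs $N > O\!\left((d/\eps^2)\ell^2\log(1/\delta)\right)$, within the stated budget), and the second term by $(\eps/2)^2 < \eps/2$ using \eqref{eq:simul1} together with $\eps < 1$. This gives \eqref{eq:simul2}.

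Finally, a union bound over the two failure events (each of probability at most $\delta/2$) shows \eqref{eq:simul1} and \eqref{eq:simul2} hold simultaneously with probability at least $1 - \delta$, completing the proof. I expect the only mildly delicate point to be the empirical-mean-versus-true-mean correction in the covariance term; everything else is a direct consequence of the two cited lemmas, and the quadratic slack $(\eps/2)^2$ is comfortably absorbed because $\eps < 1$.
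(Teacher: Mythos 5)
Your proposal is correct and follows essentially the same route as the paper's own proof: both apply Lemma \ref{lem:vec_expec_conc} and Lemma \ref{lem:vec_cov_conc} with parameters $\eps/2$ and $\delta/2$, derive the identity $\hat{\bm{\Sigma}}_Z = \tfrac{1}{N}\sum_i \bX_i\bX_i^{\sf T} - (\hat{\bm{\mu}}_Z - \bm{\mu}_Z)(\hat{\bm{\mu}}_Z - \bm{\mu}_Z)^{\sf T}$, and absorb the quadratic term $\|\hat{\bm{\mu}}_Z - \bm{\mu}_Z\|_2^2 \leq \eps^2/4$ via the triangle inequality before taking a union bound. No gaps.
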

\begin{proof}
We begin by applying Lemmas \ref{lem:vec_expec_conc} and \ref{lem:vec_cov_conc} to  $\bX$ and the iid samples $\bX_1, \dots, \bX_N$ so that their conditions hold with $\eps/2$ and $\delta/2$. Taking a union bound we get that the following simultaneously hold with probability at least $1 - \delta$:
\begin{align}
    & \left\|\frac{\sum_{i=1}^N \bX_i}{N} \right\|_2 \leq \eps/2, \label{eq:simul3}\\
    & \left\|\frac{\sum_{i=1}^N \bX_i\bX_i^{{\sf T}}}{N} - \E\left[\bX\bX^{\sf T}\right] \right\|_2 \leq \eps/2. \label{eq:simul4}
\end{align}
By the definitions above, \eqref{eq:simul3} directly implies \eqref{eq:simul1}. 

Now, observe that $\bm{\Sigma}_Z = \E[\bX\bX^{\sf T}]$. On the other hand, letting $\bm{\zeta} := \hat{\bm{\mu}}_Z - \bm{\mu}_Z = (\sum_{i=1}^N \bX_i)/N$ we simplify $\hat{\bm{\Sigma}}_Z$ as,
\begin{eqnarray}
    \frac{\sum_{i=1}^N(\bZ_i - \hat{\bm{\mu}}_Z)(\bZ_i - \hat{\bm{\mu}}_Z)^{\sf T}}{N} & = & \frac{\sum_{i=1}^N(\bX_i - \bm{\zeta})(\bX_i - \bm{\zeta})^{\sf T}}{N} \nonumber \\ & = & \frac{\sum_{i=1}^N\bX_i\bX_i^{\sf T} - \bX_i\bm{\zeta}^{\sf T} - \bm{\zeta}\bX_i^{\sf T} + \bm{\zeta}\bm{\zeta}^{\sf T}}{N} \nonumber \\
    & = & \frac{\sum_{i=1}^N\bX_i\bX_i^{\sf T}}{N} - 2\bm{\zeta}\bm{\zeta}^{\sf T} + \bm{\zeta}\bm{\zeta}^{\sf T} \nonumber \\
    & = & \frac{\sum_{i=1}^N\bX_i\bX_i^{\sf T}}{N} - \bm{\zeta}\bm{\zeta}^{\sf T}
\end{eqnarray}
Thus, the LHS of \eqref{eq:simul2} is at most,
\begin{eqnarray}
    \left\|\frac{\sum_{i=1}^N \bX_i\bX_i^{{\sf T}}}{N} - \E\left[\bX\bX^{\sf T}\right] - \bm{\zeta}\bm{\zeta}^{\sf T} \right\|_2 & \leq & \left\|\frac{\sum_{i=1}^N \bX_i\bX_i^{{\sf T}}}{N} - \E\left[\bX\bX^{\sf T}\right] \right\|_2 + \left\|\bm{\zeta}\bm{\zeta}^{\sf T} \right\|_2 \nonumber \\ &\leq& \eps/2 + \eps^2/4 \leq \eps,
\end{eqnarray}
since we have shown that $\left\|\bm{\zeta} \right\|_2 = \|\hat{\bm{\mu}}_Z - \bm{\mu}_Z\|_2 \leq \eps/2$. 
\end{proof}
Finally, we prove a version of the above lemma under a symmteric psd transformation. Let $\bA$ be a psd matrix s.t. $\lambda_{\tn{max}}$ is the maximum eigenvalue of $\bA^2 = \bA\bA$ i.e., $\sqrt{\lambda_{\tn{max}}}$ is the maximum eigenvalue of $\bA$. Then, if we define $\tilde{\bZ} := \bA\bZ$ and $\tilde{\bZ}_i$ as iid samples of $\tilde{\bZ}$, $i = 1,\dots, N$ and analogous to \eqref{eq:simulmus} and \eqref{eq:simulsigmas}, define $\bm{\mu}_{\tilde{Z}}$, $\hat{\bm{\mu}}_{\tilde{Z}}$, $\bm{\Sigma}_{\tilde{Z}}$ and $\hat{\bm{\Sigma}}_{\tilde{Z}}$, we have the following lemma which follows directly from Lemma \ref{lem:combinedconvergence} the $\sqrt{\lambda_{\tn{max}}}$ upper bound on the operator norm of $\bA$. 
\begin{lemma}\label{lem:psd_trans_conc}
    For any $\eps, \delta \in (0,1)$, if $N > O\left((d/\eps^2)\ell^2\log(d/\delta)\right)$, then w.p. $1- \delta$ the following hold simultaneously,
\begin{align}
    & \|\hat{\bm{\mu}}_{\tilde{Z}} - \bm{\mu}_{\tilde{Z}}\|_2 \leq \eps\sqrt{\lambda_{\tn{max}}}/2, \label{eq:simu5} \\
    & \|\hat{\bm{\Sigma}}_{\tilde{Z}} - \bm{\Sigma}_{\tilde{Z}}\|_2 \leq \eps\lambda_{\tn{max}} \label{eq:simu6}
\end{align}
\end{lemma}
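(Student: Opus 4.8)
The plan is to reduce everything to Lemma \ref{lem:combinedconvergence} applied to the untransformed vector $\bZ \sim \mc{D}_{\tn{asymvec}}(p, \ell)$, and then push the resulting error bounds through the linear map $\bA$ using the operator-norm identity $\|\bA\|_2 = \sqrt{\lambda_{\tn{max}}}$ (valid because a psd $\bA$ has operator norm equal to its largest eigenvalue). The crux is to observe that both the population and empirical first two moments of $\tilde{\bZ} = \bA\bZ$ are exact $\bA$-conjugates of the corresponding quantities for $\bZ$, so that the whole estimation problem transforms cleanly.

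First I would record the transformation identities. Since $\tilde{\bZ}_i = \bA\bZ_i$ and $\bA$ is linear, the empirical mean satisfies $\hat{\bm{\mu}}_{\tilde{Z}} = \bA\hat{\bm{\mu}}_Z$ and, by linearity of expectation, $\bm{\mu}_{\tilde{Z}} = \bA\bm{\mu}_Z$. For the covariance, the key point is that because the empirical mean transforms the same way, the centered samples obey $\tilde{\bZ}_i - \hat{\bm{\mu}}_{\tilde{Z}} = \bA(\bZ_i - \hat{\bm{\mu}}_Z)$; using the symmetry of $\bA$ this yields $\hat{\bm{\Sigma}}_{\tilde{Z}} = \bA\hat{\bm{\Sigma}}_Z\bA$, and an identical computation at the population level gives $\bm{\Sigma}_{\tilde{Z}} = \bA\bm{\Sigma}_Z\bA$.

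With these identities in hand I would invoke Lemma \ref{lem:combinedconvergence} with the same $N > O((d/\eps^2)\ell^2\log(d/\delta))$ to get, w.p.\ $1-\delta$, the simultaneous bounds $\|\hat{\bm{\mu}}_Z - \bm{\mu}_Z\|_2 \leq \eps/2$ and $\|\hat{\bm{\Sigma}}_Z - \bm{\Sigma}_Z\|_2 \leq \eps$. For the mean, submultiplicativity of the operator norm gives $\|\hat{\bm{\mu}}_{\tilde{Z}} - \bm{\mu}_{\tilde{Z}}\|_2 = \|\bA(\hat{\bm{\mu}}_Z - \bm{\mu}_Z)\|_2 \leq \|\bA\|_2 \|\hat{\bm{\mu}}_Z - \bm{\mu}_Z\|_2 \leq \sqrt{\lambda_{\tn{max}}}\cdot \eps/2$, which is \eqref{eq:simu5}. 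For the covariance, applying submultiplicativity twice yields $\|\hat{\bm{\Sigma}}_{\tilde{Z}} - \bm{\Sigma}_{\tilde{Z}}\|_2 = \|\bA(\hat{\bm{\Sigma}}_Z - \bm{\Sigma}_Z)\bA\|_2 \leq \|\bA\|_2^2 \|\hat{\bm{\Sigma}}_Z - \bm{\Sigma}_Z\|_2 \leq \lambda_{\tn{max}}\cdot\eps$, which is \eqref{eq:simu6}, using $\|\bA\|_2^2 = \lambda_{\tn{max}}$.

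Since this is essentially a direct corollary, there is no serious obstacle; the only step meriting care is the covariance conjugation $\hat{\bm{\Sigma}}_{\tilde{Z}} = \bA\hat{\bm{\Sigma}}_Z\bA$, where one must confirm that the empirical covariance --- centered at the \emph{empirical} mean rather than the true mean --- still transforms exactly under $\bA$. This holds precisely because $\hat{\bm{\mu}}_{\tilde{Z}} = \bA\hat{\bm{\mu}}_Z$, so the recentering is consistent with the map. The sample complexity is identical to that in Lemma \ref{lem:combinedconvergence}, as conjugation by $\bA$ only rescales the two error bounds by factors of $\|\bA\|_2$ and $\|\bA\|_2^2$, which are absorbed into the $\sqrt{\lambda_{\tn{max}}}$ and $\lambda_{\tn{max}}$ appearing on the respective right-hand sides.
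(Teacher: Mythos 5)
Your proof is correct and follows exactly the route the paper intends: the paper states that this lemma ``follows directly from Lemma~\ref{lem:combinedconvergence} and the $\sqrt{\lambda_{\tn{max}}}$ upper bound on the operator norm of $\bA$,'' and your write-up simply fills in the conjugation identities and the two applications of operator-norm submultiplicativity that make this precise.
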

\subsection{Estimating Covariance of differences}\label{sec:diffcovest}
First we begin this subsection with a simple observation.
    If $X$ and $Y$ are two random variables such that , 
    \begin{equation}
    \Pr\left[|X| > t\right],  \Pr\left[|Y| > t\right] \leq 2\cdot\tn{exp}\left(-t^2/K_1^2\right), \quad\ \forall t > 0,
    \end{equation}
    then $X - Y$ is a random variable such that,
    $$\Pr\left[|X-Y| > t\right] \leq \Pr\left[|X| > t/2\right] + \Pr\left[|Y| > t/2\right] \leq 4\cdot\tn{exp}\left(-t^2/(2K_1)^2\right).$$
    It is easy to see that,
    $$4\cdot\tn{exp}\left(-t^2/(2K_1)^2\right) \leq 2\cdot\tn{exp}\left(-t^2/(4K_1)^2\right),$$
    when $t^2 \geq 8K_1^2$. On the other hand, when $t^2 < 8K_1^2$, $2\cdot\tn{exp}\left(-t^2/(4K_1)^2\right) > 1$. Thus,
    \begin{equation}
        \Pr\left[|X-Y| > t\right] \leq 2\cdot\tn{exp}\left(-t^2/(4K_1)^2\right) \label{eq:diffsubgaussian}
    \end{equation}
    
    In this subsection shall consider $\tilde{X} \sim \mc{D}(p,q, \ell)$ to be defined as follows, for some $\ell \in \R$, $p,q\in [0,1)$ s.t. $p + q < 1$. $\tilde{X} = U - V$ where: 
    \begin{itemize}
        \item with probability $p$, $U \sim \mc{D}_\ell$ and $V \sim \mc{D}_\ell$ independently,
        \item with probability $q$, $U \sim \ol{\mc{D}}_\ell$ and $V \sim \ol{\mc{D}}_\ell$ independently,
        \item with probability $(1-p-q)/2$, $U \sim \mc{D}_\ell$ and $V \sim \ol{\mc{D}}_\ell$ independently,
        \item with probability $(1-p-q)/2$, $U \sim \ol{\mc{D}}_\ell$ and $V \sim \mc{D}_\ell$ independently.
    \end{itemize}
    From the above it is clear that $\E[\tilde{X}] = 0$. Further, from \eqref{eq:diffsubgaussian} and Lemmas \ref{lem:ell_leq0}, \ref{lem:ell_geq0} and \eqref{eq:complethresh_1}, \eqref{eq:complethresh_2},
    \begin{equation}
        \Pr\left[|\tilde{X}| > t\right] \leq 2\cdot\tn{exp}\left(-t^2/K_1^2\right) \label{eq:diffsubgaussian2}
    \end{equation}
    for $t > 0$, where $K_1 = \max\{4\sqrt{20}, 4|\ell|\sqrt{10}\}$. In particular, $\|\tilde{X}\|_{\psi_2} = O(\ell)$. 
    
    Let us now define a distribution $\mc{D}_{\tn{diffvec}}(p,q,\ell)$ vector valued random variable with independent coordinates $\bX = (X^{(1)}, \dots, X^{(d)})$, where 
    \begin{itemize}
        \item $X^{(1)} \sim \mc{D}(p,q, \ell)$, for some $\ell \in \R$, $p,q\in [0,1)$ s.t. $p + q < 1$.
        \item $\bX^{(j)}$ is the difference of two iid $N(0,1)$ random variables, for $j=2, \dots, d$. In particular, the subgaussian norm of these coordinates is $O(1)$.
    \end{itemize}
    From, the above it is clear that $\E[\bX] = \mathbf{0}$, and by Defn. 3.4.1 and Lemma 3.4.2 of \cite{Vershynin-book}, \eqref{eq:vecsubgaussian} is applicable to $\bX$ as defined above. Thus, letting $\hat{\bX} := \bA\bX$, where $\bA$ is as used in the previous subsection we have,
    \begin{lemma}\label{lem:diffcovestbd}
        For $\bX$ defined above, the statement of Lemma \ref{lem:vec_cov_conc} is applicable, and \eqref{eq:vec_cov_conc} implies that the following holds:
        \begin{equation}
            \left\|\frac{\sum_{i=1}^N \hat{\bX}_i\hat{\bX}_i^{{\sf T}}}{N} - \E\left[\hat{\bX}\hat{\bX}^{\sf T}\right] \right\|_2 \leq \eps\lambda_{\tn{max}}
        \end{equation}
        with probability $1-\delta$.
    \end{lemma}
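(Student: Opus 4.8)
The plan is to reduce this statement directly to the untransformed covariance concentration already supplied by Lemma \ref{lem:vec_cov_conc}, and then transport the resulting bound through the fixed symmetric psd matrix $\bA$ in exactly the manner used for the non-difference case in Lemma \ref{lem:psd_trans_conc}.

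First I would verify that $\bX$ drawn from $\mc{D}_{\tn{diffvec}}(p,q,\ell)$ satisfies the two hypotheses of Lemma \ref{lem:vec_cov_conc}. Mean-zero is immediate: $\E[\tilde{X}]=0$ for the first coordinate and every remaining coordinate is a difference of two iid $N(0,1)$ variables, so $\E[\bX]=\mathbf{0}$. The required uniform subgaussian bound \eqref{eq:vecsubgaussian}, namely $\sup_{\bx \in S^{d-1}}\|\langle \bx, \bX\rangle\|_{\psi_2} = O(\ell)$, follows from the per-coordinate estimates: the first coordinate obeys \eqref{eq:diffsubgaussian2}, so $\|X^{(1)}\|_{\psi_2} = O(\ell)$, while each $X^{(j)}$ for $j\geq 2$ has subgaussian norm $O(1)$; Defn. 3.4.1 and Lemma 3.4.2 of \cite{Vershynin-book} then yield the uniform bound over $S^{d-1}$, as already noted just before the lemma. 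Applying Lemma \ref{lem:vec_cov_conc} to $\bX$ and its iid samples $\bX_1,\dots,\bX_N$, with $N > O((d/\eps^2)\ell^2\log(1/\delta))$, gives that with probability at least $1-\delta$,
\begin{equation}
\left\|\frac{\sum_{i=1}^N \bX_i\bX_i^{\sf T}}{N} - \E\left[\bX\bX^{\sf T}\right]\right\|_2 \leq \eps. \nonumber
\end{equation}

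Next I would push this through the transformation $\hat{\bX}=\bA\bX$. Since $\bA$ is symmetric, $\hat{\bX}_i\hat{\bX}_i^{\sf T} = \bA\bX_i\bX_i^{\sf T}\bA$ and $\E[\hat{\bX}\hat{\bX}^{\sf T}] = \bA\,\E[\bX\bX^{\sf T}]\,\bA$, so the empirical-minus-population difference factors as
\begin{equation}
\frac{\sum_{i=1}^N \hat{\bX}_i\hat{\bX}_i^{\sf T}}{N} - \E\left[\hat{\bX}\hat{\bX}^{\sf T}\right] = \bA\left(\frac{\sum_{i=1}^N \bX_i\bX_i^{\sf T}}{N} - \E\left[\bX\bX^{\sf T}\right]\right)\bA. \nonumber
\end{equation}
Taking operator norms and using submultiplicativity together with $\|\bA\|_2 = \sqrt{\lambda_{\tn{max}}}$ bounds the left-hand side by $\|\bA\|_2^2\cdot\eps = \lambda_{\tn{max}}\eps$, which is precisely the claimed inequality and holds on the same $1-\delta$ probability event.

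There is essentially no substantive obstacle left: the real work was front-loaded into establishing the subgaussian tail \eqref{eq:diffsubgaussian2} for the difference distribution, which is already in hand. The only point requiring genuine care is the bookkeeping of the psd transformation, where one must use that $\bA$ is symmetric so that $\bA^{\sf T}=\bA$ and $\|\bA\|_2=\sqrt{\lambda_{\tn{max}}}$; this mirrors the argument underlying Lemma \ref{lem:psd_trans_conc}, applied here only to the covariance conclusion rather than to the joint mean-and-covariance conclusion.
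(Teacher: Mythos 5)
Your proposal is correct and follows essentially the same route as the paper: verify the mean-zero and uniform subgaussian-norm hypotheses coordinate-wise (via \eqref{eq:diffsubgaussian2} and Defn.~3.4.1 / Lemma~3.4.2 of \cite{Vershynin-book}), invoke Lemma \ref{lem:vec_cov_conc}, and then transport the bound through the psd matrix $\bA$ using $\|\bA\|_2 = \sqrt{\lambda_{\tn{max}}}$, exactly as in Lemma \ref{lem:psd_trans_conc}. Your write-up is in fact slightly more explicit than the paper's about the factorization $\bA(\cdot)\bA$ in the transformation step, but there is no substantive difference.
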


\subsection{Proof of Lemma \ref{lem:meta_algo_pac_bound}}\label{sec:proofofmetaalgo}
Our proof shall utilize the following normalization of LTFs in a Gaussian space.
\begin{lemma}\label{lem:LTFnormalization}
    Suppose $f(\bX) = \pos\left(\br^{\sf T}\bX + c\right)$, where $\|\br\|_2 > 0$ and $\bX \sim N(\bm{\mu}, \bm{\Sigma})$ s.t. $\bm{\Sigma}$ is positive definite. Let $\bm{\Gamma} := \bm{\Sigma}^{1/2}$ be symmetric p.d., and $\mb{U}$ be any orthonormal transformation satisfying $\mb{U}\bm{\Gamma}\br/\|\bm{\Gamma}\br\|_2 = \mb{e}_1$ where $\mb{e}_1$ is the vector with $1$ in the first coordinate and $0$ in the rest. Then, letting $\bZ \sim N(\mb{0},\mb{I})$ so that $\bX = \bm{\Gamma}\mb{U}^{\sf T}\bZ + \bm{\mu}$,
    \begin{equation}
       f(\bX) = \pos\left(\br^{\sf T}\bX + c\right) = \pos\left(\mb{e}_1^{\sf T}\bZ + \ell\right) \label{eq:LTFnormalization}
    \end{equation}
    where $\ell = \left(\br^{\sf T}\bm{\mu} + c\right)/\|\bm{\Gamma}\br\|_2$.
\end{lemma}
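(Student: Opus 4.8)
The plan is to substitute the change of variables $\bX = \bm{\Gamma}\mb{U}^{\sf T}\bZ + \bm{\mu}$ directly into the affine argument $\br^{\sf T}\bX + c$ and simplify, using the orthonormality of $\mb{U}$, the symmetry of $\bm{\Gamma}$, and the defining relation $\mb{U}\bm{\Gamma}\br/\|\bm{\Gamma}\br\|_2 = \mb{e}_1$. First I would confirm that the change of variables is distributionally consistent: since $\bZ \sim N(\mb{0},\mb{I})$, the vector $\bm{\Gamma}\mb{U}^{\sf T}\bZ + \bm{\mu}$ is Gaussian with mean $\bm{\mu}$ and covariance $\bm{\Gamma}\mb{U}^{\sf T}(\bm{\Gamma}\mb{U}^{\sf T})^{\sf T} = \bm{\Gamma}\mb{U}^{\sf T}\mb{U}\bm{\Gamma} = \bm{\Gamma}^2 = \bm{\Sigma}$, where I have used $\mb{U}^{\sf T}\mb{U} = \mb{I}$ and $\bm{\Gamma}^{\sf T} = \bm{\Gamma}$. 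Hence $\bX \sim N(\bm{\mu},\bm{\Sigma})$, so the reparametrization is legitimate.

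Next I would carry out the substitution. Expanding gives $\br^{\sf T}\bX + c = \br^{\sf T}\bm{\Gamma}\mb{U}^{\sf T}\bZ + \br^{\sf T}\bm{\mu} + c$, and the only term needing work is $\br^{\sf T}\bm{\Gamma}\mb{U}^{\sf T}\bZ$. Writing $\br^{\sf T}\bm{\Gamma} = (\bm{\Gamma}\br)^{\sf T}$ by symmetry, and rearranging the defining relation (via $\mb{U}^{-1} = \mb{U}^{\sf T}$) to $\bm{\Gamma}\br/\|\bm{\Gamma}\br\|_2 = \mb{U}^{\sf T}\mb{e}_1$, I get $(\bm{\Gamma}\br)^{\sf T}\mb{U}^{\sf T} = \|\bm{\Gamma}\br\|_2(\mb{U}^{\sf T}\mb{e}_1)^{\sf T}\mb{U}^{\sf T} = \|\bm{\Gamma}\br\|_2\,\mb{e}_1^{\sf T}\mb{U}\mb{U}^{\sf T} = \|\bm{\Gamma}\br\|_2\,\mb{e}_1^{\sf T}$. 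Therefore $\br^{\sf T}\bX + c = \|\bm{\Gamma}\br\|_2\,\mb{e}_1^{\sf T}\bZ + \br^{\sf T}\bm{\mu} + c$.

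Finally I would invoke the scale-invariance of $\pos$. Since $\bm{\Gamma}$ is positive definite and $\|\br\|_2 > 0$, the scalar $\|\bm{\Gamma}\br\|_2$ is strictly positive, so dividing the argument through by it does not change the threshold value: $\pos(\br^{\sf T}\bX + c) = \pos\big(\mb{e}_1^{\sf T}\bZ + (\br^{\sf T}\bm{\mu} + c)/\|\bm{\Gamma}\br\|_2\big) = \pos(\mb{e}_1^{\sf T}\bZ + \ell)$, which is exactly \eqref{eq:LTFnormalization}. I do not expect any genuine obstacle here, as the argument is elementary; the only points demanding minor care are bookkeeping the transposes alongside the symmetry of $\bm{\Gamma}$ and orthonormality of $\mb{U}$, noting that the existence of an orthonormal $\mb{U}$ sending the fixed unit vector $\bm{\Gamma}\br/\|\bm{\Gamma}\br\|_2$ to $\mb{e}_1$ is guaranteed since any unit vector extends to an orthonormal basis, and observing that positivity of $\|\bm{\Gamma}\br\|_2$ is precisely what lets the rescaling preserve $\pos$.
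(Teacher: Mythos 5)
Your proposal is correct and follows essentially the same route as the paper's proof: both substitute $\bX = \bm{\Gamma}\mb{U}^{\sf T}\bZ + \bm{\mu}$, use the symmetry of $\bm{\Gamma}$ and orthonormality of $\mb{U}$ to reduce $\br^{\sf T}\bm{\Gamma}\mb{U}^{\sf T}$ to $\|\bm{\Gamma}\br\|_2\,\mb{e}_1^{\sf T}$, and invoke the invariance of $\pos$ under division by the positive scalar $\|\bm{\Gamma}\br\|_2$. The only cosmetic difference is that the paper normalizes by $\|\bm{\Gamma}\br\|_2$ before substituting while you substitute first; your added remarks on distributional consistency and the existence of $\mb{U}$ are correct but not needed for the identity itself.
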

\begin{proof}
    We have $\bX = \hat{\bX} + \bm{\mu}$ where $\hat{\bX} \sim N(\mb{0}, \bm{\Sigma})$. Thus, $\hat{\bX} = \bm{\Gamma}\mb{U}^{\sf T}\bZ \Rightarrow \bZ =  \mb{U}\bm{\Gamma}^{-1}\hat{\bX}$, using $\mb{U}^{\sf T} = \mb{U}^{-1}$. Now, $f(\bX)$ can be written as
    \begin{align}
        \pos\left(\frac{\br^{\sf T}(\hat{\bX} + \bm{\mu}) + c}{\|\bm{\Gamma}\br\|_2}\right) \ &=\ \pos\left(\frac{\br^{\sf T}\bm{\Gamma}\mb{U}^{\sf T}\mb{U}\bm{\Gamma}^{-1}\hat{\bX} + \br^{\sf T}\bm{\mu} + c}{\|\bm{\Gamma}\br\|_2}\right) \nonumber \\
        &=\ \pos\left(\frac{(\mb{U}\bm{\Gamma}\br)^{\sf T}\bZ}{\|\bm{\Gamma}\br\|_2} + \frac{\br^{\sf T}\bm{\mu} + c}{\|\bm{\Gamma}\br\|_2}\right) \nonumber \\
        &=\ \pos\left(\mb{e}_1^{\sf T}\bZ + \frac{\br^{\sf T}\bm{\mu} + c}{\|\bm{\Gamma}\br\|_2}\right)
    \end{align}
    which completes the proof.
\end{proof}
\begin{proof} (of Lemma \ref{lem:meta_algo_pac_bound})
Using the normalization in Lemma \ref{lem:LTFnormalization}, we can write $\bX = \bA\bZ + \bm{\mu}$ where $\bA = \bm{\Sigma}^{1/2}\mb{U}^{\sf T}$ such that $\bX \sim N(\bm{\mu}, \bm{\Sigma}) \equiv \bZ\sim N(\mb{0}, \mb{I})$. From the condition in \eqref{eq:LTFnormalization} we can write the samples in Step 2 of Alg. \ref{algorithm:meta_algo} as  $\bx_i = \bA\bz_i + \bm{\mu}$ where $\bz_i$ are sampled from $\mc{D}_{\tn{asymvec}}(k/q,-\ell)$ (see Sec. \ref{sec:bagmeancovest}) for $\ell$ as given in  Lemma \ref{lem:LTFnormalization}. Note that the maximum eigenvalue of $\bA^2$ is $\lambda_{\tn{max}}$ which is the maximum eigenvalue of $\bm{\Sigma}$. Thus, one can apply Lemma \ref{lem:psd_trans_conc} to $\hat{\bm{\mu}}_B$ and $\hat{\bm{\Sigma}}_B$.

Further, the difference vectors sampled in Step 6 can be written as $\ol{\bx}_i = \bA\ol{\bz}_i$ where $\ol{\bz}_i$ are sampled from $\mc{D}_{\tn{diffvec}}(p,p',-\ell)$ (see Sec. \ref{sec:diffcovest}) where $p = {k \choose 2}/{q \choose 2}$ is the probability of sampling a pair of $1$-labeled feature-vectors from a bag, and $p' = {q-k \choose 2}/{q \choose 2}$ is that of sampling a pair of $0$-labeled feature-vectors. Thus, one can apply Lemma \ref{lem:diffcovestbd} to $\hat{\bm{\Sigma}}_D$.

Using both the above applications with the error probability $\delta/2$ and using union bound we complete the proof. 
\end{proof}

\section{Experimental Details and Results}\label{sec:more-experi}
\urlstyle{sf}
\subsection{Implementation Details}
The implementations of the algorithms in this paper (Algs. \ref{algorithm:normal_estimation}, \ref{algorithm:standard_normal_pac_learner}, \ref{algorithm:normal_estimation_with_offset}) and of the random LTF algorithm are in python using numpy libraries. The code for the SDP algorithms of \cite{Saket21,Saket22} for bag sizes $2$ and $3$ is adapted from the publicly available codebase\footnote{\url{ https://github.com/google-research/google-research/tree/master/Algorithms_and_Hardness_for_Learning_Linear_Thresholds_from_Label_Proportions} (license included in the repository)}.
The experimental code was executed on a 16-core CPU and 128 GB RAM machine running linux in a standard python environment.

\subsection{Experimental Results}
In the following $d$ denotes the dimension of the feature-vectors, $q$ the size of the bags, with $k/q \in (0,1)$ the bag label proportion, and $m$ be the number of sampled bags in the training dataset. The instance-level test set is of size 1000 in all the experiments, and the reported metric is the accuracy over the test set.

\medskip

\textbf{Standard Gaussian without LTF offset.} Here we primarily wish to evaluate the Algorithm \ref{algorithm:standard_normal_pac_learner} using unbalanced bag oracles such that $k \neq q/2$. For $d \in \{10, 50\}$, $(q,k) \in \{(3,1), (10, 8), (50, 35)\}$ and $m \in \{100, 500, 2000\}$ we create 25 datasets. In each LLP dataset, we (i) sample a random unit vector $\br^*$ and let $f(\bx) := \pos\left(\br^{*{\sf T}}\bx\right)$, (ii) sample $m$ training bags from ${\sf Ex}(f, N({\bf 0}, {\bf I}), q, k)$, (iii) sample 1000 test instances $(\bx, f(\bx))$, $\bx \leftarrow N({\bf 0}, {\bf I})$.  We also evaluate the Algorithm \ref{algorithm:normal_estimation} on these datasets.
For comparison we have the {\sf random LTF} algorithm (R) in which we sample 100 random LTFs and return the one that performs best on the training set. The results are reported in Table \ref{tab:standard_gaussian}.
We also evaluate the SDP algorithm (S) in \cite{Saket22} for $(q, k) = (3, 1)$ using $m \in \{50, 100, 200\}$ (since the SDP algorithms do not scale to larger number of bags) whose results are reported in Table \ref{tab:standard_gaussian_sdp}.

Table \ref{tab:standard_gaussian_comp} reports a concise set of comparative scores of Algorithms \ref{algorithm:normal_estimation}, \ref{algorithm:standard_normal_pac_learner} and {\sf random LTF} (R) using 2000 bags and of the SDP algorithms (S) with 200 bags.

\medskip

\textbf{Centered and general Gaussian.} Here we evaluate Algorithms \ref{algorithm:normal_estimation} and \ref{algorithm:normal_estimation_with_offset}, and we have both balanced as well as unbalanced bag oracles. In particular, for $d \in \{10, 50\}$, $(q,k) \in \{(2,1), (3,1), (10, 5), (10, 8), (50, 25), (50, 35)\}$ and $m \in \{100, 500, 2000\}$ we create 25 datasets similar to the previous case, except that for each dataset we first sample $\bm{\mu}$ and $\bm{\Sigma}$ and use $N(\bm{0}, \bm{\Sigma})$ for sampling feature-vectors in the centered Gaussian case and use $N(\bm{\mu}, \bm{\Sigma})$ for sampling feature-vectors in the general Gaussian case. We perform the following set of experiments in each case. For the cases when bags are balanced, i.e. $(q, k) \in \{(2, 1), (10, 5), (50, 25)\}$, for each our Algorithms \ref{algorithm:normal_estimation} and \ref{algorithm:normal_estimation_with_offset} we evaluate their two possible solutions on the test data and report the better number.
\begin{itemize}
        \item \textbf{With LTF offset.} We sample $(\br_{*}, c_{*})$ and create a dataset using $\pos(\br_{*}^{\sf T}\bx + c_{*})$ as the labeling function. Table \ref{tab:A4_with_c} reports the test accuracy scores for Algorithm \ref{algorithm:normal_estimation_with_offset} and {\sf random LTF} (R) with $m = \{100, 500, 2000\}$ for centered and general Gaussians. Table \ref{tab:sdp_without_c} reports the corresponding scores for the SDP algorithm (S)~\cite{Saket21,Saket22} with $m = \{50, 100, 200\}$ and $(q, k) \in \{(2,1), (3,1)\}$. Table \ref{tab:general_gaussian_with_offset_comp} provides concise comparative scores with $m = 2000$ for Algorithm \ref{algorithm:normal_estimation_with_offset} and {\sf random LTF} and $m = 200$ for the SDP algorithm (S).
        
    \item \textbf{Without LTF offset.} We sample an $\br_{*}$ and create a dataset using $\pos(\br_{*}^{\sf T}\bx)$ as the labeling function. Table \ref{tab:A2_without_c} reports the test accuracy scores for Algorithm \ref{algorithm:normal_estimation} and {\sf random LTF} (R) for centered and general Gaussians. Table \ref{tab:sdp_without_c} reports the scores for the SDP algorithm (S)~\cite{Saket21,Saket22} on $(q, k) \in \{(2,1), (3,1)\}$ with $m = \{50, 100, 200\}$.  Table \ref{tab:general_gaussian_without_offset_comp} provides concise comparative scores with $m = 2000$ for Algorithm \ref{algorithm:normal_estimation_with_offset} and {\sf random LTF} and $m = 200$ for the SDP algorithm (S).\\
    \emph{Noisy Labels.} We also experiment in a model with label noise. Here, the label of any instance can be independently flipped with some probability $p$, as a result the \emph{true} bag label sum $k^*$ has distribution over $\{0,\dots, q\}$. In this case the SDP algorithms are not applicable and we omit them. 
    Tables \ref{tab:centered_gaussian_noisy} and \ref{tab:general_gaussian_noisy} give the test accuracy scores for Algorithm \ref{algorithm:normal_estimation} and rand. LTF (R) with label flip probability $p = \{0.1, 0.25, 0.5\}$ for centered and general Gaussians. Like the balanced case, here also we evaluate both the solutions of Algorithm \ref{algorithm:normal_estimation} on the test data and report the better number.

\end{itemize}

We observe that our algorithms perform significantly better in terms of accuracy than the comparative methods in all the bag distribution settings. Further, our algorithms have much lower error bounds on their accuracy scores. For the standard gaussian case, Algorithm \ref{algorithm:normal_estimation} outperforms Algorithm \ref{algorithm:standard_normal_pac_learner} for larger $m$, possibly since with larger number of bags Algorithm \ref{algorithm:normal_estimation} (which has higher sample complexity) can perform to its full potential. Conversely, we can observe that with larger bag sizes and dimensions, Algorithm \ref{algorithm:standard_normal_pac_learner} outperforms Algorithm \ref{algorithm:normal_estimation} for smaller $m$.

For the noisy cases, from Tables  \ref{tab:centered_gaussian_noisy} and \ref{tab:general_gaussian_noisy} we observe that while the test accuracy degrades with large noise, it is fairly robust to small amounts of noise. This robustness is intuitive and we provide an explanation for the same in Appendix \ref{appendix:mixture_of_bag_label_sums} (Lemma \ref{lem:main_no_offset_mixture}).

\begin{table}[h!]
\small

\subcaption{General Gaussian}
    \end{minipage} 
    \label{tab:sdp_without_c}
\end{table}

\FloatBarrier

\section{Class ratio estimation for LTFs} \label{sec:CR}
The work of \cite{FR20} studies the problem of matching the classifier label proportion using a single sampled bag which they call \emph{class-ratio} (CR) learning as distinct from LLP. Indeed, in LLP the goal is to learn an accurate instance-level classifier from multiple sampled bags, whereas CR-learning does not guarantee instance-level performance. Further, similar to Prop. 18 of \cite{FR20}, CR learning LTFs over Gaussians is easy: for a bag $B = \{\mb{x}^{(i)}\}_{i=1}^n$ of iid Gaussian points, a random unit vector $\mb{r}$ has distinct inner products  $ \{s_i := \mb{r}^{\sf T}\mb{x}^{(i)}\}_{i=1}^n$ with probability 1. The LTFs $\{\textnormal{pos}\left(\mb{r}^{\sf T}\mb{x} - s\right)\,\mid\, s \in \{-\infty, s_1, \dots, s_n\}\}$ achieve all possible target label proportions $\{j/n\}_{j=0}^n$, and one can then apply the generalization error bound in Thm. 4 of \cite{FR20}.

\section{Analysis of a Mixture of Label Sums}
\label{appendix:mixture_of_bag_label_sums}
\begin{definition}[Mixed Bag Oracle]
Given a set of bag oracles $\tn{Ex}(f, \mc{D}, q, k)$ for $k \in \{0, \dots, q\}$ and $\bp = (p_0, \dots, p_q) \in \Delta^{q}$ where $\Delta^q$ is a $q$-simplex, a mixed bag oracle $\tn{Ex}(f, \mc{D}, q, \bp)$ samples a bag size $k$ from $\tn{Multinoulli}(\bp)$ distribution\footnote{Section 2.3.2 (p. 35) of 'Machine Learning: A Probabilistic Perspective' (by K. Murphy)} and then samples a bag from $\tn{Ex}(f, \mc{D}, q, k)$.
\end{definition}
Let $\bm{\Sigma}_D$ be the covariance matrix of difference of a pair of vectors sampled u.a.r without replacement from $\tn{Ex}(f, \mc{D}, q, \bp)$ and $\bm{\Sigma}_B$ be the covariance matrix of vectors sampled u.a.r from $\tn{Ex}(f, \mc{D}, q, \bp)$. If $\bm{\Sigma}_{Dk}$ is the covariance matrix of difference of a pair of vectors sampled u.a.r without replacement from $\tn{Ex}(f, \mc{D}, q, k)$ and $\bm{\Sigma}_{Bk}$ be the covariance matrix of vectors sampled u.a.r from $\tn{Ex}(f, \mc{D}, q, k)$ then we have the following
\begin{equation}\label{eq:mixture_matrices}
    \bm{\Sigma}_B = \sum_{k=0}^qp_k^2\bm{\Sigma}_{Bk} \qquad \qquad \bm{\Sigma}_D = \sum_{k=0}^qp_k^2\bm{\Sigma}_{Dk}
\end{equation}
Using the above, we prove the following geometric error bound which is analogous to Lemma \ref{lem:main_no_offset}.
\begin{lemma}\label{lem:main_no_offset_mixture}
For any $\eps, \delta \in (0,1)$, if $m \geq O\left((d/\eps^4)\log(d/\delta)(\lambda_{\tn{max}}/\lambda_{\tn{min}})^4q^4\left(1/\sum_{k=1}^{q-1}p_k^2\right)^2\right)$, then $\hat{\br}$ computed in Step 3 of Alg. \ref{algorithm:normal_estimation} satisfies
$\min\{\|\hat{\br} - \br_*\|_2, \|\hat{\br} + \br_*\|_2\} \leq \eps,$
w.p. $1 - \delta/2$.
\end{lemma}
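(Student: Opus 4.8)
The plan is to reduce the mixed-oracle analysis to the fixed-$k$ computation already carried out in Section \ref{sec:proof_lem_main_no_offset}, exploiting the linearity in \eqref{eq:mixture_matrices}. For each $k \in \{0, \dots, q\}$, Lemma \ref{lemma:ratio_comp} (whose derivation goes through verbatim for the degenerate indices $k \in \{0,q\}$, where $\kappa_2(q,k) = 0$) gives $\br^{\sf T}\bm{\Sigma}_{Bk}\br = \br^{\sf T}\bm{\Sigma}\br(1 - \gamma(\br)^2\kappa_1(q,k))$ and $\br^{\sf T}\bm{\Sigma}_{Dk}\br = \br^{\sf T}\bm{\Sigma}\br(2 - \gamma(\br)^2(2\kappa_1(q,k) - \kappa_2(q,k)))$. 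Summing these with the weights $p_k^2$ from \eqref{eq:mixture_matrices} and writing $P := \sum_{k=0}^q p_k^2$, I would define the averaged constants $\overline{\kappa}_1 := P^{-1}\sum_k p_k^2\kappa_1(q,k)$ and $\overline{\kappa}_2 := P^{-1}\sum_k p_k^2\kappa_2(q,k)$, together with $\overline{\kappa}_3 := \overline{\kappa}_2/(1 - \overline{\kappa}_1)$ and the corresponding $\overline{\theta}$ as in Definition \ref{def:kappatheta}.

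The point of this bookkeeping is that $P$ cancels in the Rayleigh quotient: the weighted sums give $\br^{\sf T}\bm{\Sigma}_B\br = P\,\br^{\sf T}\bm{\Sigma}\br(1 - \gamma(\br)^2\overline{\kappa}_1)$ and $\br^{\sf T}\bm{\Sigma}_D\br = P\,\br^{\sf T}\bm{\Sigma}\br(2 - \gamma(\br)^2(2\overline{\kappa}_1 - \overline{\kappa}_2))$, so that $\rho(\br) = 2 + \gamma(\br)^2\overline{\kappa}_2/(1 - \gamma(\br)^2\overline{\kappa}_1)$, exactly the form of Lemma \ref{lemma:ratio_comp} with the bars. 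Since $0 \le \overline{\kappa}_1 \le 2/\pi$ (a convex combination of the $\kappa_1(q,k) \in [0, 2/\pi]$) and $\overline{\kappa}_2 > 0$ whenever $\sum_{k=1}^{q-1}p_k^2 > 0$, the quotient is again strictly increasing in $\gamma(\br)^2$ and maximized at $\br = \pm\br_*$; hence Lemma \ref{lemma:eigenvalue_computation_equivalency} still identifies the maximizer with the transformed principal eigenvector. The perturbation argument of the theorem following Lemma \ref{lemma:eigenvalue_computation_equivalency} then transfers line-for-line with $\overline{\kappa}_i, \overline{\theta}$ in place of $\kappa_i, \theta$, using Lemma \ref{lem:meta_algo_pac_bound} (with $\ell = 0$) to bound the empirical errors $\|\mathbf{E}_B\|_2, \|\mathbf{E}_D\|_2 \le \eps_1\lambda_{\tn{max}}$ and Lemma \ref{lem:angle_bound_under_pd_transformation} to pass from the normalized space back to the original.

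The main obstacle — and the only place the new factor appears — is lower-bounding the effective spectral gap $\overline{\kappa}_3$. Because the pure-label components $k \in \{0,q\}$ contribute nothing to $\overline{\kappa}_2$ (their $\kappa_2(q,k)$ vanishes), the separation of $\rho$ along $\br_*$ is driven entirely by the interior mixing mass, giving $\overline{\kappa}_3 \ge \overline{\kappa}_2 \ge \tfrac{16}{\pi q^2}\,P^{-1}\sum_{k=1}^{q-1}p_k^2$ after using $\kappa_2(q,k) \ge 16/(\pi q^2)$ for interior $k$. The delicate step is then to verify that the factor $P$ scaling both covariance matrices cancels in the final count: the tolerance $\eps_1$ enters the $\hat{\rho}$ bound only through the relative errors $|\br^{\sf T}\mathbf{E}_D\br|/(\br^{\sf T}\bm{\Sigma}_D\br)$ and $|\br^{\sf T}\mathbf{E}_B\br|/(\br^{\sf T}\bm{\Sigma}_B\br)$, whose denominators carry a factor $P$, so the admissible $\eps_1$ scales as $P$, while $(1/\overline{\kappa}_3)^2 \le O(q^4)\,P^2/(\sum_{k=1}^{q-1}p_k^2)^2$ carries a compensating $P^2$. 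Substituting into $m \ge O\big((d/\eps^4)\log(d/\delta)(\lambda_{\tn{max}}/\lambda_{\tn{min}})^2\overline{\theta}^2((2+\overline{\kappa}_3)/\overline{\kappa}_3)^2\big)$ and using $\overline{\theta} = O(\lambda_{\tn{max}}/\lambda_{\tn{min}})$, the two powers of $P$ cancel and leave exactly the stated bound $m \ge O\big((d/\eps^4)\log(d/\delta)(\lambda_{\tn{max}}/\lambda_{\tn{min}})^4 q^4 (1/\sum_{k=1}^{q-1}p_k^2)^2\big)$. I would confirm this cancellation explicitly, as it is the one computation where the mixed oracle genuinely differs from the single-$(q,k)$ case.
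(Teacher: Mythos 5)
Your proposal is correct and follows essentially the same route as the paper: decompose $\bm{\Sigma}_B,\bm{\Sigma}_D$ via \eqref{eq:mixture_matrices}, apply Lemma \ref{lemma:ratio_comp} termwise to get $\rho(\br) = 2 + \gamma(\br)^2\overline{\kappa}_2/(1-\gamma(\br)^2\overline{\kappa}_1)$, and rerun the perturbation argument of Section \ref{sec:proof_lem_main_no_offset} with the mixture constants (the paper's Definition \ref{def:kappatheta_mixture} keeps the factor $P=\sum_k p_k^2$ inside $\kappa_3(\bp)$ and $\theta(\bp)$ rather than normalizing it out, but this is only a bookkeeping difference). Your explicit verification that the two powers of $P$ cancel — one from the admissible $\eps_1$ scaling with the $P$ in the denominators $\br^{\sf T}\bm{\Sigma}_B\br,\br^{\sf T}\bm{\Sigma}_D\br$, one from $(1/\overline{\kappa}_3)^2$ — matches exactly how the paper's bounds $\theta(\bp) = O(\lambda_{\tn{max}}/(\lambda_{\tn{min}}P))$ and $1/\kappa_3(\bp) = O(q^2 P/\sum_{k=1}^{q-1}p_k^2)$ combine to give the stated sample complexity.
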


\subsection{Proof of Lemma \ref{lem:main_no_offset_mixture}}\label{sec:proof_lem_main_no_offset_mixture}
We define and bound the following useful quantities based on $q$, $\lambda_{\tn{max}}$, $\lambda_{\tn{min}}$ and $\bp$.
\begin{definition}\label{def:kappatheta_mixture} Define, (i) $\kappa_1(k) := \left(\tfrac{2k}{q} - 1\right)^2\tfrac{2}{\pi}$ so that $0 \leq \kappa_1(k) \leq 2/\pi$, (ii) $\kappa_2(k) := \tfrac{1}{q-1}\tfrac{k}{q}\left(1 - \tfrac{k}{q}\right)\tfrac{16}{\pi}$ so that $\tfrac{16}{\pi q^2} \leq \kappa_2(k) \leq \tfrac{4}{\pi(q-1)}$ whenever $1 \leq k \leq q-1$, (iii) $\kappa_3(\bp) := \tfrac{\sum_{k=0}^qp_k^2\kappa_2(k)}{\sum_{k=0}^qp_k^2 - \sum_{k=0}^qp_k^2\kappa_1(k)}$ so that $\tfrac{16\sum_{k=1}^{q-1}p_k^2}{\pi q^2\sum_{k=0}^qp_k^2} \leq \kappa_3(\bp)$, and (iv) $\theta(\bp) := \tfrac{2\lambda_{\tn{max}}}{\lambda_{\tn{min}}}\left(\tfrac{1}{2\sum_{k=0}^qp_k^2 - \max(0, 2\sum_{k=0}^q\kappa_1(k) - \sum_{k=0}^qp_k^2\kappa_2(k))} + \tfrac{1}{\sum_{k=0}^qp_k^2 - \sum_{k=0}^qp_k^2\kappa_1(k)}\right)$ so that $\theta(\bp) \leq \tfrac{3\lambda_{\tn{max}}}{(1 - 2/\pi)\lambda_{\tn{min}}(\sum_{k=0}^qp_k^2)}$.
\end{definition}
\begin{lemma}\label{lemma:ratio_comp_mixture}
The ratio $\rho(\br) := \br^{\sf T}\bm{\Sigma}_D\br/\br^{\sf T}\bm{\Sigma}_B\br$ is maximized when $\br = \pm \br_*$. Moreover, 
\[
    \rho(\br) = 2 + \frac{\gamma(\br)^2\sum_{k=0}^qp_k^2\kappa_2(k)}{\sum_{k=0}^qp_k^2 - \gamma(\br)^2\sum_{k=0}^qp_k^2\kappa_1(k)}\,\, \qquad \text{ where } \qquad  \gamma(\br) := \frac{\br^{\sf T}\bm{\Sigma}\br_*}{\sqrt{\br^{\sf T}\bm{\Sigma}\br}\sqrt{\br_*^{\sf T}\bm{\Sigma}\br_*}} \text{ and }
\]
\[
    \br^{\sf T}\bm{\Sigma}_B\br = \br^{\sf T}\bm{\Sigma}\br\left(\sum_{k=0}^qp_k^2 - \gamma(\br)^2\sum_{k=0}^qp_k^2\kappa_1(k)\right),
\]\[
    \br^{\sf T}\bm{\Sigma}_D\br = \br^{\sf T}\bm{\Sigma}\br\left(2\sum_{k=0}^qp_k^2 - 2\gamma(\br)^2\sum_{k=0}^qp_k^2\kappa_1(k) + \gamma(\br)^2\sum_{k=0}^qp_k^2\kappa_2(k)\right)
\]
\end{lemma}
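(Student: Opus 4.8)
The plan is to reduce the entire claim to the single-bag-size analysis already carried out in Lemma~\ref{lemma:ratio_comp} and then exploit the linearity of the mixture decomposition \eqref{eq:mixture_matrices}. The crucial observation is that $\gamma(\br)$ depends only on $\br$, $\br_*$ and $\bm{\Sigma}$, hence is \emph{independent} of the bag size $k$. Therefore, for every fixed $k \in \{0,\dots,q\}$, the computation underlying Lemma~\ref{lemma:ratio_comp} (which rests on Lemma~\ref{lem:relation_sigma_b_sigma_d} and the half-normal variance identities) yields the per-$k$ quadratic forms
\[
  \br^{\sf T}\bm{\Sigma}_{Bk}\br = \br^{\sf T}\bm{\Sigma}\br\,\bigl(1 - \gamma(\br)^2\kappa_1(k)\bigr), \qquad \br^{\sf T}\bm{\Sigma}_{Dk}\br = \br^{\sf T}\bm{\Sigma}\br\,\bigl(2 - \gamma(\br)^2(2\kappa_1(k) - \kappa_2(k))\bigr),
\]
with $\kappa_1(k),\kappa_2(k)$ exactly as in Definition~\ref{def:kappatheta_mixture}. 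The one point needing care is the two boundary sizes $k \in \{0,q\}$, which fall outside the hypotheses of Lemma~\ref{lemma:ratio_comp}: for these the bag is label-homogeneous, so $\bm{\Sigma}_{Dk} = 2\bm{\Sigma}_{Bk}$ and the relevant projection is a one-sided half-normal, giving $\kappa_2(k)=0$ and $\kappa_1(k)=2/\pi$, which is consistent with the displayed identities.

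Next I would assemble the mixture. Using \eqref{eq:mixture_matrices} and linearity of $\br \mapsto \br^{\sf T}(\cdot)\br$, I sum the per-$k$ identities against the weights $p_k^2$ and factor out the common $\br^{\sf T}\bm{\Sigma}\br$ together with the $k$-independent $\gamma(\br)^2$; this produces verbatim the two claimed expressions for $\br^{\sf T}\bm{\Sigma}_B\br$ and $\br^{\sf T}\bm{\Sigma}_D\br$. Abbreviating $A := \sum_k p_k^2$, $B := \sum_k p_k^2\kappa_1(k)$, $C := \sum_k p_k^2\kappa_2(k)$ and $g := \gamma(\br)^2$, the ratio reads $\rho(\br) = (2A - 2gB + gC)/(A - gB)$, and splitting the numerator as $2(A-gB) + gC$ gives the claimed closed form $\rho(\br) = 2 + gC/(A-gB)$.

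For the maximization I would regard $\rho$ as a function of $g \in [0,1]$ alone. The denominator $A - gB$ is strictly positive, since $\kappa_1(k) \le 2/\pi < 1$ forces $B \le (2/\pi)A < A$ and $g \le 1$. Differentiating, $\tfrac{d}{dg}\bigl(gC/(A-gB)\bigr) = AC/(A-gB)^2 \ge 0$ because $C \ge 0$ and $A > 0$, so $\rho$ is nondecreasing in $g$ and attains its maximum at $g = 1$. Since $g = \gamma(\br)^2$ is the squared cosine of the angle between $\bm{\Sigma}^{1/2}\br$ and $\bm{\Sigma}^{1/2}\br_*$ (Cauchy--Schwarz in the $\bm{\Sigma}$-inner product), $g=1$ holds iff these vectors are parallel, i.e.\ iff $\br = \pm\br_*$, completing the proof.

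The hard part will not be any single computation but keeping the boundary-size bookkeeping and the positivity of $A - gB$ airtight; once these are settled the lemma is a direct linear combination of the single-$k$ result. I also expect it worthwhile to note that $C > 0$ precisely when $\sum_{k=1}^{q-1}p_k^2 > 0$, which makes the monotonicity strict and the maximizer unique up to sign, matching the lower bound on $\kappa_3(\bp)$ in Definition~\ref{def:kappatheta_mixture} and explaining the $1/\sum_{k=1}^{q-1}p_k^2$ factor in the sample complexity of Lemma~\ref{lem:main_no_offset_mixture}.
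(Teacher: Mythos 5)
Your proposal is correct and follows essentially the same route as the paper's proof, which likewise reduces to Lemma~\ref{lemma:ratio_comp} via \eqref{eq:mixture_matrices} and concludes from monotonicity of $\rho$ in $\gamma(\br)^2 \le 1$. You simply make explicit several details the paper leaves implicit (the boundary sizes $k\in\{0,q\}$, positivity of the denominator $A-gB$, and the derivative computation), all of which check out.
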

\begin{proof}
The proof follows directly from Lemma \ref{lemma:ratio_comp} which gives us the expression for $\bm{\Sigma}_{Bk}$ and $\bm{\Sigma}_{Dk}$ and \eqref{eq:mixture_matrices}. Once the expression for $\rho(\br)$ is obtained, it is easy to see that since $|\gamma(\br)| \leq 1$, $\rho(\br)$ maximizes when $\gamma(\br) = \pm 1$ and thus when $\br = \pm \br_{*}$.
\end{proof}

\begin{proof}(of Lemma \ref{lem:main_no_offset_mixture})
By Lemma \ref{lem:meta_algo_pac_bound}, taking $m \geq O\left((d/\eps_1^2)\log(d/\delta)\right)$ ensures that $\|\mb{E}_B\|_2 \leq \eps_1\lambda_{\tn{max}}$ and $\|\mb{E}_D\|_2 \leq \eps_1\lambda_{\tn{max}}$ w.p. at least $1 - \delta$ where $\mb{E}_B = \hat{\bm{\Sigma}}_B - \bm{\Sigma}_B$ and $\mb{E}_D = \hat{\bm{\Sigma}}_D - \bm{\Sigma}_D$.
We start by defining $\hat{\rho}(\br) := \frac{\br^{\sf T}\hat{\bm{\Sigma}}_D\br}{\br^{\sf T}\hat{\bm{\Sigma}}_B\br}$ which is the equivalent of $\rho$ using the estimated matrices. Observe that it can be written as $\hat{\rho}(\br) = \frac{\br^{\sf T}\bm{\Sigma}_B\br + \br^{\sf T}\bm{E}_B\br}{\br^{\sf T}\bm{\Sigma}_D\br + \br^{\sf T}\bm{E}_D\br}$. Using these we can obtain the following bound on $\hat{\rho}$: for any $\br \in \R^d$, $|\hat{\rho}(\br) - \rho(\br)| \leq \theta(\bp) \eps_1|\rho(\br)|$  w.p. at least $1 - \delta$ (*) as long as $\eps_1 \leq \frac{(\sum_{k=0}^qp_k^2 - \sum_{k=0}^qp_k^2\kappa_1(k))}{2}\frac{\lambda_{\tn{min}}}{\lambda_{\tn{max}}}$, which we shall ensure. This is obtained as follows.\\
Define $a(\br) := \br^{\sf T}\bm{\Sigma}_D\br, b(\br) := \br^{\sf T}\bm{\Sigma}_B\br, e(\br) := \br^{\sf T}\bm{E}_D\br, f(\br) := \br^{\sf T}\bm{E}_B\br$. Thus, we get the following inequalities. 
$$|e(\br)| \leq \eps_1\lambda_{\tn{max}},\,\, |f(\br)| \leq \eps_1\lambda_{\tn{max}},\,\, |a(\br)| \geq \lambda_{\tn{min}}\left(\sum_{k=0}^qp_k^2 - \sum_{k=0}^qp_k^2\kappa_1(k)\right)$$
$$|b(\br)| \geq \lambda_{\tn{min}}\left(2\sum_{k=0}^qp_k^2 - \max\left(0, 2\sum_{k=0}^qp_k^2\kappa_1(k) - \sum_{k=0}^qp_k^2\kappa_2(k)\right)\right)$$
Notice that $\hat{\rho}(\br)/\rho(\br) = (1 + e(\br)/a(\br))/(1 + f(\br)/b(\br))$.
Taking $\eps_1 \leq \frac{(\sum_{k=0}^qp_k^2 - \sum_{k=0}^qp_k^2\kappa_1(k))}{2}\frac{\lambda_{\tn{min}}}{\lambda_{\tn{max}}}$ allows us to claim that $|\hat{\rho}(\br) - \rho(\br)| \leq \eps_1\theta(\bp)\rho(\br)$.\\
For convenience we denote the normalized projection of any vector $\br$ as $\tilde{\br} := \frac{\bm{\Sigma}^{1/2}\br}{\|\bm{\Sigma}^{1/2}\br\|_2}$. Now let $\tilde{\br} \in \mathbb{S}^{d-1}$ be a vector such that $\min\{\|\tilde{\br} - \tilde{\br}_*\|_2, \|\tilde{\br} + \tilde{\br}_*\|_2\} \geq \eps_2$. 
Hence, using the definitions from Lemma \ref{lemma:ratio_comp_mixture}, $|\gamma(\br)| \leq 1 - \eps_2^2/2$ while $\gamma(\br_*) = 1$ which implies $\rho(\br_*) - \rho(\br) \geq \kappa_3(\bp)\eps_2^2/2$. 
Note that $\rho(\br) \leq \rho(\br_*) = 2 + \kappa_3(\bp)$. Choosing $\eps_1 < \frac{\kappa_3(\bp)}{4\theta(2 + \kappa_3(\bp))}\eps_2^2$, we obtain that $\rho(\br_*)(1 - \theta(\bp)\eps_1) > \rho(\br)(1 + \theta(\bp)\eps_1)$. Using this along with the bound (*) we obtain that w.p. at least $1 - \delta$, $\hat{\rho}(\br_*) > \hat{\rho}(\br)$ when $\eps_2 > 0$. Since our algorithm returns $\hat{\br}$ as the maximizer of $\hat{\rho}$, w.p. at least $1 - \delta$ we get $\min\{\|\tilde{\br} - \tilde{\br}_*\|_2, \|\tilde{\br} + \tilde{\br}_*\|_2\} \leq \eps_2$. Using Lemma \ref{lem:angle_bound_under_pd_transformation}, $\min \{\|\hat{\br} - \br_*\|_2, \|\hat{\br} + \br_*\|_2\} \leq 4\sqrt{\frac{\lambda_{\tn{max}}}{\lambda_{\tn{min}}}}\eps_2$. Substituting $\eps_2 = \frac{\eps}{4}\sqrt{\frac{\lambda_{\tn{min}}}{\lambda_{\tn{max}}}}$, $\|\br - \br_*\|_2 \leq \eps$ w.p. at least $1 - \delta$. The conditions on $\eps_1$ are satisfied by taking it to be $\leq O\left(\tfrac{\kappa_3(\bp) \eps^2\lambda_{\tn{min}}}{\theta(\bp)(2+\kappa_3(\bp))\lambda_{\tn{max}}}\right)$, and thus we can take $m \geq O\left((d/\eps^4)\log(d/\delta)\left(\frac{\lambda_{\tn{max}}}{\lambda_{\tn{min}}}\right)^2\theta(\bp)^2\left(\frac{1}{\kappa_3(\bp)}\right)^2\right).$ Taking $m \geq O\left((d/\eps^4)\log(d/\delta)\left(\frac{\lambda_{\tn{max}}}{\lambda_{\tn{min}}}\right)^4q^4\left(1/\sum_{k=1}^{q-1}p_k^2\right)^2\right)$ satisfies this using bounds in Defn. \ref{def:kappatheta_mixture}. This completes the proof.
\end{proof}
We observe that the sample complexity bound is worse for $\{p_1, \dots, p_{q-1}\}$ which are not concentrated i.e., the probability of the label sum is supported over many different values. This occurs for e.g. in the noisy setting when the label flip noise is large (see Appendix  \ref{sec:more-experi}). 

\end{appendices}

\end{document}